\def\bstctlcite{\@ifnextchar[{\@bstctlcite}{\@bstctlcite[@auxout]}}
\def\@bstctlcite[#1]#2{\@bsphack
 \@for\@citeb:=#2\do{%
   \edef\@citeb{\expandafter\@firstofone\@citeb}%
   \if@filesw\immediate\write\csname #1\endcsname{\string\citation{\@citeb}}\fi}%
 \@esphack}
\definecolor{redHESAM}{RGB}{210,0,37}
\renewcommand\nomgroup[1]{%
  \item[\bfseries
  \ifstrequal{#1}{A}{Acronyms}{%
  \ifstrequal{#1}{R}{Roman symbols}{%
  \ifstrequal{#1}{G}{Greek symbols}{}}}%
]
\vspace{1cm}%
}
\newenvironment{vcenterpage}
{\newpage\vspace*{\fill}}
{\vspace*{\fill}\par\pagebreak}
\def\eg{\textit{e.g.}\xspace}
\def\ie{\textit{i.e.}\xspace}
\definecolor{Gray}{gray}{0.85}
\newtheorem{definition}{Definition}[chapter]
\newtheorem{proposition}{Proposition}[chapter]
\newcommand{\ubold}[1]{\fontseries{b}\selectfont#1}
\newcommand{\uda}{\,$\triangleright$}
\newcommand{\x}{\bm{x}}
\newcommand{\y}{\bm{y}}
\newcommand{\bw}{\mathbf{w}}
\newcommand{\btheta}{\bm{\theta}}
\newcommand{\bphi}{\bm{\varphi}}
\newcommand{\bpsi}{\bm{\psi}}
\newcommand{\balpha}{\bm{\alpha}}
\newcommand{\bomega}{\bm{\omega}}
\newcommand{\bpi}{\bm{\pi}}
\newcommand{\so}{\text{s}}
\newcommand{\tg}{\text{t}}
\newcommand{\cD}{\mathcal{D}}
\newcommand{\cH}{\mathcal{H}}
\newcommand{\cL}{\mathcal{L}}
\newcommand{\cX}{\mathcal{X}}
\newcommand{\cY}{\mathcal{Y}}
\newcommand{\bbH}{\mathbb{H}}
\newcommand{\bbI}{\mathbb{I}}
\newcommand{\xso}{\x_{\so}}
\newcommand{\xson}{\x_{\so,n}}
\newcommand{\xtg}{\x_{\tg}}
\newcommand{\xtgn}{\x_{\tg,n}}
\newcommand{\Pmap}{\mathsf{P}_{\x}^{\btheta}}
\newcommand{\Pmapt}{\mathsf{P}_{\xtg}^{\btheta}}
\newcommand{\Cmap}{\mathsf{C}_{\x}^{\bomega}}
\newcommand{\Cmapt}{\mathsf{C}_{\xtg}^{\bomega}}
\newcommand{\Cmaps}{\mathsf{C}_{\xso}^{\bomega}}
\newcommand{\Cmapsn}{\mathsf{C}_{\xson}^{\bomega}}
\DeclareMathOperator*{\argmax}{arg\,max}
\DeclareMathOperator*{\argmin}{arg\,min}
\begin{document}

\frontmatter
\dominitoc
\thispagestyle{empty}

\newgeometry{
        showframe,
		top=20mm,
		bottom=20mm,
		inner=20mm,
		outer=20mm,
		includehead,
		ignorefoot,
		nomarginpar,
		headsep=10mm,
		footskip=10mm,
	}

\setmarginsrb{5mm}{15mm}{18mm}{5mm}{0mm}{0mm}{0mm}{0mm}

\hspace{0.5cm}\includegraphics[width=5.5cm]{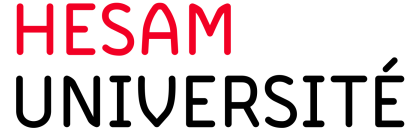} \hspace{6cm} \includegraphics[width=5.5cm]{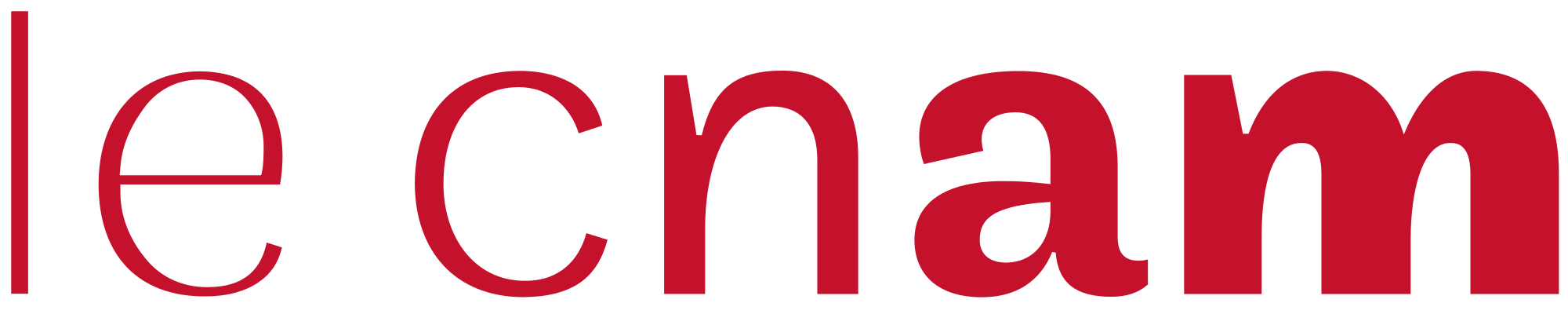}

\vspace{1cm}
\makebox[\textwidth][c]{\centering{\LARGE{\textbf{\'ECOLE DOCTORALE SCIENCES DES M\'{E}TIERS DE L'ING\'{E}NIEUR}}}}\\

\vspace{-0.2cm}
\hspace{0.5cm}\makebox[\textwidth][c]{\centering{\LARGE{\textbf{Centre d’études et de recherche en informatique et communications}}}}\\
\vspace{0.3cm}

\makebox[\textwidth][c]{{\Huge{\textbf{TH\`ESE}}}}\\

\vspace{0.3cm}
\makebox[\textwidth][c]{\centering{\large{\textit{présentée par} : }} {\Large{\textbf{Charles CORBI\`ERE}}}}\\
\vspace{.4cm}
\makebox[\textwidth][c]{\centering{\large{\textit{soutenue le} : }} {\Large{\textbf{16 mars 2022}}}}\\
\vspace{.4cm}
\makebox[\textwidth][c]{\centering{\large{\textit{pour obtenir le grade de} : }} {\Large{\textbf{Docteur d'HESAM Université}}}}\\
\vspace{.2cm}
\makebox[\textwidth][c]{\centering{\large{\textit{préparée au} : }} {\Large{\textbf{Conservatoire national des arts et métiers}}}}\\
\vspace{.2cm}
\makebox[\textwidth][c]{\centering{\large{\textit{Discipline} :}} {\large{\textbf{Mathématiques, Informatique et Systèmes}}}}
\makebox[\textwidth][c]{\centering{\large{\textit{Spécialité} :}} {\large{\textbf{Informatique}}}}
\vspace{0.2cm}

\resizebox{\textwidth}{!}{
\fcolorbox{redHESAM}{white}{
\parbox{\dimexpr \linewidth-2\fboxsep-2\fboxrule}{%
\begin{tabular}{>{\centering\arraybackslash}p{16.5cm}}
\begin{minipage}{16.5cm}
\centering
{
~\\
~\\
\LARGE{\textbf{{Robust Deep Learning for Autonomous Driving}}}
~\\
~\\}
\end{minipage}	
\end{tabular}
}}}~\\

\vspace{0.3cm}

\makebox[\textwidth][c]{\centering{\textbf{\textsc{\large TH\`{E}SE dirig\'{e}e par :}}}}\\
\vspace{0.5cm}
\makebox[\textwidth][c]{\centering{\large{\textbf{[M. THOME Nicolas]}}} {\large{Professeur, Cedric, Cnam}}}\\

\makebox[\textwidth][c]{\centering{\textbf{\textsc{\large et co-encadr\'{e}e par :}}}}\\
\vspace{0.5cm}
\makebox[\textwidth][c]{\centering{\large{\textbf{[M. P\'EREZ Patrick]}}} {\large{Directeur scientifique, valeo.ai}}}\\

\vfill

\begin{minipage}[c]{0.9\linewidth}
   \centering
   \begin{tabular}{p{0.05cm} p{5.3cm} p{6.4cm} p{0.0cm} p{4.5cm}}
	\textbf{Jury} & & & \\
	& \textbf{Mme Florence d'ALCH\'E-BUC} & Professeur, LTCI, Télécom Paris & & Présidente du jury  \\
	& \textbf{M. Stéphane CANU} & Professeur, LITIS, INSA Rouen & & Rapporteur  \\
	& \textbf{M. Graham TAYLOR} & Professeur, University of Guelph &  &Rapporteur  \\
	& \textbf{M. Alex KENDALL} & Chercheur associé, University of Cambridge et co-fondateur, Wayve & &  Examinateur  \\
	& \textbf{M. Matthieu CORD} & Professeur, ISIR, Sorbonne Université et chercheur senior, valeo.ai & & Examinateur \\
	& \textbf{M. Patrick P\'EREZ} & Directeur scientifique, valeo.ai & & Co-directeur de thèse  \\
	& \textbf{M. Nicolas THOME} & Professeur, Cedric, Cnam & & Directeur de thèse\\

\end{tabular}  
  \end{minipage}
\hfill
\hspace{1.8cm}
  \begin{minipage}[c]{0.1\linewidth}
   \centering
   \includegraphics[height=5cm]{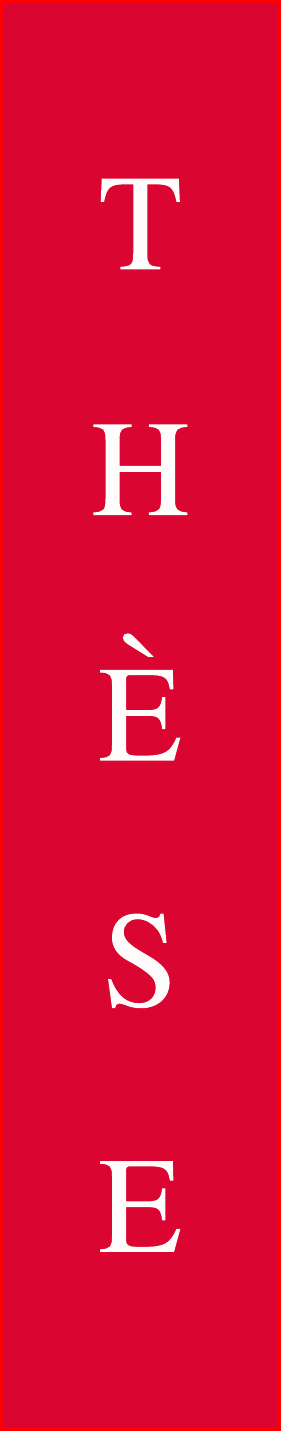}     
  \end{minipage}

\pagestyle{empty}
\cleardoublepage
\thispagestyle{empty}
\textbf{Declaration}\\

I, undersigned, Charles Corbière, hereby declare that the work presented in this manuscript is my own work, carried out under the scientific direction of Nicolas Thome (thesis director) and of Patrick Pérez (co-thesis director), in accordance with the principles of honesty, integrity and responsibility inherent to the research mission. The research work and the writing of this manuscript have been carried out in compliance with the French charter for Research Integrity.
This work has not been submitted previously either in France or abroad in the same or in a similar version to any other examination body.\\

{\raggedleft Paris (France), March 2022\\}
\begin{figure}[h]
    \includegraphics[width=.2\textwidth,right]{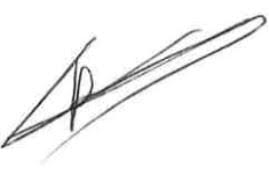}
\end{figure}

\vspace{2cm}

\textbf{Affidavit}\\

Je soussigné / soussignée, Charles Corbière, déclare par la présente que le travail présenté dans ce manuscrit est mon propre travail, réalisé sous la direction scientifique de Nicolas Thome (directeur de thèse) et de Patrick Pérez (co-directeur de thèse), dans le respect des principes d’honnêteté, d'intégrité et de responsabilité inhérents à la mission de recherche. Les travaux de recherche et la rédaction de ce manuscrit ont été réalisés dans le respect de la charte nationale de déontologie des métiers de la recherche.
Ce travail n'a pas été précédemment soumis en France ou à l'étranger dans une version identique ou similaire à un organisme examinateur.\\

{\raggedleft Fait à Paris (France), mars 2022\\}
\begin{figure}[h]
    \includegraphics[width=.2\textwidth,right]{images/signature.jpg}
\end{figure}


\pagestyle{empty}


\setlength{\baselineskip}{1.1\baselineskip} 
\setlength{\voffset}{0pt} 				
\setlength{\topmargin}{0pt}				
\setlength{\headheight}{30pt}			
\setlength{\headsep}{30pt}				
\setlength{\textheight}{620pt}		
\setlength{\footskip}{30pt}				
\setlength{\hoffset}{-15pt} 				
\setlength{\oddsidemargin}{0pt}	
\setlength{\evensidemargin}{0pt}	
\setlength{\textwidth}{420pt}			
\setlength{\marginparsep}{10pt}		
\setlength{\marginparwidth}{40pt}	
\parskip=4pt									   





\chapter*{Remerciements}
\addcontentsline{toc}{chapter}{Remerciements}
\adjustmtc
\markright{\MakeUppercase{Remerciements}}

Je souhaite remercier ici les nombreuses personnes qui m’ont accompagné, orienté, conseillé et soutenu tout au long de cette thèse.

Tout d’abord, je tiens à remercier mon directeur de thèse Nicolas Thome pour m’avoir accueilli au sein de son équipe et de m’avoir encadré pendant ces presque quatre ans. Sa disponibilité, son accompagnement et nos nombreuses discussions techniques m’ont été précieux. J’ai toujours pu compter sur son soutien à l’approche des échéances et l'en remercie encore.

Je voudrais également remercier mon co-directeur de thèse Patrick Pérez pour avoir cru en moi au moment de la création valeo.ai et des premiers recrutements au sein de son équipe de recherche. Patrick m'a été d'une grande aide via ses conseils avisés et pertinents tout au long de la thèse.

Conduire mes travaux entre l'équipe VERTIGO au Cnam et l'équipe de valeo.ai a été pour moi une expérience très enrichissante. Je remercie tous les chercheurs, doctorants et stagiaires de valeo.ai avec qui j'ai eu de nombreux échanges instructifs. En particulier, je tiens à remercier Tuan-Hung Vu et Antoine Saporta pour notre travail en commun sur ConDA, Andrei Bursuc pour ses conseils et son impressionnante capacité à indiquer des papiers pertinents de la littérature, et bien évidemment Arthur Ouaknine, Simon Roburin et Maxime Bucher pour les bons moments passés ensemble. Côté VERTIGO, je remercie mes camarades Laura Calem, Olivier Petit et Vincent Le Guen avec qui j'ai partagé une grande partie de mes aventures au Cnam. Mais aussi plus récemment Elias Ramzi, Loïc Themyr, Perla Doubinski, Yannis Karmin, Clément Rambour et Nicolas Audebert. Enfin, je remercie Marc Lafon pour tous nos échanges sur l'incertitude depuis son stage et lui souhaite bonne chance pour sa thèse.

Je remercie le jury pour l’intérêt porté à mes travaux, pour avoir accepté de relire ma thèse et d’assister à la soutenance qui finalise le travail de ces presque quatre années de recherche.

Pour finir, je tiens à remercier bien évidemment ma famille et mes amis pour leur soutien et leur présence tout au long de la thèse. Tout particulièrement Marlène Pivard avec qui je partage ma vie pour son importance vitale à mes côtés et ses encouragements. Mais aussi mes parents, Evelyne et Jean-Paul, qui m'ont toujours poussé à réaliser mes rêves tout en me laissant m'épanouir dans ce qui me plaisait. Enfin, la grande équipe des Equidés et la Troupe du Rire pour ces moment chaleureux d'une vie heureuse.

\begin{vcenterpage}
\chapter*{Abstract}
\addcontentsline{toc}{chapter}{Abstract}
\adjustmtc
\markright{\MakeUppercase{Abstract}}

The last decade's research in artificial intelligence and hardware development had a significant impact on the advance of autonomous driving. Yet, safety remains a major concern when it comes to deploying such systems in high-risk environments. Modern neural networks have been shown to struggle to correctly identify their mistakes and to provide over-confident predictions instead of abstaining when exposed to unseen situations. Progress on these issues is crucial to achieve certification from transportation authorities but also to arouse enthusiasm from users. 

The objective of this thesis is to develop methodological tools which provide reliable uncertainty estimates for deep neural networks. In particular, we aim to improve the detection of erroneous predictions and anomalies at test time.

First, we introduce a novel target criterion for model confidence, the true class probability (TCP). We show that TCP offers better properties than current uncertainty measures for the task of failure prediction. Since the true class is by essence unknown at test time, we propose to learn TCP criterion from data with an auxiliary model (\emph{ConfidNet}), introducing a specific learning scheme adapted to this context. The relevance of the proposed approach is validated on image classification and semantic segmentation datasets, demonstrating superiority with respect to strong uncertainty quantification baselines on failure prediction.

Then, we extend our learned confidence approach to the task of domain adaptation for semantic segmentation. A popular strategy, self-training, relies on selecting predictions on the unlabeled data and re-training a model with these pseudo-labels. Termed \emph{ConDA}, the proposed adaptation improves self-training methods by providing effective confidence estimates used to select pseudo-labels. To meet the challenge of domain adaptation, we equipped the auxiliary model with a multi-scale confidence architecture and supplemented the confidence loss with an adversarial training scheme to enforce alignment between confidence maps in source and target domains.

\sloppy
Finally, we consider the presence of anomalies and we tackle the ultimate practical objective of jointly detecting misclassification and out-of-distributions samples. To this end, we introduce \emph{KLoS}, an uncertainty measure based on evidential models and defined on the class-probability simplex. By keeping the full distributional  information, KLoS captures both uncertainty due to class confusion and lack of knowledge, which is related to out-of-distribution samples. We further improve performance across various image classification datasets by using an auxiliary model with a learned confidence approach.

\end{vcenterpage}

\begin{vcenterpage}
\chapter*{Résumé}
\markright{\MakeUppercase{Resume}}

Le véhicule autonome est revenu récemment sur le devant de la scène grâce aux avancées fulgurantes de l’intelligence artificielle. Pourtant, la sécurité reste une préoccupation majeure pour le déploiement de ces systèmes dans des environnements à haut risque. Il a été démontré que les réseaux de neurones actuels peinent à identifier correctement leurs erreurs et fournissent des prédictions sur-confiantes, au lieu de s'abstenir, lorsque exposés à des anomalies. Des progrès sur ces questions sont essentiels pour obtenir la certification des régulateurs mais aussi pour susciter l'enthousiasme des utilisateurs. 

L'objectif de cette thèse est de développer des outils méthodologiques permettant de fournir des estimations d'incertitudes fiables pour les réseaux de neurones profonds. En particulier, nous visons à améliorer la détection des prédictions erronées et des anomalies lors de l'inférence. Tout d'abord, nous introduisons un nouveau critère pour estimer la confiance d'un modèle dans sa prédiction : la probabilité de la vraie classe (TCP). Nous montrons que TCP offre de meilleures propriétés que les mesures d'incertitudes actuelles pour la prédiction d'erreurs. La vraie classe étant, par essence, inconnue à l'inférence, nous proposons d'apprendre TCP avec un modèle auxiliaire (\emph{ConfidNet}), introduisant un schéma d'apprentissage spécifique adapté à ce contexte. La qualité de l'approche proposée est validée sur des jeux de données de classification d'images et de segmentation sémantique., démontrant une supériorité par rapport aux méthodes de quantification incertitude utilisées pour la prédiction de d'erreurs.

Ensuite, nous étendons notre approche d'apprentissage de confiance à la tâche d'adaptation de domaine. Une stratégie populaire, l'auto-apprentissage, repose sur la sélection de prédictions sur données non étiquetées puis le réentraînement d'un modèle avec ces pseudo-étiquettes. Appelée \emph{ConDA}, l'adaptation proposée améliore la sélection de pseudo-labels grâce à des meilleures estimations de confiance. Afin de relever le défi de l'adaptation de domaine, nous avons équipé le modèle auxiliaire d'une architecture multi-échelle et complété la fonction de perte par un schéma d'apprentissage contrastif afin de renforcer l'alignement entre les cartes de confiance des domaines source et cible.

\sloppy
Enfin, nous considérons la présence d'anomalies et nous attaquons au défi pratique de la détection conjointe des erreurs de classification et des échantillons hors distribution. A cette fin, nous introduisons \emph{KLoS}, une mesure d'incertitude définie sur le simplexe et basée sur des modèles évidentiels. En conservant l'ensemble des informations de distribution, KLoS capture à la fois l'incertitude due à la confusion de classe et au manque de connaissance du modèle, cette dernière type d'incertitude étant liée aux échantillons hors distribution. En utilisant ici aussi un modèle auxiliaire avec apprentissage de confiance, nous améliorons les performances sur divers ensembles de données de classification d'images.

\end{vcenterpage}

\tableofcontents
\listoftables            
\addcontentsline{toc}{chapter}{List of tables}
\adjustmtc
\listoffigures           
\addcontentsline{toc}{chapter}{List of figures}
\adjustmtc

\nomenclature[A]{AI}{Artificial intelligence}
\nomenclature[A]{CV}{Computer vision}
\nomenclature[A]{NLP}{Natural language processing}
\nomenclature[A]{ML}{Machine learning}
\nomenclature[A]{DL}{Deep learning}
\nomenclature[A]{DNN}{Deep neural network}
\nomenclature[A]{BNN}{Bayesian neural network}
\nomenclature[A]{MC}{Monte-Carlo}
\nomenclature[A]{ConvNet}{Convolutional neural network}
\nomenclature[A]{ENN}{Evidential neural network}
\nomenclature[A]{KL}{Kullback-Leibler}
\nomenclature[A]{NLL}{Negative log-likelihood}
\nomenclature[A]{OOD}{Out-of-distribution}
\nomenclature[A]{MCP}{Maximum class probability}
\nomenclature[A]{TCP}{True class probability}
\nomenclature[A]{MLP}{Multi-layer Perceptron}
\nomenclature[A]{NN}{Neural network}
\nomenclature[A]{\textit{i.i.d.}}{independent and identically distributed}

\nomenclature[R]{$P$, $p$}{probability measure, density function}
\nomenclature[R]{$X$, $Y$}{random variables}
\nomenclature[R]{$\cX$, $\x$, $\x_n$}{input space, input}
\nomenclature[R]{$\cY$, $y$, $y_n$}{output space, output}
\nomenclature[R]{$\cD$}{training dataset}
\nomenclature[R]{$\cH$, $h$}{hypothesis space, hypothesis}
\nomenclature[R]{$\cL$, $\ell$}{loss function}
\nomenclature[R]{$\mathcal{N}$}{Normal distribution}
\nomenclature[R]{$\mathbb{R}$}{set of real numbers}
\nomenclature[R]{$\mathbb{E}[X]$}{expected value of random variable $X$}
\nomenclature[R]{$\textrm{Var}[X]$}{variance of random variable $X$}
\nomenclature[R]{$\bbH[X]$}{entropy of random variable $X$}
\nomenclature[R]{$\bbI[X,Y]$}{mutual information between random variables $X$ and $Y$}
\nomenclature[R]{$\mathbbm{1}[\cdot]$}{indicator function}
\nomenclature[R]{K}{number of classes}
\nomenclature[R]{$\mathbb{KL}(p ~\vert \vert~ q)$}{KL divergence from distribution $p$ to $q$}

\nomenclature[G]{$\balpha$}{concentration parameters of a Dirichlet distribution}
\nomenclature[G]{$\btheta$}{classification network's parameters}
\nomenclature[G]{$\bomega$}{auxiliary network's parameters}
\nomenclature[G]{$\bphi$}{confidence module's parameters}
\nomenclature[G]{$\bpi$}{parameters of a categorical distribution}

\printnomenclature
\addcontentsline{toc}{chapter}{Nomenclature}

\mainmatter
\pagestyle{fancy}


\chapter{Introduction}
\label{chap1}
\minitoc
\section{Context}
\label{chap1:sec:context}

From Rosenblatt's Perceptron \cite{Rosenblatt1958ThePA} to the rise of attention-based neural networks (`Transformers') \cite{attention2017}, the field of artificial intelligence (AI) has been experiencing alternative periods of hype cycles followed by disappointment, reduced funding and interest. However, the current advances in deep learning (DL) \cite{lecun2015deeplearning} not only raised interest among AI researchers but also drive technological progress in many science disciplines, including physics, biology, as well as in manufacturing and other industrial applications\footnote{To grasp the impact of AI on today's world, the `State of AI Report' published every year is a thorough compilation of developments in research, industry and politics: \url{https://www.stateof.ai}.}. Since the stunning victory of a convolutional neural network architecture, AlexNet \cite{krizhevsky2012imagenet}, at the Large Scale Visual Recognition Challenge (LSVRC) in 2012, deep learning is now ubiquitous in the fields of computer vision (CV) \cite{He2015,SSD2016,ChenPK0Y16}, natural language processing (NLP) \cite{MikolovKBCK10,DevlinCLT19}, speech recognition \cite{hinton2012deep,hannun2014speech} and reinforcement learning \cite{Silver_2016} (see \cref{chap1:fig:ai_applications}), accounting for a larger portion of papers published in each respective conference.

\begin{figure}[ht]
\centering
    \includegraphics[width=0.95\linewidth]{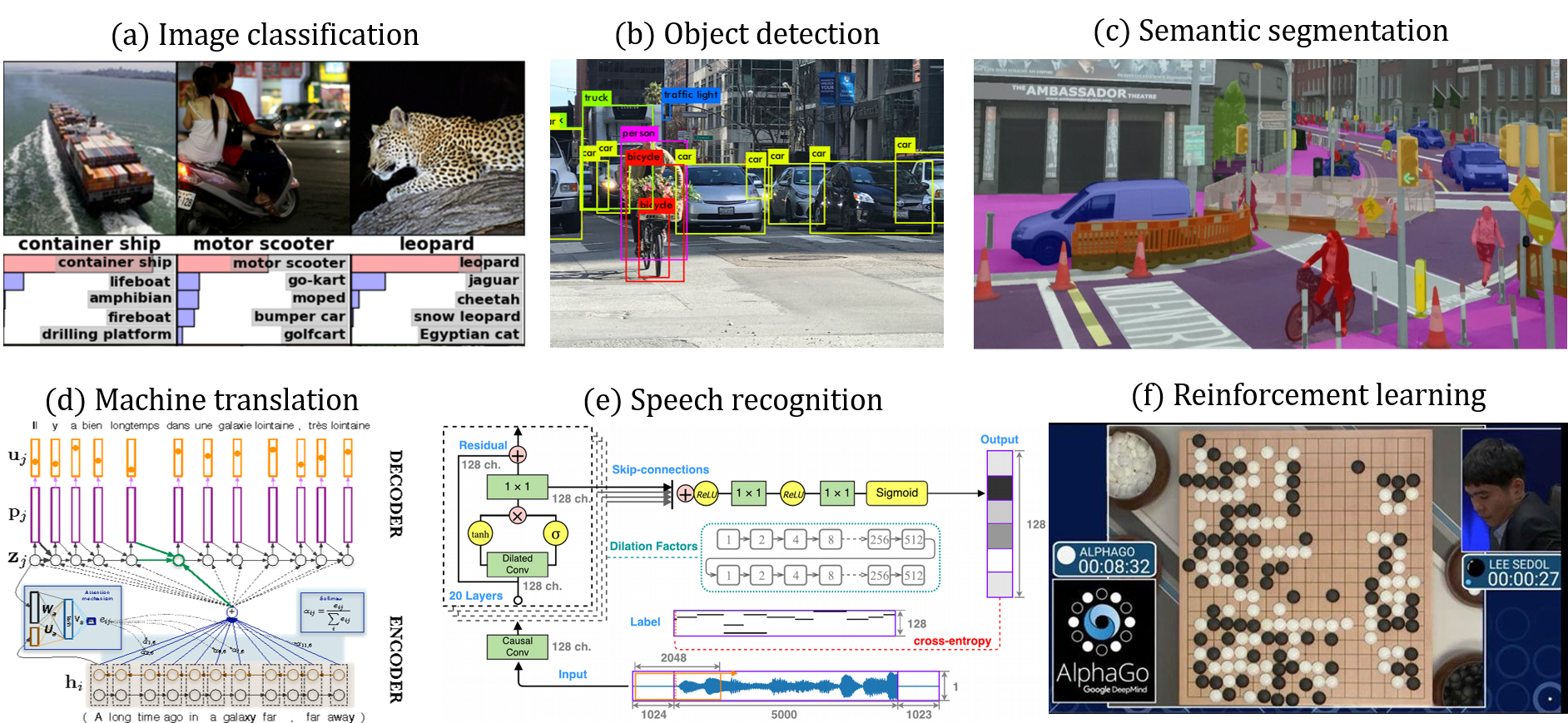}
    \caption[Examples of successful applications of AI]{\textbf{Examples of successful applications of AI:} in computer vision (a,b,c), natural language processing (d), speech recognition (e) and reinforcement learning mastering the game of Go (f). \textit{Image credits to Krizhevsky et al. \cite{krizhevsky2012imagenet}, Hui \cite{HuiYoLO}, Neuhold et al. \cite{neuhold-iccv2017}, van den Oord et al. \cite{WaveNet}, Garcia-Martinez et al. \cite{Garcia-Martinez16} and DeepMind \cite{AlphaGo2020}.}}
\label{chap1:fig:ai_applications}
\end{figure}

Recent breakthroughs in computer vision thanks to deep learning largely explain the spectacular revival of autonomous driving with major tech players such as Waymo, Tesla, Baidu and Yandex investing in self-driving car programs. Deep learning is now being used in various autonomous driving modules. In perception, convolutional neural networks process information coming from visual cameras to understand a scene and detect crucial aspects of the environment in real-time \cite{JanaiGBG17}: nature and position of other vehicles, bicycles, pedestrians; position and meaning of road markings, signs, lights; navigable space; position of obstacles; etc. To enrich scene analysis, perception systems in autonomous driving complements traditional cameras with active sensors such as radar sensors and LiDARs (Light Detection and Ranging) measuring sparse but direct tri-dimensional aspects of dynamic scenes. Perception with these sensors also tend to rely more and more and deep learning models \cite{Ouaknine_2021_ICCV,puy20flot}, hence they can be combined to improve the quality of the higher-level decision system via sensor fusion \cite{Varghese2015OverviewOA}. As one of the world leaders in automotive sensors, Valeo, which is funding this thesis, is positioned at the heart of this current revolution, developing high-quality LiDARs with their SCALA technology. While deep learning is mostly used in perception modules, promising research aims to apply it also in planning, such as trajectory forecasting for the objects present in the car's environment. End-to-end approaches, from perception to control, by predicting steering angle and acceleration is also starting to emerge with deep reinforcement learning \cite{marin2020}. While these incredible progresses are undeniable, at the time of writing of this manuscript, robotaxis are not deployed yet and many challenges still need to be resolved for autonomous driving before large scale commercialisation, in particular by addressing issues related to safety.

\section{Motivations}
\label{chap1:sec:motivations}

Despite its clear benefits to many applications, the deployment of machine learning (ML) models in high-stakes environments raises serious questions about its impacts on our society. \emph{AI safety} \cite{AmodeiOSCSM16,rudner2021} is an area of research that aims to identify causes of unintended and harmful behaviour in machine learning systems and to develop tools to ensure these systems work safely and reliably. Such behaviour may emerge from machine learning systems \cite{Hendrycks2021UnsolvedPI} when:
\begin{itemize}
    \item exposed to unusual situations, distributional changes on inputs \cite{wilds2020} or long-tail scenarios \cite{Anguelov2019} (\emph{`Robustness'});
    \item subjected to corruptions during training, such as data poisoning \cite{Steinhardt2017}, or during inference with adversarial attacks from malicious opponents \cite{realadvattacks2021} (\emph{`External Safety'});
    \item the learning objective is not aligned with human values -- which may be hard to specify though -- or the model uses shortcuts during optimization \cite{Geirhos2020a} which are not transferable to more challenging testing conditions (\emph{`Alignment'}).
\end{itemize}
When deployed in the wild, a machine learning system should be able to detect these hazardous cases (\emph{`Monitoring'}) and estimate whether it is confident in its prediction in order to prevent accidents.  

\sloppy
Accidents occurring with autonomous cars are typical examples where repercussions can be catastrophic. During the perception step, low-level feature extraction such as image segmentation and image localisation are used to process raw sensory inputs \cite{BojarskiTDFFGJM16}. Outputs of such models are then fed into higher-level decision-making procedures. However, mistakes done by lower-level machine learning components can propagate up the decision-making process and lead to devastating results. One striking example is the tragic incident which happened on May 7\textsuperscript{th} 2016 near Williston (Florida, USA) and resulted in the first death caused by a car with highly automated driving assistance \cite{NTSB2017}. Tesla, the car manufacturer, stated that the accident originated from the vision system which incorrectly classified the white side of a turning trailer truck as a bright sky\footnote{\url{https://www.tesla.com/blog/tragic-loss}} (\cref{chap1:fig:tesla_accident}). Visual signals can indeed be fooled or not adapted in some arduous conditions such as heavy raining, bright sky or night time. As the NTSB noted in their report \cite{NTSB2017}, ``introducing automation in complex and unstructured environment is very challenging" and they recommended to manufacturers of vehicles equipped with automation systems to ``incorporate system safeguards that limit the use of automated vehicle control systems to those conditions for which they were designed". Since then, several other accidents and crashes with self-driving cars continue to occur, including the first recorded case of a pedestrian fatality in 2018 \cite{NTSB2019}. Among the factors explaining the collusion, the NTSB report stated that the system of the Uber test vehicle failed to recognize the woman, first identifying her as an unknown object, next as a vehicle, then as the bicycle she was pushing. Correctly monitoring and assessing system confidence in its predictions appears to be more than necessary to safely deploy ML models in high-stakes environments \cite{mcallister2017}. Progress on these issues is crucial in autonomous driving to achieve certification from transportation authorities but also to arouse enthusiasm from users.

\begin{figure}[t]
\centering
\begin{minipage}[c]{0.45\linewidth}
\centering
    \includegraphics[width=\linewidth]{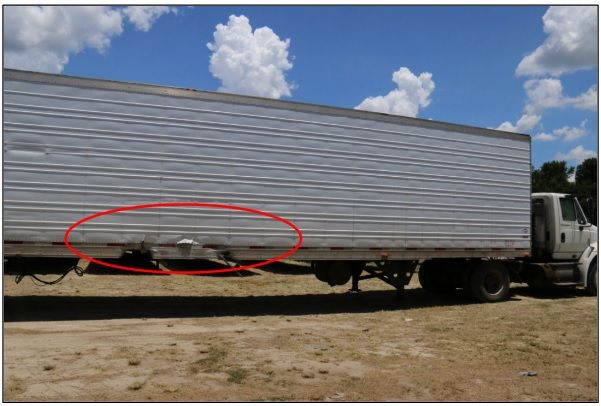}
    \subcaption{Photo of the involved white truck}
    \label{chap1:fig:tesla_accident_photo}
\end{minipage}%
\hspace{0.1cm}
\begin{minipage}[c]{0.45\linewidth}
\centering
    \includegraphics[width=\linewidth]{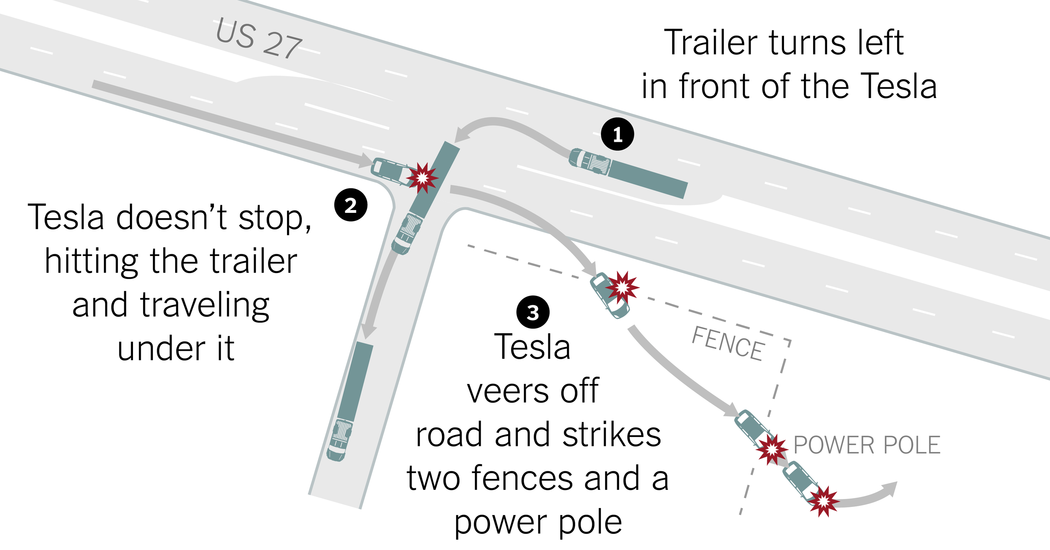}
    \vspace{0.5cm}
    \subcaption{Outline of the accident}
    \label{chap1:fig:tesla_accident_scheme}
\end{minipage}%
\caption[Tesla's deadly crash in May 2016: photo of the white truck and outline of the accident]{Tesla's deadly crash in May 2016: photo of the white truck confused with bright sky by Tesla's vision system (\cref{chap1:fig:tesla_accident_photo}) and outline of the accident (\cref{chap1:fig:tesla_accident_scheme}). Image credits: New York Times\footnotemark.}
\label{chap1:fig:tesla_accident}
\end{figure}
\footnotetext{\url{https://www.nytimes.com/interactive/2016/07/01/business/inside-tesla-accident.html}}

\begin{figure}[t]
\centering
\begin{minipage}[c]{0.30\linewidth}
\centering
    \includegraphics[width=0.9\linewidth]{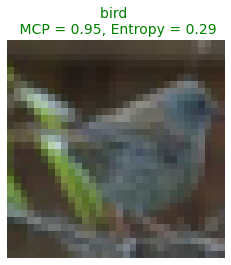}
    \subcaption{Correct prediction}
    \label{chap1:fig:overconfidence_correct}
\end{minipage}%
\hspace{0.1cm}
\begin{minipage}[c]{0.30\linewidth}
\centering
    \includegraphics[width=0.9\linewidth]{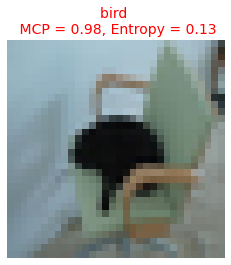}
    \subcaption{In-distribution error}
    \label{chap1:fig:overconfidence_error}
\end{minipage}%
\hspace{0.1cm}
\begin{minipage}[c]{0.30\linewidth}
\centering
    \includegraphics[width=0.9\linewidth]{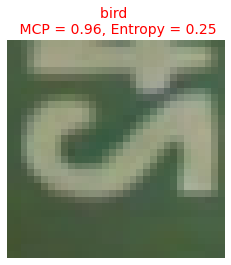}
    \subcaption{Out-of-distribution sample}
    \label{chap1:fig:overconfidence_ood}
\end{minipage}%
\caption[Comparison of confidence estimates for a model trained on CIFAR-10 dataset between a correct prediction, an in-distribution (CIFAR-10) error, and an out-of-distribution sample taken from SVHN dataset]{Comparison of confidence estimates using MCP or predictive entropy for a model trained on CIFAR-10 dataset \cite{Krizhevsky09} between a correct prediction (\cref{chap1:fig:overconfidence_correct}), an in-distribution error (\cref{chap1:fig:overconfidence_error}), and an out-of-distribution (OOD) sample taken from SVHN dataset \cite{svhn-dataset}. The model assigned a higher MCP value and a lower entropy, hence a large confidence score, to the error and the OOD sample than to the correct prediction, which is not desirable in uncertainty estimation.}
\label{chap1:fig:overconfidence}
\end{figure}

Knowing when a model doesn't know is important to improve trustworthiness and safety \cite{mcallister2017}. By assigning high levels of uncertainty to erroneous predictions, a ML system could have been able to avoid previous catastrophes by sending a trigger alarm or giving back control to users. Related to the example of sensor fusion mentioned earlier, when evaluating high uncertainty for a prediction output by the visual camera during night time, a system could decide to rely more on active sensors predictions which are more robust to these light conditions. One key of a good uncertainty-based fusion of multiple sensor's predictions is to ensure that probabilities are well \emph{calibrated} (see \cref{chap2:subsec:calibration} for details about the meaning of probability calibration). One would also like the confidence criterion to correlate successful predictions with high values. Some paradigms, such as self-training with pseudo-labeling \cite{lee-icml2013, Li_2019_CVPR}, consist in picking and labeling the most confident samples before retraining the network accordingly. The performance improves by selecting successful predictions thanks to an accurate confidence criterion. 

For practical systems, it may also be important to understand what the model does not know. Common classification in uncertainty estimation distinguishes two types of uncertainty.
Aleatoric uncertainty, also termed \emph{data} uncertainty, is due to the inherent stochasticity of the outcome of an experiment. 
This type of uncertainty arises due to class confusion, sensor noise, or non-discriminant features, such as in \cref{chap1:fig:overconfidence_error} where the model confuses a cat on a chair with a bird (\emph{known-unknown}). Epistemic uncertainty refers to uncertainty caused by a lack of knowledge of the model, for instance an input from another distribution or from an unknown class (\emph{unknown-unknown}), as illustrated in \cref{chap1:fig:overconfidence_ood}. A good estimation of uncertainty is also useful to discriminate unusual situations from regular inputs, such as driving conditions in snowy roads in Russia while the car's ML system has been trained on data collected in California. By providing more data to a model, we can reduce this uncertainty. Identifying samples with large epistemic uncertainty is also beneficial for classification improvements in active learning \cite{gal-active2017} and for efficient exploration in reinforcement learning \cite{Gal2016}.

While confidence estimation\footnote{The terms \emph{uncertainty} and \emph{confidence} estimation are used alternatively in this manuscript, the latter referring to the opposite of uncertainty.} has a long history in machine learning \cite{Chow1957AnOC,zaragoza1998confidence,Blatz2004,elyaniv10a}, a series of recent works showed that modern neural networks (NNs) suffer from several conceptual drawbacks which make them unreliable \cite{hendrycks17baseline,nguyen2015,Gal2016,kendall2015bayesian,Hein_2019_CVPR}. In classification, the output of the last layer is fed to the softmax function, which produces a probability distribution over class labels. However, with modern NNs, these  probabilities have been shown to be non-calibrated \cite{guo2017} which makes NNs unsuited for a larger decision-making pipelines. To obtain uncertainty estimates, a widely used baseline with NNs is to take the value of the predicted class’ probability \cite{hendrycks17baseline}, namely the \emph{maximum class probability} (MCP), or to use the predicted entropy given the predicted probability distribution. But as shown in \cref{chap1:fig:overconfidence}, these measures can produce high confident predictions for in-distribution errors, which hardens error detection or selective classification where one would filter out these samples. When deployed in real conditions, machine learning models often encounter samples that are away from the training distribution, such as covariate shift or new classes. However, NNs are known to be brittle to distribution shifts \cite{wilds2020} with their prediction performances severely decreasing as they tend to rely on spurious correlations \cite{Geirhos2020a}. Multiple works also showed that NNs provide over-confident predictions for samples far from the training data \cite{Hein_2019_CVPR}, including fooling images \cite{nguyen2015} or adversarial inputs \cite{Szegedy2014}. \cref{chap1:fig:overconfidence_ood} shows the example of an out-of-distribution sample taken from SVHN dataset \cite{svhn-dataset} and predicted as a bird with high confidence by a model trained on CIFAR-10 dataset \cite{Krizhevsky09}. 

The development of principled methods for deep learning models such as Bayesian neural networks (BNNs) \cite{Ghahramani2015Nature,Gal2016,NIPS2019_9472} and ensembles \cite{deepensembles2017} enable deep neural networks to capture epistemic uncertainty more accurately. Such as with ensemble, predictions with BNNs are obtain by averaging multiple forward pass due to their finite approximation of the predictive distribution (Monte Carlo sampling). But this comes at the expense of an increased computational cost to obtain uncertainty estimates. In addition, recent works \cite{ovadia2019,Ashukha2020Pitfalls} show they still fall short in giving useful estimates of their predictive uncertainty. Despite these progress in uncertainty estimation, there remains a gap to be filled in detecting in-distribution errors and abnormal samples to avoid serious repercussions when deploying a fleet of driverless robotaxis.

\section{Contributions and outline}
\label{chap1:sec:contrib}

In this thesis, we tackle the challenge of providing reliable uncertainty estimates along with deep neural network predictions with applications for autonomous driving. In particular, we aim to improve the detection of erroneous predictions at test time by distinguishing them from correct ones. Errors can be of different natures and the following contributions will firstly address the task of in-distribution misclassification detection, also known as \emph{failure prediction} (\cref{chap3}). Along with the detection of such examples at test time, we also elaborate on leveraging our proposed approach in the case of domain adaptation (\cref{chap4}), where self-training approaches rely on uncertainty estimates to select samples in the re-labelling phase. Finally, we consider the presence of anomalies and consequently propose to detect both in-distribution errors and out-of-distribution samples with a single uncertainty measure (\cref{chap5}). 

\paragraph{Outline.} In regards with the challenges mentioned above, our contributions are the following:
\begin{itemize}
    \item \hyperref[chap3]{Chapter 3: Learning A Model’s Confidence via An Auxiliary Model} \\
After exposing the limits of standard uncertainty measures with deep neural networks in classification, we define a new confidence criterion, \emph{True Class Probability}, which provides theoretical guarantees and empirical evidence for confidence estimation. We  propose  to  design  an  auxiliary  neural  network, coined \emph{ConfidNet}, which aims to learn this confidence criterion from data. An exploration of the classification-with-rejection framework strengthens the rationale of the proposed approach. Extensive experiments are conducted for validating the relevance of the proposed approach on image classification and semantic segmentation datasets. An analysis of the impact of loss function, criterion and learning scheme is also presented.
\\ 
\item \hyperref[chap4]{Chapter 4: Self-Training with Learned Confidence for Domain Adaptation} \\
Self-training has recently proven a potent strategy to improve the effectiveness of Unsupervised Domain Adaptation (UDA) in semantic segmentation. This line of work mostly relies on the generation of pseudo-labels over the unannotated target domain to incorporate target images and learn a better segmentation adaptation model. A crucial issue is to base the pseudo-label selection on reliable confidence measures. We propose to adapt our learned confidence approach to estimate the confidence of the segmentation network in its predictions and to use these confidence estimates as a criterion  for  pseudo-label selection. Named \emph{ConDA}, the proposed adaptation of our original approach to this new context includes two further contributions: (1) an adversarial training scheme to reduce the gap between confidence maps in source and target domains; (2) an enhanced architecture for the confidence network to perform multi-scale confidence estimation. We show that this strategy produces more accurate pseudo-labels and outperforms strong baselines on challenging UDA segmentation benchmarks.
\\
\item \hyperref[chap5]{Chapter 5: Simultaneous Detection of Misclassifications and Out-of-Distribution Samples with Evidential Models} \\
Beyond errors due to misclassifications by deep neural networks, models may encounter data that is unlike the model’s training data when deployed in the wild. In this chapter, we tackle the task of jointly detecting errors and anomalies in a single uncertainty measure. To this end, we leverage the second-order uncertainty representation provided by evidential models \cite{sensoy2018,malinin2018}, a Bayesian method based on subjective logic, and we introduce \emph{KLoS}, a KL-divergence criterion defined on the class-probability simplex. We show that KLoS quantifies in-distribution and out-of-distribution uncertainty more accurately than first-order measures such as the predictive entropy. In a similar spirit to the previous contribution, we design an auxiliary neural network, \emph{KLoSNet}, to learn a refined measure directly aligned with the evidential training objective. Our experiments show that KLoSNet acts as a class-wise density estimator and outperforms current first-order and second-order uncertainty measures to simultaneously detect misclassifications and OOD samples. We study the impact of the choice of OOD training samples on our method and concurrent measures, which sheds a new light on the impact of the vicinity of this data with OOD test data.

\end{itemize}

Before delving in the core of the thesis, we present in \cref{chap2} an overview of the recent progress in uncertainty estimation with deep neural networks, including a thorough characterization of the source of uncertainty, methods to model uncertainty and the various ways of evaluating the quality of uncertainty estimates. Finally, in \cref{chap6}, we conclude this thesis with an overview of the contributions of each chapter and we propose several interesting perspectives for future works.

\section{Related publications}
\label{chap1:sec:publi}
This thesis is based on the material published in the following papers: \\

\begin{tabular}{p{0.90\textwidth} c}
\toprule
Publication & Chapter \\
\midrule
Charles Corbière, Nicolas Thome, Avner Bar-Hen, Matthieu Cord, Patrick Pérez. ``Addressing Failure Prediction by Learning Model Confidence", in \textit{Advances in Neural Information Processing Systems (NeurIPS)}, 2019.  & \ref{chap3} \\
\midrule
Charles Corbière, Nicolas Thome, Antoine Saporta, Tuan-Hung Vu, Matthieu Cord, Patrick Pérez. ``Confidence Estimation via Auxiliary Models", in \textit{IEEE Transactions on Pattern Analysis and Machine Intelligence}, 2021.  & \ref{chap3},\ref{chap4} \\
\midrule
Charles Corbière, Marc Lafon, Nicolas Thome, Matthieu Cord, Patrick Pérez. ``Beyond First-Order Estimation with Evidential Models for Open-World Recognition", \textit{ICML 2021 Workshop on Uncertainty and Robustness in Deep Learning.} & \ref{chap5} \\
\bottomrule
\end{tabular}

\chapter{Uncertainty Estimation in Deep Learning for Classification}
\label{chap2}
\adjustmtc
\begin{center}
  \textsc{Chapter Abstract}
\end{center}
\begin{quote}
\noindent \textit{In this chapter, we propose a general overview of the literature regarding uncertainty estimation with deep neural networks. The sources of uncertainties are primarily discussed in \cref{chap2:sec:sources_unc} with a formalisation in the context of supervised learning. Traditionally, uncertainty is modeled in a probabilistic way and probabilistic methods have always been perceived as the natural tool to handle uncertainty. In particular, the Bayesian framework provides a probabilistic representation of uncertainty by incorporating degrees of belief. After reviewing the basic concepts of deep learning, we will see in \cref{chap2:sec:representation} how Bayesian approaches model the different sources of uncertainty. In addition, we enumerate the measures proposed along with standard and Bayesian approaches to quantify uncertainty. We describe their behavior and, above all, their limits that will be addressed in this thesis. While reliable uncertainty estimates are crucial in many safety-critical applications, their evaluation remains challenging as the ‘ground truth’ uncertainty estimates are usually not available. One would expect them to truly reflect probabilities in a multi-sensor perception system where late fusion rely on these probabilities. On the other hand, the goal may be to detect errors or anomalies and thus a reliable ranking between correct predictions and abnormal samples is desired. In \cref{chap2:sec:evaluation}, we present the existing tasks commonly used in the literature to evaluate the quality of uncertainty estimates with deep neural networks.}
\end{quote}

\clearpage
\minitoc[tight]

\section{Sources of uncertainty}
\label{chap2:sec:sources_unc}

Uncertainty can arise from various reasons and may require a different handling depending of their nature. After introducing a traditional categorization of sources in uncertainty in machine learning literature, we dive into a more precise identification within the setting of supervised learning.

\subsection{General characterisation}
\label{chap2:subsec:definition}

The nature of uncertainties has been a topic of discussion by statisticians~\cite{lindley2000}, economists \cite{Knight1921}, engineers \cite{engineerstatistic1970} and other specialists facing random processes. In the machine learning literature \cite{Kiureghian2009,kendall2017,hullermeier2020,SENGE201416,malinin2018}, sources of uncertainty are traditionally characterized as either \emph{aleatoric} or \emph{epistemic}. When an outcome of an experiment may variate due to intrinsic randomness of a phenomenon, \eg coin flipping, we refer to aleatoric uncertainty. Another term used alternatively is \emph{data uncertainty}, which emphasizes that the stochasticity is inherent to the observed object rather than the model. This type of uncertainty arises due to class confusion, noise, non-discriminant features \eg, sun glare or rain drop in autonomous driving images (see \cref{chap2:fig:aleatoric_unc}).  

\begin{figure}[ht]
\centering
\begin{minipage}[c]{0.54\linewidth}
\centering
    \includegraphics[width=\linewidth]{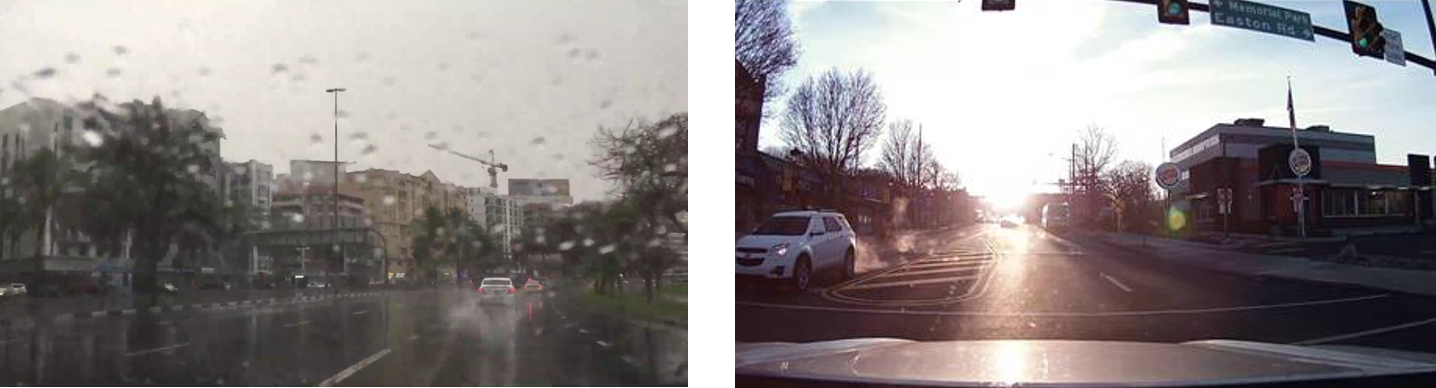}
    \subcaption{Rain drops and sun glare}
    \label{chap2:fig:rain_sunglare}
\end{minipage}%
\hspace{0.1cm}
\begin{minipage}[c]{0.22\linewidth}
\centering
    \includegraphics[width=\linewidth]{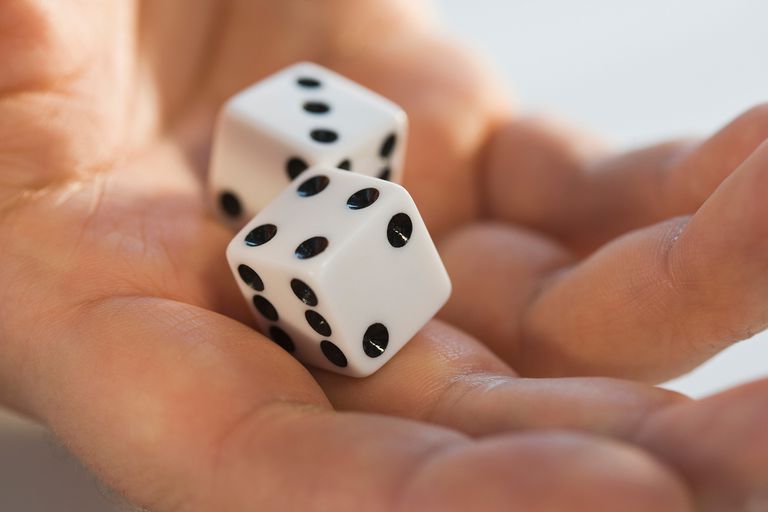}
    \subcaption{Dice roll}
    \label{chap2:fig:dice_roll}
\end{minipage}%
\hspace{0.1cm}
\begin{minipage}{0.21\linewidth}
\centering
    \includegraphics[width=\linewidth]{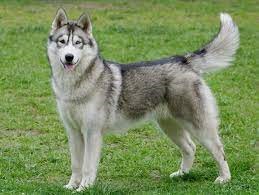}
    \subcaption{Dog/wolf confusion}
    \label{chap2:fig:husky}
\end{minipage}
\caption[Examples of aleatoric uncertainty in computer vision]{\textbf{Examples of aleatoric uncertainty in computer vision.} (a) Adverse weather conditions in autonomous driving \cite{Zendel_2018_ECCV} harden perception for image-based modules; (b) a dice roll is inherently stochastic due to the extreme sensitivity to initial conditions that cannot be measured with sufficient precision; (c) although being a breed of dog, huskies share many similarities with wolves.}
\label{chap2:fig:aleatoric_unc}
\end{figure}

Epistemic uncertainty refers to uncertainty caused by a lack of knowledge, hence intricately linked to the model representing the random process. For models trained on computer vision tasks, we show some examples of samples with large epistemic uncertainty in \cref{chap2:fig:epistemic_unc}. In contrast with aleatoric uncertainty, it can be reduced by providing additional information, here in the form of training data. To illustrate the distinction between the types of uncertainty, let us consider weather forecasting \cite{kull2014} which discriminates a predicted probability score from the uncertainty in that prediction: “[...] a weather forecaster can be very certain that the chance of rain is 50\%; or her best estimate at 20\% might be very uncertain due to lack of data.”. Here, the amount of aleatoric uncertainty corresponds to 50\%, due to the complex and multi-variate factors resulting to rain; while the weather forecaster acknowledges not being confident in his prediction (20\%), which relates to epistemic uncertainty. With machine learning predictions, epistemic uncertainty is expected to be high for samples far from the training distribution. Also known as distribution shifts, mismatch between input data in deployment stage and the original training distribution can arise in many real-world tasks \cite{wilds2020}. By providing more data to a model, we can reduce this type of uncertainty. In summary, epistemic uncertainty refers to the reducible part of the total uncertainty, whereas aleatoric uncertainty refers to the non-reducible part.

\begin{figure}[ht]
\centering
\begin{minipage}[c]{0.32\linewidth}
\centering
    \includegraphics[width=\linewidth]{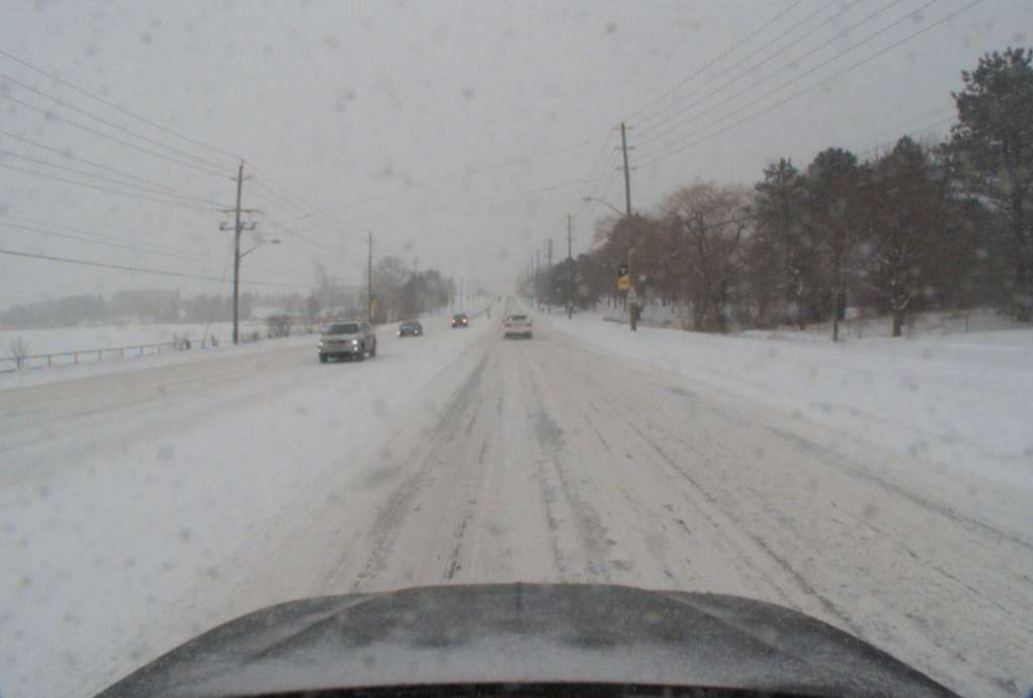}
    \subcaption{Snowy road}
    \label{chap2:fig:snow_road}
\end{minipage}%
\hspace{0.1cm}
\begin{minipage}[c]{0.32\linewidth}
\centering
    \includegraphics[width=\linewidth]{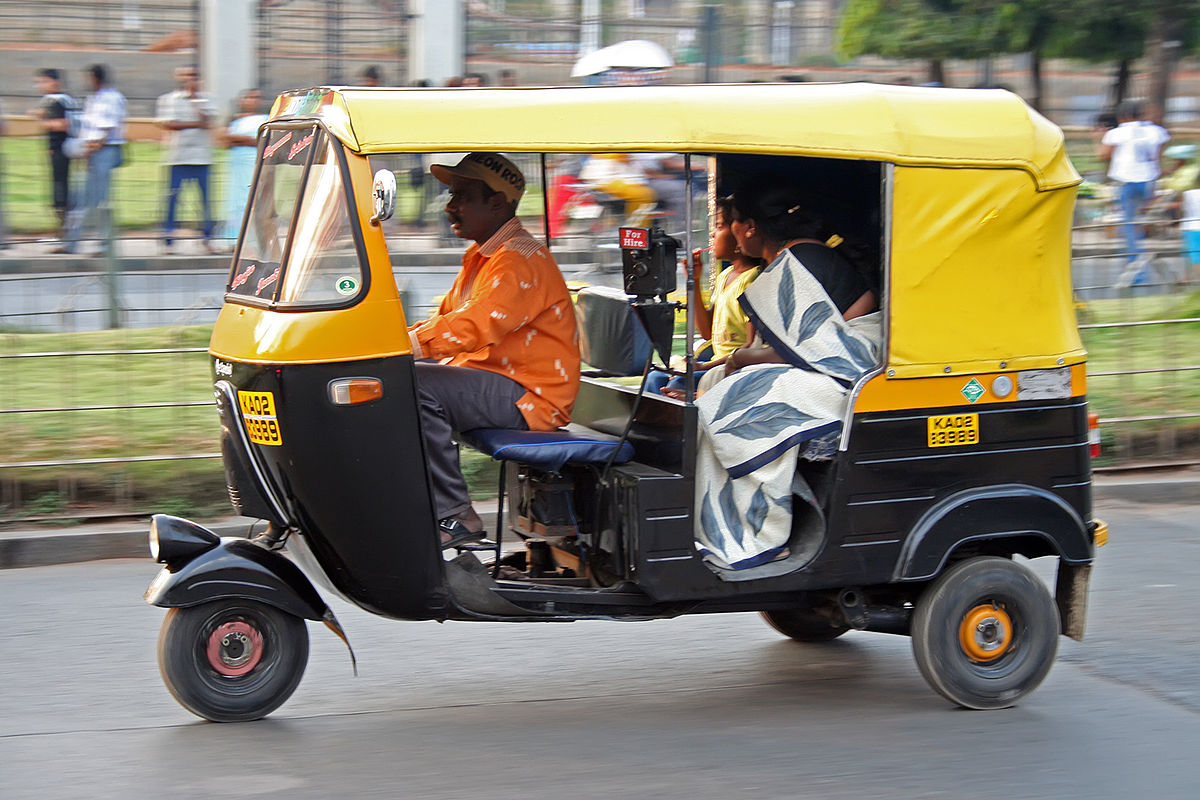}
    \subcaption{Rickshaw}
    \label{chap2:fig:rickshaw}
\end{minipage}%
\hspace{0.1cm}
\begin{minipage}{0.30\linewidth}
\centering
    \includegraphics[width=\linewidth]{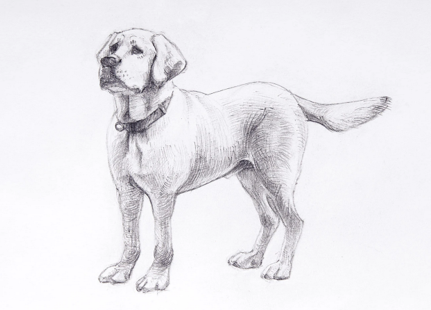}
    \subcaption{Drawing of a dog}
    \label{chap2:fig:drawing}
\end{minipage}
\caption[Examples of epistemic uncertainty in autonomous driving vision]{\textbf{Examples of epistemic uncertainty in computer vision.} An autonomous car can be exposed to unseen conditions (\cref{chap2:fig:snow_road}) or new semantic class (\cref{chap2:fig:rickshaw}). While being only trained on natural images, an ImageNet-trained classifier might encounter images of known classes but with different rendering \eg, drawings (\cref{chap2:fig:drawing}).}
\label{chap2:fig:epistemic_unc}
\end{figure}

The idea between distinguishing the two types of uncertainty is to characterize the uncertainty coming from the model and to take adequate actions to reduce it. For example, in active learning, selecting and labelling regions with large epistemic uncertainty should better improve the model's capacity to generalize, while focusing on large aleatoric uncertainty would be inefficient \cite{gal-active2017}. However, in some cases, such distinction may be unnecessary. For instance, when deploying a ML system for autonomous driving applications, the source of uncertainty might be inconsequential: the main purpose is to decide whether the agent should send a trigger alarm -- or give back control to user -- if the total uncertainty estimation regarding its prediction is high. This case is studied in \cref{chap5}. On a related matter, aleatoric and epistemic uncertainties are not absolute notions but are context-dependent. As illustrated by \cite{hullermeier2020} and replicated in \cref{chap2:fig:unc_add_feat}, embedding data in a higher-dimensional space can reduce aleatoric uncertainty but it may also increase epistemic uncertainty as more data are required to fit a model.

\begin{figure}[t]
\centering
\begin{minipage}[c]{0.49\linewidth}
    \centering
    \includegraphics[width=\linewidth]{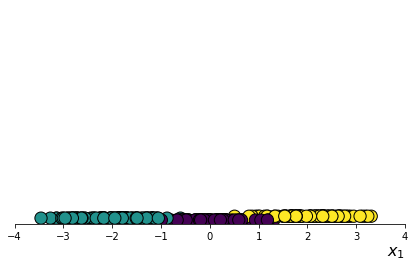}
\end{minipage}%
\hspace{0.1cm}
\begin{minipage}[c]{0.49\linewidth}
    \centering
    \includegraphics[width=\linewidth]{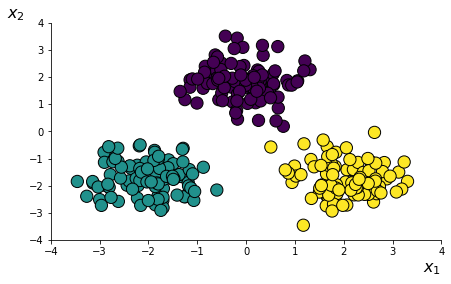}
\end{minipage}%
\caption[Illustration of the versatility of aleatoric and epistemic uncertainty]{\textbf{Embedding data in a higher-dimensional space can reduce aleatoric uncertainty.} While the two classes are overlapping on the left plot, by adding a second feature $\x_2$ they become separable, and consequently aleatoric uncertainty is reduced \cite{hullermeier2020}. }
\label{chap2:fig:unc_add_feat}
\end{figure}

This is why uncertainty modelling in machine learning should come with a clear description of the setting of the learning problem.

\subsection{Uncertainty in supervised learning}
\label{chap2:subsec:supervised_learning}

Let us consider a training dataset $\cD= \{ (\x_n, y_n) \}_{n=1}^N$ composed of $N$ \textit{i.i.d.} training samples, where $\x_n \in \cX$ is an input,  deep feature maps from an image or the image itself for instance, and $y_n \in \cY$ its corresponding output. These samples are drawn from an unknown joint distribution $P(X,Y)$ over $(\cX, \cY)$. Given loss function $\ell: \cY \times \cY \rightarrow \mathbb{R}^+$, the goal of supervised learning is to find a \emph{hypothesis} $h^*:\cX \rightarrow \cY$ within a fixed class $\cH$ of functions that minimizes the \emph{true risk}:
\begin{equation}
    h^* \in \argmin_{h \in \cH} \int_{\cX \times \cY} \ell(h(\x),y)dP(\x,y).
    \label{chap2:eq:true_risk}
\end{equation}
Because the distribution $P(X,Y)$ is unknown to the learning algorithm, we restricted the goal to find the hypothesis $\hat{h}$ that minimizes the \emph{empirical risk} given training data $\cD$:
\begin{equation}
    \hat{h} \in \argmin_{h \in \cH} \frac{1}{N} \sum_{n=1}^N \ell(h(\x_n),y_n).
    \label{chap2:eq:erm}
\end{equation}
Hypotheses $h^*$ and $\hat{h}$ are also known respectively as the \emph{Bayes estimator} and the \emph{empirical Bayes estimator} as they minimize the posterior expected value of loss $\ell$. Obviously, $\hat{h}$ is only an estimate of $h^*$ whose quality depends on the amount and diversity of training data. The uncertainty that arises due to this discrepancy is referred as \emph{approximation uncertainty}. 

Eventually, given an input $\x$, what we're interested in is to evaluate the \emph{predictive uncertainty} $p(y \vert \x)$, \ie the uncertainty related to predicting an outcome. In the case of a stochastic dependency between $\cX$ and $\cY$, even with a perfect knowledge of $P$ there still remain uncertainty, which is characterized as \emph{aleatoric uncertainty}. Given an input $\x$ with true label $y$, the best prediction would be the \emph{point-wise Bayes estimator} $f^*$:
\begin{equation}
    f^*(\x) = \argmin_{\hat{y} \in \cY} \int_\cY \ell(y,\hat{y})dP(y \vert \x).
    \label{chap2:eq:pointwise_risk}
\end{equation}
Due to the choice of the hypothesis space $\cH$, the Bayes estimator $h^*$ does not coincide with the point-wise Bayes estimator $f^*$, which give rise to \emph{model uncertainty}. Approximation uncertainty and model uncertainty are related to model aspects, either due to model design or to training data. Hence, they can be grouped as epistemic uncertainty. In practice, epistemic uncertainty is reduced to approximation uncertainty as deep neural networks with non-linear activations can theoretically approximate any continuous function (`universal approximation theorem' \cite{Hornik1989MultilayerFN}), thus $h^* \approx f^*$.

\section{Modelling uncertainty with deep neural networks}
\label{chap2:sec:representation}

Capturing both aleatoric and epistemic uncertainty is crucial to provide accurate uncertainty estimates. The Bayesian framework provides a natural probabilistic representation of uncertainty by incorporating degrees of belief. After an overview of recent developments of deep learning, we will see in this section how Bayesian approaches model uncertainty and which measures are used to quantify uncertainty in practice.

\subsection{Deep neural networks}
\label{chap2:subsec:dl}

Inspired by the simplified modelling of a biological neuron \cite{mcculloch43a}, an artificial feed-forward neural network, simply shortened here as neural network (NN), is a non-linear function $f: \cX \rightarrow \cY$ composed of a succession of non-linear mathematical functions, called \emph{layers}, that progressively transforms an input $\x$ to an output $y$:
\begin{equation}
    f(\x) = f^{(L)}\circ  f^{(L-1)}\circ  f^{(L)} \circ  \cdot \cdot \cdot \circ   f^{(1)}(\x),
\end{equation}
where $L$ is the number of layers. Each layer $l \in \llbracket 1,L \rrbracket$ is parametrized by $\btheta_l$ and we denote the overall set of parameters of the neural network as $\btheta = (\btheta_1,...,\btheta_L)$. 

A classic layer is the \emph{fully-connected} layer which consists in a linear combination of the input followed by a nonlinear activation $\mathbf{h}_l = \phi(\bw_l \mathbf{h}_{l-1} + \mathbf{b}_l)$ applied element-wise and where $\btheta_l = (\bw_l, b_l)$. The typical nonlinearities are the sigmoid function, the hyperbolic tangent and the Rectified Linear Unit (ReLU), the latter being currently the most popular. A neural network composed of at least one hidden layer is called \emph{multi-layer Perceptron} (MLP).  Thanks to their depth, DNNs are able to transform raw input data into more and more complex representations, from the low-level concepts \eg, colours or contours in computer vision, to high-level concepts such as objects, which is particularly useful for image classification. Interestingly, even with one sufficiently large hidden layer, MLPs can model any arbitrary function of the input thanks to the universal approximation theorem \cite{Hornik1989MultilayerFN}. The last layer, also called the output layer, is followed by an activation reflecting the desired output. For instance, in multi-class classification where $\cY = \llbracket 1, K \rrbracket$ with $K$ being the number of classes, the softmax function is commonly used to output a vector of categorical probabilities\footnote{In the following, we write $f(\x, \btheta)$ to denote explicitly the dependence of $f$ on its parameters $\btheta$}:
\begin{equation}
    \forall k \in \cY, \quad P(Y=k \vert~ \x, \btheta) = \frac{\exp \big ( f_k(\x, \theta) \big )}{\sum_{j \in \cY} \exp \big ( f_j(\x, \theta) \big )}.
\end{equation}

\paragraph{Training.} Neural networks are trained using gradient descent optimizers, such as stochastic gradient descent (SGD) with momentum \cite{Rumelhart1986we, Bottou10large}, Adagrad \cite{adagrad}, AdaDelta \cite{zeiler2012adadelta} and Adam \cite{kingma-iclr2015}. The gradient of the loss with respect to the model’s parameters $\btheta$ is obtained via back-propagation \cite{Rumelhart1986we}. In classification tasks, a NN with softmax activation is commonly trained via \emph{maximum likelihood estimation} on the training data, or equivalently minimizing the \emph{negative log-likelihood}:
\begin{equation}
    \hat{\btheta} \in \argmin_{\btheta} \frac{1}{N} \sum_{n=1}^N - \log P(y_n \vert \x_n, \btheta).
\end{equation}

\begin{figure}[t]
\centering
    \includegraphics[width=0.7\linewidth]{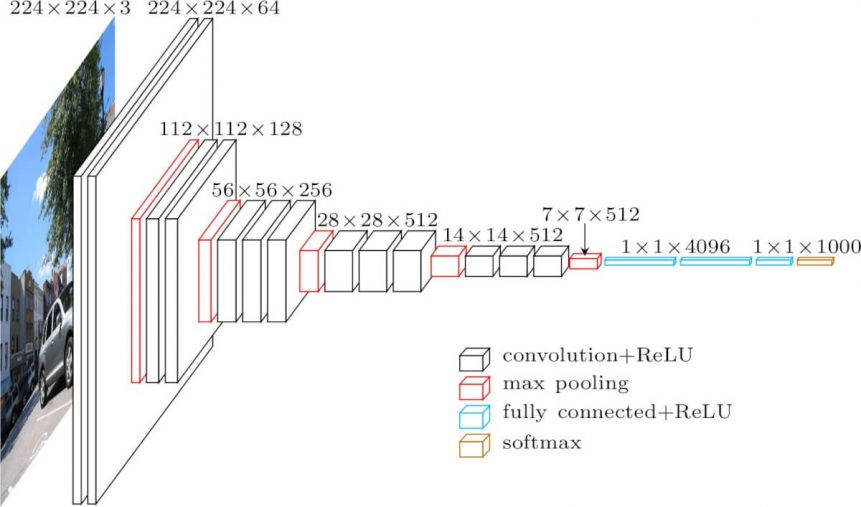}
    \caption[VGG-16 architecture]{\textbf{VGG-16 \cite{Simonyan15} architecture} consists in a succession of convolutional layers and max-pooling layers followed by fully-connected layers for image classification. \textit{Image credits: Durand \cite{DurandArchi2017}.}}
    \label{chap2:fig:vgg16}
\end{figure}

\paragraph{Convolutional neural networks.} Deep learning became ubiquitous in computer vision in 2012 when AlexNet \cite{krizhevsky2012imagenet} won the ImageNet Large Scale Visual Recognition Challenge. AlexNet is an architecture that is part of a class of neural networks named \emph{convolutional neural networks} (ConvNets). ConvNets are composed of a succession of convolutional and pooling layers, the former being a special case of matrix multiplication with circulant structure. By sharing a convolutional filter for all spatial positions, convolutional layers reduce the storage requirements of the model and encode translation equivariance. Pooling layers, or alternatively adding stride in a convolution, progressively aggregates spatial information as we go deeper in the network and produce invariance to small translations, which is particularly relevant for computer vision as we often want the response of a classifier to be independent of the location of objects in the image. In \cref{chap2:fig:vgg16}, we show the architecture of a typical ConvNet, VGG-16 \cite{Simonyan15}. As deep neural networks became deeper and deeper, training issues due to vanishing gradient started to emerge: through a large number of layers, the loss gradient becomes smaller and smaller. ResNets \cite{resnet2015} overcome this issue by adding skip connections between blocks of layers. Due to their strong performance on ImageNet, ResNets are now considered as a standard architecture for computer vision. In \cref{chap5}, we use one instance of ResNets with 18 layers, ResNet-18, which reaches 69.8\% top-1 accuracy on ImageNet, compared to 56.5\% with AlexNet and 71.6\% with VGG-16 but with considerably fewer parameters (12M for ResNet-18 vs. 138M VGG-16).

\begin{figure}[t]
\centering
    \includegraphics[width=\linewidth]{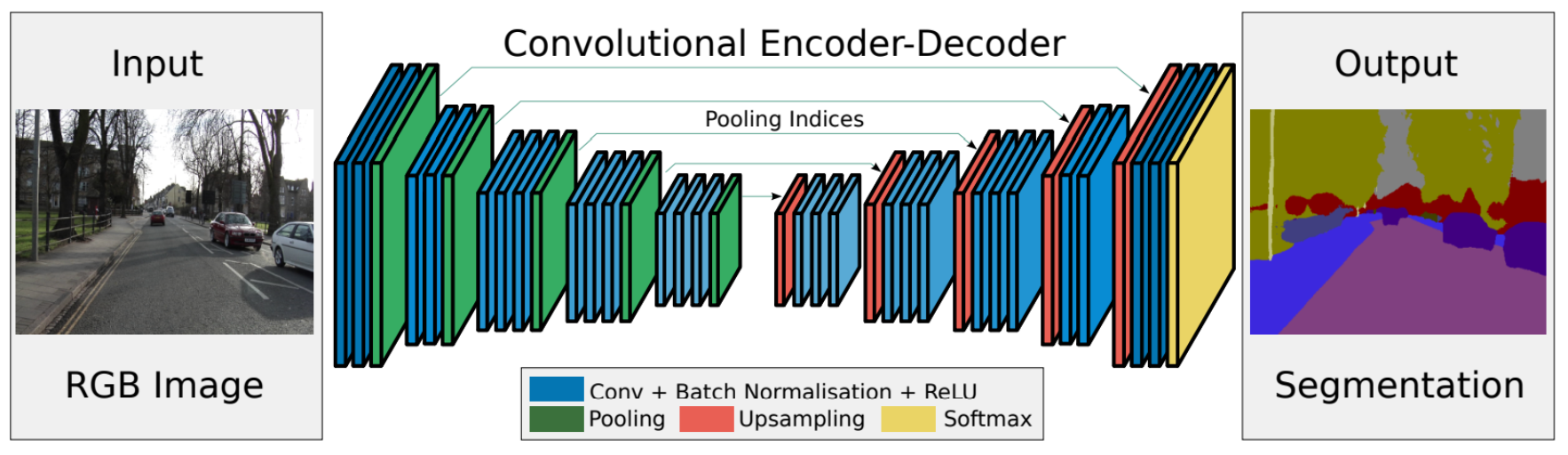}
    \caption[SegNet architecture for semantic segmentation]{\textbf{SegNet \cite{kendall2015bayesian} architecture} is a fully-convolutional neural network with the particularity that the decoder upsamples its input using the pool indices from its encoder. \textit{Image credits: Badrinarayanan et al. \cite{kendall2015bayesian}.}}
    \label{chap2:fig:segnet}
\end{figure}

\paragraph{Fully-convolutional neural networks.} Semantic segmentation can be seen a pixel-wise classification problem. The desired output is a semantic map of the same size as the image input. To meet this challenge, \emph{fully-convolutional neural networks} adopt an encoder-decoder structure where the encoder which reduce the spatial resolution and encodes a meaningful intermediate representation, then the decoder progressively recovers the spatial information by using successive upsampling operations. An example of fully-convolutional architecture used in \cref{chap3} is shown in \cref{chap2:fig:segnet} with SegNet \cite{kendall2015bayesian} which is based on VGG architecture. In \cref{chap4}, we also use DeepLab \cite{ChenPK0Y16} which showed tremendous performance on various benchmarks for semantic segmentation. While new architectures now outperform DeepLab, this architecture became a standard, used in autonomous driving benchmarks such as Cityscapes \cite{cordts-cvpr2016}.

\subsection{Bayesian approaches}
\label{chap2:subsec:bayesian}

In Bayesian statistics, a probability expresses a degree of belief or information about an event. Given hypothesis $h$, we fix a prior distribution $p(h)$ over $h$ and learning consists in updating that prior with the probability of the data given $h$, \ie likelihood, according to Bayes' rule:
\begin{equation}
    p(h \vert \cD) = \frac{p(\cD \vert h) p(h)}{p(\cD)} \propto p(\cD \vert h) p(h).
\end{equation}
The posterior distribution $p(h \vert \cD)$ captures the model's knowledge regarding hypothesis $h$ given data $\cD$. The more peaked this distribution is, the more certain the model will be in regards to epistemic uncertainty.

\begin{figure}[t]
\centering
    \includegraphics[width=\linewidth]{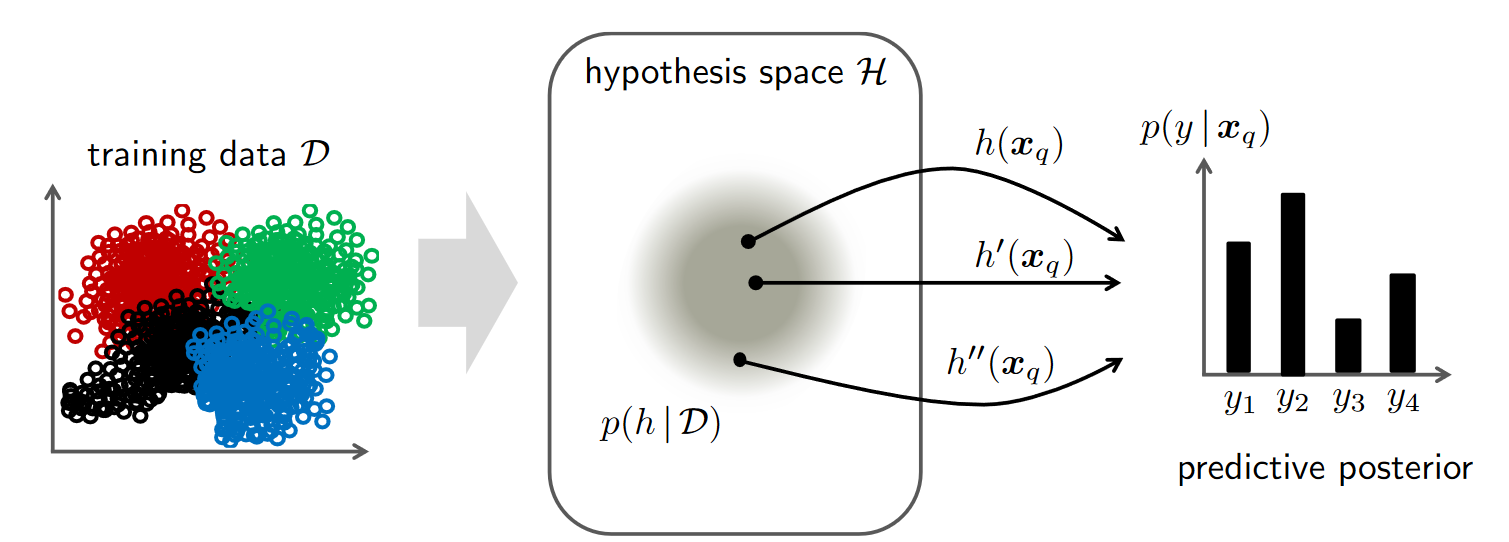}
    \caption[Illustration of Bayesian inference]{\textbf{Illustration of Bayesian inference}. Training data $\cD$ is used to update the posterior distribution $p(h \vert \cD)$. Then, given an input $\x_q$, the posterior predictive distribution is obtained by Bayesian model averaging. \textit{Image credits: Hüllermeier et al.} \cite{hullermeier2020}.}
    \label{chap2:fig:bayesian_inference}
\end{figure}

In Bayesian inference, given an unknown input $\x$, the posterior predictive distribution $p(y \vert \x, \cD)$ is obtained by \emph{Bayesian model averaging}:
\begin{equation}
    p(y \vert \x, \cD) = \int_{\cH} p(y \vert \x,h) dp(h \vert \cD).
    \label{chap2:eq:pred_distrib}
\end{equation}
Hence, the posterior predictive distribution is a weighted average over its probabilities under all hypotheses in $\cH$, weighted by the posterior probability $p(h \vert \cD)$ (see \cref{chap2:fig:bayesian_inference}).

\paragraph{Bayesian neural networks.} While traditional deep neural networks output a single point-wise prediction, Bayesian neural networks \cite{MacKay92bayesianmethods,bnn1996} (BNNs) propose to apply Bayesian inference by considering distributions over a network's parameters $\btheta$ and learning the posterior distribution $p(\btheta \vert \cD)$. The posterior predictive distribution $p(y \vert \x^*, \cD)$ is obtained by marginalizing  over the parameters $\btheta$:
\begin{equation}
    p(y \vert \x, \cD) = \int p(y \vert \x,\btheta)p(\btheta \vert \cD)d\btheta.
    \label{chap2:eq:bma}
\end{equation}
When modelling complex real-world data, exact inference may be intractable because the previous integrals cannot be expressed in closed form, since the parameters are mapped through non-linearities
in deep neural network architectures. 

\sloppy
Since the posterior distribution cannot usually be evaluated analytically, a few approximation methods have been considered to compute the posterior predictive. A first simple approach consists in approximating the posterior distribution $p(\btheta \vert \cD)$ with a Dirac distribution centered on the maximum likelihood estimator $\hat{\btheta}_{\textrm{MLE}}$: 
\begin{align}
\btheta_{\textrm{MLE}} \in  \argmax_{\btheta} p(\cD \vert \btheta) &=  \argmax_{\btheta} \prod_{n=1}^N p(y_n \vert \x_n, \btheta) \\
&= \argmax_{\btheta} \sum_{n=1}^N \log p(y_n \vert \x_n, \btheta).
\end{align}
Then, the posterior predictive distribution is simply the evaluation of the prediction on this point: $p(y \vert \x, \cD) \approx p(y \vert \x, \hat{\btheta}_{\textrm{MLE}})$\footnote{Note that this method actually correspond to a standard neural network trained with maximum likelihood.}. However, we obtain a point estimate for parameters $\btheta$ which may overfit. For instance, with a dataset composed of 3 tosses landed head, we would then estimate $\hat{\btheta}_{\textrm{MLE}}$ such that $p(y \vert \x^*, \cD)=1$ for any toss! To mitigate this issue, one could instead compute the \emph{maximum-a-posteriori} estimate $\btheta_{\textrm{MAP}} =  \argmax_{\btheta} p(\btheta \vert \cD)$ but it still remains a point estimate that underestimates epistemic uncertainty.

With deep neural networks, a few methods have been explored including Laplace approximation \cite{MacKay92bayesianmethods}, Hamiltonian Monte Carlo sampling \cite{neal2011}, and expectation-propagation \cite{hernandezlobatoc15,jylanki14a}. In particular, variational inference \cite{Hinton1993,graves2011} gained a lot of popularity in the recent years due to better scaling. The goal is to learn to approximate the exact posterior distribution by defining a simpler variational distribution $q(\btheta)$ and minimizing the Kullback-Leibler (KL) divergence between $q(\btheta)$ and $p(\btheta \vert \cD)$. For instance, \emph{Variational Bayes} \cite{Blundell2015} defines the variational distribution $q(\btheta)$ as a Gaussian distribution with a diagonal covariance, \ie a fully factorized Gaussian. Another important example is \emph{Monte-Carlo Dropout} (MC Dropout) where Gal and Ghahramani \cite{Gal2016} establish a connection between variational inference and dropout layers \cite{srivastava14a}, commonly used in neural networks for regularization. At inference, the predictive distribution is approximated by Monte Carlo sampling and averaging over all $M$ forward predictions:
\begin{equation}
    p(y \vert \x, \cD) \approx \frac{1}{M} \sum_{m=1}^M p(y \vert \x,\btheta_m),
\end{equation}
where $\btheta_m \sim p(\btheta \vert \cD)$ are the sampled weights from forward pass $m$. The total uncertainty can then be quantified in terms of variance in the case of regression and entropy as detailed in the following section. 

With Bayesian Neural Networks, the crucial aspect is how well the posterior distribution $p(\btheta \vert \cD)$ is approximated \cite{izmailov2021bayesian}. Unfortunately, MCDropout has been shown to be a poor approximation to the true posterior \cite{wenzel20a}, resulting in unreliable uncertainty estimates \cite{ovadia2019,Osband2016RiskVU,liu2021peril}.

\paragraph{Ensembles.} Lakshminarayanan \textit{et al.}  \cite{deepensembles2017} propose a simple but effective approach named \emph{Deep Ensembles} which outperforms Bayesian neural networks for uncertainty representation \cite{ovadia2019,Ashukha2020Pitfalls}. An ensemble of $M$ models is trained independently with random initialization. Such as with MC Dropout, predictions are obtained by averaging the $M$ samples and uncertainty estimates can be derived from the spread of the ensemble. While being originally considered as non-Bayesian, Deep Ensembles can actually be seen as a Bayesian model average \cite{WilsonI20}, whose samples provides a richer functional diversity in the predictive integral \cref{chap2:eq:bma}. Further works \cite{HuangLPLHW17,swag2019} explored ways to avoid training multiple models to reduce training time by leveraging intermediate checkpoints of a model during training. The main drawback of these approaches is the computational expense of training and storing weights of $M$ models, which is not convenient for embedded systems such as autonomous vehicles. 

\paragraph{Gaussian processes.} Considered as the gold standard of uncertainty estimation \cite{rasmussen2005}, Gaussian processes are non-parametric Bayesian models. Unlike BNNs which define probability distributions over networks' weights, they directly specify distributions over the \emph{function} $f(\cdot, \btheta)$ induced by the network. This distribution is a joint Gaussian distribution defined over a collection of function values $f(\x_1),\cdots,f(\x_n)$. The computation of the covariance function (or kernel) of the distribution requires access to the full training dataset at inference time. Although some approximations \cite{benton2019functionspace} have been proposed, this family of probabilistic methods does not scale well with the dimension of the data.

Previous methods proposed adapting neural networks to capture epistemic uncertainty thanks to the spread of the posterior distribution $p(\btheta \vert \cD)$. But how do we derive uncertainty estimates from these Bayesian approaches? This will be addresses in \ref{chap2:subsec:unc_measures}.

\subsection{Evidential models}
\label{chap2:subsec:enn}

To overcome the issue of approximation due to sampling, a recent class of models, named \emph{evidential} \cite{malinin2018,sensoy2018} proposes instead to explicitly represent the distribution over probabilities. This line of work is based on subjective logic \cite{josan2016sublogic}, a probabilistic framework which formalizes the Dempster-Shafer \cite{dempster2008} theory's notion of belief as a Dirichlet distribution. In the multi-class setting, the subjective opinion of a multinomial random variable $y \in \cY$ is given by a triplet:
\begin{equation}
    \omega = (\mathbf{b}, u, \mathbf{a}) \quad\quad \textrm{with} \quad \sum_{k \in \cY} b_k + u = 1,
\end{equation}
where $\mathbf{b} = (b_1, \cdots, b_K)^T$ denotes the belief mass over $\cY$, $u \geq 0$ is the overall uncertainty mass and $a$ is the base rate distribution. Let $e_k \geq 0$ be the evidence derived for class $k$. The class belief $b_k$ and the uncertainty $u$ are computed as:
\begin{equation}
b_k = \frac{e_k}{S} \quad\quad \textrm{and} \quad\quad u = \frac{K}{S},
\end{equation}
where $S = \sum_{k=1}^K (e_k +1)$. Note that the uncertainty $u$ is inversely proportional to the total evidence.

The link to the Dirichlet distribution can be grasped by first considering the simpler problem of inferring from a set $\cD$ of $N$ rolls the probability that a dice with $K$ sides comes up as face $k$ \cite{murphy2012machine}. We denote $\bpi =(\pi_1, \cdots, \pi_K)$ the random variable over categorical probabilities, where $\sum_{k=1}^K \pi_k$ = 1, and which lives on the (K-1)-dimensional simplex $\triangle^{K-1}$. Assuming \textit{i.i.d.} data, its likelihood reads $p(\cD \vert \bpi) =  \prod_{k=1}^K \pi_k^{N_k}$ where $N_k$ is the count of class $k$ among the $N$ draws. For its conjugate properties with the categorical distribution, the prior $p(\bpi)$ can be modeled as a Dirichlet distribution with concentration parameters $\boldsymbol{\beta}$. Then, the posterior $p(\bpi \vert \cD)$ is also a Dirichlet distribution with parameters ($\beta_1 + N_1$,..., $\beta_K + N_K$) and the posterior predictive distribution for a single multinoulli trial has the closed form $P(Y=k ~\vert~ \cD) = \mathbb{E} \big [\pi_k \vert \mathcal{D}] = \frac{\beta_k + N_k}{\sum_k \beta_k + N}$. We observe that the prior distribution acts as a \textit{Bayesian smoothing} by adding pseudo-counts $\boldsymbol{\beta}$ to the empirical counts. \\

\begin{figure}[t]
\centering
\captionsetup[subfigure]{justification=centering}
\begin{minipage}[c]{0.30\linewidth}
\centering
    \includegraphics[width=\linewidth]{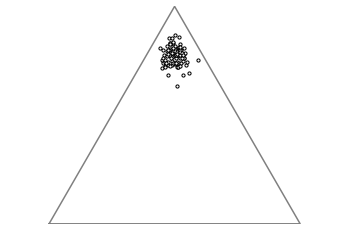}
    \subcaption{Confident prediction}
    \label{chap2:fig:simplex_confident}
\end{minipage}%
\hfil
\begin{minipage}[c]{0.30\linewidth}
\centering
    \includegraphics[width=\linewidth]{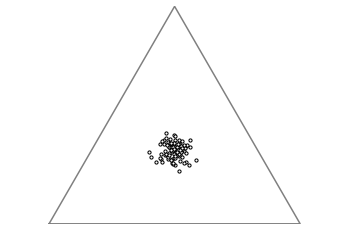}
    \subcaption{Conflicting evidence}
    \label{chap2:fig:simplex_aleatoric}
\end{minipage}%
\hfil
\begin{minipage}[c]{0.30\linewidth}
\centering
    \includegraphics[width=\linewidth]{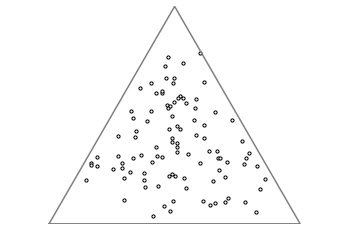}
    \subcaption{Lack of evidence}
    \label{chap2:fig:simplex_epistemic}
\end{minipage}%

\medskip
\begin{minipage}[c]{0.30\linewidth}
\centering
    \includegraphics[width=\linewidth]{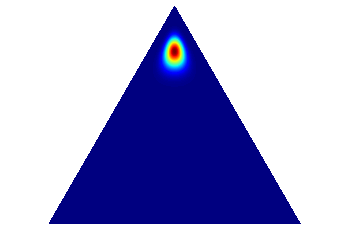}
\end{minipage}%
\hfil
\begin{minipage}[c]{0.30\linewidth}
\centering
    \includegraphics[width=\linewidth]{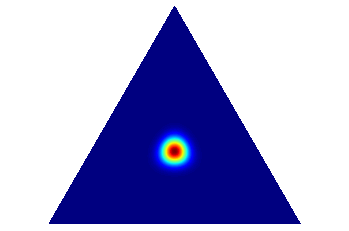}
\end{minipage}%
\hfil
\begin{minipage}[c]{0.30\linewidth}
\centering
    \includegraphics[width=\linewidth]{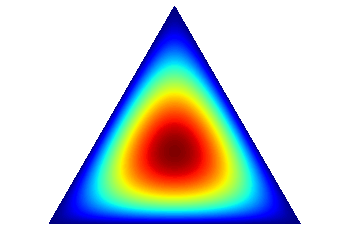}
 \end{minipage}%
\caption[Uncertainty representation on the simplex]{\textbf{Uncertainty representation on the simplex}. Top row shows samples drawn from an ensemble or a BNN. Bottom row illustrates the implicit distribution they are sampled from. A confident prediction will have a distribution focused on a corner of the simplex and with a low dispersion (\cref{chap2:fig:simplex_confident}). Conflicting evidence (aleatoric uncertainty) will result in a distribution close to the simplex center (\cref{chap2:fig:simplex_aleatoric}), reflecting a high class confusion. Finally, a lack of evidence (\cref{chap2:fig:simplex_epistemic}) corresponds to a distribution with high dispersion (epistemic uncertainty): each sample can yield very different probabilities.}
\label{chap2:fig:simplex}
\end{figure}

Let us extend the Bayesian treatment of a single categorical distribution to classification, \ie, the goal is to predict the class label $y$ from a categorical distribution that depends on input $\x$. The training dataset $\cD$ consists of $N$ \textit{i.i.d.} samples $(\x,y)$ drawn from an unknown joint distribution $P(X,Y)$. Obviously, for a test sample $\x^*$, its label frequency count is now unknown and we are not able to estimate the posterior predictive distribution $P(Y \vert \x^*, \cD)$. Bayesian models and ensembling methods approximate the posterior predictive distribution by marginalizing over the network's parameters thanks to sampling. But this comes at the cost of multiple forward passes. 

Evidential Neural Networks (ENNs) propose instead to model explicitly the posterior distribution over categorical probabilities $p(\bpi \vert \x, y)$ by a variational Dirichlet distribution,
\begin{equation}
    q_{\btheta}(\bpi \vert \x) \!=\! \text{Dir} \big ( \bpi \vert \balpha(\x, \btheta) \big )
    \!=\! 
    \frac{\Gamma(\alpha_0 (\x, \btheta))}
    {\prod_{k=1}^K \Gamma(\alpha_k(\x, \btheta)) }\prod_{k=1}^K \pi_k^{\alpha_k(\x, \btheta) - 1},
\end{equation}
whose concentration parameters $\balpha(\x, \btheta) ~{=}~ \exp f(\x,\btheta)$ are output by a neural network $f$ with parameters $\btheta$; $\Gamma$ is the Gamma function and $\alpha_0(\x, \btheta) ~{=}~ \sum_{k=1}^K \alpha_k(\x, \btheta)$ with $\alpha_k = \exp f_k(\x, \btheta)$ indexing the $k^{\text{th}}$ element of the vector of all $K$ concentration parameters $\balpha$. Precision $\alpha_0$ controls the sharpness of the density with more mass concentrating around the mean as $\alpha_0$ grows. By conjugate property, the predictive distribution for a new point $\x^*$ is 
\begin{equation}
P(Y~{=}~k ~\vert~ \x^*, \cD)~
     {\approx}~\mathbb{E}_{q_{\btheta}(\bpi \vert \x^*)} [\pi_k] ~{=}~ 
     \frac{\exp f_k(\x^*,\btheta)}{\sum_{j=1}^K \exp f_j(\x^*,\btheta)},
\end{equation}
which is the usual output of a network $f$ with softmax activation.

Instead of reasoning on first-order probabilities, we can now derive second-order uncertainty measures on the Dirichlet distribution. Evidential models provide a second-order uncertainty representation as shown in \cref{chap2:fig:simplex} where the expectation of the Dirichlet distribution relates to aleatoric uncertainty and its criterion concerning its dispersion can measure the the amount of evidence in a prediction, hence epistemic uncertainty

\paragraph{\textbf{Training Objective}} The ENN training is formulated as a variational approximation to minimize the KL divergence between the distribution $q_{\btheta}(\bpi \vert \x) $ and the true posterior distribution $p(\bpi \vert \x, y)$:
\begin{equation}
    \cL_{\text{var}}(\btheta;\cD) = \mathbb{E}_{(\x,y) \sim P(X, Y)} \big [ \mathbb{KL} \big ( q_{\btheta}(\bpi \vert \x)~\|~p(\bpi \vert \x, y) \big ) \big ]
\end{equation}
The training objective and its derivation are further study in \cref{chap5}.

\subsection{Uncertainty measures}
\label{chap2:subsec:unc_measures}

\sloppy
In regression, while the predictive distribution $p(y \vert \x, \cD)$ remains intractable (\cref{chap2:eq:pred_distrib}), likelihood is assumed Gaussian and one can estimate the predictive distribution's first two moments empirically \cite{Gal2016}. Given likelihood $p(y \vert \x,\btheta) = \mathcal{N}(y; f(\x, \btheta), \tau^{-1} \boldsymbol{I})$, the first moment $\mathbb{E}_{p(y \vert \x, \cD)}[y]$ can be approximated by the unbiased estimator $\tilde{\mathbb{E}}[y] = \frac{1}{M}\sum_{m=1}^M f(\x, \btheta_m)$ following Monte-Carlo sampling. The model’s predictive variance $\textrm{Var}_{p(y \vert \x, \cD)}[y]$ -- the second moment -- is given by the unbiased estimator $\tilde{\textrm{Var}}[y] = \tau^{-1} \boldsymbol{I} +\frac{1}{M} \sum_{m=1}^M f(\x, \btheta_m)^T f(\x, \btheta_m) - \tilde{\mathbb{E}}[y]^T \tilde{\mathbb{E}}[y]$. In particular, we note that the predictive variance accounts both for the aleatoric uncertainty with $\tau^{-1} \boldsymbol{I}$ and for the epistemic uncertainty with the second term.

When it comes to classification, the aleatoric uncertainty at an input point $\x$ is defined as the entropy of the \emph{true} conditional distribution $p(Y \vert \x, \cD)$:
\begin{equation}
    \bbH[Y \vert \x, \cD] = - \sum_{k \in \cY} p(Y=k \vert \x, \cD) \log p(Y=k \vert \x, \cD).
\end{equation}
The entropy attains its maximum value when all classes have equal uniform probability and its minimum value of zero when one class has probability 1 and all others probability 0. But in contrast with regression, we cannot rely on the previous derivation to estimate the predictive moments: likelihood is now a categorical distribution and we cannot estimate its first two moments anymore:
\begin{equation}
    p(Y \vert \x,\btheta) = \mathrm{Cat} \Big (Y; \phi \big ( f(\x, \btheta) \big ) \Big ),
\end{equation}
where the softmax operator, $\phi : \mathbb{R}^{K} \rightarrow \triangle^{K-1}$, transforms logits into probabilities on the (K-1)-dimensional unit simplex $\triangle^{K-1}$, thanks to an exponential form.

A first possibility is to use the entropy of the \emph{predictive} distribution estimated by the model. In the case of Monte Carlo sampling, this corresponds to averaging the probability
vectors from the $M$ stochastic forward passes:
\begin{align}
    \bbH[Y \vert \x, \cD] &= - \sum_{k \in \cY} p(Y=k \vert \x, \cD) \log p(Y=k \vert \x, \cD) \\
    &\approx - \sum_{k \in \cY} \Big ( \frac{1}{M} \sum_{m=1}^M p(Y=k \vert \x,\btheta_m) \Big ) \log \Big ( \frac{1}{M} \sum_{m=1}^M p(Y=k \vert \x,\btheta_m) \Big )
\end{align}
with $\btheta_m \sim p(\btheta \vert \cD)$ are the sampled weights from forward pass $m$. 

Given an input $\x$, it is also possible to estimate aleatoric uncertainty by looking at the likelihood of the class predicted by the model $m$, which is by design the class associated with the \emph{maximum} probability:
\begin{equation}
    MCP_m(\x) = \max_{k \in \cY} p(Y=k \vert \x, \btheta_m).
\end{equation}
In the case of Monte Carlo sampling, MCP is computed on the average probability vector. MCP values range from $1/K$ to its maximum value of one when all probabilities are 0 except for the predicted class, hence no uncertainty. 

\begin{figure}[t]
\centering
\begin{minipage}[t]{0.31\linewidth}
    \centering
    \includegraphics[width=\linewidth]{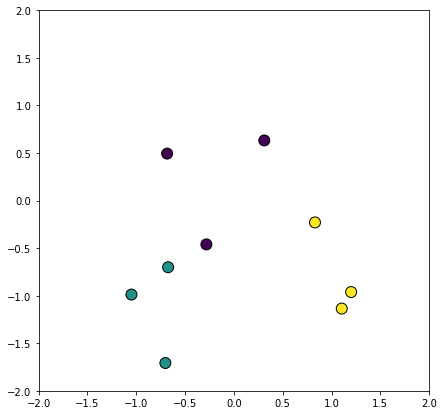}
    \subcaption{Train data}
    \label{chap2:fig:train_data}
\end{minipage}%
\begin{minipage}[t]{0.31\linewidth}
    \centering
    \includegraphics[width=\linewidth]{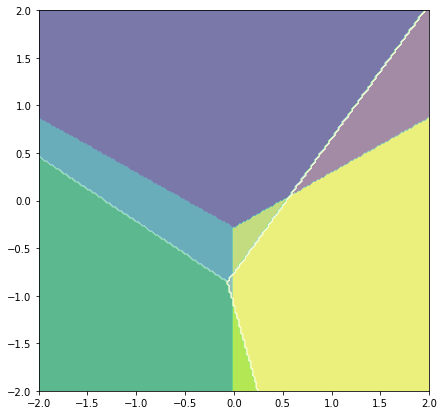}
    \captionsetup{justification=centering}
    \subcaption{Bayes optimal's vs. model's decision frontiers}
    \label{chap2:fig:bayes_optimal}
\end{minipage}%
\begin{minipage}[t]{0.36\linewidth}
    \centering
    \includegraphics[width=\linewidth]{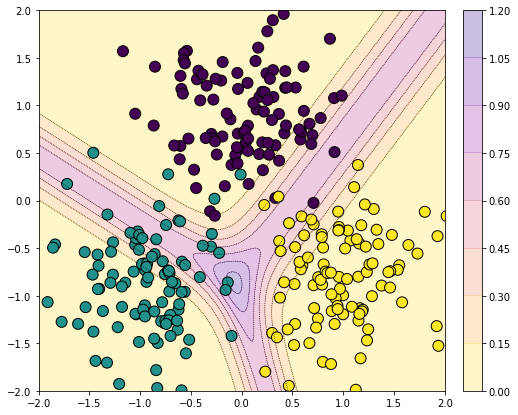}
    \subcaption{Entropy map}
    \label{chap2:fig:entropy_map}
\end{minipage}%
\caption[Illustration of unreliable uncertainty estimates with MCP and entropy due to poor fitting of the conditional distribution]{\textbf{Illustration of unreliable uncertainty estimates with MCP and entropy due to poor fitting of the conditional distribution.} A logistic regression classifier trained on only nine inputs sampled from the distribution (\cref{chap2:fig:train_data}) will have its decision frontier differ greatly from the Bayes optimal classifier $h^*$ given the marginal distribution of the Gaussian mixture (\cref{chap2:fig:bayes_optimal}). Consequently, uncertainty measures estimated on the predictive distribution such as entropy (\cref{chap2:fig:entropy_map}) poorly reflect the true aleatoric uncertainty of the conditional distribution. }
\label{chap2:fig:limit_mcp}
\end{figure}

But the quality of uncertainty estimates given by MCP and predictive entropy depends on the quality of the approximation of the posterior distribution $p(\btheta \vert \cD)$ and consequently may be inaccurate. For instance, \cref{chap2:fig:limit_mcp} shows an uncertainty map computed from the predictive entropy output by a neural network trained only on nine samples on a toy dataset. This toy dataset is composed of a Gaussian mixture with three equally weighted components having equidistant centers and equal spherical covariance matrices. As the model's decision frontiers do not coincide with the Bayes optimal ones -- given by the true conditional distribution $p(y \vert \x)$ --, the predictive entropy might be incorrectly low in regions of high aleatoric uncertainty (close to Bayes optimal's decision frontiers) and high in confident regions. 

To estimate the epistemic uncertainty, an intuitive idea with BNNs and ensembling is to consider the variance of the predictions produced by the $M$ stochastic forward passes. Gal \cite{Gal2016} proposes to compute the \emph{variation-ratio} which is based on the frequency of prediction of the most predicted class:
\begin{equation}
    \textrm{var-ratio}(\x) = 1 - \max_{k \in \cY} \Big (\frac{1}{M}\sum_{m=1}^M \mathbbm{1}[\hat{y}_m=k] \Big ).
\end{equation}
More interestingly, Depeweg \textit{et al.} \cite{depeweg18a} propose to measure the \emph{mutual information} $\bbI[y, \btheta \vert \x, \cD]$ between the prediction $y$ and the posterior distribution, based on the decomposition of the predictive uncertainty. Assuming that predictive entropy $\bbH[y \vert \x, \cD]$ contains aleatoric and epistemic uncertainty as it depends on dataset $\cD$, the mutual information corresponds to the difference between predictive entropy and the expected entropy of each member of the ensemble $\mathbb{E}_{p(\btheta \vert \cD)} \Big [ \bbH[y \vert \x, \btheta] \Big ]$, which does not depend on the model anymore:
\begin{align}
    \bbI[Y, \btheta \vert \x, \cD] &= \bbH \Big [ \mathbb{E}_{p(\btheta \vert \cD)} \big [ p(Y \vert \x, \btheta) \big ] \Big ] - \mathbb{E}_{p(\btheta \vert \cD)} \Big [ \bbH[Y \vert \x, \btheta] \Big ] \\
    &\approx - \sum_{k \in \cY} \Big ( \frac{1}{M} \sum_{m=1}^M p(Y=k \vert \x,\btheta_m) \Big ) \log \Big ( \frac{1}{M} \sum_{m=1}^M p(Y=k \vert \x,\btheta_m) \Big ) \nonumber \\
    &\quad\quad\quad+ \frac{1}{M} \sum_{m=1}^M \sum_{k \in \cY}  p(Y=k \vert \x,\btheta_m)\log p(Y=k \vert \x,\btheta_m).
\end{align}
Consequently, mutual information is a dispersion measure which accounts for the variance of the predictions produced by the $M$ stochastic forward passes.

To gain intuition about the behavior of the previous measures, let us consider a classification task with 3 classes and the following samples:
\begin{enumerate}
    \item a sample where the model outputs probability vectors with maximum probability on the same class: $$p_1 = \{(1,0,0),(1,0,0),...,(1,0,0)\}$$;
    \item a sample where the model outputs probability vectors with uniform probability: $$p_2 = \{(\frac{1}{3},\frac{1}{3},\frac{1}{3}), (\frac{1}{3},\frac{1}{3},\frac{1}{3}),..., (\frac{1}{3},\frac{1}{3},\frac{1}{3})\}$$;
    \item a sample where the model produces inconsistent probability vectors: $$p_3 = \{(1,0,0), (0,1,0),..., (0,0,1)\}$$.
\end{enumerate}
This first sample $p_1$ represents an input predicted with high confidence by the model and where the aleatoric uncertainty is low. The second sample $p_2$ presents high aleatoric uncertainty due to class confusion but low epistemic uncertainty as the model always predicted the same probability vector. In contrast, the model outputs very different predictions regarding the third sample $p_3$, denoting a large epistemic uncertainty. When computing the predictive entropy, we find that obviously $\bbH[p_1]=0$ but $\bbH[p_2] = \bbH[p_3] = 1.09$, which does not enable us to separate the two sources of uncertainty. Now, the mutual information gives us more information about the third sample as $\bbI[p_1] = \bbI[p_3] = 0$ and $\bbI[p_2] = 1.09$, showing here it measures solely the dispersion between predictions. Again here, this theoretical decomposition depends on the approximation to this posterior distribution which may re-introduce approximation uncertainty in both terms.

Finally, with evidential models, a series of uncertainty measures based on the second-order Dirichlet distribution allows one to measure different sources of uncertainty \cite{josang2018,Shi2020MultifacetedUE}. In particular, the \emph{vacuity} is due to insufficient or unreliable information received from sources and represented by uncertainty mass $u$ in subjective logic. On the second-order Dirichlet distribution, this is equivalent to its precision $\alpha_0$ which is a measure of its dispersion, hence capturing epistemic uncertainty. Related to aleatoric uncertainty, \emph{dissonance} corresponds to contradicting belief, such as in class confusion, and is defined as:
\begin{equation}
    diss(\omega) = \sum_{k \in \cY} \Big ( \frac{b_k \sum_{j \neq k} b_j \textrm{Bal}(b_j,b_k)}{\sum_{j\neq k b_j}} \Big ),
\end{equation}
where $\textrm{Bal}(b_j,b_k) = 1 - \frac{\vert b_j - b_k \vert}{b_j + b_k}$ if $b_k b_j \neq 0$ and 0 if $\min(b_j, b_k)=0$ is the relative mass balance function between two belief masses. For instance, given opinion $(b_1,b_2,b_3,u,\mathbf{a}) = (0.3,0.3,0.3,0.1, \mathbf{a})$, the dissonance value is equal to $0.9$. Its maximum value is 1 and its minimum value is 0.

\section{Evaluation of the quality of uncertainty estimates}
\label{chap2:sec:evaluation}

Evaluating the quality of predictive uncertainties is challenging as the ‘ground truth’ uncertainty estimates are usually not available. Depending on the application, the desirable properties of uncertainty estimates can vary: in a multi-modal system, we aim for calibrated uncertainty estimates before fusion while one may only be interested in a reliable ranking between correct and erroneous predictions. 

We present in this section the existing tasks commonly used in the literature to evaluate the quality of uncertainty estimates with deep neural networks. 

\subsection{Selective classification}
\label{chap2:subsec:selective}

The idea of a reject option with ML systems has been around for ages \cite{Chow1957AnOC}. 
Classification with a reject option, also known as \emph{selective classification} \cite{elyaniv10a}, consists in a scenario where a classifier can abstain on points where its confidence is below a certain threshold. By abstaining from predicting when in doubt, the main motivation is to reduce the error rate while keeping as many correct samples as possible.

To select which sample to reject, a \textit{confidence-rate function} $\kappa_f$ is associated to the classifier $f$ in order to evaluate the degree of confidence of its predictions, the higher the value the more certain the prediction. Uncertainty estimates are used here to assess this degree of confidence. Then, given a threshold $\delta$, an input $\x$ is rejected if its degree of confidence is lower than the threshold value, \begin{equation}
    g(\bm{x})=
    \begin{cases}
      1 & \text{if}\ \kappa_f(\bm{x}) \geq \delta \, , \\
      0 & \text{otherwise.} \\
    \end{cases}
\end{equation}
Ideally, uncertainty estimates should enable the selection function to split the test set in a subset containing all errors and the other set containing all correct predictions. 

The performance of a selective model is quantified using coverage and risk. Re-using the notations introduced in \cref{chap2:subsec:supervised_learning}, we also consider explicitly a test set $\cD_{\mathrm{test}}$ composed of labeled samples also following $P(X,Y)$. Coverage is defined to be the probability mass of the non-rejected region in $\cX$, which can be approximated empirically by the number of non-rejected samples:
\begin{equation}
    \hat{\phi}(g) = \frac{1}{\vert \cD_{\textrm{test}} \vert} \sum_{(\x,y) \in \cD_{\textrm{test}}} g(\x),
\end{equation}
where $\vert \cD_{\textrm{test}} \vert$ is the number of samples in the test set. The selective risk corresponds to the evaluation of the loss $\ell$ on the non-rejected samples, which is commonly the 0/1 error with classification, divided by coverage. Its empirical approximation writes as:
\begin{equation}
    \hat{R}(f,g) = \frac{1}{\vert \cD_{\textrm{test} \vert}} \sum_{(\x,y) \in \cD_{\textrm{test}}} \frac{\ell \big ( f(\x),y \big )g(\x)}{\hat{\phi}(g)}.
\end{equation}

Given test set $\cD_{\textrm{test}}$, the task evaluation is based on risk-coverage curves such as shown in \cref{chap3:subsec:results}. These curves are obtained by computing the empirical selective risk for various values of coverage. The threshold $\delta$ depends on a user-specified cost for abstention. Consequently, to free ourselves from choosing this threshold, we compare methods by computing the following metrics:
\begin{itemize}
    \item \textbf{AURC} measures the Area Under the Risk-Coverage curve. This metric is threshold-independent. The higher the AURC, the better the selective classifier.
    \item \textbf{Excess-AURC (E-AURC)}. This is a normalized AURC metrics defined in \cite{geifman2018biasreduced}. It takes into account the optimal ranking given the error rate of the classifier. More specifically, if we denote $\kappa_f^*$ the perfect confidence-rate function and $\hat{r}$ the risk of classifier $f$, it writes as:
    \begin{align}
        \text{E-AURC}(\kappa_f) &= \text{AURC}(\kappa_f) - \text{AURC}(\kappa^*_f) \\ &\approx \text{AURC}(\kappa_f) -  \big ( \hat{R} + (1-\hat{R})\log(1-\hat{R}) \big ).
    \end{align}
\end{itemize}

With deep neural networks, we denote two types of approaches. The first one considers a trained prediction model and constructs a selection mechanism \cite{NIPS2017_7073}. Most of the time, the confidence-rate function used is the value of MCP given by the softmax layer output. The second type of approaches aims to jointly learn the classifier and the selection function \cite{cortes2016}. In particular, Geifman \& El-Yaniv \cite{Geifman2019SelectiveNetAD} train a DNN to optimize classification and rejection simultaneously. The reject function corresponds to the output of a second head of the DNN.

\subsection{Misclassification detection}
\label{chap2:subsec:misclassif}

 Given a trained model, \emph{misclassification detection}, also referred as failure prediction \cite{HeckerDG18}, is the task of predicting at run-time whether the model has taken a correct decision or not for a given input. Uncertainty estimates are used here as confidence score to compare to a threshold $\delta$. We say that the input $\x$ is estimated to be a correct prediction if its confidence score is above the threshold and to be an error, otherwise. Consequently, misclassification detection boils down to a binary classification task where instead of rejecting samples, we assign them a binary label:
 \begin{equation}
     g(\bm{x})=
    \begin{cases}
      1 & \text{if}\ \kappa_f(\bm{x}) \geq \delta \, , \\
      0 & \text{otherwise.} \\
    \end{cases}
\end{equation}
Such as for selective classification, the choice of the threshold impacts misclassification detection. Given a threshold $\delta$, the test set can be split into true positives (TP), false positives (FP), false negative (FN) and true negatives (TN).  From this confusion matrix, a common evaluation metrics is to choose a threshold such that the True Positive Rate (TPR) is equal to 95\% and then evaluate the False Positive Rate (FPR):
\begin{equation}
    FPR = \frac{FP}{FP+TN} \quad\quad\textrm{with} \quad TPR = \frac{TP}{TP + FN} = 95\%.
\end{equation}
Threshold-independent evaluation metrics include the Area Under the ROC curve (AUROC), where the latter is a graphical plot showing the TPR and FPR against each other. It illustrates the ability of a binary classifier as its discrimination threshold is varied and it can be interpreted as the probability that a positive example has a greater detector score/value than a negative example. However, AUROC may suffer from unbalanced dataset, for instance when there is a larger amount of good predictions than wrong ones. In that case, AUROC will be close to 100\% and the impact of wrongly ranking a misclassification is mitigated \cite{hendrycks17baseline}. 

Alternatively, the Area Under the Precision-Recall curve (AUPR) is based on the graph between precision and recall:
\begin{equation}
    \textrm{precision} = \frac{TP}{TP+FP} \quad\quad \textrm{and} \quad\quad \textrm{recall}= \frac{TP}{TP+FN}.
\end{equation}
AUPR is a metric that adjusts for different positive and negative base rates. As such, there is AUPR-Success where good predictions are considered positive and AUPR-Error where misclassification are now the positive class. In the second case, confidence scores are multiplied by $-1$. 

As we will see in \cref{chap3}, a widely-used baseline method with deep neural networks \cite{hendrycks17baseline} is to take the value of MCP given by the softmax layer output. A detailed review of proposed methods is presented in \cref{chap3:sec:related_work}.

While these evaluation metrics can be used to assess the misclassification detection performance of a model, they cannot be used to directly compare performance across different models \cite{Ashukha2020Pitfalls}. Correct and incorrect predictions are specific for every model, therefore, every model induces its own binary classification problem. The induced problems can differ significantly, since different models produce different confidences and misclassify different objects.

\subsection{Calibration}
\label{chap2:subsec:calibration}

In a number of applications of machine learning, it is of increasing importance to know whether the classifier output can be interpreted as actual probabilities. For instance, self-driving car with a multi-modal prediction system needs its individual components to provide comparable probabilities. Alternatively, in medical diagnosis, a ML system could request an additional analysis from human doctors if its output probability of a disease diagnosis is too low \cite{Jiang2012CalibratingPM}.

Given the probability distribution $\hat{p}(\x) = p(Y \vert \x, \btheta)$ output by the model for a sample $\x$, a probabilistic classifier is calibrated if any predicted class probability is equal to the true class probability according to the underlying data distribution \cite{vaicenavicius19a}:
\begin{equation}
    \forall \x \in \cX, \quad \mathbb{P} \big [ Y~\vert~\hat{p}(\x) \big ] = \hat{p}(\x).
    \label{eq:calibration}
\end{equation}
Any deviation from the perfect calibration is called miscalibration.
A weaker condition \cite{guo2017} is to consider only the probability, or confidence estimate $\kappa_f$, associated with the predicted class $\hat{y}$:
\begin{equation}
   \forall \x \in \cX, \quad \mathbb{P} \big [ Y = \hat{y}~\vert~\kappa_f(\x) \big ] = \kappa_f(\x).
    \label{eq:weaker_calibration}
\end{equation}
For instance, given 100 predictions with a confidence $\kappa_f(\x)=0.7$, we expect that 70 samples should be correctly classified if the classifier is perfectly calibrated according to \cref{eq:weaker_calibration}.

Expected calibration error (ECE) \cite{naeini2015} is a metric that estimates model miscalibration by splitting the probability scores into $M$ bins $B_m$ and comparing them to average accuracies inside these bins:
\begin{equation}
    \text{ECE} = \sum_{m=1}^M \frac{B_m}{N} ~\vert~ \textrm{acc}(B_m) - \textrm{conf}(B_m) ~\vert,
\end{equation}
where
\begin{equation*}
    \textrm{acc}(B_m) = \frac{1}{\vert B_m \vert} \sum_{\x \in B_m} \delta(\hat{y}(\x) - y(\x)) 
    \quad\quad \textrm{and} \quad\quad
    \textrm{conf}(B_m) = \frac{1}{\vert B_m \vert}  \sum_{\x \in B} \kappa_f(\x).
\end{equation*}
But ECE metric suffers from certain shortcomings. Due to binning, it does not monotonically increase as predictions approach ground truth (a biased estimator of the true calibration). Then, it only estimates miscalibration in terms of the maximum probability and does not evaluate the first condition in \cref{eq:calibration}. Worse, a model may attain a perfect ECE score while being not accurate. For instance, on a binary classification, a model always predicting the first class $y=1$ with confidence $\kappa_f(x)=0.3$ may be perfectly calibrated on a dataset with 70\% inputs of class $0$ and 30\% inputs of class $1$, although its accuracy is only $0.3$.

Alternatively, Lakshminarayanan \textit{et al.} \cite{deepensembles2017} argues that models should be trained and evaluated using a proper scoring rule to achieve good calibration. For instance, the \emph{Brier score} measures the squared error between the predictive probability of a label and
one-hot encoding of the correct label:
\begin{equation}
    \text{BS} = \frac{1}{\vert \cD_{\textrm{test}} \vert} \sum_{(\x,y) \in \cD_{\textrm{test}}} \Big ( \frac{1}{K} \sum_{k \in \cY} \big [ p(Y = k ~\vert~ \x, \btheta) - \delta(y - k) \big ]^2 \Big ).
\end{equation}
Finally, a popular metric for measuring the quality of in-distribution uncertainty is to measure the negative log-likelihood (NLL):
\begin{equation}
     \text{NLL} = - \frac{1}{\vert \cD_{\textrm{test}} \vert} \sum_{(\x,y) \in \cD_{\textrm{test}}} \log p(y \vert \x, \mathcal{D}).  
\end{equation}
In classification, NLL boils down to computing the cross-entropy loss on the test set. It directly penalizes high probability scores assigned to incorrect labels and low probability scores assigned to the correct labels.

Recent work \cite{guo2017} revealed that deep neural networks are poorly calibrated. Among recalibration methods, a popular approach is to apply temperature scaling on models' logit \cite{guo2017}. The temperature parameter $T$ is learned on a validation dataset $\cD_{\textrm{val}}$ by minimizing the negative log-likelihood and keeping model's weight fixed:
\begin{equation}
    \min_{T \in \mathbb{R}^{+}} \frac{1}{\vert \cD_{\textrm{val}} \vert} \sum_{(\x,y) \in \cD_{\textrm{val}}} \log \frac{\exp \big ( f_y(\x, \btheta)/T \big)}{\sum_{k \in \cY} \exp \big ( f_k(\x, \btheta) / T \big )}.
\end{equation}
Even though temperature scaling improves calibration, it does not affect the ranking of the confidence score between inputs. Consequently, temperature scaling is not effective to improve misclassification detection or selective classification mentioned in the previous sections.

\subsection{Out-of-distribution detection}
\label{chap2:subsec:ood}

Until now, we reviewed tasks that evaluate uncertainty estimation on test samples assumed to be drawn \textit{i.i.d.} from the same distribution than the training data. But as motivated in \cref{chap1}, in many safety-critical applications, ML systems are deployed in an open-world scenario \cite{Bendale_2015_CVPR}. Inputs can be subject to distributional shifts which are categorized either as covariate shifts maintaining semantic consistency, such as a drawing of a dog when training data was only composed of natural images, or semantic shifts where the label space $\cY$ is different from in-distribution data, such as input from a new class.

\begin{figure}[t]
\centering
    \includegraphics[width=\linewidth]{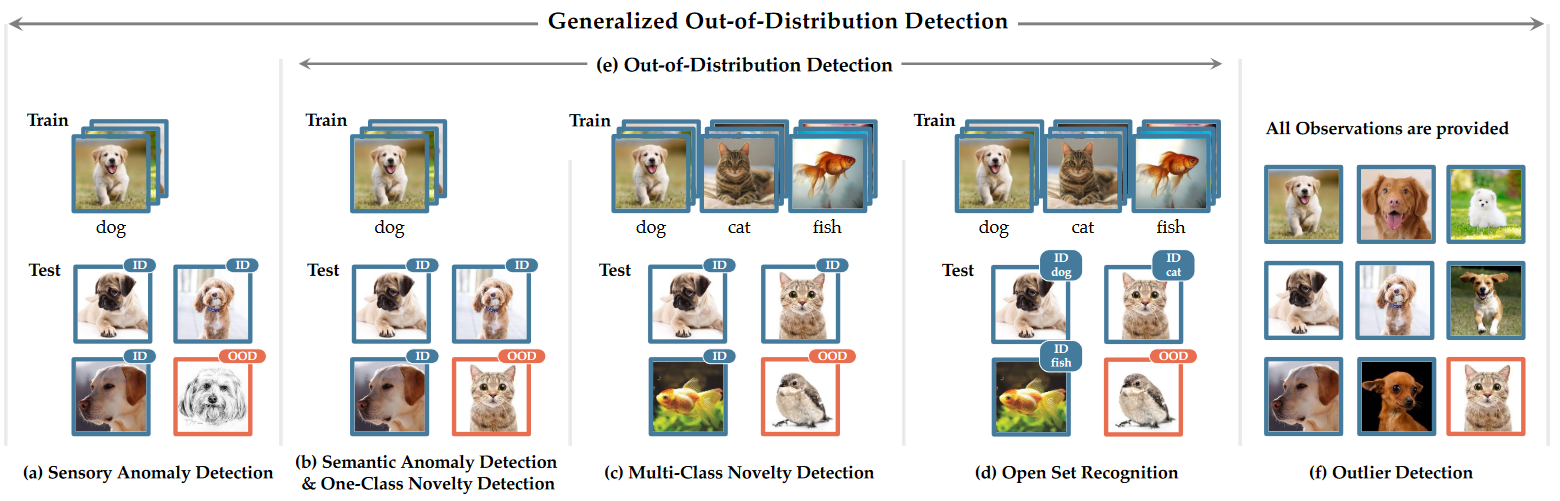}
    \caption[Out-of-distribution framework]{\textbf{Reproduction of the out-of-distribution framework proposed in \cite{yang2021oodsurvey}}. Five detection tasks are represented depending on their problem setting. Semantic anomaly detection, multi-class novelty detection and open-set recognition are considered as subcategories of out-of-distribution detection.}
    \label{chap2:fig:ood_survey}
\end{figure}

In the ML literature, several fields attempt to address the issue of identifying the unknowns/outliers/anomalies samples in the open-world setting. In their survey, Yang \textit{et al.} \cite{yang2021oodsurvey} provide an interesting unified framework of these subtopics, summarized in \cref{chap2:fig:ood_survey} reproduced from their paper. In classification tasks, we can define five subcategories depending on the problem setting:
\begin{enumerate}
    \item \textbf{Sensory anomaly detection}: training data is composed of only one class and test data may present non-semantic covariate shift, the goal is to detect these anomalies;
    \item  \textbf{Semantic anomaly detection}: training data is still composed of only one class and test data may now present semantic shift, such as new classes, the goal is to detect these anomalies; \label{chap2:task:AD}
    \item  \textbf{Multi-class novelty detection}: training data is composed of $C$ classes and test data may present semantic shift, such as new classes, the goal is still to detect these anomalies; \label{chap2:task:ND}
    \item  \textbf{Open-set recognition}: training data is composed of $C$ classes and test data may present semantic shift, such as new classes, but now the goal is twofold: correctly classify in-distribution data while detecting these anomalies; \label{chap2:task:OSR}
    \item  \textbf{Outlier detection}: a transductive problem where all observations are provided, we do not consider a train/test split anymore, and some samples can present any distributional shift, the goal is still to detect these outliers, for instance to clean data.
\end{enumerate}
Among these previous tasks, we commonly refer as \emph{out-of-distribution detection} the sensory/semantic anomaly detection and multi-class novelty detection. 

Out-of-distribution (OOD) detection shares similarities with misclassification detection as they both aim to detect errors or abnormal samples in a given test set. AUROC and AUPR metrics where the positive class is composed of OOD samples are used to evaluate the capacity of a model independently of a specific threshold to separate OOD samples from in-distribution samples according to a confidence score.

With deep neural networks, post-processing logits with temperature scaling using a large temperature $T$ on a pre-trained model has been shown to be effective to reduce model's over-confidence on OOD samples \cite{odin2018}. At test time, the authors use the MCP as confidence score to detect OOD samples after applying the temperature scaling. Also in the family of post-processing methods, Lee \textit{et al.} \cite{mahalanobis2018} assumed that intermediate feature maps -- in particular the penultimate before last classification layer -- of a trained deep neural network follow class-conditional Gaussian distributions with a tied covariance matrix. They estimate its parameters on training data:
\begin{equation}
    \hat{\mu}_k = \frac{1}{N_k} \sum_{n: y_n=k}^N f(\x_n, \btheta) \quad\textrm{and}\quad \hat{\Sigma} = \frac{1}{N} \sum_{k \in \cY} \sum_{n: y_n=k}^N \big ( f(\x_n, \btheta)  - \hat{\mu}_k \big ) \big ( f(\x_n, \btheta)  - \hat{\mu}_k \big )^T,
\end{equation}
where $N_k$ is the number of training samples with label $k$. Their confidence score correspond to the maximum Mahalanobis distance between input $\x$ and the closest class-conditional Gaussian distribution:
\begin{equation}
    M(\x) = \max_{k \in \cY} - \big ( f(\x, \btheta)  - \hat{\mu}_k \big )^T \hat{\Sigma}^{-1} \big ( f(\x, \btheta)  - \hat{\mu}_k \big ).
\end{equation}
These two previous methods also used adversarial perturbations to improve the separability of OOD samples from in-distribution data. In contrast to the original literature on adversarial perturbation, they use fast gradient sign method \cite{goodfellow2015} (FGSM) to increase the probability of the model on the predicted class (see \cref{chap2:subsec:adversarial}). A limitation of these methods is that they need relevant OOD samples to find the right hyper-parameters $T$ and $\varepsilon$.

On the other hand, a range of methods assumed that a set of OOD samples may be available during training. For instance, Hendrycks \textit{et al.} \cite{hendrycks2019oe} proposed to train a deep neural network to
simultaneously classify in-distribution samples and to produce high predictive entropy for samples from a known large out-of-distribution dataset $\cD_{out}$:
\begin{equation}
    \cL_{\text{OE}}(\btheta, \cD, \cD_{out}) =  \mathbb{E}_{(\x, y) \sim \cD} \big [ \log p(y \vert \x, \btheta) \big ] + \lambda \mathbb{E}_{\x \sim \cD_{out}} \big [ \bbH[p(\cdot \vert \x, \btheta)] \big ].    
\end{equation}
While this method remains the best OOD detector so far, the assumption of available OOD data during training may be unrealistic in many applications \cite{charpentier2020,sensoy2020}. In addition, we show in \cref{chap5} that all these previous methods are brittle to the choice of the OOD dataset.

Along with Mahalanobis-based OOD detection, a set of density-based methods relying on generative models attempt to model in-distribution data and to detect anomalous test data assuming OOD samples have low likelihood. In the context of deep learning, a classic method is to use an auto-encoder (AE) or a variational auto-encoder \cite{Kingma2014} (VAE) as generative model. However, Nalisnick \emph{et al.} \cite{nalisnick2018do} find that the density learned by flow-based models \cite{DinhKB14}, VAEs \cite{NIPS2016_b1301141} and PixelCNNs \cite{Salimans2017PixeCNN} may assign a larger likelihood to OOD samples than in-distribution samples in some vision benchmarks (CIFAR-10 vs. SVHN, FashionMNIST vs MNIST, CelebA vs. SVHN, ImageNet vs. CIFAR-10). Finally, recent works \cite{Grathwohl2020Your,NEURIPS2020_f5496252} explored using energy-based models (EBMs) for OOD detection, due to their natural fit within a discriminative framework. EBMs are generative models that use a scalar energy score to express probability density through unnormalized negative log probability \cite{LeCun06atutorial}. But their learning process can be computationally unstable as they requires approximations, such as stochastic gradient Langevin dynamics \cite{WellingT11} to estimate integrals.

\subsection{Adversarial robustness}
\label{chap2:subsec:adversarial}

\begin{figure}[t]
\centering
    \includegraphics[width=\linewidth]{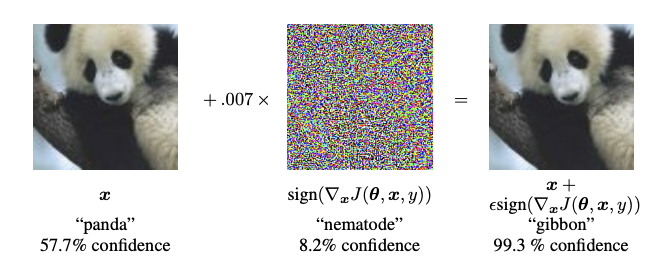}
    \caption[Example of an adversarial attack with FGSM in classification]{\textbf{Example of an adversarial attack with FGSM in classification}. An original input correctly classified as a panda becomes a misclassified input when applying the adversarial perturbation. Worse, its confidence score, here MCP, is arbitrarily high, wrongly indicating a confident prediction. \textit{Image credits: Goodfellow et al. \cite{goodfellow2015}.}}
    \label{chap2:fig:adversarial}
\end{figure}

In contrast with anomalies, \emph{adversarial examples} are inputs which are indistinguishable to the human eye but confuse a neural network, resulting in a misclassification (see \cref{chap2:fig:adversarial}). Adversarial examples are crafted by applying small perturbations to the input and restricting the magnitude of the attack to a value inferior to a bit of an 8-bit image encoding. In the original paper, the adversarial attack used rely on the fast gradient sign method \cite{goodfellow2015} (FGSM):
\begin{equation}
    \tilde{\x} = \x + \varepsilon\cdot\textrm{sign} \nabla_{\x} f(\x, \btheta)
\end{equation}
Since, a profusion of different adversarial attacks has been proposed in the literature \cite{Papernot2016,tramer2017space,CarliniW16a,dong2018}.

In adversarial robustness, the goal of an adversarial defense mechanism is to improve robustness of deep neural networks against adversarial attacks, \ie reduce the gap between `clean' accuracy on original inputs and `adversarial` accuracy on adversarial examples. Multiple defense mechanisms have been proposed over the years but they almost all end up being defeated by new adversarial attacks, except for adversarial training \cite{madry2018towards} which remains correct under certain conditions. Consequently, recent advances in adversarial robustness tend to construct certified defenses where neural networks are provably robust against adversaries \cite{raghunathan2018certified}.

Recently, Tsipras \emph{et al.} \cite{tsipras2018robustness} showed the goal of adversarial robustness might fundamentally be at odds with that of standard generalization. For instance, adversarial training improves adversarial accuracy but also produce a slight decrease in original test accuracy. Instead of considering robustness to adversarial attacks, a different line of work \cite{Carlini2017AdversarialEA,malinin2019} investigates detection of these adversarial attacks. As with misclassification detection and OOD detection, the evaluation metric used are threshold-independent metrics such AUROC and AUPR.

\section{Conclusion}
\label{chap2:sec:conclusion}

Uncertainty estimation is a wide research area, ranging from theoretical perspectives with Bayesian approaches to practical considerations with the detection of abnormal samples to avoid critical failures. Uncertainty can arise due to the stochasticity of the latent data generative process (\emph{aleatoric uncertainty}) or due to the lack of knowledge of the model on an input (\emph{epistemic uncertainty}). While `ground-truth' uncertainty estimates are usually not available, different tasks aims at evaluating the capacity of the model to provide accurate uncertainty estimates. Selective classification, misclassification detection and calibration evaluate in-distribution uncertainty, either for rejecting/detecting errors or to ensure a classifier which outputs correct probabilities. Out-of-distribution detection considers an open-world setting where distribution shifts and inputs from unknown classes may occur. Finally, adversarial robustness is a particular task where inputs have been corrupted to fool the classifier. In particular, one may see these adversarial examples as a worst-case analysis of distribution shift \cite{gilmer2019a}. 

In this thesis, we will start by addressing in-distribution uncertainty estimation by proposing a learning confidence approach with auxiliary model to improve misclassification and selective classification in \cref{chap3}. The task of selective classification is also useful for self-training methods presented in \cref{chap4}. Finally, we tackle the challenge of jointly quantifying in-distribution and out-of-distribution (OOD) uncertainties in \cref{chap5} with an uncertainty measure which account both for aleatoric and epistemic uncertainty.

\chapter{Learning A Model's Confidence via An Auxiliary Model}
\label{chap3}
\begin{center}
  \textsc{Chapter Abstract}
\end{center}
\begin{quote}
\noindent \textit{Reliably quantifying the confidence of deep neural classifiers is a challenging yet fundamental requirement for deploying ML models in safety-critical applications. In this chapter, we are interested in the problem of detecting in-distribution erroneous predictions of deep neural networks in the context of classification. We introduce a novel target criterion for a model's confidence, namely the \emph{True Class Probability} (TCP) and show that TCP offers better properties for failure prediction than standard uncertainty measures. Since the true class is by essence unknown at test time, we propose to learn the TCP criterion from data with an auxiliary model, \emph{ConfidNet}, introducing a specific learning scheme adapted to this context. A major benefit of ConfidNet is to be a separate network which can estimate the model confidence of any trained classifier. We evaluate our approach on the task of failure prediction and selective classification and we validate that the proposed approach provides accurate confidence estimates. We study various network architectures and experiment with small and large datasets for image classification and semantic segmentation. In every tested benchmark, our approach outperforms strong baselines.}

\textit{The work described in this chapter is based on the following publications:}
\begin{itemize}
    \item \textit{Charles Corbière, Nicolas Thome, Avner Bar-Hen, Matthieu Cord, Patrick Pérez. ``Addressing Failure Prediction by Learning Model Confidence". In Advances in Neural Information Processing Systems (NeurIPS), 2019.}
    \item \textit{Charles Corbière, Nicolas Thome, Antoine Saporta, Tuan-Hung Vu, Matthieu Cord, Patrick Pérez. ``Confidence Estimation via Auxiliary Models". In IEEE Transactions on Pattern Analysis and Machine Intelligence, 2021.} 
\end{itemize}
\end{quote}

\clearpage
\minitoc

\section{Context}
\label{chap3:sec:context}

Propagating an erroneous prediction of a machine learning system or over-estimating its confidence may carry serious repercussions in critical visual-recognition applications such as in autonomous driving, medical diagnosis \cite{medicaldiag2018} or nuclear power plant monitoring \cite{Linda2009}. In classification, \emph{failure prediction} is the task of predicting at run-time whether a trained model has taken a correct decision or not for a given input. By detecting an erroneous prediction, a system could decide to stick to the prediction or, on the contrary, to hand it over to a human or a back-up system with, \eg other sensors, or simply to trigger an alarm. Closely related to failure prediction, classification with a reject option \cite{Chow1957AnOC}, also known as \emph{selective classification} \cite{elyaniv10a}, consists in a scenario where the classifier is given the option to reject an instance instead of predicting its label. These two tasks refer to the same problem of \emph{ordinal ranking}, which aims to estimate confidence values whose ranking of samples is effective to distinguish correct from incorrect predictions (see \cref{chap3:fig:ordinal_ranking}).  Then, the user can specify a threshold so that some inputs with predicted confidence is below it are considered as erroneous predictions. 

In failure prediction, a widely used baseline with neural-network classifiers is to take the value of the predicted class' probability, namely the \textit{maximum class probability} (MCP), given by the softmax layer output. Although recent evaluations of MCP with modern deep models reveal reasonable performance~\cite{hendrycks17baseline}, they still suffer from several conceptual drawbacks. In particular, MCP leads by design to high confidence values, even for erroneous predictions, since the largest softmax output is used. This design tends to make erroneous and correct predictions overlap in terms of confidence and thus limits the capacity to distinguish them. Another common uncertainty measure is the predictive entropy \cite{shannon} which captures the average amount of information contained in the probability vector output by the model. It is worth mentioning that these entropy-based criteria measure the softmax output dispersion, where the uniform distribution has maximum entropy. But it is not clear how well these dispersion measures are adapted to distinguishing failures from correct predictions. We elaborate on these limits in \cref{chap3:subsec:limits}.

\begin{figure}[t]
\centering
    \includegraphics[width=0.9\linewidth]{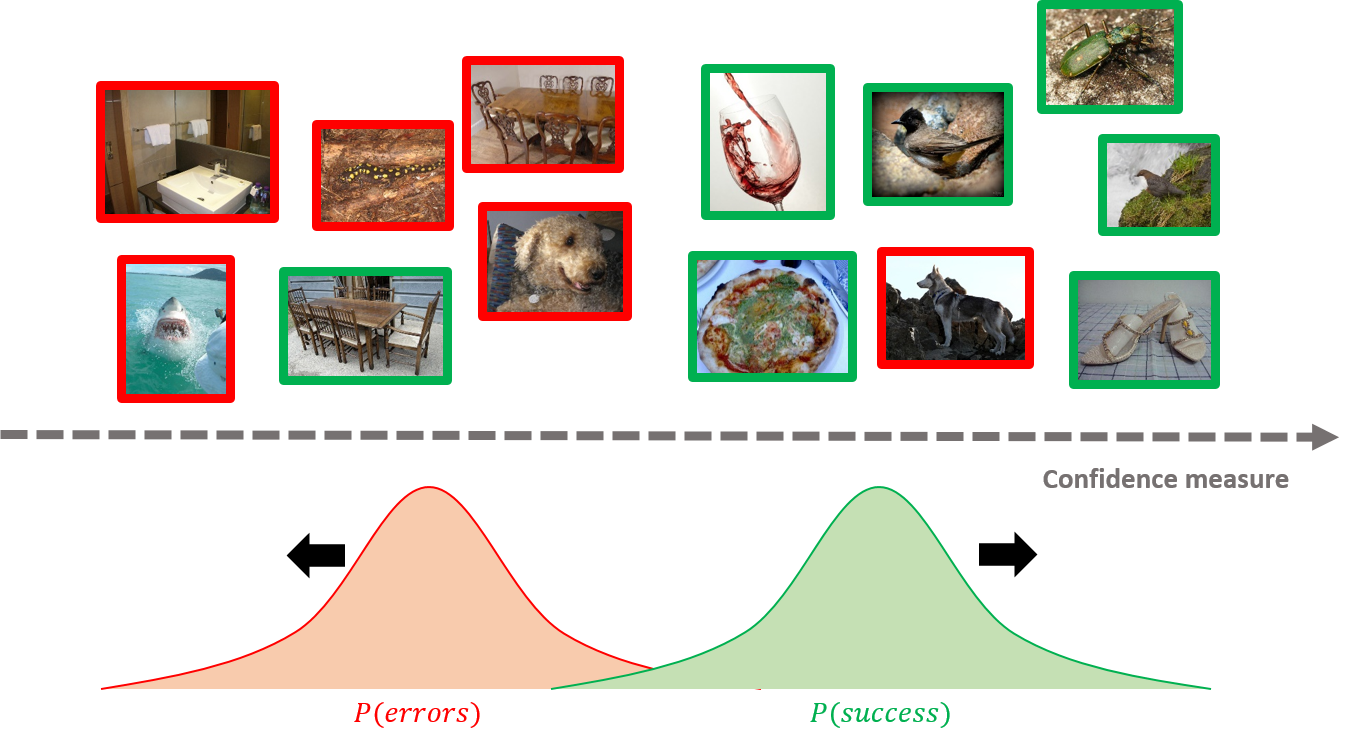}
    \caption[Illustration of an effective confidence measure for ordinal ranking on  in-distribution samples]{\textbf{Illustration of an effective confidence measure for ordinal ranking on in-distribution samples.} When ranking samples according to their confidence score, correct predictions (in green) should have higher values on average than misclassifications (in red) to enable the model to distinguish them.}
\label{chap3:fig:ordinal_ranking}
\end{figure}

In this chapter, we identify a better confidence criterion, the \emph{true class probability} (TCP), for deep neural network classifiers with a reject option (\cref{chap3:sec:confidence_measure}). We provide simple guarantees of the quality of this criterion regarding confidence estimation. Since the true class is obviously unknown at test time, we propose a novel approach, \emph{ConfidNet}, which consists in designing an auxiliary network specifically dedicated to estimate the confidence of a prediction (\cref{chap3:sec:confidnet}). Given a trained classifier $f$, this auxiliary network learns the TCP criterion from data. When applied to failure prediction, we observe significant improvements over strong baselines (\cref{chap3:subsec:results}). A thorough analysis of our approach, including relevant variations, ablation studies and qualitative evaluations of confidence estimates, helps to gain insight about its behavior in \cref{chap3:subsec:variants}.

\section{Defining a confidence measure for effective ordinal ranking}
\label{chap3:sec:confidence_measure}

In this section, we first briefly introduce the task of classification with a reject option, along with necessary notations. We also address semantic image segmentation, which can be seen as a pixel-wise classification problem, where a model outputs a dense segmentation mask with a predicted class assigned to each pixel. As such, all the following material is formulated for classification, and implementation details for segmentation are specified when necessary. We point out the limits of current measures and present our effective confidence-rate function for neural-net classifiers.

\subsection{Problem formulation}
\label{chap3:subsec:formulation}

Following notations introduced in \cref{chap2}, we consider a training dataset $\cD= \{ (\x_n, y_n) \}_{n=1}^N$ composed of $N$ \textit{i.i.d.} training samples, where $\x_n \in \cX \subset \mathbb{R}^D$ is a $D$-dimensional data representation, deep feature maps from an image or the image itself for instance, and $y_n \in \cY=\llbracket 1, K \rrbracket$ is its true class among the $K$ predefined categories. These samples are drawn from an unknown joint distribution $P(X,Y)$ over $(\cX, \cY)$.

\begin{definition}[Selective classifier]
    A \textit{selective classifier} \cite{elyaniv10a, NIPS2017_7073} is a pair $(f,g)$ where $f: \cX \rightarrow \cY$ is a \textit{prediction function} and $g: \cX \rightarrow \{0,1\}$ is a \textit{selection function} which enables to reject a prediction:
\begin{equation}
    (f,g)(\bm{x}) =
    \begin{cases}
      f(\x), & \text{if}\ g(\x)=1 \, , \\
      \mathrm{reject}, &\text{if}\ g(\x)=0 \, . \\
    \end{cases}
\end{equation}
\end{definition}

We focus on classifiers based on artificial neural networks. Given an input $\x$, such a network $F$ with parameters $\btheta$ outputs non-negative scores over all classes, which are normalized through softmax. If well trained, this output can be interpreted as the predictive distribution
$F(\x;\hat{\btheta}) = P(Y \vert \x, \hat{\btheta}) \in \Delta^{K-1}$, with $\Delta^{K-1}$ the probability (K-1)-simplex in $\mathbb{R}^{K}$ and $\hat{\btheta}$ the learned weights.
Based on this distribution, the predicted sample class is usually the \textit{maximum-a-posteriori} estimate:  
\begin{equation}
f(\bm{x}) = \mathrm{arg}\!\max_{k \in \cY}~P(Y = k \vert \bm{x}, \hat{\btheta}) = \mathrm{arg}\!\max_{k \in \cY} F(\x;\hat{\btheta})[k].
\label{chap3:eq:F2f}
\end{equation}

\begin{figure}[t]
\centering
\captionsetup[subfigure]{justification=centering}
\begin{minipage}[c]{0.5\linewidth}
\centering
    \includegraphics[width=\linewidth]{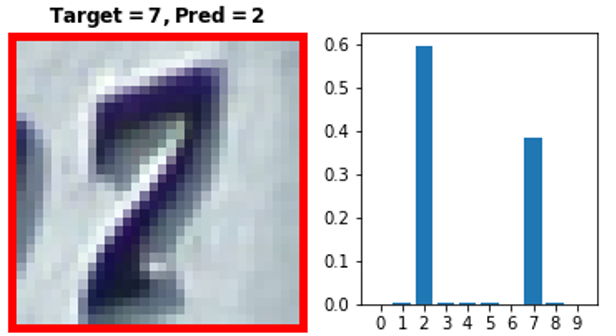}
    \subcaption{
    Erroneous prediction, entropy $= 0.79$}
    \label{chap3:fig:visu-entropy_a}
\end{minipage}%
\begin{minipage}[c]{0.5\linewidth}
\centering
    \includegraphics[width=\linewidth]{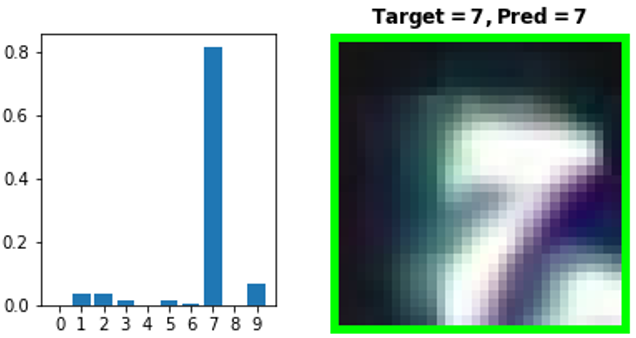}
    \subcaption{
    Correct prediction, entropy $= 0.79$}
    \label{chap3:fig:visu-entropy_b}
\end{minipage}%
  \caption[Illustration of the limits of predictive entropy as a confidence measure on SVHN samples.] {\textbf{Illustrating the limits of predictive entropy as confidence estimation on the SVHN test samples}. Red-border image (\cref{chap3:fig:visu-entropy_a}) is misclassified by the classification model; green-border image (\cref{chap3:fig:visu-entropy_b}) is correctly classified. Predictions exhibit similar high entropy in both cases. For each sample, we provide a plot of their softmax predictive distribution.}
  \label{chap3:fig:visu-entropy}
\end{figure}

We are not interested here in trying to improve the accuracy of the already-trained model $F$, but rather in making its future use more reliable by endowing the system with the ability to recognize when the prediction might be wrong.

To this end, a \textit{confidence-rate function} $\kappa_f:\cX \rightarrow \mathbb{R}^{+}$ is associated to $f$ so as to assess the degree of confidence of its predictions, the higher the value the more certain the prediction \cite{elyaniv10a, NIPS2017_7073}. A suitable confidence-rate function should correlate erroneous predictions with low values and successful predictions with high values. 
Finally, given a user-defined threshold $\delta \in \mathbb{R}^+$, the selection function $g$
can be simply derived from the confidence rate:
\begin{equation}
    g(\bm{x})=
    \begin{cases}
      1 & \text{if}\ \kappa_f(\bm{x}) \geq \delta \, , \\
      0 & \text{otherwise.} \\
    \end{cases}
\end{equation}

\subsection{Limits of current uncertainty measures}
\label{chap3:subsec:limits}

\begin{figure}[t]
\centering
\begin{minipage}[c]{0.5\linewidth}
\centering
    \includegraphics[width=\linewidth]{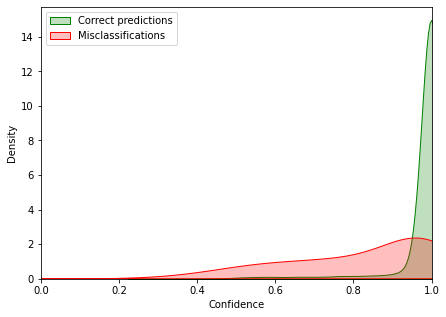}
    \subcaption{Maximum Class Probability}
    \label{chap3:fig:density-plot-mcp}
\end{minipage}%
\begin{minipage}{0.5\linewidth}
\centering
    \includegraphics[width=\linewidth]{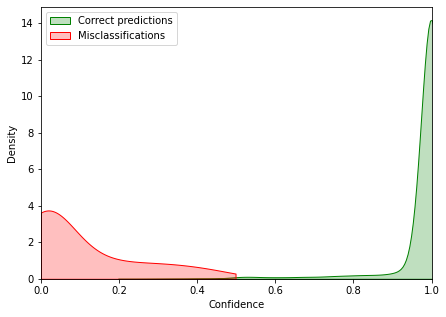}
    \subcaption{True Class Probability}
    \label{chap3:fig:density-plot-tcp}
\end{minipage}
\caption[Distributions of MCP and TCP confidence estimates computed over correct and erroneous predictions by a trained VGG-16 model on CIFAR-10]{\textbf{Distributions of MCP and TCP confidence estimates computed over correct and erroneous predictions by a trained VGG-16 model on CIFAR-10.} When ranking according to MCP (a) the test predictions of a convolutional model trained on CIFAR-10, we observe that correct ones (in green) and misclassifications (in red) overlap considerably, making it difficult to distinguish them. On the other hand, ranking samples according to TCP (b) alleviates this issue and allows a much better separation.}
\label{chap3:fig:density-plot}
\end{figure}

For a given input $\x$, a standard uncertainty measure for a classifier $F$ is the probability associated to the predicted max-score class, that is the \textit{maximum class probability}:

\begin{definition}[Maximum Class Probability]
For a given input $\x$ and a classifier $F$, the Maximum Class Probability (MCP) is defined as: 
\begin{equation}
\mathrm{MCP}_F(\x) = 
\max_{k \in \cY} P(Y=k \vert \x, \hat{\btheta}) =  
\max_{k \in \cY} F(\x;\hat{\btheta})[k].
\end{equation} 
\end{definition}

However, by taking the largest softmax probability as a confidence estimate, MCP leads to high confidence values both for correct and erroneous predictions alike, making it hard to distinguish them, as shown in \cref{chap3:fig:density-plot-mcp}.

Taking the predictive entropy as uncertainty measure may not also be always adequate. In \cref{chap3:fig:visu-entropy}, we show side-by-side two samples with a similar distribution entropy taken from a small convolutional network trained on SVHN, a street-view numbers dataset~\cite{svhn-dataset}.  Left image (red-border) is misclassified while the right one enjoys a correct prediction (green-border). Predictions exhibit similar high entropy in both cases. But as entropy is a symmetric measure in regards to class probabilities: a correct prediction with $[0.65, 0.35]$ distribution is evaluated as confident as an incorrect one with $[0.35, 0.65]$ distribution, which is undesirable for accurate failure prediction.

\subsection{The True Class Probability}
\label{chap3:subsec:tcp}

When the model misclassifies an example, the probability associated to the true class $y$ is lower than the maximum one and likely to be low. Based on this simple observation, we propose to consider instead this \emph{true class probability} as a suitable confidence-rate function.

\begin{definition}[True Class Probability]
Given a classifier $F$, for any admissible input $\x\in\cX$ we assume the \textit{true} class $y(\x)$ is known, which we denote $y$ for simplicity. The True Class Probability (TCP) is defined as  
\begin{equation}
    \mathrm{TCP}_F(\x,\,y) = P(Y=y \vert\,\x, \hat{\btheta}) = F(\x;\hat{\btheta})[y].
\end{equation}
\end{definition}

In \cref{chap3:fig:density-plot}, we can observe that TCP allows a much better separation than MCP. In particular, TCP offers the following interesting guarantees regarding its ability to characterize correct or erroneous predictions of a model.

\begin{proposition}
Given a properly labelled example $(\bm{x},y)$, then:
\begin{itemize}
    \item $\mathrm{TCP}_F(\bm{x},y) > 1/2$ $~\Rightarrow$ $f(\x) = y$, \ie the example is correctly classified by the model;
    \item $\mathrm{TCP}_F(\bm{x},y) < 1/K$ $\Rightarrow$ $f(\x) \neq y$, \ie the example is wrongly classified by the model,
\end{itemize} 
where class prediction $f(\x)$ is defined by \cref{chap3:eq:F2f}.
\end{proposition}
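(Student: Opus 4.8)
The plan is to derive both implications directly from the two defining constraints of the probability simplex $\Delta^{K-1}$: the entries of $F(\x;\hat{\btheta})$ are nonnegative and sum to one. Writing $p_k := P(Y=k\vert\x,\hat{\btheta})$, so that $\mathrm{TCP}_F(\x,y)=p_y$ and $\mathrm{MCP}_F(\x)=\max_{k}p_k=p_{f(\x)}$, the whole argument reduces to comparing $p_y$ with the remaining mass and with the maximal entry.

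For the first implication, I would assume $p_y>1/2$. Then the mass left for all other classes is $\sum_{k\neq y}p_k = 1-p_y < 1/2$, so by nonnegativity each individual competitor satisfies $p_k \le \sum_{j\neq y}p_j < 1/2 < p_y$ for every $k\neq y$. Hence $y$ is the strict unique maximiser of $k\mapsto p_k$, and by the definition of $f$ in \cref{chap3:eq:F2f} we conclude $f(\x)=y$.

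For the second implication, the key observation is that the maximum of the $p_k$ is always at least their average: $\mathrm{MCP}_F(\x)=\max_{k}p_k \ge \frac{1}{K}\sum_{k}p_k = \frac{1}{K}$, again using only that the entries sum to one. I would then assume $p_y<1/K$, which forces $\mathrm{TCP}_F(\x,y)=p_y < 1/K \le \mathrm{MCP}_F(\x)$, i.e. the strict inequality $p_y<p_{f(\x)}$. Were $f(\x)=y$ to hold, we would have $p_{f(\x)}=p_y$, a contradiction; therefore $f(\x)\neq y$.

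I do not expect any genuine obstacle here: both claims are elementary consequences of the simplex geometry, and no property of the network beyond the softmax normalisation is used. The only point requiring minor care is the treatment of ties in the $\argmax$. In the first implication this is a non-issue, since $y$ is shown to be the \emph{unique} maximiser; in the second, the strict inequality $p_y<\mathrm{MCP}_F(\x)$ guarantees that $y$ simply cannot attain the maximum, so $f(\x)\neq y$ regardless of how ties among the remaining classes are resolved. It is worth noting that the threshold $1/2$ in the first implication is sufficient but not sharp for $K>2$ — the tight condition would compare $p_y$ only against the largest competitor — but the stated guarantee is all that the subsequent confidence-estimation discussion requires.
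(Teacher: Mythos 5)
Your proof is correct, and while your first implication coincides with the paper's argument (mass outside $y$ is below $1/2$, hence every competitor is dominated by $p_y$), your second implication takes a genuinely different route. The paper argues by contradiction: assuming $f(\x)=y$, it bounds $\sum_{k\neq y}P(Y=k\vert\x,\hat{\btheta}) \le (K-1)\,P(Y=y\vert\x,\hat{\btheta}) \le \tfrac{K-1}{K}$, which contradicts the inequality $\sum_{k\neq y}P(Y=k\vert\x,\hat{\btheta}) > \tfrac{K-1}{K}$ derived from $\mathrm{TCP}_F(\x,y)<1/K$. You instead invoke the pigeonhole-type fact that the maximum of a probability vector is at least its mean, $\mathrm{MCP}_F(\x)\ge 1/K$, so $p_y<1/K\le p_{f(\x)}$ forces $f(\x)\neq y$ immediately. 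Your lemma is more direct — it avoids the contradiction, isolates the single reusable inequality $\mathrm{MCP}\ge 1/K$, and makes the tie-breaking discussion transparent — whereas the paper's version keeps both implications within the same style of summation manipulation and exhibits explicitly the class $k$ with $P(Y=k\vert\x,\hat{\btheta})>P(Y=y\vert\x,\hat{\btheta})$. Your closing remarks (uniqueness of the maximiser handles ties in the first case, strict domination handles them in the second, and the threshold $1/2$ is sufficient but not sharp for $K>2$) are accurate and slightly more careful than the paper, which leaves tie-breaking implicit.
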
 

\begin{proof}
Let $F$ be a trained neural network classifier with learned weights $\hat{\btheta}$, $K$ be the number of labels and $\bm{x} \in \mathbb{R}^D$ a sample with its associated true label $y \in \mathcal{Y}$ such that  $\mathrm{TCP}_F(\bm{x},y) > \frac{1}{2}$. Starting from the definition of TCP we have:
\begin{align}
   \mathrm{TCP}_F(\bm{x},y) = P(Y=y \vert \bm{x}, \hat{\btheta}) &> \frac{1}{2} \\
   \iff 1 - \sum_{k \in \mathcal{Y}, k \neq y} P(Y=k \vert \bm{x}, \hat{\btheta}) &> \frac{1}{2} \\
   \iff \sum_{k \in \mathcal{Y}, k \neq y} P(Y=k \vert \bm{x},\hat{\btheta}) &< \frac{1}{2}.
\end{align}
Since probabilities are positive, we obtain that  $\forall k \neq y$, $P(Y=k \vert \bm{x}, \hat{\btheta}) <\frac{1}{2}<P(Y=y \vert \bm{x}, \hat{\btheta})$.
Denoting $\hat{y}=f(\x)$ the class predicted by the network, we have $\hat{y} = \argmax_{k \in \cY} P(Y=k \vert \bm{x}, \hat{\btheta})$. Hence $\hat{y}=y$.

In the same way, for any $(\bm{x},y) \in \mathbb{R}^D\times\mathcal{Y}$, such that $\mathrm{TCP}_F(\bm{x},y) < \frac{1}{K}$, we have:
\begin{align}
   P(Y=y \vert \bm{x}, \hat{\btheta}) &< \frac{1}{K} \\
   \iff 1 - \sum_{k \in \mathcal{Y}, k \neq y} P(Y=k \vert \bm{x}, \hat{\btheta}) &< \frac{1}{K} \\
   \iff \sum_{k \in \mathcal{Y}, k \neq y} P(Y=k \vert \bm{x}, \hat{\btheta}) &> \frac{k-1}{K}.\label{eq1}
\end{align}
If the model correctly classifies this sample, \textit{i.e.}, $\hat{y}=y$, then $\forall k \neq y$, $P(Y=y \vert \bm{x}, \hat{\btheta}) \geq P(Y=k \vert \bm{x}, \hat{\btheta})$. We have:
\begin{equation}
    \sum_{K \in \mathcal{Y}, K \neq y} P(Y=k \vert \bm{x}, \hat{\btheta}) \leq (K-1)P(Y=y \vert \bm{x}, \hat{\btheta}) \leq \frac{K-1}{K},
\end{equation}
which contradicts \cref{eq1}. Hence, there exists at least one $k$ such that $P(Y=k \vert \bm{x}, \hat{\btheta}) > P(Y=y \vert \bm{x}, \hat{\btheta})$, which results in $\hat{y} \neq y$.
\end{proof}

Within the range $[1/K, 1/2]$, there is no guarantee that correct and incorrect predictions will not overlap in terms of TCP. However, when using deep neural networks, we observe that the actual overlap area is extremely small in practice, as illustrated in \cref{chap3:fig:density-plot-tcp} on the CIFAR-10 dataset. One possible explanation comes from the fact that modern deep neural networks output overconfident predictions and therefore non-calibrated probabilities (see \cref{chap2:subsec:calibration}). We provide consolidated analyses on this aspect in \cref{chap3:sec:failure} and further results on other datasets in \cref{appxA:sec:plots}. 

We also introduce a normalized variant of the TCP confidence criterion, which consists in computing the \textit{ratio} between TCP and MCP: 

\begin{definition}[Normalized True Class Probability]
Given a classifier $F$, for any admissible input $\x\in\cX$ we assume the \textit{true} class $y(\x)$ is known, which we denote $y$ for simplicity. The normalized True Class Probability (nTCP) is defined as  
\begin{equation}
    n\mathrm{TCP}_F(\x,\,y) = \frac{P(Y=y \vert\,\x, \hat{\btheta})}{P(Y=\hat{y} \vert\,\x, \hat{\btheta})}.
\end{equation}
\end{definition}

The normalized criterion \textit{n}TCP presents stronger theoretical guarantees than TCP, since correct predictions will be, by  design, assigned the value of $1$, whereas errors will range in $[0,1[$. On the other hand, learning this criterion may be more challenging since all correct predictions must match a single scalar value.

\section{ConfidNet: learning to predict TCP with an auxiliary model}
\label{chap3:sec:confidnet}

Using TCP as a confidence-rate function on a model's output would be of great help when it comes to reliably estimate its confidence. However, the true classes $y$ are obviously not available when estimating confidence on test inputs. 

\begin{figure}[t]
  \centering
  \includegraphics[width=\linewidth]{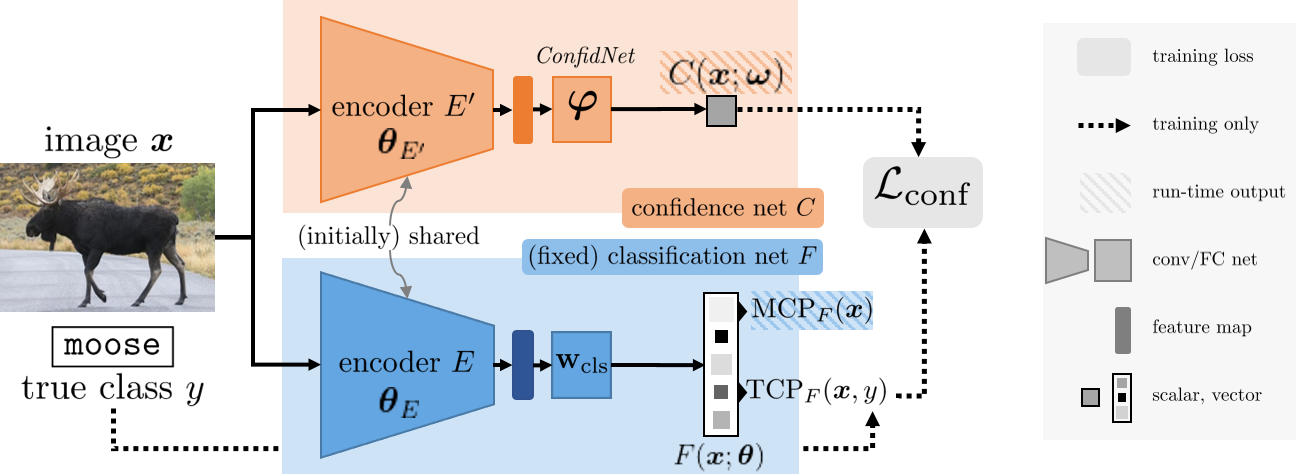}
  \caption[Overview of the learning confidence approach]{\textbf{Learning confidence approach.} The fixed classification network $F$, with parameters $\btheta =(\btheta_{E},\btheta_{\text{cls}})$, is composed of a succession of convolutional and fully-connected layers (encoder $E$) followed by last classification layers with softmax activation. The auxiliary confidence network $C$, with parameters $\bomega$, builds upon the feature maps extracted by the encoder $E$, or its fine-tuned version $E'$ with parameters $\btheta_{\text{E'}}$: they are passed to ConfidNet, a trainable multi-layer module with parameters $\bphi$. The auxiliary model outputs a confidence score  $C(\bm{x};\bomega)\in[0,1]$, with $\bomega = \bphi$ in absence of encoder fine-tuning and $\bomega =(\btheta_{E'},\bphi)$ in case of fine-tuning.}
  \label{chap3:fig:confidnet_network}
\end{figure}

\subsection{Principle}
\label{chap3:subsec:principle}

We propose to \emph{learn TCP confidence from data}. More formally, for the classification task at hand, we consider a parametric selective classifier $(f,g)$, with $f$ based on an already-trained neural network $F$. We aim at deriving its companion selection function $g$ from a learned estimate of the TCP function of $F$. 
To this end, we introduce an \textit{auxiliary model} $C$, with parameters $\bomega$, that is intended to predict $\text{TCP}_F$ and to act as a confidence-rate function for the selection function $g$. 
An overview of the proposed approach is available in \cref{chap3:fig:confidnet_network}. This model is trained such that, at runtime, for an input $\x \in \cX$ with (unknown) true label $y$, we have: 
\begin{equation}
    C(\x;\bomega) \approx \text{TCP}_F(\x,y).
\end{equation}

In practice, this auxiliary model $C$ will be a neural network trained under full supervision on $\cD$ to produce this confidence estimate. To design this network, we can transfer knowledge from the already-trained classification network. Throughout its training, $F$ has indeed learned to extract increasingly-complex features that are fed to its final classification layers. Calling $E$ the encoder part of $F$, a simple way to transfer knowledge consists in defining and training a multi-layer head with parameters $\bphi$ that regresses $\mathrm{TCP}_F$ from features encoded by $E$. We call \textit{ConfidNet} this module. As a result of this design, the complete confidence network $C$ is composed of a frozen encoder followed by trained ConfidNet layers. The complete architecture can be later fine-tuned, including the encoder, as in classic transfer learning. In that case, $\bomega$ will encompass the parameters of both the encoder and the ConfidNet's layers.

\subsection{Architecture}
\label{chap3:subsec:architecture}

Standard image classification models are composed of convolutional layers followed by one or more fully-connected layers and a final softmax operation. In order to work with such a classification network $F$, we build ConfidNet upon a late intermediate representation of $F$. ConfidNet is designed as a small multilayer perceptron composed of a succession of dense layers with a final sigmoid activation that outputs $C(\bm{x};\bomega)\in[0,1]$. ConfidNet is train in a supervised manner, such that it predicts well the true-class probability assigned by $F$ to the input image. Regarding the capacity of ConfidNet, we have empirically found that increasing further its depth leaves performance unchanged for estimating the confidence of the classification network (see \cref{chap3:subsec:variants}).

\subsection{Loss function}
\label{chap3:subsec:loss}

As we want to regress a score between $0$ and $1$, we use a mean-square-error (MSE) loss to train the confidence model:
\begin{equation} 
\cL_{\text{conf}}(\bomega;\cD) = \frac{1}{N} \sum_{n=1}^N \big(C(\x_n;\bomega) - \text{TCP}_F(\x_n,y_n)\big)^2.
\label{chap3:eq:loss-conf}
\end{equation}

Since the final task here is the prediction of failures, with confidence prediction being only a means toward it, a more explicit supervision with failure/success information could be considered. In that case, the previous regression loss could still be used, with 0 (failure) and 1 (success) target values instead of TCP. Alternatively, a binary cross entropy loss (BCE) for the error-prediction task using the predicted confidence as a score could be used. Seeing failure detection as a ranking problem, where good predictions must be ranked before erroneous ones according to the predicted confidence, a batch-wise ranking loss can also be utilized~\cite{Mohapatra_2018_CVPR}. We experimentally assessed all these alternative losses, including a focal version \cite{focaloss} of the BCE to focus on hard examples, as discussed in \cref{chap3:subsec:variants}. They lead to inferior performance compared to using \cref{chap3:eq:loss-conf}. This might be due to the fact that TCP conveys more detailed information than a mere binary label on the quality of the classifier's prediction for a sample. Hinton \textit{et al}. \cite{hinton2015distilling} make a similar observation when using soft targets in knowledge distillation. In situations where only very few error samples are available, this finer-grained information improves the performance of the final failure detection (see \cref{chap3:subsec:variants}).

\subsection{Learning scheme}
\label{chap3:subsec:learning}

We decompose the parameters of the classification network $F$ into $\btheta = (\btheta_{E}, \btheta_{\text{cls}})$, where $\btheta_{E}$ denotes its encoder's weights and $\btheta_{\text{cls}}$ the weights of its last classification layers. Such as in transfer learning, the training of the confidence network $C$ starts by fixing the shared encoder and training only ConfidNet's  weights $\bphi$. In this phase, the loss \cref{chap3:eq:loss-conf} is thus minimized only w.r.t. $\bomega = \bphi$.

In a second phase, we further fine-tune the complete network $C$, including its encoder which is now untied from the classification encoder $E$ (the main classification model must remain unchanged, by definition of the addressed problem). Denoting $E'$ this now independent encoder, and $\btheta_{E'}$ its weights, this second training phase optimizes \cref{chap3:eq:loss-conf} w.r.t. $\bomega = (\btheta_{E'}, \bphi)$ with $\btheta_{E'}$ initially set to $\btheta_{E}$.

We also deactivate dropout layers in this last training phase and reduce learning rate to mitigate stochastic effects that may lead the new encoder to deviate too much from the original one used for classification. Data augmentation can thus still be used. ConfidNet can be trained using either the original training set or a validation set. The impact of this choice is evaluated in \cref{chap3:subsec:variants}.

\section{Related work}
\label{chap3:sec:related_work}

Confidence estimation \cite{Blatz2004,zaragoza1998confidence} has a long history in the machine learning community, tightly related to classification with a reject option \cite{Chow1957AnOC}. The following works \cite{bartlettreject2008,NIPS2016_6336,cortes2016} explored alternative rejection criteria. In particular, \cite{cortes2016} proposes to jointly learn the classifier and the selection function. El-Yaniv \cite{elyaniv10a} provides an analysis of the risk-coverage trade-off that occurs when classifying with a reject option. More recently, \cite{NIPS2017_7073,Geifman2019SelectiveNetAD} extend the approach to deep neural networks, considering various confidence measures. Since the wide adoption of deep learning methods, confidence estimation has raised even more interest as recent works \cite{nguyen2015,Hein_2019_CVPR} reveal that modern neural networks tend to be overconfident and provide unreliable uncertainty estimates. 

Bayesian neural networks \cite{bnn1996} offer a principled approach for confidence estimation by adopting a Bayesian formalism which models the weight posterior distribution. As the true posterior cannot be evaluated analytically in complex models, various approximations have been developed, such as variational inference \cite{Blundell2015, NIPS2019_9472, Gal2016} or expectation propagation \cite{hernandezlobatoc15}. In particular, MC Dropout \cite{Gal2016} has raised a lot of interest due to the simplicity of its implementation. Predictions are obtained by averaging softmax vectors from multiple feed-forward passes through the network with dropout layers. When applied to regression, the predictive distribution uncertainty can be summarized by computing statistics, \eg, variance. However, when using MC Dropout for uncertainty estimation in classification tasks, the predictive distribution is averaged to a point-wise softmax estimate before computing standard uncertainty criteria such as entropy. It is worth mentioning that these entropy-based criteria measure the softmax output dispersion, where the uniform distribution has maximum entropy. It is not clear how well these dispersion measures are adapted to distinguishing failures from correct predictions as we will see in \cref{chap3:subsec:limits}. \cite{kendall2017} presented a framework to decompose the uncertainty into aleatoric and epistemic terms. But it requires multiple forward passes for inference. Lakshminarayanan \textit{et al}. \cite{deepensembles2017} propose an alternative to Bayesian neural networks by leveraging an ensemble of neural networks to produce well-calibrated uncertainty estimates. However, it requires training multiple classifiers, which has a considerable computing cost in training and inference time.

In failure prediction, a widely used baseline is to take the value of the predicted class' probability given by the softmax layer output, namely the \emph{maximum class probability} (MCP), suggested by \cite{oldconfmcp1995} and revised by \cite{hendrycks17baseline}. As stated before, MCP presents several limits regarding both failure prediction and out-of-distribution detection, as it outputs unduly high confidence values. More recently, \cite{NIPS2018_7798} proposed a new confidence measure, `Trust Score', which measures the agreement between the classifier and a modified nearest-neighbor classifier on the test examples. More precisely, the confidence criterion used in Trust Score is the ratio between the distance from the sample to the nearest class different from the predicted class and the distance to the predicted class. One clear drawback of this approach is its lack of scalability, since computing nearest neighbors in large datasets is extremely costly in both computations and memory. Another more fundamental limitation related to the Trust Score itself is that local distance computation becomes less meaningful in high dimensional spaces~\cite{distance-curse}, which is likely to negatively affect the performances of this method as shown in experiments. Finally, DeVries \& Taylor \cite{devries2018learning} share with us the same purpose of learning confidence in neural networks. Their work differs by focusing on out-of-distribution detection and learning jointly a distribution confidence score and classification probabilities. In addition, they use predicted confidence scores to interpolate output probabilities and target whereas we specifically craft our proposed criterion for failure prediction.

\begin{wrapfigure}{r}{0.5\textwidth}
    \captionsetup{justification=centering}
    \centering
    \vspace{-0.5cm}
    \includegraphics[width=\linewidth]{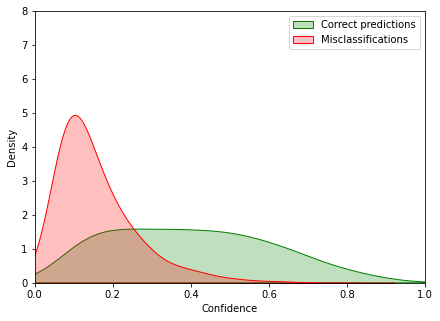}
    \caption[Distribution of MCP after temperature scaling for a VGG-16 on CIFAR-100]{Distribution of MCP after temperature scaling for a VGG-16 on CIFAR-100.}
    \vspace{-0.8cm}
    \label{chap3:fig:plot_tscaling}
\end{wrapfigure}
In tasks closely related to failure prediction, Guo \textit{et al}. \cite{guo2017}, for confidence calibration, and Liang \textit{et al}. \cite{odin2018}, for out-of-distribution detection, proposed to use temperature scaling to mitigate confidence values. However, this does not affect the ranking of confidence scores and therefore the separability between errors and correct predictions. For instance, we plot in \cref{chap3:fig:plot_tscaling} the distribution of MCP confidence estimates after temperature scaling ith a VGG-16 model on CIFAR-10. The temperature parameter $T$ has been found using validation data, such as described in \cite{guo2017}. Even though overconfidence is reduced, the separability between errors and correct predictions still remains problematic.

\section{Application to failure prediction}
\label{chap3:sec:failure}

We evaluate our approach to predict failures in both classification and segmentation settings. First, we run comparative experiments against strong confidence estimation and Bayesian uncertainty estimation methods on various datasets. These results are then completed by a thorough analysis of the influence of the confidence criterion, the training loss and the learning scheme in our approach. Finally, we provide a few visualizations to get additional insight into the behavior of our approach. Our code is available at \url{https://github.com/valeoai/ConfidNet}.

\subsection{Experiment setup}
\label{chap3:subsec:exp_setup}

\paragraph{Datasets.} We run experiments on image datasets of varying scale and complexity: MNIST \cite{mnist} and SVHN \cite{svhn-dataset} datasets provide relatively simple and small ($28\times 28$) images of digits (10 classes). They are split into 60,000 training samples and 10,000 testing samples. 
CIFAR-10 and CIFAR-100 \cite{Krizhevsky09} bring more complexity to classify low resolution images. In each dataset, we further keep 10\% of training samples as a validation dataset.
We also report experiments for semantic segmentation on CamVid \cite{camvid}, using ConfidNet's training and architecture introduced in \cref{chap3:subsec:architecture}, with dense layers replaced by $1\times 1$ convolutions with an adequate number of channels. CamVid is a standard road scene dataset. Images are resized to $360\times 480$ pixels and are segmented according to 11 classes such as `road', `building', `car' or `pedestrian'. 

\paragraph{Classification network.} For each dataset, we use standard neural network architectures as classifiers. Network architectures range from small convolutional networks for MNIST \cite{mnist} and SVHN \cite{svhn-dataset} to VGG-16 architectures \cite{Simonyan15} for CIFAR datasets \cite{Krizhevsky09}. We also added a multi-layer perceptron (MLP) with 1 hidden layer of size 100 for MNIST dataset in order to investigate performances on small models. Finally, we implemented SegNet following \cite{kendall2015bayesian}. All models are trained in a standard way with a cross-entropy loss and an SGD optimizer with a learning rate of $10^{-3}$, a momentum of 0.9 and a weight decay of $10^{-4}$. The number of training epochs depends on the dataset considered, varying from 100 epochs on MNIST to 250 epochs on CIFAR-100. As we also want to compute Monte Carlo samples following \cite{Gal2016}, we include dropout layers. Best models are selected on validation-set accuracy.

\paragraph{Baselines.} To demonstrate the effectiveness of our method, we implemented competitive confidence and uncertainty estimation approaches including Maximum Class Probability (MCP) as a baseline \cite{hendrycks17baseline}, Trust Score \cite{trustscore2018}, and Monte-Carlo Dropout (MC Dropout) \cite{Gal2016}. For Trust Score, we used the code provided by the authors\footnote{\url{https://github.com/google/TrustScore}}.
With MC Dropout, we use the same model as baseline (which already includes dropout layers) and we sample 100 times from the classification model at test time keeping dropout layers activated. We then compute the average softmax probability over all samples to conduct Monte Carlo integration. Model uncertainty is estimated by calculating the entropy of the averaged probability vector across the class dimension.

\paragraph{Evaluation metrics}
We measure the quality of failure prediction following standard metrics used in the literature~\cite{hendrycks17baseline}. We enumerate these metrics in the following and refer the reader to \cref{chap2:sec:evaluation} for a more detail description:
\begin{itemize}
    \item \textbf{FPR at 95\% TPR} measures the False Positive Rate ($\mathrm{FPR}$) when the True Positive Rate ($\mathrm{TPR}$) is equal to 95\%. True Positive Rate can be computed by $\mathrm{TPR} = \mathrm{TP} / (\mathrm{TP}+\mathrm{FN})$, where $\mathrm{TP}$ and $\mathrm{FN}$ denote numbers of true positives and false negatives respectively. The False Positive Rate can be computed by $\mathrm{FPR} = \mathrm{FP} / (\mathrm{FP}+\mathrm{TN})$, where $\mathrm{FP}$ and $\mathrm{TN}$ denote the number of false positives and true negatives respectively. This metric can be interpreted as the probability that an error is misclassified as a correct prediction when the True Positive Rate ($\mathrm{TPR}$) is as high as 95\%. \\
    \item \textbf{AUROC} measures the Area Under the Receiver Operating Characteristic curve (AUROC). The ROC curve is a graph showing True Positive Rate versus False Positive Rate. This metric is a threshold-independent performance evaluation, such as AUPR. It can be interpreted as the probability that a positive example has a greater prediction score than a negative example. \\
    \item \textbf{AUPR} measures the Area Under the Precision-Recall (PR) curve. The PR curve is a graph showing precision $= \mathrm{TP}/(\mathrm{TP} + \mathrm{FP})$ versus recall $= \mathrm{TP}/(\mathrm{TP} + \mathrm{FN})$. As we specifically want to detect failures, we use AUPR-Error (shortened here AUPR) as the primary metrics to assess performances.
\end{itemize}
As an additional, indirect way to assess the quality of the predicted classifier's confidence, we also consider the selective classification problem that was discussed in \cref{chap2:subsec:selective}. In this setup, the predictions by the classifier $F$ that get a predicted confidence below a defined threshold are rejected. Given a coverage rate (the fraction of examples that are not rejected), the performance of the classifier should improve. The impact of this selection is measured in average with:
\begin{itemize}
    \item \textbf{Area under the risk-coverage curve (AURC)}. In classification with a reject option, the risk-coverage curve is the graph of the empirical risk of the classifier given a loss (usually $0/1$ loss) as a function of the empirical coverage, which is the proportion of the non-rejected samples. This metric is threshold-independent, as AUROC and AUPR.
    \item \textbf{Excess-AURC (E-AURC)}. This is a normalized AURC metric defined in \cite{geifman2018biasreduced}. It takes into account the optimal ranking given the error rate of the classifier.
\end{itemize}

\paragraph{ConfidNet.} For each of the considered classification models, ConfidNet is built upon the penultimate layer, which is a convolutional layer with non-linear activation and optionally followed by a normalization layer. We train ConfidNet for 100 epochs with the Adam optimizer with a learning rate $1\times10^{-4}$, dropout, weight decay $10^{-4}$ and the same data augmentation as in the classifier's training. The relevance of selecting the same training dataset used for classifier learning or a hold-out dataset is specifically discussed in \cref{chap3:subsec:variants}. We select the best model based on the AUPR on the validation dataset. In the second training step involving encoder fine-tuning, the training is completed on very few epochs based on previous best model, using Adam optimizer with learning rate $1\times10^{-6}$ or $1\times10^{-7}$ and no dropout to mitigate stochastic effects that may lead the new encoder to deviate too much from the original one used for classification. Once again, the best model is selected on validation-set AUPR.

\subsection{Comparative results}
\label{chap3:subsec:results}

\begin{table}[t]
  \caption[Comparative results of confidence estimation methods for failure prediction and selective classification]{\textbf{Comparison of confidence estimation methods for failure prediction and selective classification}. For each dataset, all methods share the same classification network. For MC Dropout, test accuracy is averaged through random sampling. The first three metrics are percentages and concern failure prediction. The two last ones (the lower, the better) concern selective classification and their values have been multiplied by $10^3$ for clarity. Scores are averaged over 5 runs, best results are in \textbf{bold}, second best ones are \underline{underlined}.}
  \label{chap3:tab:comparative-results}
  \centering
  \begin{adjustbox}{max width=0.99\linewidth}
  \begin{tabular}{cl|rrr|rr}
    \toprule
    Dataset & Model & FPR\,@\,95\%\,TPR\,$\downarrow$& AUPR \,$\uparrow$ & AUROC \,$\uparrow$& AURC \,$\downarrow$& E-AURC\,$\downarrow$ \\
    \midrule
    \multirow{4}{*}{\shortstack[c]{\ubold{MNIST} \\ MLP}} & MCP~\cite{hendrycks17baseline} & 14.88 {\scriptsize $\pm 1.42$} & 47.25 {\scriptsize $\pm 1.67$} & 97.28 {\scriptsize $\pm 0.20$} & 0.83 {\scriptsize $\pm 0.07$} & 0.61 {\scriptsize $\pm 0.06$} \\
    & MC Dropout~\cite{Gal2016} & 15.17 {\scriptsize $\pm 1.08$} & 40.98 {\scriptsize $\pm 1.24$} & 97.10 {\scriptsize $\pm 0.18$} & 0.85 {\scriptsize $\pm 0.07$} & 0.63 {\scriptsize $\pm 0.06$} \\
    & TrustScore~\cite{trustscore2018} & \underline{14.80} {\scriptsize $\pm 2.03$} & \underline{52.13} {\scriptsize $\pm 1.79$} & \underline{97.36} {\scriptsize $\pm 0.10$} & \underline{0.82} {\scriptsize $\pm 0.04$} & \underline{0.59} {\scriptsize $\pm 0.03$} \\
    & ConfidNet & \ubold{11.61} {\scriptsize $\pm 1.96$} & \ubold{59.72} {\scriptsize $\pm 1.90$} & \ubold{97.89} {\scriptsize $\pm 0.14$} & \ubold{0.70} {\scriptsize $\pm 0.05$} & \ubold{0.47} {\scriptsize $\pm 0.04$} \\
    \midrule
    \multirow{4}{*}{\shortstack[c]{\ubold{MNIST} \\ SmallConvNet}} & MCP~\cite{hendrycks17baseline} & 5.53 {\scriptsize $\pm 1.25$} & 36.08 {\scriptsize $\pm 3.60$} & 98.49 {\scriptsize $\pm 0.07$} & \underline{0.15} {\scriptsize $\pm 0.01$} & \underline{0.12} {\scriptsize $\pm 0.01$} \\
    & MC Dropout~\cite{Gal2016} & \ubold{5.03} {\scriptsize $\pm 0.72$} & \underline{42.12} {\scriptsize $\pm 5.52$} & \underline{98.53} {\scriptsize $\pm 0.12$} & 0.16 {\scriptsize $\pm 0.01$} & \underline{0.12} {\scriptsize $\pm 0.01$} \\
    & TrustScore~\cite{trustscore2018} & 9.60 {\scriptsize $\pm 2.69$} & 33.47 {\scriptsize $\pm 3.82$} & 98.20 {\scriptsize $\pm 0.23$} & 0.18 {\scriptsize $\pm 0.03$} & 0.15 {\scriptsize $\pm 0.02$} \\
    & ConfidNet & \underline{5.32} {\scriptsize $\pm 1.14$} & \ubold{45.45} {\scriptsize $\pm 3.75$} & \ubold{98.72} {\scriptsize $\pm 0.07$} & \ubold{0.13} {\scriptsize $\pm 0.02$} & \ubold{0.10} {\scriptsize $\pm 0.01$} \\
    \midrule
    \multirow{4}{*}{\shortstack[c]{\ubold{SVHN} \\ SmallConvNet}} & MCP~\cite{hendrycks17baseline} & \underline{32.17} {\scriptsize $\pm 0.91$} & \underline{46.20} {\scriptsize $\pm 0.50$} & \underline{92.93} {\scriptsize $\pm 0.13$} & \underline{5.58} {\scriptsize $\pm 0.14$} & \underline{4.50} {\scriptsize $\pm 0.09$} \\
    & MC Dropout~\cite{Gal2016} & 33.54 {\scriptsize $\pm 1.06$} & 45.15 {\scriptsize $\pm 1.29$} & 92.84 {\scriptsize $\pm 0.08$} & 5.70 {\scriptsize $\pm 0.11$} & 4.61 {\scriptsize $\pm 0.09$} \\
    & TrustScore~\cite{trustscore2018} & 34.01 {\scriptsize $\pm 1.11$} & 44.77 {\scriptsize $\pm 1.30$} & 92.65 {\scriptsize $\pm 0.29$} & 5.72 {\scriptsize $\pm 0.11$} & 4.64 {\scriptsize $\pm 0.12$} \\
    & ConfidNet & \ubold{29.90} {\scriptsize $\pm 0.76$} & \ubold{48.64} {\scriptsize $\pm 1.08$} & \ubold{93.15} {\scriptsize $\pm 0.15$} & \ubold{5.51} {\scriptsize $\pm 0.09$} & \ubold{4.43} {\scriptsize $\pm 0.08$} \\
    \midrule
    \multirow{4}{*}{\shortstack[c]{\ubold{CIFAR-10} \\ VGG16}} & MCP~\cite{hendrycks17baseline} & \underline{49.19} {\scriptsize $\pm 1.42$} & \underline{48.37} {\scriptsize $\pm 0.69$} & \underline{91.18} {\scriptsize $\pm 0.32$} & \underline{12.66} {\scriptsize $\pm 0.61$} & \underline{8.71} {\scriptsize $\pm 0.50$} \\
    & MC Dropout~\cite{Gal2016} & 49.67 {\scriptsize $\pm 2.66$} & 48.08 {\scriptsize $\pm 0.99$} & 90.70 {\scriptsize $\pm 1.96$} & 13.31 {\scriptsize $\pm 2.63$} & 9.46 {\scriptsize $\pm 2.41$} \\
    & TrustScore~\cite{trustscore2018} & 54.37 {\scriptsize $\pm 1.96$} & 41.80 {\scriptsize $\pm 1.97$} & 87.87 {\scriptsize $\pm 0.41$} & 17.97 {\scriptsize $\pm 0.45$} & 14.02 {\scriptsize $\pm 0.34$} \\
    & ConfidNet & \ubold{45.08} {\scriptsize $\pm 1.58$} & \ubold{53.72} {\scriptsize $\pm 0.55$} & \ubold{92.05} {\scriptsize $\pm 0.34$} & \ubold{11.78} {\scriptsize $\pm 0.58$} & \ubold{7.88} {\scriptsize $\pm 0.44$} \\
    \midrule
    \multirow{4}{*}{\shortstack[c]{\ubold{CIFAR-100} \\ VGG16}} & MCP~\cite{hendrycks17baseline} & 66.55 {\scriptsize $\pm 1.56$} & 71.30 {\scriptsize $\pm 0.41$} & 85.85 {\scriptsize $\pm 0.14$} & 113.23 {\scriptsize $\pm 2.98$} & 51.93 {\scriptsize $\pm 1.20$} \\
    & MC Dropout~\cite{Gal2016} & \underline{63.25} {\scriptsize $\pm 0.66$} & \underline{71.88} {\scriptsize $\pm 0.72$} & \underline{86.71} {\scriptsize $\pm 0.30$} & \ubold{101.41} {\scriptsize $\pm 3.45$} & \ubold{46.45} {\scriptsize $\pm 1.91$} \\
    & TrustScore~\cite{trustscore2018} & 71.90 {\scriptsize $\pm 0.93$} & 66.77 {\scriptsize $\pm 0.52$} & 84.41 {\scriptsize $\pm 0.15$} & 119.41 {\scriptsize $\pm 2.94$} & 58.10 {\scriptsize $\pm 1.09$} \\
    & ConfidNet & \ubold{62.70} {\scriptsize $\pm 1.04$} & \ubold{73.55} {\scriptsize $\pm 0.57$} & \ubold{87.17} {\scriptsize $\pm 0.21$} & \underline{108.46} {\scriptsize $\pm 2.62$} & \underline{47.15} {\scriptsize $\pm 0.95$} \\
    \midrule
    \multirow{4}{*}{\shortstack[c]{\ubold{CamVid} \\ SegNet}} & MCP~\cite{hendrycks17baseline} & 63.87  {\scriptsize $\pm 0.76$} & 48.53 {\scriptsize $\pm 0.34$} & 84.42 {\scriptsize $\pm 0.09$} & & \\
    & MCDropout~\cite{Gal2016} & \underline{62.95} {\scriptsize $\pm 0.72$} & \underline{49.35} {\scriptsize $\pm 0.30$} & \underline{84.58} {\scriptsize $\pm 0.08$} & & \\
    & TrustScore~\cite{trustscore2018} & & 20.42 {\scriptsize $\pm 1.02$} & 68.33 {\scriptsize $\pm 1.17$} & & \\
    & ConfidNet & \ubold{61.52} {\scriptsize $\pm 0.67$} & \ubold{50.51} {\scriptsize $\pm 0.26$} & \ubold{85.02} {\scriptsize $\pm 0.08$} & & \\
    \bottomrule
  \end{tabular}
  \end{adjustbox}
\end{table}

Comparative results are summarized in \cref{chap3:tab:comparative-results}. We observe that our approach outperforms baseline methods in every setting, with a significant gap on small models/datasets. This confirms both that TCP is an adequate confidence criterion for failure prediction and that our approach ConfidNet is able to learn it. TrustScore also presents good results on small datasets/models such as MNIST where it improved the baseline. While ConfidNet still performs well on more complex datasets, Trust Score's performance drops, which might be explained by high dimensionality issues with distances as mentioned in \cref{chap3:subsec:exp_setup}. For its application to semantic segmentation where each training pixel is a `neighbor', computational complexity forced us to reduce drastically the number of training neighbors and of test samples. We sampled randomly in each train and test image a small percentage of pixels to compute TrustScore. ConfidNet, in contrast, is as fast as the original segmentation network. We also improve state-of-art performances at the time of publication from MCDropout, whose drawbacks regarding failure prediction have been highlighted previously in \cref{chap3:fig:visu-entropy}.

Risk-coverage curves \cite{elyaniv10a, NIPS2017_7073} depicting the performance of ConfidNet and other baselines for every tested dataset appear in \cref{chap3:fig:rc_curve}. `Coverage' corresponds to the probability mass of the non-rejected region after using a threshold as a selection function \cite{NIPS2017_7073}. For both datasets, ConfidNet presents a better coverage potential for each selective risk that a user can choose beforehand. In addition, we can see that the improvement is more pronounced at high coverage rates - \textit{e.g.} in $[0.8 , 0.95]$ for CIFAR-10 (\cref{chap3:fig:rc_cifar10}) and in $[0.86 , 0.96]$ for SVHN (\cref{chap3:fig:rc_svhn}) - which highlights the capacity of ConfidNet to identify successfully critical failures. 

\begin{figure}[ht]
\centering
\captionsetup[subfigure]{justification=centering}
\begin{minipage}{.5\linewidth}
  \centering
  \includegraphics[width=\linewidth]{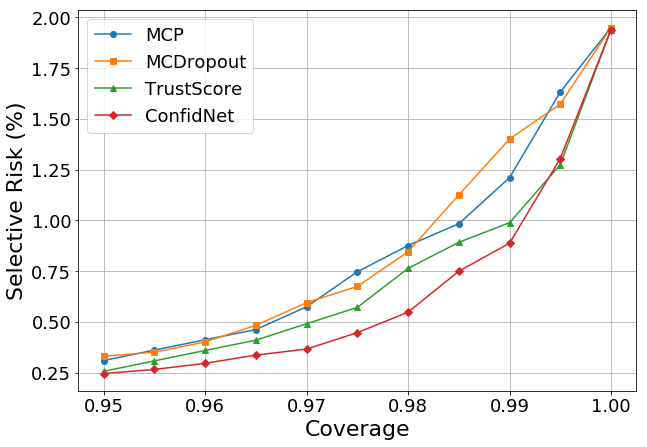}
  \vspace{-0.6cm}
  \subcaption{MLP on MNIST}
  \label{chap3:fig:rc_mlp}
\end{minipage}%
\begin{minipage}{.5\linewidth}
  \centering
  \includegraphics[width=\linewidth]{chapter03/images/RC_curve_mlp.png}
  \vspace{-0.6cm}
  \subcaption{Small ConvNet on MNIST}
  \label{chap3:fig:rc_mnist}
\end{minipage}\hfill
\vspace{0.3cm}
\begin{minipage}{.5\linewidth}
  \centering
  \includegraphics[width=\linewidth]{chapter03/images/RC_curve_mlp.png}
  \vspace{-0.6cm}
  \subcaption{Small ConvNet on SVN}
  \label{chap3:fig:rc_svhn}
\end{minipage}%
\begin{minipage}{.5\linewidth}
  \centering
  \includegraphics[width=\linewidth]{chapter03/images/RC_curve_mlp.png}
  \vspace{-0.6cm}
  \subcaption{VGG-16 on CIFAR-10}
  \label{chap3:fig:rc_cifar10}
\end{minipage}\hfill
\vspace{0.3cm}
\begin{minipage}{.5\linewidth}
  \centering
  \includegraphics[width=\linewidth]{chapter03/images/RC_curve_mlp.png}
  \vspace{-0.6cm}
  \subcaption{VGG-16 on CIFAR-100}
  \label{chap3:fig:rc_cifar100}
\end{minipage}%
\caption[Risk-coverage curves for various uncertainty measures on respective test sets]{\textbf{Comparison of risk-coverage curves for various uncertainty measures on respective test sets}. `Selective risk' ($y$-axis) represents the percentage of errors in the remaining test set for a given coverage percentage.}
\label{chap3:fig:rc_curve}
\end{figure}

\clearpage

\subsection{Effect of learning variants}
\label{chap3:subsec:variants}

\paragraph{Confidence loss function.}
In \cref{chap3:tab:loss-analysis}, we compare training ConfidNet with the MSE loss (\cref{chap3:eq:loss-conf}) to training with a binary-classification cross-entropy loss (BCE), a focal BCE loss \cite{focaloss} and a batch-wise approximate ranking loss. Even though BCE specifically addresses the failure prediction task, it achieves lower performances on CIFAR-10 datasets. Similarly, the focal loss and the ranking one yield results below TCP's performance in every tested benchmark. Similar results on SVHN and CamVid dataset are available in \cref{appxA:sec:ablation_loss}. Our intuition is that TCP regularizes the training by providing finer-grained information about the quality of the classifier's predictions. This is especially important in the difficult learning configuration where only very few error samples are available due to the good performance of the classifier.

\begin{table}[ht]
  \caption[Effect of loss function with ConfidNet on CIFAR-10]{\textbf{Effect of loss function with ConfidNet} on CIFAR-10. Results are percentages (\%).}
  \label{chap3:tab:loss-analysis}
  \centering
  \begin{adjustbox}{max width=\linewidth}
  \begin{tabular}{clrrrr}
    \toprule
    Dataset & Loss & FPR\,@\,95\%\,TPR\,$\downarrow$& AUPR \,$\uparrow$ & AUROC \,$\uparrow$ \\
    \midrule
    \multirow{5}{*}{\shortstack[c]{\ubold{CIFAR-10} \\ VGG-16}} & BCE & 45.20 & 47.95 & 91.94 \\
    & Focal & 45.20 & 47.76 & 91.93 \\
    & Ranking & 46.99 & 44.04 & 91.49 \\
    & \textit{n}$\mathrm{TCP}$ & 45.02 & 48.78 & 92.06 \\
    & $\mathrm{TCP}$ & \ubold{44.94} & \ubold{49.94} & \ubold{92.12} \\
    \bottomrule
  \end{tabular}
  \end{adjustbox}
\end{table}

We also evaluate the impact of regression to the normalized criterion \textit{n}TCP: performance is lower than the one of TCP on small datasets such as CIFAR-10 where few errors are present, but can be higher on larger datasets such as CamVid where each pixel is a sample (see \cref{appxA:tab:loss_variants}). This emphasizes once again the complexity of incorrect/correct classification training.

\paragraph{Hold-out dataset for training ConfidNet.}
Most neural networks used in our experiments tend to overfit. On small datasets such as MNIST and SVHN, convolutional neural networks already achieve nearly perfect accuracy on test set, above 96\%, which leaves very few errors available. For this reason, we also experimented with training ConfidNet on a hold-out dataset. We report results on all datasets in \cref{chap3:tab:validationset} for validation sets with 10\% of samples. We observe a general performance drop when using a validation set for training TCP confidence. The drop is especially pronounced for small datasets (MNIST), where models reach $>$97\% train and val accuracies. Consequently, with a high accuracy and a small validation set, we do not get a larger absolute number of errors using the validation set compared to the train set. One solution would be to increase validation set size but this would damage the model's prediction performance. By contrast, we take care with our approach to base our confidence estimation on models with levels of test predictive performance that are similar to those of baselines. On CIFAR-100, the gap between train accuracy and validation accuracy is substantial (95.56\% vs. 65.96\%), which may explain the slight improvement for confidence estimation using the validation set (+0.17\%). We think that training ConfidNet on the validation set with models reporting low/middle test accuracies could improve the approach.

\begin{table}[ht]
    \caption[Ablation study between training ConfidNet on train set or on validation set]{\textbf{Ablation study between training ConfidNet on train set or on validation set}. Comparison in AUPR (the higher, the better) on all benchmarks. Results are percentages (\%).}
    \centering
    \label{chap3:tab:validationset}
    \begin{adjustbox}{max width=\linewidth}
    \begin{tabular}{lcccccc}
        \toprule
        & \ubold{MNIST} & \ubold{MNIST} & \ubold{SVHN} & \ubold{CIFAR-10} & \ubold{CIFAR-100} & \ubold{CamVid} \\
        & MLP & SmallConvNet & SmallConvNet & VGG-16 & VGG-16 & SegNet\\
        \midrule
        ConfidNet (using train set) & 57.34 & 43.94 & 50.72 & 49.94 & 73.68 & 50.28 \\
        ConfidNet (using val set) & 33.41 & 34.22 & 47.96 & 48.93 & 73.85 & 50.15 \\
        \bottomrule
    \end{tabular}
    \end{adjustbox}
\end{table} 

\paragraph{ConfidNet's encoder fine-tuning.} 
We analyse in \cref{chap3:tab:learning_scheme} the effect of the encoder fine-tuning. Learning only ConfidNet on top of the pre-trained encoder $E$ (that is, $\bomega = \bphi$), our confidence network already achieves significant improvements w.r.t. the baselines. With a subsequent fine-tuning of both modules (that is, $\bomega = (\btheta_{E'},\bphi))$, its performance is further boosted in every setting, by around 1-2\%. Note that using a vanilla fine-tuning without the deactivation of the dropout layers did not bring any improvement.

\begin{table}[t]
\centering
  \caption[Impact of the encoder fine-tuning on the
error-prediction performance of ConfidNet]{\textbf{Impact of the encoder fine-tuning on the
error-prediction performance of ConfidNet}. Comparison in
AUPR (the higher, the better) on all benchmarks. Results are percentages (\%).}
  \label{cloning-analysis}
   \begin{adjustbox}{max width=\linewidth}
  \begin{tabular}{lcccccc}
    \toprule
    & \ubold{MNIST} & \ubold{MNIST} & \ubold{SVHN} & \ubold{CIFAR-10} & \ubold{CIFAR-100} &  \ubold{CamVid} \\
    & MLP & SmallConvNet & SmallConvNet & VGG-16 & VGG-16 & SegNet\\
    \midrule
    Confidence training & 58.42 & 44.54 & 48.49 & 50.18 & 71.30 & 50.12\\
    + Fine-tuning ConvNet & 59.72 & 45.45 & 48.64 & 53.72 & 73.55 & 50.51\\
    \bottomrule
  \end{tabular}
  \end{adjustbox}
  \label{chap3:tab:learning_scheme}
\end{table}

\paragraph{ConfidNet's architecture}
We experiment different architectures for ConfidNet on the SVHN dataset, varying the number of layers. Except for the first and last layers, whose dimensions respectively depend on the size of the input and of the output, each layer presents the same number of units (400). In \cref{chap3:fig:plot_architecture}, we observe that starting from 3 layers, ConfidNet already improves baseline performance.

\begin{figure}[ht]
  \centering
  \includegraphics[width=0.7\linewidth]{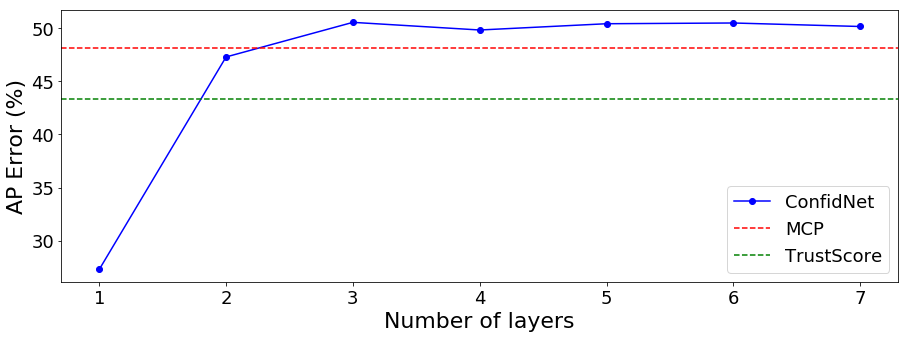}
  \caption[Influence of ConfidNet's depth on its performance]{\textbf{Influence of ConfidNet's depth on its performance}. Performance in AUPR as a function of the number of layers used in ConfidNet on SVHN test set and compared to the performance of MCP and True Score baselines.}
  \label{chap3:fig:plot_architecture}
\end{figure}

\subsection{Comparison with a two-fold ensemble}
\label{chap3:subsec:ensemble}

As ConfidNet with fine-tuning induces an increased capacity of the whole model by using a complete auxiliary network in conjunction with the original one, one might hypothesize that this contributes to its superior performance. To investigate this question, we compare its performance with the MCP metric taken from an ensemble of two neural networks. Comparative results for each dataset are presented in \cref{chap3:tab:deepensembles}. At equal computational cost, ConfidNet outperforms an ensemble of two neural networks in every setting.

\begin{table}[ht]
    \caption[Equal-capacity comparisons between ensemble and ConfidNet]{\textbf{Equal-capacity comparisons}. Comparison between ConfidNet trained on 1 neural network (1NN) and the MCP metric taken from the average prediction of an ensemble of two neural networks (2NNs). Results are percentages (\%).}
    \centering
    \label{chap3:tab:deepensembles}
    \begin{adjustbox}{max width=\linewidth}
    \begin{tabular}{l|cc|cc|cc|cc|cc}
        \toprule
        & \multicolumn{2}{c|}{\ubold{MNIST}} & \multicolumn{2}{c|}{\ubold{MNIST}} & \multicolumn{2}{c|}{\ubold{SVHN}} & \multicolumn{2}{c|}{\ubold{CIFAR-10}} & \multicolumn{2}{c}{\ubold{CIFAR-100}} \\
        & \multicolumn{2}{c|}{MLP} & \multicolumn{2}{c|}{SmallConvNet} & \multicolumn{2}{c|}{SmallConvNet} & \multicolumn{2}{c|}{VGG-16} & \multicolumn{2}{c}{VGG-16} \\
        \midrule
        Method & AUPR & AUROC & AUPR & AUROC & AUPR & AUROC & AUPR & AUROC & AUPR & AUROC \\
        \midrule
        MCP-1NN & 47.30 & 97.22 & 37.87 & 98.54 & 46.17 & 92.92 & 47.88  & 91.45 & 71.39 & 85.76  \\
        MCP-2NNs & 43.70 & 97.25 & 43.16 & 98.70 & 46.35 & 92.98 & 48.81 & 92.26 & 70.52 & 86.38  \\
        ConfidNet-1NN & \ubold{60.17} & \ubold{97.90} & \ubold{47.81} & \ubold{98.75} & \ubold{48.67}  & \ubold{93.17} & \ubold{54.00} & \ubold{92.39} & \ubold{73.30} & \ubold{87.00}  \\
        \bottomrule
    \end{tabular}
    \end{adjustbox}
\end{table}

\subsection{Effect on calibration}
\label{chap3:sec:calibration}

\begin{table}[ht]
  \caption[Comparative calibration results for failure prediction]{\textbf{Comparative calibration results}. Performance in ECE (the lower, the better) when using MCP baseline (`Baseline') or ConfidNet as confidence estimator on the six benchmarks, and when using dedicated temperature scaling (`T. Scaling'). Results are percentages (\%)}
  \label{chap3:tab:calibration}
  \centering
   \begin{adjustbox}{max width=\linewidth}
  \begin{tabular}{lcccccc}
    \toprule
    & \ubold{MNIST} & \ubold{MNIST} & \ubold{SVHN} & \ubold{CIFAR-10} & \ubold{CIFAR-100} & \ubold{CamVid} \\
    & MLP & SmallConvNet & SmallConvNet & VGG-16 & VGG-16 & SegNet\\
    \midrule
    Baseline & 0.37 & \ubold{0.20} & \ubold{0.50} & 4.48 & 22.37 & 9.65 \\
    ConfidNet & 0.66 & 0.30 & 1.11 & 3.45 & 15.61 & 7.57 \\
    Baseline + T. Scaling & \ubold{0.20} & 0.69 & 1.30 & \ubold{2.88} & \ubold{5.16} & \ubold{4.77} \\
    \bottomrule
  \end{tabular}
  \end{adjustbox}
\end{table}

\begin{figure}[t]
\centering
\begin{minipage}[t]{0.33\linewidth}
    \centering
    \includegraphics[width=\linewidth]{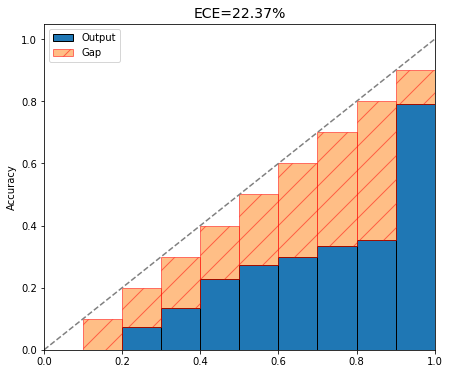}
    \subcaption{MCP}
    \label{chap2:fig:calibration_mcp}
\end{minipage}%
\begin{minipage}[t]{0.33\linewidth}
    \centering
    \includegraphics[width=\linewidth]{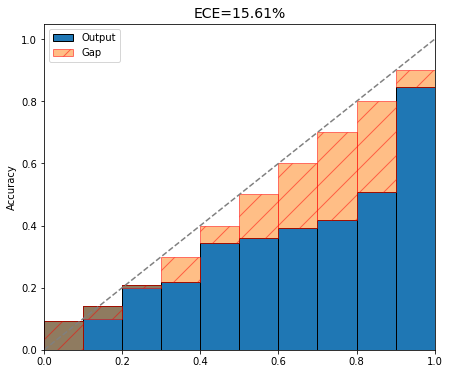}
    \subcaption{ConfidNet}
    \label{chap2:fig:calibration_confidnet}
\end{minipage}%
\begin{minipage}[t]{0.33\linewidth}
    \centering
    \includegraphics[width=\linewidth]{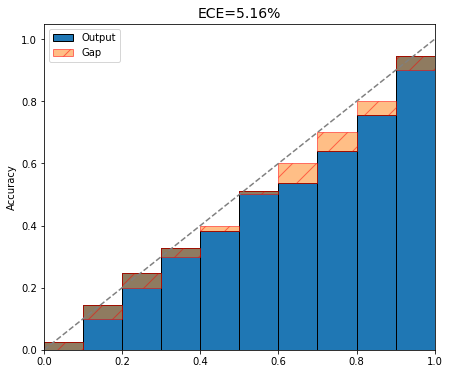}
    \subcaption{Temperature Scaling}
    \label{chap2:fig:calibration_calibrated}
\end{minipage}%
\caption[Reliability diagrams for a VGG-16 model on CIFAR-100]{\textbf{Reliability diagrams for a VGG-16 model on CIFAR-100.} Even though ConfidNet improves calibration over MCP, it remains less effective than dedicated methods for calibration, such as temperature scaling.}
\label{chap3:fig:calibration}
\end{figure}

We observed that ConfidNet tends to lower the confidence of an example that the model wrongly classified while being over-confident (high MCP). As a side experiment, we study whether using ConfidNet as confidence estimation can improve the calibration of deep neural networks. 

In \cref{chap3:tab:calibration}, we report the expected calibration error (ECE) which is an approximate measure of miscalibration between confidence and accuracy \cite{guo2017}. ConfidNet yields equivalent or better ECE results than the MCP baseline, with clear superiority on complex datasets such as CIFAR-10, CIFAR-100 and CamVid. On MNIST and SVHN, the baseline already offers a small ECE. These results confirm our intuition about the capacity of ConfidNet to address over-confident predictions, even though it has not been designed for. Nevertheless, dedicated methods such as temperature scaling used in \cite{guo2017} remain preferred for calibrating deep neural networks. Reliability diagrams in \cref{chap3:fig:calibration} illustrates this result with a VGG-16 architecture trained on CIFAR-100.

\subsection{Visualisations and failure cases}
\label{chap3:subsec:visu}

We provide an illustration on CamVid (\cref{chap3:fig:visu-camvid}) to better understand our approach for failure prediction. Compared to MCP baseline, our approach produces higher confidence scores for correct pixel predictions and lower ones on erroneously predicted pixels, which allows a user to better detect error area in semantic segmentation.

\begin{figure}[ht]
\centering
\captionsetup[subfigure]{justification=centering}
\begin{minipage}[c]{0.33\linewidth}
\centering
    \includegraphics[width=\linewidth]{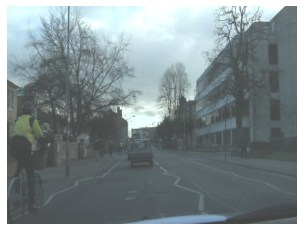}
    \vspace{-0.6cm}
    \subcaption{Input image}
    \label{chap3:fig:visu-camvid_a}
\end{minipage}%
\begin{minipage}[c]{0.33\linewidth}
\centering
    \includegraphics[width=\linewidth]{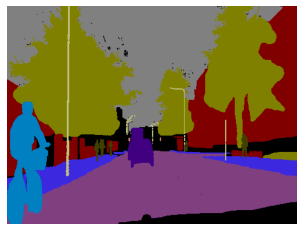}
    \vspace{-0.6cm}
    \subcaption{Ground truth map}
    \label{chap3:fig:visu-camvid_b}
\end{minipage}%
\begin{minipage}[c]{0.33\linewidth}
\centering
    \includegraphics[width=\linewidth]{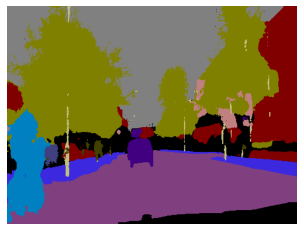}
    \vspace{-0.6cm}
    \subcaption{Prediction map}
    \label{chap3:fig:visu-camvid_c}
\end{minipage}%
\vspace{0.2cm}
\begin{minipage}[c]{0.33\linewidth}
\centering
    \includegraphics[width=\linewidth]{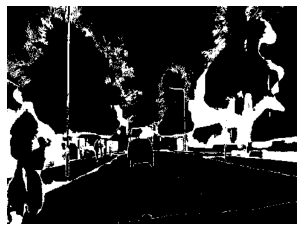}
    \vspace{-0.6cm}
    \subcaption{Model errors}
    \label{chap3:fig:visu-camvid_d}
\end{minipage}%
\begin{minipage}[c]{0.33\linewidth}
\centering
    \includegraphics[width=\linewidth]{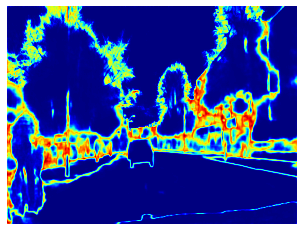}
    \vspace{-0.5cm}
    \subcaption{ConfidNet}
    \label{chap3:fig:visu-camvid_e}
\end{minipage}%
\begin{minipage}[c]{0.33\linewidth}
\centering
    \includegraphics[width=\linewidth]{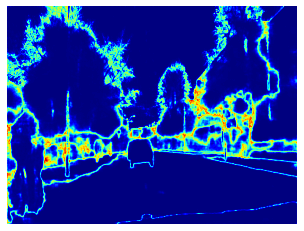}
    \vspace{-0.6cm}
    \subcaption{MCP}
    \label{chap3:fig:visu-camvid_f}
\end{minipage}%
  \caption[Visualization of uncertainty map for ConfidNet and MCP on a CamVid scene]{\textbf{Visualization of inverse confidence (uncertainty) map for ConfidNet (\cref{chap3:fig:visu-camvid_e}) and MCP (\cref{chap3:fig:visu-camvid_f}) on one CamVid scene}. The top row shows the input image (\cref{chap3:fig:visu-camvid_a}) with its ground-truth (\cref{chap3:fig:visu-camvid_b}) and the semantic segmentation mask (\cref{chap3:fig:visu-camvid_c}) predicted by the original classification model. The error map associated with the predicted segmentation is shown in (\cref{chap3:fig:visu-camvid_d}), with erroneous predictions flagged in white. ConfidNet (55.53\% AP-Error) allows a better prediction of these errors than MCP (54.69\% AP-Error).}
  \label{chap3:fig:visu-camvid}
\end{figure}

\begin{figure}[ht]
\centering
\captionsetup[subfigure]{justification=centering}
\begin{minipage}[c]{0.18\linewidth}
\centering
    \includegraphics[width=\linewidth]{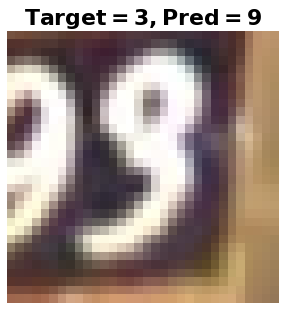}
    \vspace{-0.3cm}
    \subcaption{$\textrm{MCP}=0.93~$, 
    ConfidNet $= 0.74$, 
    $\textrm{TCP}= 0.06$}
    \label{chap3:fig:failure_case_1}
\end{minipage}%
\hspace{0.3cm}
\begin{minipage}[c]{0.18\linewidth}
\centering
    \includegraphics[width=\linewidth]{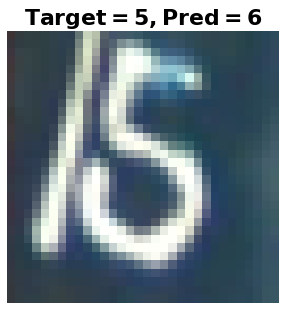}
    \vspace{-0.3cm}
    \subcaption{$\textrm{MCP}=0.94~$, 
    ConfidNet $= 0.78$, 
    $\textrm{TCP}= 0.06$}
    \label{chap3:fig:failure_case_2}
\end{minipage}%
\hspace{0.3cm}
\begin{minipage}[c]{0.18\linewidth}
\centering
    \includegraphics[width=\linewidth]{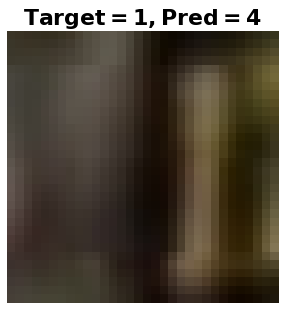}
    \vspace{-0.3cm}
    \subcaption{$\textrm{MCP}=0.89~$, 
    ConfidNet $= 0.90$, 
    $\textrm{TCP}= 0.11$}
    \label{chap3:fig:failure_case_3}
\end{minipage}%
\hspace{0.3cm}
\begin{minipage}[c]{0.18\linewidth}
\centering
    \includegraphics[width=\linewidth]{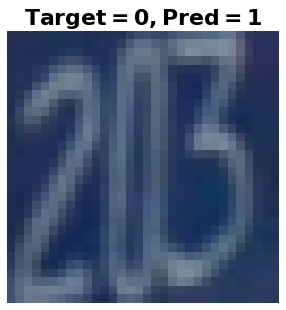}
    \vspace{-0.3cm}
    \subcaption{$\textrm{MCP}=0.82~$, 
    ConfidNet $= 0.91$, 
    $\textrm{TCP}= 0.17$}
    \label{chap3:fig:failure_case_4}
\end{minipage}%
\hspace{0.3cm}
\begin{minipage}[c]{0.18\linewidth}
\centering
    \includegraphics[width=\linewidth]{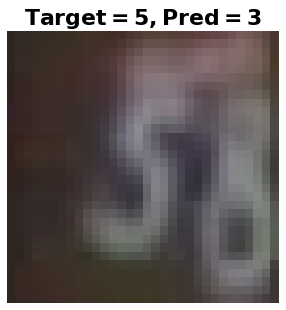}
    \vspace{-0.3cm}
    \subcaption{$\textrm{MCP}=0.90~$, 
    ConfidNet $= 0.94$, 
    $\textrm{TCP}= 0.10$}
    \label{chap3:fig:failure_case_5}
\end{minipage}%
\hspace{0.3cm}
\caption[Failure cases of ConfidNet]{\textbf{Failure cases of ConfidNet}. On these misclassified digits from SVHN's test images, ConfidNet fails to regress the corresponding TCP values, hence to predict the low confidence that should be assigned to the classifier's decisions.}
\label{chap3:fig:failure_cases}
\end{figure}

In \cref{chap3:fig:failure_cases}, we present some failures of ConfidNet on the SVHN dataset. On these selected samples, the classifier outputs an erroneous prediction with high MCP, but Confidnet fails to predict the low TCP values, hence to identify these misclassifications. We can observe that these samples are hard to classify, due to low image quality and confusing shapes. For instance, the classifier has assigned \cref{chap3:fig:failure_case_1} to a 9 while the correct label was 3. In this example, even though ConfidNet output remains high (0.74), it manages at least to be significantly lower than the original MCP value (0.93).

\clearpage

\section{Conclusion}
\label{chap3:sec:conclusion}

In this chapter, we defined a new confidence criterion, TCP, which enjoys simple guarantees and empirical evidence of improving the confidence estimation for classifiers with a reject option. We proposed a specific method to learn this criterion with an auxiliary neural network built upon the encoder of the model that is monitored. Applied to failure prediction, this learning scheme consists in training the auxiliary network and then enabling the fine-tuning of its encoder (the one of the monitored classifier remains frozen). In each image classification experiment, we were able to improve the capacity of the model to distinguish correct from erroneous samples and to achieve better selective classification. Besides failure prediction, other applications can benefit from this improved confidence estimation. In the next chapter, we propose a new application of our learned confidence approach related to the task of unsupervised domain adaptation for semantic segmentation using self-training.

\chapter{Self-Training with Learned Confidence for Domain Adaptation}
\label{chap4}
\begin{center}
  \textsc{Chapter Abstract}
\end{center}
\begin{quote}
\noindent \textit{Semantic segmentation is a key component for scene understanding with application in self-driving cars and robotics. But collecting and manually annotating urban street scenes with dense pixel-level labels is extremely costly due to the large amount of human effort required. On the other hand, recent advances in computer graphics make it possible to train models on photo-realistic synthetic images with computer-generated annotations. Unsupervised domain adaptation (UDA) aims at learning only from source supervision a well-performing model on target-domain samples.}

\textit{Self-training has recently proven a potent strategy to improve the effectiveness of UDA in semantic segmentation. This line of work mostly relies on the generation of pseudo-labels over the unannotated target domain to incorporate target-domain images and learn a better segmentation adaptation model. A crucial issue in this context is to base the pseudo-label selection on reliable confidence measures.}

\textit{In this chapter, we propose to adapt our learning confidence approach with an auxiliary model to estimate the confidence of the segmentation network in its predictions and to use these confidence estimates as a criterion for pseudo-label extraction. We further enforce confidence distribution alignment between source and target domains using adversarial training, and we equip the architecture of the confidence network with multi-scale prediction suitable for semantic segmentation. We show that this strategy produces more accurate pseudo-labels and outperforms strong state-of-the-art baselines at the time of publication on three challenging UDA segmentation benchmarks.}

\textit{The work described in this chapter is based on the following publication:}
\begin{itemize}
    \item \textit{Charles Corbière, Nicolas Thome, Antoine Saporta, Tuan-Hung Vu, Matthieu Cord, Patrick Pérez. ``Confidence Estimation via Auxiliary Models". In IEEE Transactions on Pattern Analysis and Machine Intelligence, 2021.} 
\end{itemize}
\end{quote}

\clearpage
\minitoc[tight]

\section{Context}
\label{chap4:sec:context}

Perception systems in autonomous cars require in-depth understanding of scenes in which they operate. For this reason, semantic segmentation modules are often incorporated to obtain class-label predictions for every scene pixel. While recent advances in deep convolutional networks have significantly improved segmentation performance, their efficacy depends on large quantities of accurately labeled training data. But the labeling process usually requires experts’ efforts and the annotation cost limits the operational domains of such systems. On the other hand, a lot of driving scenes data are synthesized by game engines such as GTA5 \cite{richter-eecv2016}. Consequently, recent works try to leverage this cheap alternative supervision by training models on these image sources and predicting on real images. But direct transfer is not effective as we observe a drop in performance when evaluating on real images, due to a `domain gap`.

Unsupervised Domain Adaptation (UDA) is the field of research that aims at reducing this domain gap between source and target domains. In the UDA context, annotated source samples along with some unlabeled target images are available at train time. Most works in this line of research aim at minimizing the distribution discrepancy between the source domain and the target domain, at the feature \cite{hoffman-arxiv2016} or prediction level~\cite{Tsai_adaptseg_2018,vu2018advent}, potentially combined with translation methods transforming source images to match the target domain `style'~\cite{Hoffman_cycada2017}. Recently, self-training~\cite{Li_2019_CVPR,Zou_2019_ICCV,zou2018unsupervised} proved its ability to boost adaptation performance significantly. The rationale behind these approaches is to label automatically the most confident target pixels according to current network prediction and to retrain the network accordingly. While this idea is appealing, the presence of noisy or incorrect pseudo-labels could be detrimental to the training of the neural network. As an example, using a ratio of 70\% of pseudo-labels in~\cite{Li_2019_CVPR} leads to a performance of around $48\%$ mIoU, which is better than $34\%$ with direct transfer (only trained on source domain), but still largely below $63\%$ obtained with the same amount of ground-truth labels. Therefore, defining good measures of confidence to select reliable predictions is of crucial importance towards the development of error-free self-training.

To improve self-training efficiency, we propose to adapt our learning confidence approach developed in the previous chapter for the particular context of unsupervised domain adaptation for semantic segmentation. Using an auxiliary model, we select a pool of pixels with high confidence scores to perform the pseudo-labeling (\cref{chap4:subsec:selecting}). We propose \emph{ConDA}, a new deep framework for UDA semantic segmentation with self-training. ConDA leverages the general idea of ConfidNet, but includes the following adaptations specifically designed for UDA:
\begin{itemize}
    \item an adversarial training scheme to prevent drifts in confidence distribution between source and target domains (\cref{chap4:subsec:advloss});
    \item an `atrous' pyramidal pooling architecture for the confidence network to perform multi-scale confidence estimation (\cref{chap4:subsec:aspp}).
\end{itemize}
In \cref{chap4:sec:experiments}, we empirically demonstrate that ConDA brings systematic improvements in performance over self-training based on the standard Maximum Class Probability (MCP), with experiments on various challenging UDA benchmarks with synthetic source datasets and real target datasets and using multiple UDA semantic segmentation methods~\cite{Li_2019_CVPR,vu2018advent,vu-iccv19}, some of them including multiple modalities (\eg depth~\cite{vu-iccv19}).

\section{Unsupervised domain adaptation}
\label{chap4:sec:uda}

Formally, let us consider the annotated source-domain training set $\cD_{\so} = \{ (\xson, \y_{\so,n}) \}_{n=1}^{N_{\so}}$, where $\xson$ is a color image of size $(H,W)$ and $\y_{\so,n} \in \cY^{H\times W}$ its associated ground-truth segmentation map. A segmentation network $F$ with parameters $\btheta$ takes as input an image $\x$ and returns a predicted \textit{soft}-segmentation map $F(\x;\btheta) = \Pmap  \in [0,1]^{H\times W\times K}$, where  $\Pmap[h,w,:] = P(Y[h,w] \,\vert\, \x;\btheta)\in\Delta^{K-1}$, with $\Delta^{K-1}$ the probability (K-1)-simplex. 
The final prediction of the network is the segmentation map $f(\x)$ defined pixel-wise as $f(\x)[h,w] = \arg\!\max_{k\in\cY} \Pmap[h,w,k]$.
This network is learned over the source domain samples $(\bm{x}_s,\bm{y}_s)$ using the cross-entropy segmentation loss:
\begin{equation}
    \mathcal{L}_\text{seg} (\bm{x}_s, \bm{y}_s) = - \sum\limits_{h=1}^H\sum\limits_{w=1}^W\log \bm{P}_{\bm{x}_s}^{(h,w,k^*)},
    \label{eq:l_seg}
\end{equation}
which is minimized over the parameters $\theta_F$ of the network and where $k^*$ is the ground-truth segmentation class for pixel $(h,w)$\footnote{We omit the location dependence $(h,w)$ on $k^*$ for conciseness.}.

In UDA, the main challenge is to use the unlabeled target set $\cD_{\tg} = \{\bm{x}_{\tg,n}\}_{n=1}^{N_{\text{t}}}$ available during training to learn domain-invariant features on which the segmentation model would behave similarly in both domains. Common strategies to perform this task are to minimize the maximum mean discrepancy (MMD)~\cite{long2015learning}, to align the second-order statistics of the distributions (CORAL)~\cite{sun2016deep} or to adopt an adversarial training approach to produce indistinguishable source-target distributions in feature space~\cite{hoffman-arxiv2016} or output space~\cite{Tsai_adaptseg_2018}. For the semantic segmentation task, most recent progresses have been found around the latter. To cite a few methods: CyCADA~\cite{Hoffman_cycada2017} first stylizes the source images as target-domain images before aligning source and target in the feature space; AdaptSegNet~\cite{Tsai_adaptseg_2018} constructs a multi-level adversarial network to perform domain adaptation at different feature levels; AdvEnt~\cite{vu2018advent} aligns the entropy of the pixel-wise predictions with an adversarial loss; BDL~\cite{Li_2019_CVPR} learns alternatively an image translation model and  a segmentation model that promote each other; DISE~\cite{chang2019all} disentangles images into domain-invariant structure and domain-specific texture representations, enabling image translation across domains and label transfer to improve segmentation performance.

In the following, we denote $\cL_F$ as the objective function of the classifier $F$, regardless of the method used for UDA. For instance, with adversarial training methods, we would write $\cL_F = \cL_\text{seg} + \cL_\text{adv}$ where $\cL_\text{adv}$ is the adversarial term in the objective function.

\paragraph{Self-training} Within semi-supervised learning literature \cite{lee-icml2013,grandvalet-nips2005}, self-training with pseudo-labeling showed to be a simple but effective strategy that relies on picking up the current predictions on the unlabeled data and using them as if they were true labels for further training. It is shown in \cite{lee-icml2013} that the effect of pseudo-labeling is equivalent to entropy regularization~\cite{grandvalet-nips2005}. In a UDA setting, the idea is to collect pseudo-labels on the unlabeled target-domain samples in order to have an additional supervision loss in the target domain. To select only reliable pseudo-labels, such that the performance of the adapted semantic segmentation network effectively improves, BDL~\cite{Li_2019_CVPR} resorts to standard selection with MCP. ESL~\cite{saporta2020esl} uses instead the entropy of the prediction as confidence criterion for its pseudo-label selection. CBST~\cite{zou2018unsupervised} proposes an iterative self-training procedure where the pseudo-labels are generated based on a loss minimization. In~\cite{zou2018unsupervised}, the authors also propose a way to balance the classes in their pseudo-labels to avoid the dominance of large classes as well as a way to introduce spatial priors. More recently, the CRST framework~\cite{Zou_2019_ICCV} proposes multiple types of confidence regularization to limit the propagation of errors caused by noisy pseudo-labels.

\begin{figure}[t]
\begin{center}
\includegraphics[width=0.9\linewidth]{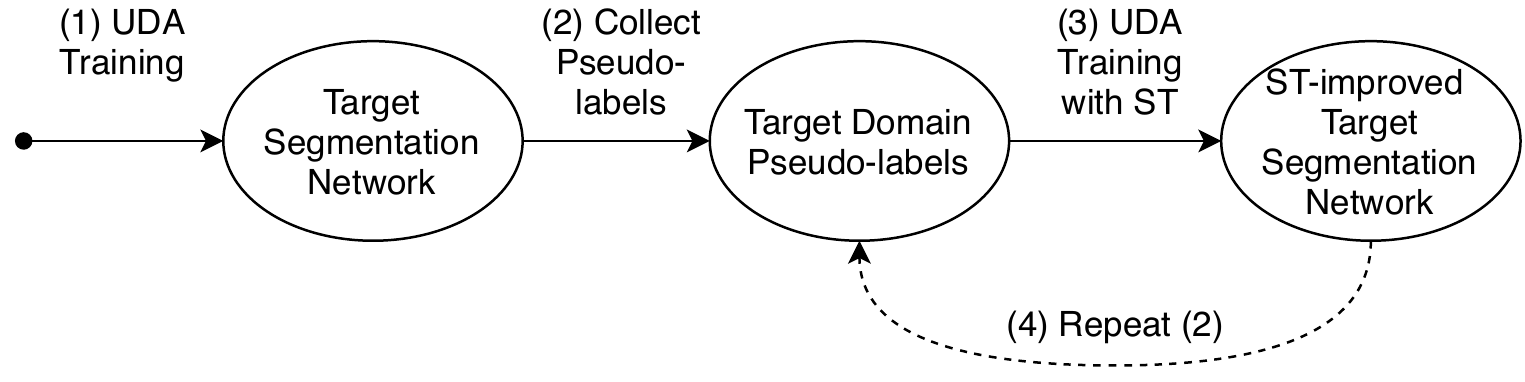}
\end{center}
   \caption[Layout of self-training for unsupervised domain adaptation]{\textbf{Self-training (ST) for UDA}. A segmentation model is first learned with UDA and used to collect pseudo-labels on target domain images. These automatically annotated data are used to subsequently retrain the model, 
   an operation that can be iterated.}
\label{chap4:fig:selftraining}
\end{figure}

By attaching pseudo-labels $\hat{\bm{y}}_t$ to the target-domain images $\bm{x}_t$, the objective function of $F$ with self-training can be written:
\begin{equation}
    \mathcal{L}_\text{F}^* = \mathcal{L}_\text{F} + \frac{\lambda_\text{ST}}{|\cD_t|}\sum\limits_{\bm{x}_t\in\cD_t}\mathcal{L}_\text{seg}(\bm{x}_t,\bm{\hat{y}}_t),
\end{equation}
with a weight $\lambda_\text{ST}$ to balance the self-training term.

Self-training in UDA leverages the domain alignment that has already been achieved by the UDA strategy, assuming that the predictions of the current segmentation network $F$ on target domain are relatively accurate. A high-level view of self-training for semantic segmentation with UDA is described in \cref{chap4:fig:selftraining}:
\begin{enumerate}
    \item Train a segmentation network for the target domain using a chosen UDA technique;\label{step1}
    \item Collect pseudo-labels among the predictions that this network makes on the target-domain training images;\label{step2}
    \item Train a new semantic-segmentation network from scratch using the chosen UDA technique in combination with supervised training on target-domain data with pseudo-labels;
    \item Possibly, repeat from step~\ref{step2} by collecting better pseudo-labels after each iteration.
\end{enumerate}

While the general idea of self-training is simple and intuitive, collecting good pseudo-labels is quite tricky. If too many of them correspond to erroneous predictions of the current segmentation network, the performance of the whole UDA can deteriorate. Thus, a measure of confidence should be used in order to only gather reliable predictions as pseudo-labels and to reject the others.

\begin{figure}
\begin{center}
\includegraphics[width=\linewidth]{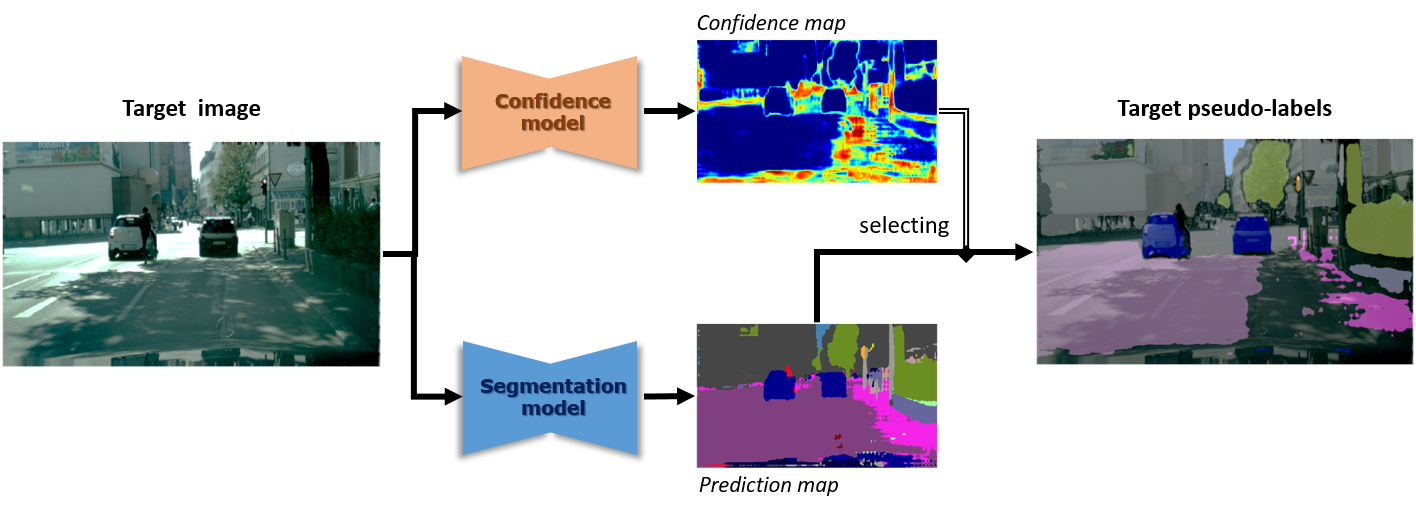}
\end{center}
   \caption[Proposed self-training with learned confidence]{\textbf{Proposed self-training with learned confidence}. Instead of relying only on the segmentation model to generate pseudo-label maps for target images, we propose to use a confidence model specifically trained to this end. This model outputs a reliable confidence map which helps to improve the quality of the final pseudo-label map.}
\label{chap4:fig:selecting}
\end{figure}

\section{ConDA: Confidence learning in domain adaptation}
\label{chap4:sec:conda}

\begin{figure}[t]
\centering
\includegraphics[width=\linewidth]{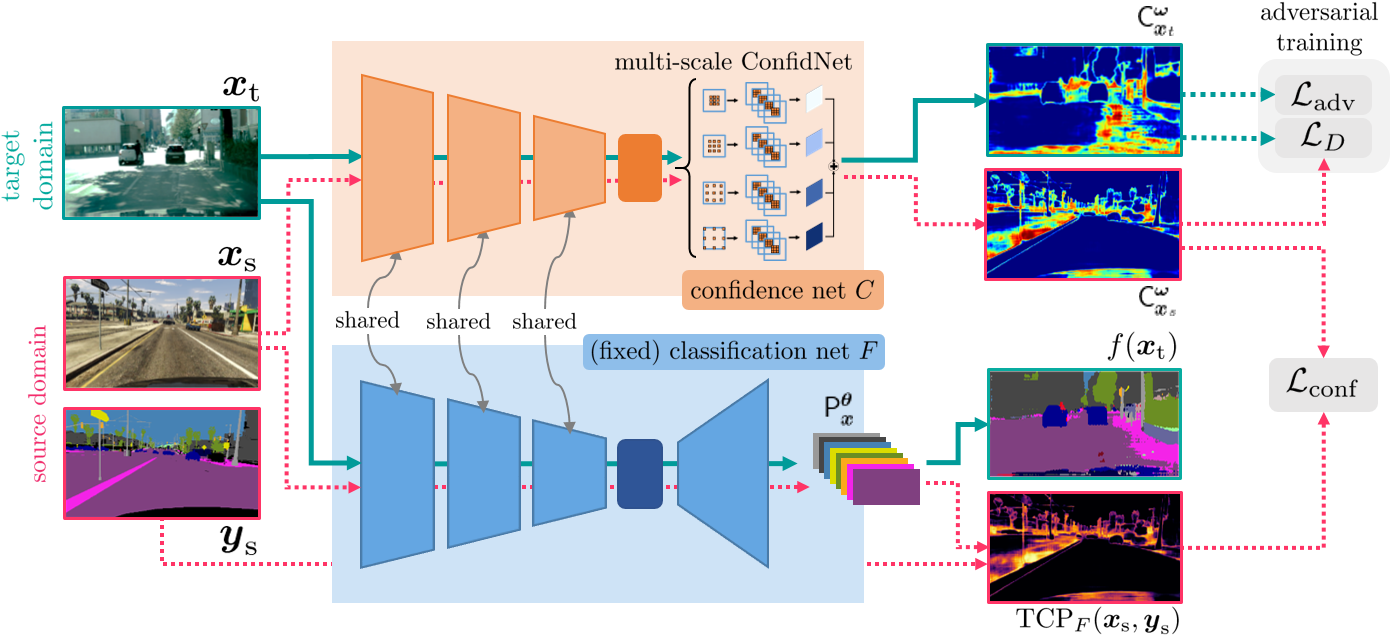}
   \caption[Overview of proposed confidence learning for domain adaptation (ConDA) in semantic segmentation]{\textbf{Overview of proposed confidence learning for domain adaptation (ConDA) in semantic segmentation}. Given images in source and target domains, we pass them to the encoder part of the segmentation network $F$ to obtain their feature maps. This network $F$ is fixed during this phase and its weights are not updated. The confidence maps are obtained by feeding these feature maps to the trainable head of the confidence network $C$, which includes a multi-scale ConfidNet module. For source-domain images, a regression loss $\cL_{\text{conf}}$ (\cref{chap4:eq:loss-conf-conda}) is computed to minimize the distance between $\Cmaps$ and the fixed true-class-probability map $\text{TCP}_F(\xso, \y_{\so})$. An adversarial training scheme -- based on discriminator's loss $\cL_D(\bpsi)$ (\cref{chap4:eq:l_Dconf}) and adversarial part $\cL_{\text{adv}}(\bomega)$ of confidence net's loss (\cref{chap4:eq:l_C}) --, is also added to enforce the consistency between the $\Cmaps$'s and $\Cmapt$'s. Dashed arrows stand for paths that are used only at train time.
   }
\label{chap4:fig:confidence_training}
\end{figure}

Leveraging automatic pseudo-labeling of target-domain training examples is in particular a simple, yet powerful way to further improve UDA performance with self-training. One key ingredient of such an approach being the selection of the most promising pseudo-labels, the proposed auxiliary confidence-prediction model lends itself particularly well to this task.  In this section, we detail how the proposed approach to confidence prediction can be adapted to semantic segmentation, with application to domain adaptation through self-training.

\subsection{Selecting pseudo-labels with a confidence model}
\label{chap4:subsec:selecting}

Following the self-training framework previously described, a confidence network $C$ is learned at step~(\ref{step2}) to predict the confidence of the UDA-trained semantic segmentation network $F$ and used to select only trustworthy pseudo-labels on target-domain images as illustrated in \cref{chap4:fig:selecting}.

To this end, the framework proposed in \cref{chap3:sec:confidnet} in an image classification setup, and applied to predicting erroneous image classification, needs here to be adapted to the structured output of semantic segmentation, which can be seen as a pixel-wise classification problem. Given a target-domain image $\xtg$, we want to predict both its soft semantic map $F(\xtg;\btheta)$ and, using an auxiliary model with trainable parameters $\bomega$, its confidence map:
\begin{equation}
    C(\xtg;\bomega) = \Cmapt \in [0,1]^{H\times W}.
\end{equation}
Given a pixel $(h,w)$, if its confidence $\Cmapt[h,w]$ is above a chosen threshold $\delta$, we label it with its predicted class $f(\xtg)[h,w] = \arg\!\max_{k\in\cY} \Pmapt[h,w,k]$, otherwise it is masked out. Computed over all images in $\cD_{\tg}$, these incomplete segmentation maps constitute target pseudo-labels that are used to train a new semantic-segmentation network. Optionally, we may repeat from step~(\ref{step2}) and learn alternately a confidence model to collect pseudo-labels and a segmentation network using this self-training.

\subsection{Confidence training with adversarial loss}
\label{chap4:subsec:advloss}

To train the confidence network $C$, we propose to jointly optimize two objectives. Following the approach proposed in \cref{chap3:sec:confidnet}, the first one supervises the confidence prediction on annotated source-domain examples using the known true class probabilities for the predictions from $F$. Specific to semantic segmentation with UDA, the second one is an adversarial loss that aims at reducing the domain gap between source and target. A complete overview of the approach is provided in \cref{chap4:fig:confidence_training}.  

\paragraph{Confidence loss.} 
The first objective is a pixel-wise version of the confidence loss \cref{chap3:eq:loss-conf}. On annotated source-domain images, it requires the confidence network $C$ to predict at each pixel the score assigned by the classifier $F$ to the (known) true class:
\begin{equation} 
\cL_{\text{conf}}(\bomega;\cD_{\so}) = 
\frac{1}{N_{\so}} \sum_{n=1}^{N_{\so}} 
\big\| 
\Cmapsn - \text{TCP}_F(\xson,\y_{\so,n}) \big\|^2_{\text{F}}, 
\label{chap4:eq:loss-conf-conda}
\end{equation}
where $\|\cdot\|_{\text{F}}$ denotes the Frobenius norm and, for an image $\x$ with true segmentation map $\y$ and predicted soft one-hot $F(\x;\hat{\btheta})$, we note
\begin{equation}
    \text{TCP}_F(\x,\y)[h,w] = F(\x;\hat{\btheta})\Big[h,w,\y[h,w]\Big]
\end{equation}
at location $(h,w)$. On a new input image, $C$ should predict at each pixel the score that $F$ will assign to the unknown true class, which will serve as a confidence measure.

However, compared to the application in the previous chapter, we have here the additional problem of the gap between source and target domains, an issue that might affect the training of the confidence model as in the training of the segmentation model.

\paragraph{Adversarial loss.} 
The second objective concerns the domain gap. While the confidence network $C$ learns to estimate TCP on source-domain images, its confidence estimation on target-domain images may suffer dramatically from this domain shift. As classically done in UDA, we propose an adversarial learning of our auxiliary model in order to address this problem. More precisely, we want the confidence maps produced by $C$ in the source domain to resemble those obtained in the target domain.

A discriminator $D:[0,1]^{H \times W} \rightarrow \{0,1\}$, with parameters $\bpsi$, is trained concurrently with $C$ with the aim to recognize the domain (1 for source, 0 for target) of an image given its confidence map. The following loss is minimized w.r.t. $\bpsi$:
\begin{equation}
    \cL_D(\bpsi;\cD_{\so}\cup\cD_{\tg}) = 
    \frac{1}{N_{\so}}\sum\limits_{n=1}^{N_{\so}} \cL_\text{adv}(\xson,1) +     \frac{1}{N_{\tg}}\sum\limits_{n=1}^{N_{\tg}} \cL_\text{adv}(\xtgn,0),
    \label{chap4:eq:l_Dconf}
\end{equation}
where $\cL_\text{adv}$ denotes the cross-entropy loss of the discriminator based on confidence maps:
\begin{equation}
    \cL_\text{adv}(\x,\lambda) = -\lambda\log\big(D(\Cmap;\bpsi)\big) - (1-\lambda)\log(1-D\big(\Cmap;\bpsi)\big),
\end{equation}
for $\lambda \in \{0,1\}$, which is a function of both $\bpsi$ and $\bomega$. In alternation with the training of the discriminator using \cref{chap4:eq:l_Dconf}, the adversarial training of the confidence net is conducted by minimizing, w.r.t. $\bomega$, the following loss:
\begin{equation}
    \cL_C(\bomega;\cD_{\so}\cup\cD_{\tg}) = \cL_\text{conf}(\bomega; \cD_{\so}) + \frac{\lambda_\text{adv}}{N_{\tg}}\sum\limits_{n=1}^{N_{\tg}}\cL_\text{adv}(\xtg,1),
    \label{chap4:eq:l_C}
\end{equation}
where the second term, weighted by $\lambda_\text{adv} > 0$, encourages $C$ to produce maps in the target domain that will confuse the discriminator.

This proposed adversarial confidence learning scheme also acts as a regularizer during training, improving robustness of the unknown TCP target confidence. As the training of the confidence model may actually be unstable, adversarial training provides additional information signal, in particular imposing that confidence estimation should be invariant to domain shifts. We empirically observe that this adversarial confidence learning provides better confidence estimates and improves convergence and stability of the training scheme.

\subsection{Multi-scale ConfidNet architecture}
\label{chap4:subsec:aspp}

\begin{figure}[t]
\centering
\includegraphics[width=0.8\linewidth]{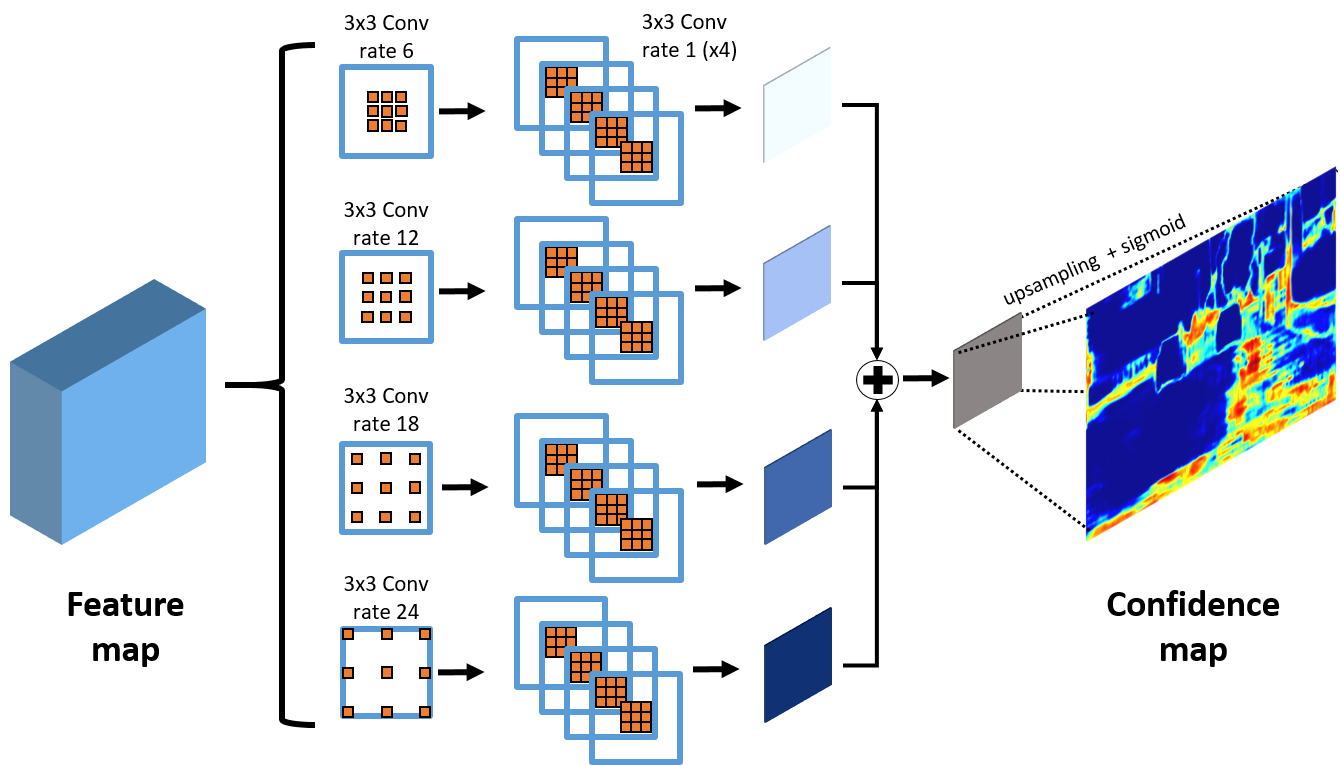}
\caption[Multi-scale architecture for confidence learning]{\textbf{Multi-scale architecture for confidence learning}. Four atrous convolutional layers are applied to a feature map in parallel, each of them is followed by a series of four standard convolutional layers. Confidence maps are obtained by summing the resulting features and upsampling to original image size.}
\label{chap4:fig:multi_scale_confidence}
\end{figure}

In semantic segmentation, models consist of fully convolutional networks where hidden representations are 2D feature maps. This is in contrast with the architecture of classification models considered in \cref{chap3}. Consequently, we replace fully-connected layers in the ConfidNet module by $1\!\times\!1$ convolutional layers with the adequate number of channels.

In many segmentation datasets, the existence of objects at multiple scales may complicate confidence estimation.
As in recent works dealing with varying object sizes~\cite{ChenPK0Y16}, we further improve our confidence network $C$ by adding a multi-scale architecture based on spatial pyramid pooling. It consists of a computationally efficient scheme to re-sample a feature map at different scales, and then to aggregate the confidence maps. We illustrate the multi-scale architecture for a confidence network in \cref{chap4:fig:multi_scale_confidence}.  

From a feature map, we apply parallel atrous convolutional layers with $3\!\times\!3$ kernel size and different sampling rates, each of them followed by a series of 4 standard convolutional layers with $3\!\times\!3$ kernel size. In contrast with convolutional layers with large kernels, atrous convolution layers enlarge the field of view of filters and help to incorporate a larger context without increasing the number of parameters and the computation time. Resulting features are then summed before upsampling to the original image size of $H\times W$. We apply a final sigmoid activation to output a confidence map with values between 0 and 1.

The whole architecture of the confidence model $C$ is represented in the orange block of \cref{chap4:fig:confidence_training}, along with its training given a fixed segmentation model $F$ (blue block) with which it shares the encoder. Such as in the previous section, fine-tuning the encoder within $C$ is also possible, although we did not explore the option in this semantic segmentation context due to the excessive memory overhead it implies.

\section{Experiments}
\label{chap4:sec:experiments}

In this section, we analyse on several semantic segmentation benchmarks the performance of ConDA, our approach to domain adaptation with confidence-based self-training. We report comparisons with state-of-the-art methods at the time of publication on each benchmark at the time of publication. We also analyse further the quality of ConDA's pseudo-labelling and demonstrate via an ablation study the importance of each of its components.

\subsection{Experimental setup}
\label{chap4:subsec:benchmark}

\paragraph{Datasets}
As in many domain adaptation works for semantic segmentation, we consider the specific task of adapting from synthetic to real data in urban scenes. We experiment with two synthetic source datasets -- SYNTHIA~\cite{ros-cvpr2016} and GTA5~\cite{richter-eecv2016} -- and two real target datasets -- Cityscapes~\cite{cordts-cvpr2016} and Mapillary Vistas~\cite{neuhold-iccv2017}. More specifically, we use the SYNTHIA-RAND-CITYSCAPES split for SYNTHIA~\cite{ros-cvpr2016}, composed of 9,400 color images generated in a simulator, of dimension $1280 \times 760$ and annotated for semantic segmentation with 16 classes in common with Cityscapes~\cite{cordts-cvpr2016}. As for GTA5~\cite{richter-eecv2016}, the dataset is composed of 24,966 images extracted from the eponymous game, of dimension $1914 \times 1052$, with semantic segmentation annotation with 19 classes in common with Cityscapes~\cite{cordts-cvpr2016}. On the other hand, Cityscapes~\cite{cordts-cvpr2016} is a dataset of real street-level images. It is split in a training set, a validation set and a test set. For domain adaptation, we use the training set as the target dataset during training. It is composed of 2,975 images of dimension $2048 \times 1024$. Since the ground-truth segmentation maps are missing from the testing dataset, we exploit the validation set composed of 500 images for testing purposes. We also validate the approach on Mapillary Vistas~\cite{neuhold-iccv2017}, another dataset of street-level images. As Cityscapes~\cite{cordts-cvpr2016}, it is split in a train set, a validation set and a test set, and the ground-truth maps are missing from the testing dataset. For domain adaptation, we use the 18,000 images from the training set as target and the 2,000 images from the validation set for testing. On Mapillary Vistas~\cite{neuhold-iccv2017} experiments, we consider 7 `super classes' that include the 19 and 16 classes used in Cityscapes~\cite{cordts-cvpr2016} experiments with GTA5~\cite{richter-eecv2016} and SYNTHIA~\cite{ros-cvpr2016}, respectively. All results are reported in terms of the mean intersection over union (mIoU) metric. The higher this percentage, the better. 

\begin{figure}[t]
\centering
\begin{minipage}[c]{0.45\linewidth}
\centering
    \includegraphics[width=\linewidth]{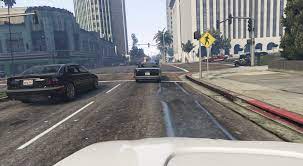}
    \subcaption{GTA5}
    \label{chap5:fig:gta5}
\end{minipage}%
\hspace{0.1cm}
\begin{minipage}[c]{0.52\linewidth}
\centering
    \includegraphics[width=\linewidth]{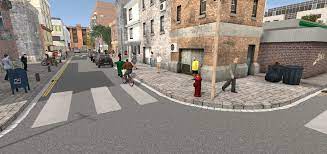}
    \subcaption{SYNTHIA}
    \label{chap5:fig:synthia}
\end{minipage}%
\vspace{0.2cm}
\begin{minipage}[c]{0.58\linewidth}
\centering
    \includegraphics[width=\linewidth]{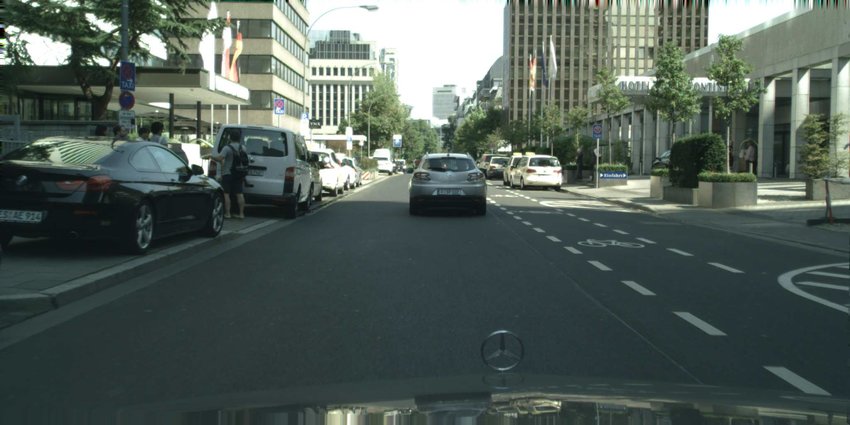}
    \subcaption{Cityscapes}
    \label{chap5:fig:cityscapes}
\end{minipage}%
\hspace{0.1cm}
\begin{minipage}{0.39\linewidth}
\centering
    \includegraphics[width=\linewidth]{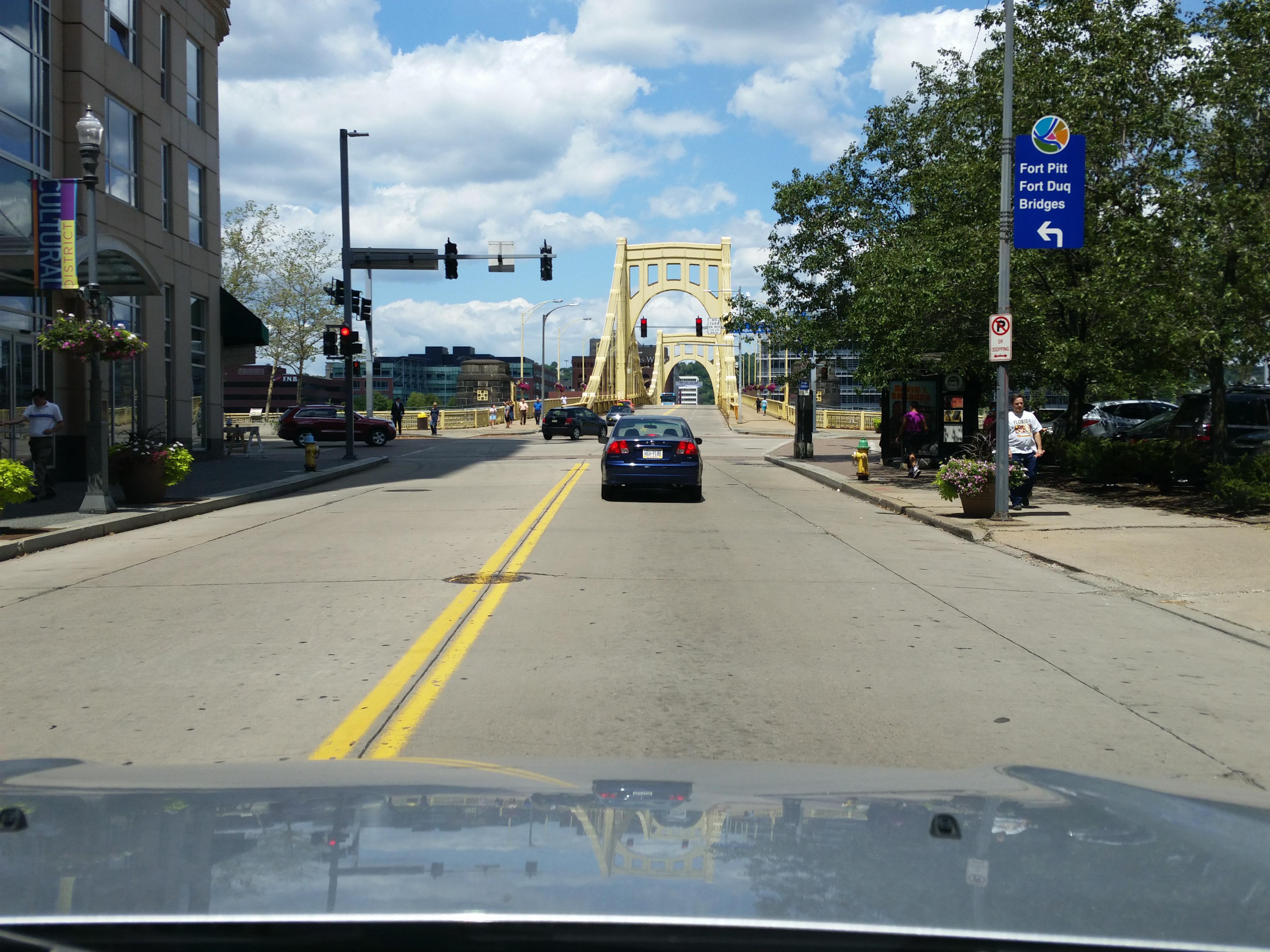}
    \subcaption{Mapillary Vistas}
    \label{chap5:fig:vistas}
\end{minipage}
\caption[Image samples from autonomous driving datasets used in UDA experiments]{\textbf{Images sample from datasets used in UDA experiments.} Synthetic datasets used as source domains are GTA5-dataset (a) and SYNTHIA (b); real datasets used as target domains are Cityscapes (c) and Mapillary Vistas (d).}
\label{chap5:fig:UDA_datasets}
\end{figure}

\paragraph{Network architectures.}
We evaluate the proposed self-training method on three state-of-the-art domain adaptation architectures at the time of publication. They all are based on DeepLabV2~\cite{ChenPK0Y16}, a standard semantic segmentation network.  
The domain alignment modules are nevertheless different:
\begin{itemize}
    \item \textit{AdaptSegNet}~\cite{Tsai_adaptseg_2018} performs adversarial domain adaptation on the output of the semantic segmentation network to align directly the segmentation maps between source and target domains.
    \item \textit{AdvEnt}~\cite{vu2018advent} proposes another adversarial learning framework for domain adaptation: instead of the softmax output prediction, 
    AdvEnt aligns the entropy of the pixel-wise predictions.
    \item \textit{DADA}~\cite{vu-iccv19} uses depth information on source images as privileged information during segmentation training.
\end{itemize}

\paragraph{Implementation details}
The semantic segmentation models are initialized with backbones pretrained on ImageNet~\cite{krizhevsky2012imagenet}. The segmentation network is optimized by Stochastic Gradient Descent with learning rate $2.5\times 10^{-4}$, momentum $0.9$ and weight decay $10^{-4}$. As for the discriminator, it is optimized by Adam~\cite{kingma-iclr2015} with learning rate $10^{-4}$. The hyperparameters $\lambda_\text{adv}$ and $\lambda_\text{ST}$ are fixed at $10^{-3}$ and $1$, respectively. For each baseline model, we start our self-training procedure from the pre-trained weights given on the author's GitHub\footnote{\url{https://github.com/valeoai/ADVENT}, \url{https://github.com/liyunsheng13/BDL}}.  
In some experiments, we use translated source images into the target domain. Those translated images are pre-computed using a CycleGAN~\cite{zhu-iccv2017}, as provided by~\cite{Li_2019_CVPR}. Regarding confidence training, we use the same training dataset than in segmentation training. As in \cref{chap3}, hyperparameters are chosen based on validation-set AUPR.

\begin{table}[t]
	\centering
    \caption[Comparison on mIoU of MCP-based vs. ConDA-based self-training]{\textbf{Comparison on mIoU of MCP-based vs. ConDA-based self-training} over multiple architectures and benchmarks. For DADA* architecture, segmentation models are trained using depth as privileged information.}
	\resizebox{\linewidth}{!}{%
		\begin{tabular}{l | c c | c | c c}
		    \toprule
			 & \multicolumn{2}{c|}{GTA5\uda\,Cityscapes} & \multicolumn{1}{c|}{SYNTHIA\uda\,Cityscapes} & \multicolumn{2}{c}{SYNTHIA\uda\,Mapillary } \\
			Self-Training~~ & AdaptSegNet & AdvEnt & AdvEnt & AdvEnt & DADA* \\
			\midrule
			MCP &  48.3 & 49.0 & 45.5 & 65.1 & 70.9 \\
			ConDA &  \textbf{49.3} & \textbf{49.9} & \textbf{46.0} & \textbf{66.4} & \textbf{72.0} \\
			\bottomrule
		\end{tabular}
	}
	\label{chap4:tab:mcpconda}
\end{table}

\subsection{ConDA vs. MCP self-training}
\label{chap4:subsec:conda_vs_mcp}

We compare the adaptation results using MCP to collect pseudo-labels and using ConDA instead, on two different methods: AdaptSegNet~\cite{Tsai_adaptseg_2018} and AdvEnt~\cite{vu2018advent}. Results are available in \cref{chap4:tab:mcpconda}. On GTA5\uda\,Cityscapes benchmark, we can see that, for both method, ConDA improves over MCP in self-training framework for domain adaptation by adding respectively $+1.0$ point mIoU improvement on AdaptSegNet and $+0.9$ point mIoU improvement on AdvEnt, which is our best result on this benchmark. We also compare the best adaptation method (Advent) on two other datasets: as shown in \cref{chap4:tab:mcpconda}, we observe a systematic improvement over MCP: $+0.5$ point mIoU on SYNTHIA\uda\,Cityscapes and $+1.3$ point mIoU on `SYNTHIA\uda\,Mapillary'. Finally, we also extend to methods dealing with multiple modalities (\eg depth) such as DADA~\cite{vu-iccv19} and we observe similarly that ConDA outperforms MCP by $+1.1$ point. These results demonstrate the relevance of our method by selecting better pseudo-labels to improve adaptation, regardless of the segmentation method or the UDA segmentation benchmark used.

\subsection{Comparison with UDA baselines}
\label{chap4:subsec:results}

In this section, ConDA results correspond to applying our self-training approach on AdvEnt domain adaptation method.

\paragraph{GTA5\uda\,Cityscapes.}
Results for semantic segmentation on the Cityscapes validation set using GTA5 as source domain are available in \cref{chap4:tab:gta2cityscapes} in the following page. We first notice that self-training based methods from the literature improved performance on this benchmark up to $48.5\%$ mIoU with BDL~\cite{Li_2019_CVPR}. ConDA outperforms all those methods on this framework by reaching $49.9\%$ mIoU. Note also that combining AdvEnt with MCP self-training already achieved $49.0\%$ mIoU.

\paragraph{SYNTHIA\uda\,Cityscapes.}
We extend experiments by using another source domain dataset. We report in a consistent way adaptation results for the task `SYNTHIA\uda\,Cityscapes' in \cref{chap4:tab:synthia2cityscapes}. Following relevant literature on this dataset, mIoU results for 16 categories and for 13 categories are available. Again, ConDA achieves state-of-the-art performance on this benchmark at the time of publication with $46.0\%$ mIoU .

\paragraph{SYNTHIA\uda\,Mapillary.}
Along with results on Cityscapes, we further study domain adaptation on another target dataset, namely Mapillary Vistas. \cref{chap4:tab:mapillary} presents semantic segmentation performance using SYNTHIA as source dataset. This benchmark has also been used in other recent works, such as in AdvEnt~\cite{vu2018advent} and DADA~\cite{vu-iccv19}. ConDA outperforms the baseline method with $66.4\%$ mIoU compared to $65.2\%$ mIoU in AdvEnt. When using depth from SYNTHIA as privileged information such as in  DADA, proposed method (ConDA*) further increases performance from $67.6\%$ mIoU to $72.0\%$ mIoU.

\begin{sidewaystable}[t]
	\centering
	\caption[Comparative performance on semantic segmentation with synth-to-real unsupervised domain adaptation]{\textbf{Comparative performance on semantic segmentation with synth-to-real unsupervised domain adaptation.} Results in per-class IoU and class-averaged mIoU on GTA5\uda\,Cityscapes. All methods are based on a DeepLabv2 backbone.}
	\resizebox{\linewidth}{!}{%
		\begin{tabular}{l | c | c c c c c c c c c c c c c c c c c c c|c}
			\toprule
			\multicolumn{22}{c}{GTA5\uda\,Cityscapes}\\
			\toprule
			Method & \rotatebox{90}{Self-Train.} & \rotatebox{90}{road} & \rotatebox{90}{sidewalk} & \rotatebox{90}{building} & \rotatebox{90}{wall} & \rotatebox{90}{fence} & \rotatebox{90}{pole} & \rotatebox{90}{light} & \rotatebox{90}{sign} & \rotatebox{90}{veg} & \rotatebox{90}{terrain} & \rotatebox{90}{sky} & \rotatebox{90}{person} & \rotatebox{90}{rider} & \rotatebox{90}{car} & \rotatebox{90}{truck} & \rotatebox{90}{bus} & \rotatebox{90}{train} & \rotatebox{90}{mbike} & \rotatebox{90}{bike} & mIoU \\
			\midrule
			AdaptSegNet~\cite{Tsai_adaptseg_2018} & & 86.5 & 25.9 & 79.8 & 22.1 & 20.0 & 23.6 & 33.1 & 21.8 & 81.8 & 25.9 & 75.9 & 57.3 & 26.2 & 76.3 & 29.8 & 32.1 & 7.2 & \textbf{29.5} & 32.5 & 41.4 \\
			CyCADA~\cite{Hoffman_cycada2017} & & 86.7 & 35.6 & 80.1 & 19.8 & 17.5 & \textbf{38.0} & \textbf{39.9} & \textbf{41.5} & 82.7 & 27.9 & 73.6 & \textbf{64.9} & 19.0 & 65.0 & 12.0 & 28.6 & 4.5 & 31.1 & 42.0 & 42.7 \\
			DISE~\cite{chang2019all} & & 91.5 & 47.5 & 82.5 & 31.3 & 25.6 & 33.0 & 33.7 & 25.8 & 82.7 & 28.8 & 82.7 & 62.4 & 30.8 & 85.2 & 27.7 & 34.5 & 6.4 & 25.2 & 24.4 & 45.4 \\
			AdvEnt~\cite{vu2018advent} & & 89.4 & 33.1 & 81.0 & 26.6 & 26.8 & 27.2 & 33.5 & 24.7 & 83.9 & 36.7 & 78.8 & 58.7 & 30.5 & 84.8 & 38.5 & 44.5 & 1.7 & 31.6 & 32.4 & 45.5 \\	
			\midrule
		    CBST~\cite{zou2018unsupervised} & \checkmark & 91.8 & 53.5 & 80.5 & 32.7 & 21.0 & 34.0 & 28.9 & 20.4 & 83.9 & 34.2 & 80.9 & 53.1 & 24.0 & 82.7 & 30.3 & 35.9 & 16.0 & 25.9 & \textbf{42.8} & 45.9 \\
			MRKLD~\cite{Zou_2019_ICCV} & \checkmark & 91.0 & 55.4 & 80.0 & 33.7 & 21.4 & 37.3 & 32.9 & 24.5 & 85.0 & 34.1 & 80.8 & 57.7 & 24.6 & 84.1 & 27.8 & 30.1 & \textbf{26.9} & 26.0 & 42.3 & 47.1 \\
			BDL~\cite{Li_2019_CVPR} & \checkmark & 91.0 & 44.7 & 84.2 & 34.6 & \textbf{27.5} & 30.2 & 36.0 & 36.0 & 85.0 & \textbf{43.6} & 83.0 & 58.6 & \textbf{31.6} & 83.3 & 35.3 & 49.7 & 3.3 & 28.8 & 35.6 & 48.5 \\
			ESL~\cite{saporta2020esl} & \checkmark & 90.2 & 43.9 & 84.7 & 35.9 & 28.5 & 31.2 & 37.9 & 34.0 & 84.5 & 42.2 & 83.9 & 59.0 & 32.2 & 81.8 & 36.7 & 49.4 & 1.8 & 30.6 & 34.1 & 48.6 \\
			\rowcolor{Gray}
			ConDA & \checkmark & \textbf{93.5} & \textbf{56.9} & \textbf{85.3} & \textbf{38.6} & 26.1 & 34.3 & 36.9 & 29.9 & \textbf{85.3} & 40.6 & \textbf{88.3} & 58.1 & 30.3 & \textbf{85.8} & \textbf{39.8} & \textbf{51.0} & 0.0 & 28.9 & 37.8 & \textbf{49.9} \\
			\bottomrule
		\end{tabular}
	}
	\label{chap4:tab:gta2cityscapes}
\end{sidewaystable}

\begin{sidewaystable}[t]
	\centering
	\caption[Comparison in mIoU for SYNTHIA\uda\,Cityscapes experiments]{\textbf{Comparison in mIoU for SYNTHIA\uda\,Cityscapes experiments}. mIoU* is the 13-class setup (excluding the classes `wall', `fence' and `pole') as used in earlier works.  All methods reported are based on a DeepLabv2 backbone. 
	}
	\resizebox{\linewidth}{!}{%
		\begin{tabular}{l| c | c c c c c c c c c c c c c c c c | c |c}
			\toprule
			\multicolumn{20}{c}{SYNTHIA\uda\,Cityscapes}\\
			\toprule
			Method & \rotatebox{90}{Self-Train.} & \rotatebox{90}{road} & \rotatebox{90}{sidewalk} & \rotatebox{90}{building} & \rotatebox{90}{wall} & \rotatebox{90}{fence} & \rotatebox{90}{pole} & \rotatebox{90}{light} & \rotatebox{90}{sign} & \rotatebox{90}{veg} & \rotatebox{90}{sky} & \rotatebox{90}{person} & \rotatebox{90}{rider} & \rotatebox{90}{car} & \rotatebox{90}{bus} & \rotatebox{90}{mbike} & \rotatebox{90}{bike} & mIoU & mIoU*\\
			\midrule
			AdaptSegNet~\cite{Tsai_adaptseg_2018} & & 84.3 & 42.7 & 77.5 & - & - & - & 4.7 & 7.0 & 77.9 & 82.5 & 54.3 & 21.0 & 72.3 & 32.2 & 18.9 & 32.3 & - & 46.7 \\
			DISE~\cite{chang2019all} & & \textbf{91.7} & \textbf{53.5} & 77.1 & 2.5 & 0.2 & 27.1 & 6.2 & 7.6 & 78.4 & 81.2 & 55.8 & 19.2 & 82.3 & 30.3 & 17.1 & 34.3 & 41.5 & 48.8\\
			AdvEnt~\cite{vu2018advent} & & 85.6 & 42.2 & 79.7 & 8.7 & 0.4 & 25.9 & 5.4 & 8.1 & 80.4 & 84.1 & 57.9 & 23.8 & 73.3 & 36.4 & 14.2 & 33.0 & 41.2 & 48.0 \\
			\midrule
			CBST~\cite{zou2018unsupervised} & \checkmark & 68.0 & 29.9 & 76.3 & 10.8 & 1.4 & 33.9 & \textbf{22.8} & 29.5 & 77.6 & 78.3 & 60.6 & 28.3 & 81.6 & 23.5 & 18.8 & 39.8 & 42.6 & 48.9\\ 
			MRKLD~\cite{Zou_2019_ICCV} & \checkmark & 67.7 & 32.2 & 73.9 & 10.7 & \textbf{1.6} & \textbf{37.4} & 22.2 & \textbf{31.2} & 80.8 & 80.5 & \textbf{60.8} & \textbf{29.1} & \textbf{82.8} & 25.0 & 19.4 & 45.3 & 43.8 & 50.1\\  
			BDL~\cite{Li_2019_CVPR} & \checkmark & 83.9 & 43.7 & 80.2 & \textbf{12.9} & 0.5 & 30.1 & 18.0 & 17.3 & 79.7 & 83.5 & 52.2 & 25.8 & 72.5 & 35.5 & \textbf{25.8} & 45.4 & 44.2 & 51.0\\   
			ESL~\cite{saporta2020esl} & \checkmark & 84.3 & 39.7 & 79.0 & 9.4 & 0.7 & 27.7 & 16.0 & 14.3 & 78.3 & 83.8 & 59.1 & 26.6 & 72.7 & 35.8 & 23.6 & 45.8 & 43.5 & 50.7\\
			\rowcolor{Gray}
			ConDA (Ours) & \checkmark & 88.1 & 46.7 & \textbf{81.1} & 10.6 & 1.1 & 31.3 & 22.6 & 19.6 & \textbf{81.3} & \textbf{84.3} & 53.9 & 21.7 & 79.8 & \textbf{42.9} & 24.2 & \textbf{46.8} & \textbf{46.0} & \textbf{53.3}\\
			\bottomrule
		\end{tabular}
	}
	\label{chap4:tab:synthia2cityscapes}
\end{sidewaystable}

\clearpage

\begin{table}[t]
	\centering
    \caption[Comparison in mIoU for SYNTHIA\uda\,Mapillary experiments]{\textbf{Comparison in mIoU for SYNTHIA\uda\,Mapillary experiments}. DADA* and ConDA* are trained using depth as privileged information.}
	\resizebox{0.7\linewidth}{!}{%
		\begin{tabular}{l| c | c c c c c c c|c}
		    \toprule
			\multicolumn{10}{c}{SYNTHIA\uda\,Mapillary}\\
			\midrule
			Method & \rotatebox{90}{Self-Train.} & \rotatebox{90}{flat} & \rotatebox{90}{constr.} & \rotatebox{90}{object} & \rotatebox{90}{nature} & \rotatebox{90}{sky} & \rotatebox{90}{human} & \rotatebox{90}{vehicle} &
			mIoU\\
			\midrule
			AdvEnt~\cite{vu2018advent} & & 86.9 & 58.8 & 30.5 & 74.1 & 85.1 & 48.3 & 72.5 & 65.2 \\
			ESL~\cite{saporta2020esl} & \checkmark & 88.4 & 55.7 & \textbf{32.0} & \textbf{75.4} & 84.3 & 43.5 & \textbf{76.2} & 65.4 \\
			\rowcolor{Gray}
			ConDA (Ours) & \checkmark & \textbf{89.1} & \textbf{63.5} & 28.3 & 72.7 & \textbf{88.2} & \textbf{49.7} & 73.0 & \textbf{66.4}\\  
			\midrule  \midrule
			DADA*~\cite{vu-iccv19} & & 86.7 & 62.1 & 34.9 & 75.9 & 88.6 & 51.1 & 73.8 & 67.6 \\
			\rowcolor{Gray}
			ConDA* (Ours) & \checkmark & \textbf{87.8} & \textbf{67.5} & \textbf{40.5} & \textbf{76.8} & \textbf{92.3} & \textbf{60.7} & \textbf{78.5} & \textbf{72.0}\\  
			\bottomrule
		\end{tabular}
	}
	\label{chap4:tab:mapillary}
\end{table}

\subsection{Ablation study}
\label{chap4:subsec:ablation}

To study the effect of the adversarial training and of the multi-scale confidence architecture on the confidence model, we perform an ablation study on the GTA5\uda\,Cityscapes benchmark. The results on domain adaptation after re-training the segmentation network using collected pseudo-labels are reported in \cref{chap4:tab:ablationstudy}. In this table, ``ConfidNet'' refers to the simple network architecture defined in \cref{chap3:sec:confidnet} (adapted to segmentation by replacing the fully connected layers by $1\!\times\! 1$ convolutions of suitable width); ``Adv. ConfidNet'' denotes the same architecture but with the adversarial loss from \cref{chap4:subsec:advloss} added to its learning scheme; ``Multi-scale ConfidNet'' stands for the architecture introduced in \cref{chap4:subsec:aspp}; Finally, the full method, ``ConDA'' amounts to having both this architecture and the adversarial loss. We notice that adding the adversarial learning achieves significantly better performance, for both ConfidNet and multi-scale ConfidNet, with respectively $+1.4$ and $+0.8$ point increase. Multi-scale ConfidNet (resp. adv. multi-Scale ConfidNet) also improves performance up to $+0.9$ point (resp. $+0.3$) from their ConfidNet architecture counterpart. These results stress the importance of both components of the proposed confidence model.

\begin{table}[ht]
    \centering
	\resizebox{0.7\linewidth}{!}{%
	\begin{tabular}{l|cc|c}
		\toprule
		Model & Multi-Scale. & Adv & mIoU\\
		\midrule
		ConfidNet & & & 47.6 \\
		Multi-Scale ConfidNet  & \checkmark & & 48.5\\
		Adv. ConfidNet & & \checkmark & 49.0 \\
		ConDA (Adv. Multi-scale ConfidNet) & \checkmark & \checkmark & \ubold{49.9} \\
		\bottomrule
	\end{tabular}
	}
    \caption[Ablation study on semantic segmentation with pseudo-labelling-based adaptation]{\textbf{Ablation study on semantic segmentation with pseudo-labelling-based adaptation.} Full-fledged ConDA approach is compared on GTA5\uda\,Cityscapes to stripped-down variants (with/without multi-scale architecture in ConfidNet, with/without adversarial learning).}
    \label{chap4:tab:ablationstudy}
\end{table}

\subsection{Quality of pseudo-labels}
\label{chap4:subsec:quality}

We analyze the effectiveness of MCP and ConDA as confidence measures to select relevant pseudo-labels in the target domain. For a given fraction of retained pseudo-labels (coverage) on target-domain training images, we compare in \cref{chap4:fig:conda_analysis} the precision of each method. Here, precision means the ratio between the number of correct predictions and the total number of collected pseudo-labels, \ie accuracy. We vary the coverage between 70\% and 90\%\footnote{For instance, in our previous experiment with AdvEnt on `GTA5\uda\,Cityscapes', 80.5\% pixels were kept using MCP and 82.0\% using ConDA.}.

\begin{figure}[ht]
    \centering
    \includegraphics[width=0.75\linewidth]{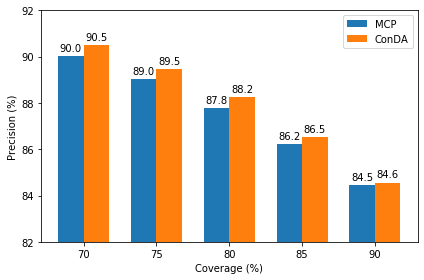}
    \caption[Comparative quality of selected pseudo-labels]{\textbf{Comparative quality of selected pseudo-labels}. Proportion of correct pseudo-labels (precision) for different coverages on GTA5\uda\,Cityscapes, for MCP and ConDA.}
    \label{chap4:fig:conda_analysis}
\end{figure}

ConDA outperforms MCP for all coverage levels, meaning it selects significantly fewer erroneous predictions for the next round of segmentation-model training. Along with the segmentation adaptation improvements presented earlier, these coverage results demonstrate that reducing the amount of noise in the pseudo-labels is key to learning a better segmentation adaptation model. \cref{chap4:fig:qualitative_results} presents qualitative results of those pseudo-labels methods. We find again that MCP and ConDA seem to select around the same amount of correct predictions in their pseudo-labels, but with ConDA picking out a lot fewer erroneous ones.

\begin{figure}[ht]
\centering
\includegraphics[width=\linewidth]{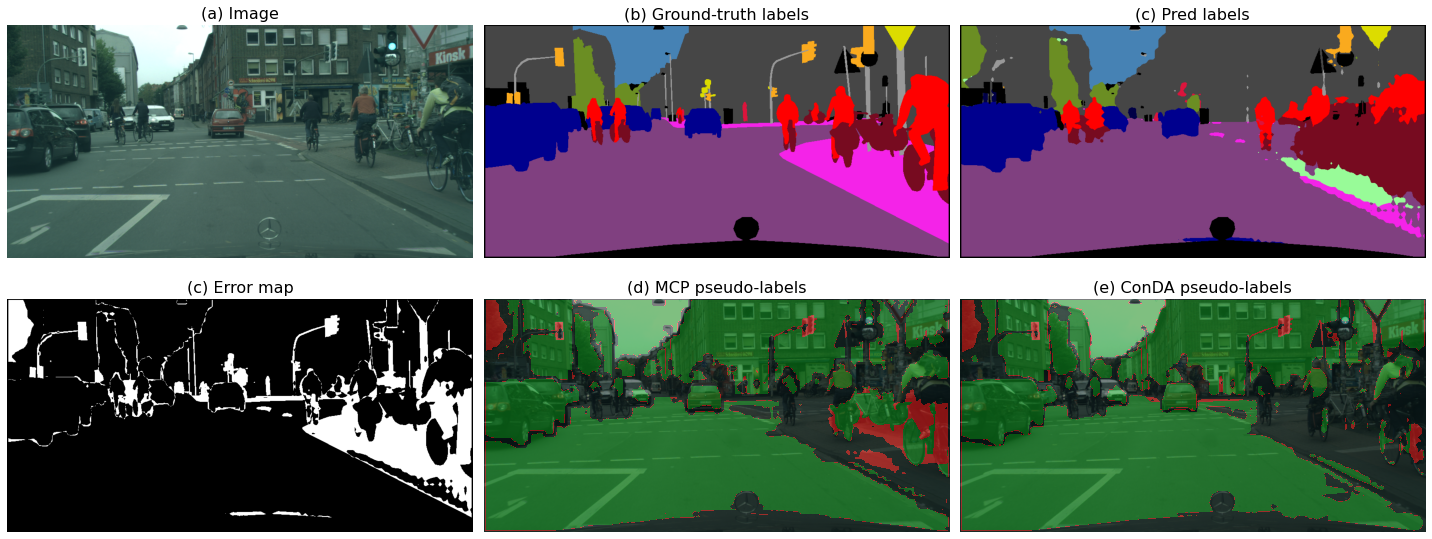}
\caption[Visualisations of pseudo-label selection on GTA5\uda\,Cityscapes benchmark] {\textbf{Qualitative result of pseudo-label selection for semantic-segmentation adaptation.} The three first panels present a target-domain image of the GTA5\uda\,Cityscapes benchmark (a) along with its ground-truth segmentation maps (b) and the predicted map before self-training (c). The error map associated with the predicted segmentation is shown in (d), with erroneous predictions flagged in white. We compare pseudo-labels collected with MCP (e) and with ConDA (f). Green (resp. red) pixels are correct (resp. erroneous) predictions selected by the method and black pixels are discarded predictions. ConDA retains fewer errors while preserving approximately the same amount of correct predictions.}
\label{chap4:fig:qualitative_results}
\end{figure}

\paragraph{Computational Cost.} Composed of only four atrous convolutional layers in parallel, our multi-scale confidence network remains light. When collecting pseudo-labels, the overhead cost induced by our method is minor when estimating confidence, only adding roughly 10\% time increase compared to MCP. In our setting, a forward pass using ConDA takes 43ms on average. Note that segmentation training and inference are not changed, which makes ConDA suitable for real-time purposes.

\section{Conclusion}
\label{chap4:sec:conclusion}

In this chapter, we show that applied to self-training with pseudo-labels, using an auxiliary model dedicated to estimate the confidence of predictions help to better select relevant pixels for pseudo-labeling. Our learning approach brings systematic improvements in performance over self-training based on the standard MCP. We reach state-of-the-art results at the time of publication on three synthetic-to-real unsupervised-domain-adaptation benchmarks (GTA5\uda\, Cityscapes, SYNTHIA\uda Cityscapes and SYNTHIA\uda Mapillary Vistas). To achieve these results, we equipped the auxiliary model with a multi-scale confidence architecture and supplemented the confidence loss with an adversarial training scheme to enforce alignment between confidence maps in source and target domains. In particular, a clear benefit of our learning approach is to be compatible with any models which use domain adaptation, without adding substantial overhead cost (only 10\% time increase compared to MCP in our experiments).

Thus far, we focused on detecting errors to reject them or to alternatively select only correct predictions. However, in the wild, a ML system may also encounter data that is unlike its training data. In addition to in-distribution errors, we will also consider the detection of out-of-distribution samples in the next chapter to be robust to any kind of hazardous predictions.

\chapter{Simultaneous Detection of Misclassifications and Out-of-Distribution Samples with Evidential Models}
\label{chap5}
\begin{center}
  \textsc{Chapter Abstract}
\end{center}
\begin{quote}
\noindent \textit{A safe deployment of ML systems should include an accurate monitoring of errors but also unusual inputs where predicting might be hazardous. In this chapter, we address the task of jointly detecting errors and anomalies in classification tasks. Evidential models are a Bayesian approach which provides a sampling-free way of deriving second-order uncertainty measures on the simplex, \ie measures on the distribution over probabilities, that estimate different sources of uncertainty. In this chapter, we leverage the second-order representation provided by evidential models and we introduce \emph{KLoS}, a Kullback–Leibler divergence criterion defined on the class-probability simplex. By keeping the full distributional information, KLoS captures in-distribution and out-of-distribution (OOD) uncertainties in a single score. A crucial property of KLoS is to be a class-wise divergence measure built from in-distribution samples and to not require OOD training data, in contrast to current uncertainty measure used with evidential models. We further design an auxiliary neural network, \emph{KLoSNet}, to learn a refined criterion directly aligned with the evidential training objective. In the realistic context where no OOD data is available during training, our experiments show that KLoSNet outperforms every other uncertainty measures to simultaneously detect misclassifications and OOD samples. When training with OOD samples, we also observe that existing measures are brittle to the choice of the OOD dataset, whereas KLoS remains more robust.}

\textit{The work described in this chapter is based on the following publication:}
\begin{itemize}
    \item \textit{Charles Corbière, Marc Lafon, Nicolas Thome, Matthieu Cord, Patrick Pérez. ``Beyond First-Order Estimation with Evidential Models for Open-World Recognition". ICML 2021 Workshop on Uncertainty and Robustness in Deep Learning.}
\end{itemize}
\end{quote}

\clearpage
\minitoc[tight]

\section{Context}
\label{chap5:sec:context}

Machine learning models commonly rely on the closed-set assumption that source and target data are independent and identically distributed (\textit{i.i.d.}). Yet in practice, distribution shifts arise naturally in many real-world scenarios. For instance, self-driving cars struggle to perform well under conditions different to those of training, such as variations in weather \cite{volk2019}, light \cite{dai2018}, and object poses \cite{Alcorn_2019_CVPR}. Worse, models can be exposed to inputs from unseen classes which they will attempt to predict anyway. These failures may remain unnoticed as they do not result in explicit errors in the model. 

\begin{figure}[t]
\centering
    \includegraphics[width=0.95\linewidth]{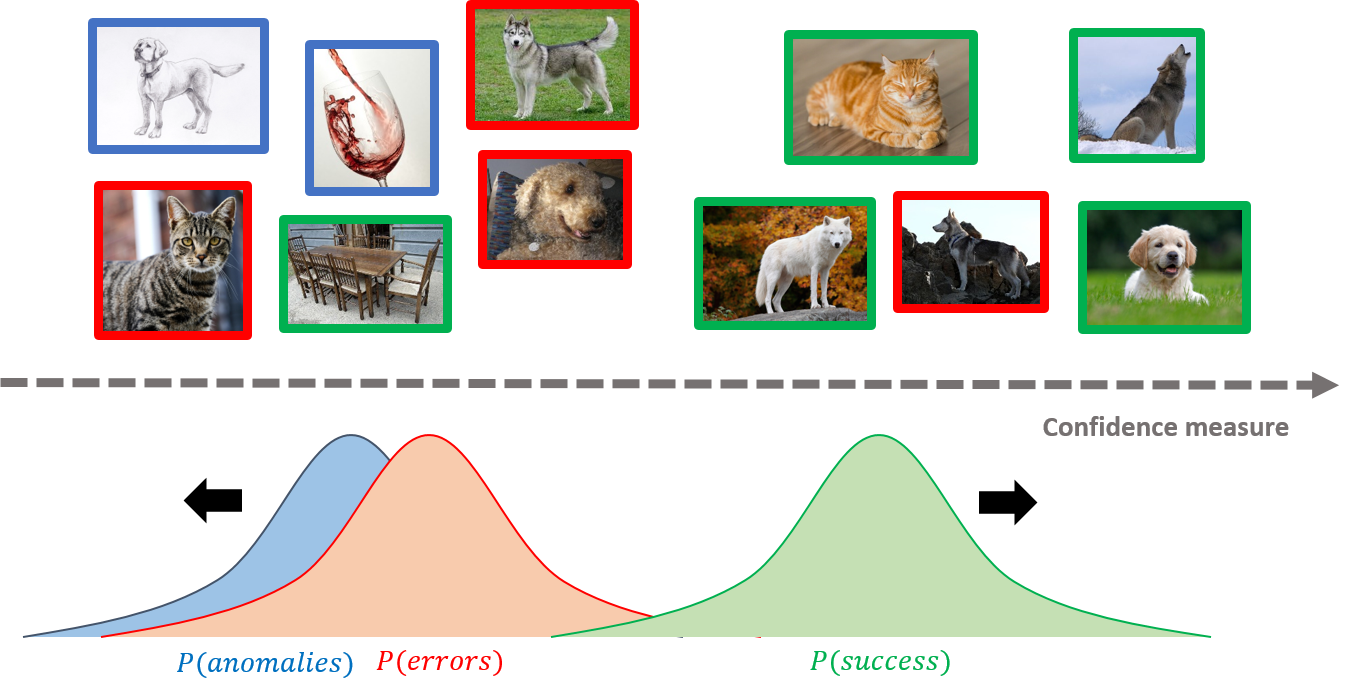}
    \caption[Simultaneous detection of misclassifications and OOD samples]{\textbf{Simultaneous detection of misclassifications and OOD samples.} When ranking samples according to their confidence value, correct predictions (in green) should have higher scores on average than misclassifications (in red) \emph{and} OOD samples (in blue) to enable the model to distinguish them.}
\label{chap5:fig:simultaneous}
\end{figure}

While previous works address separately misclassification detection and OOD detection, we argue it is necessary for a recognition system to be able to identify both in-distribution misclassifications and unknown/unseen inputs at test time for a safe deployment in open-world settings \cite{Bendale_2015_CVPR}. We illustrate this task in \cref{chap5:fig:simultaneous}. In particular, we find in \cref{chap5:sec:exp} that all previous approaches do not perform equally well on both detection tasks, which mitigates their ability on the joint detection task.

To address the task of simultaneous detection of misclassifications and OOD samples, an uncertainty measure should discriminate between correct predictions and erroneous predictions for in-distribution samples while increasing for inputs far from distribution. Consequently, it should capture both aleatoric and epistemic uncertainty. Bayesian approaches \cite{Gal2016,NIPS2019_9472} and ensembles \cite{deepensembles2017,ovadia2019} are principled methods which induce a more accurate estimation of epistemic uncertainty. These techniques produce a probability density over the predictive categorical distribution $p(y \vert \x, \cD)$ obtained from sampling as shown in \cref{chap2} (top row of \cref{chap2:fig:simplex}). But this comes at the expense of an increased computational cost.

A recent class of models, coined \emph{evidential} \cite{malinin2018,sensoy2018}, proposes instead to explicitly learn the concentration parameters of a Dirichlet distribution over probabilities. Based on the subjective logic framework \cite{josan2016sublogic}, evidential models enrich uncertainty representation with evidence information and enable a model to represent different sources of uncertainty (bottom row of \cref{chap2:fig:simplex}). \emph{Conflicting evidence}, \eg, noise or class confusion, is characterized by the expectation of the second-order Dirichlet distribution while the distribution spread on the simplex expresses the \emph{amount of evidence} in a prediction \cite{Shi2020MultifacetedUE}. These sources of uncertainty correspond respectively to aleatoric uncertainty and epistemic uncertainty in the machine learning literature \cite{malinin2019}. Evidential models have been shown to improve generalisation \cite{beingbayesian2020}, OOD detection \cite{maxgap2020} and adversarial attack detection \cite{malinin2019}.

In this chapter, we leverage the second-order uncertainty representation that evidential models provide and we introduce \emph{KLoS}, a measure that accounts for both in-distribution and out-of-distribution sources of uncertainty and that is effective even without having access to auxiliary OOD data at train time. KLoS computes the Kullback–Leibler (KL) divergence between the model's predicted Dirichlet distribution and a specifically designed class-wise prototype Dirichlet distribution. Prototype distributions are designed with concentration parameters shared with in-distribution training data, which enables a model to detect OOD samples without assuming any restrictive behavior, \eg, having low precision $\alpha_0$. KLoS naturally reflects the training objective used in evidential models and we propose to learn an auxiliary model, named \emph{KLoSNet}, to regress the values of a refined objective for training samples and to improve uncertainty estimation. To assess the quality of uncertainty estimates in open-world recognition, we design the new task of simultaneous detection of misclassifications and OOD samples. Extensive experiments show the benefits of KLoSNet on various image datasets and model architectures. In presence of OOD training data, we also found that our proposed measure is more robust to the choice of OOD samples while previous measures may perform poorly. Finally, we show that KLoS can be successfully combined with ensembling to improve performance.

\section{Evidential neural networks}
\label{chap5:sec:enn}

\sloppy
In this section, we partially remind the reader the framework of evidential models introduced in \cref{chap2} and focus on the derivation of the training objective to put in perspective the link with our proposed uncertainty measure afterwards. The training dataset $\cD$ consists of $N$ \textit{i.i.d.} samples $(\x,y)$ drawn from an unknown joint distribution $P(X,Y)$. We denote $\bpi =(\pi_1, \cdots, \pi_K)$ the random variable over categorical probabilities, where $\sum_{k=1}^K \pi_k$ = 1, and which lives on the (K-1)-dimensional simplex $\triangle^{K-1}$.

Evidential Neural Networks (ENNs) propose to model explicitly the posterior distribution over categorical probabilities $p(\bpi \vert \x, y)$ by a variational Dirichlet distribution,
\begin{equation}
    q_{\btheta}(\bpi \vert \x) \!=\! \text{Dir} \big ( \bpi \vert \balpha(\x, \btheta) \big )
    \!=\! 
    \frac{\Gamma(\alpha_0 (\x, \btheta))}
    {\prod_{k=1}^K \Gamma(\alpha_k(\x, \btheta)) }\prod_{k=1}^K \pi_k^{\alpha_k(\x, \btheta) - 1},
\end{equation}
whose concentration parameters $\balpha(\x, \btheta) ~{=}~ \exp f(\x,\btheta)$ are output by a neural network $f$ with parameters $\btheta$; $\Gamma$ is the Gamma function and $\alpha_0(\x, \btheta) ~{=}~ \sum_{k=1}^K \alpha_k(\x, \btheta)$ with $\alpha_k = \exp f_k(\x, \btheta)$ indexing the $k^{\text{th}}$ element of the vector of all $K$ concentration parameters $\balpha$. Precision $\alpha_0$ controls the sharpness of the density with more mass concentrating around the mean as $\alpha_0$ grows. By conjugate property, the predictive distribution for a new point $\x^*$ is 
\begin{equation}
P(Y~{=}~k ~\vert~ \x^*, \cD)~
     {\approx}~\mathbb{E}_{q_{\btheta}(\bpi \vert \x^*)} [\pi_k] ~{=}~ 
     \frac{\exp f_k(\x^*,\btheta)}{\sum_{j=1}^K \exp f_j(\x^*,\btheta)},
\end{equation}
which is the usual output of a network $f$ with softmax activation.

The concentration parameters $\balpha$ can be interpreted as pseudo-counts representing the amount of evidence in each class. For instance, in \cref{chap5:fig:intro_a}, the $\balpha$'s output by the ENN indicate that the image is almost equally likely to be classified as \textit{wolf} or as \textit{dog}. More interestingly, it also distinguishes these in-distribution images from the OOD sample in \cref{chap5:fig:intro_b} via the total amount of evidence $\alpha_0$.

\paragraph{\textbf{Training Objective}} The ENN training is formulated as a variational approximation to minimize the KL divergence between the distribution $q_{\btheta}(\bpi \vert \x) $ and the true posterior distribution $p(\bpi \vert \x, y)$:
\begin{align}
    \cL_{\text{var}}(\btheta;\cD) &= \mathbb{E}_{(\x,y) \sim P(X, Y)} \big [ \mathbb{KL} \big ( q_{\btheta}(\bpi \vert \x)~\|~p(\bpi \vert \x, y) \big ) \big ] \\
    &= \frac{1}{N} \sum_{(\x, y) \in \cD} \Big [ \int q_{\boldsymbol{\theta}}(\bpi \vert \x) \log \frac{q_{\boldsymbol{\theta}}(\bpi \vert \x)}{p(\bpi \vert \x, y)} \Big ] \\
    &= \frac{1}{N} \sum_{(\x, y) \in \cD} \Big [ \int q_{\boldsymbol{\theta}}(\bpi \vert \x) \log \frac{q_{\boldsymbol{\theta}}(\bpi \vert \x)p(y \vert \x)}{p(y \vert \bpi, \x)p(\bpi \vert \x)} \Big ] \\
    &= \frac{1}{N} \sum_{(\x, y) \in \cD} \Big [ \mathbb{E}_{q_{\boldsymbol{\theta}}(\bpi \vert \x)} \big [ - \log p(y \vert \bpi, \x) \big ] + \mathbb{KL} \big (q_{\boldsymbol{\theta}}(\bpi \vert \x)~\|~p(\bpi \vert \x) \big ) + \log p(y \vert \x) \Big ],
\end{align}
where $N=\mathrm{card}(\cD)$. 
As the log-likelihood $\log p(y \vert \x)$ does not depend on parameters $\btheta$,
\begin{equation}
    \min_{\theta} \cL_{\text{var}}(\btheta;\cD) = \min_{\theta}  \frac{1}{N} \sum_{(\x, y) \in \cD} \Big [ \mathbb{E}_{q_{\boldsymbol{\theta}}(\bpi \vert \x)} \big [ - \log p(y \vert \bpi, \x) \big ] + \mathbb{KL} \big (q_{\boldsymbol{\theta}}(\bpi \vert \x)~\|~p(\bpi \vert \x) \big ) \Big ].
\end{equation}
For conciseness, we denote $\alpha_k = \alpha_k(\x, \btheta), \forall k \in \cY$ hereafter. For a sample $(\x, y)$, the reverse cross-entropy term amounts to $\mathbb{E}_{\bpi \sim q_{\btheta}(\bpi \vert \x)} \big [\!-\!\log p(y \vert \bpi, \x) \big ] = - \big ( \psi(\alpha_y) \!-\! \psi(\alpha_0) \big )$ where $\psi$ is the digamma function. Hence, the optimization objective is written as:
\begin{equation}
    \min_{\theta} \cL_{\text{var}}(\btheta;\cD) = \frac{1}{N} \sum_{(\x, y) \in \cD} - \big ( \psi(\alpha_y) \!-\! \psi(\alpha_0) \big ) + \mathbb{KL} ( q_{\btheta}(\bpi \vert \x)~\|~p(\bpi \vert \x) ).
    \label{chap5:eq:dev_loss_vi}
\end{equation}
Considering that most of the training inputs $\x$ are associated with only one observation $y$ in $\mathcal{D}$, we should choose small concentration parameters $\boldsymbol{\beta}$ for the prior $p(\bpi \vert \x)= \text{Dir} (\bpi \vert \boldsymbol{\beta})$ to prevent the resulting posterior distribution $p(\bpi \vert \x) = \text{Dir} (\bpi \vert \beta_1,...,\beta_y + 1,...,\beta_K)$ from being dominated by the prior. However, this causes gradients to be very large in small-value regimes due to the digamma function, \eg $\psi'(0.01) > 10^{-4}$.

\begin{figure}[t]
\centering
\captionsetup[subfigure]{justification=centering}
\begin{minipage}[c]{0.45\linewidth}
\centering
    \includegraphics[width=\linewidth]{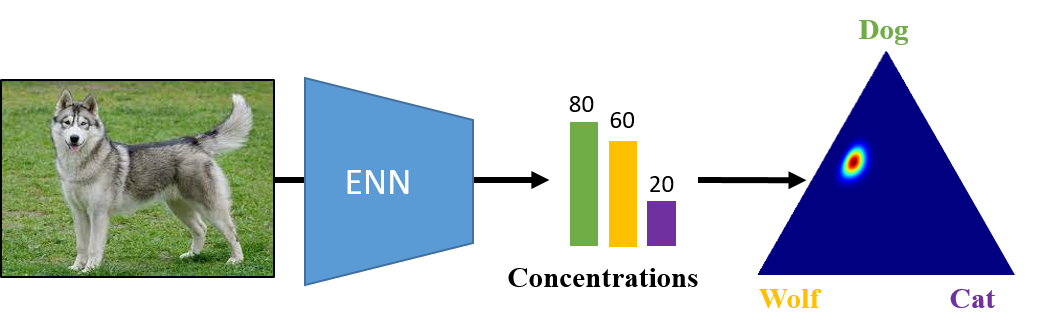}
    \subcaption{
    In-distribution image \\
    $\textrm{MCP}=0.50~$, entropy $= 0.97$, 
    $\textrm{KLoS}=97.85$}
    \label{chap5:fig:intro_a}
\end{minipage}%
\hspace{0.3cm}
\begin{minipage}{0.45\linewidth}
\centering
    \includegraphics[width=\linewidth]{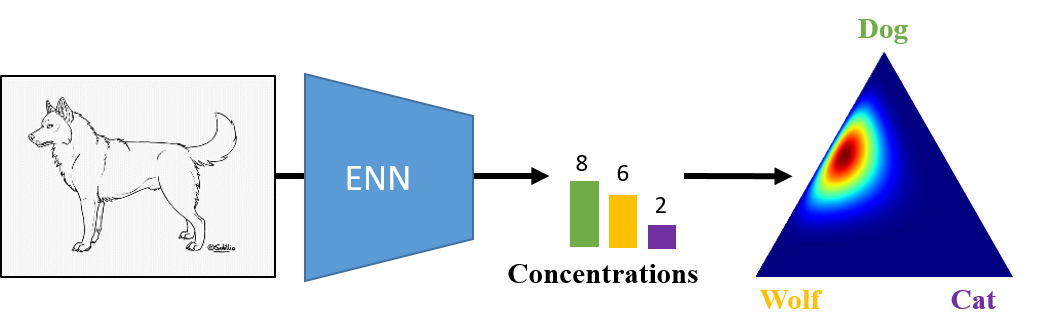}
    \subcaption{
    Outlier with same class confusion \\
     $\textrm{MCP}=0.50~$, entropy $= 0.97$,
    $\textrm{KLoS}=104.71$}
    \label{chap5:fig:intro_b}
\end{minipage}
\caption[Limitations of first-order uncertainty measures and their handling with KLoS]{\textbf{Limitations of first-order uncertainty measures and their handling with KLoS}. (a) An in-distribution image with conflicting evidence between \textit{dog} and \textit{wolf}. (b) An outlier with the same class confusion but a lower amount of evidence. An evidential neural network (ENN) outputs class-wise evidence information as concentration parameters of a Dirichlet density (visualized on the simplex) over 3-class distributions. Although this density is flatter for the second input, the predictive entropy and MCP, only based on first-order statistics, are equal for both inputs. In contrast, the proposed measure, KLoS, captures both class confusion and lack of evidence, hence correctly reflecting the larger uncertainty for the latter sample.}
\label{chap5:fig:intro}
\end{figure}

To stabilize the optimization, we follow \cite{beingbayesian2020} and use the non-informative uniform prior distribution $p(\bpi \vert \x) = \text{Dir} \big ( \bpi \vert \boldsymbol{1} \big )$ where $\boldsymbol{1}$ is the all-one uniform vector, and we weight the KL divergence term with $\lambda>0$:
\begin{equation}
    \mathcal{L}^{\lambda}_{\text{var}}(\btheta;\cD) =
    \frac{1}{N} \sum_{(\x, y) \in \cD} - \big ( \psi(\alpha_y) - \psi(\alpha_0) \big ) + \lambda \mathbb{KL} \big (\text{Dir}( \bpi \vert \boldsymbol{\alpha}(\boldsymbol{x}, \boldsymbol{\theta})) \vert\vert \text{Dir}( \bpi \vert \mathbf{1} ) \big ).
    \label{chap5:eq:loss_vi_lbda}
\end{equation}
In particular, minimizing loss \cref{chap5:eq:loss_vi_lbda} enforces training sample's precision $\alpha_0$ to remain close to the value $K+1/\lambda$ \cite{malinin2019}.

While $\mathcal{L}^{\lambda}_{\text{var}}(\btheta;\cD)$ slightly differs from $\cL_{\text{var}}(\btheta;\cD)$, both functions lead to the same optima. Indeed, by considering their gradient, we can show that a local optimum of $\cL_{\text{var}}(\btheta;\cD)$ is achieved for a sample $\x$ when $\balpha^* = (\beta_1,...,\beta_y + 1,...,\beta_K)$ and a local optimum of $\cL^{\lambda}_{\text{var}}(\btheta;\cD)$ is  $\balpha^{\bullet} = (1,\cdots,1 + 1/\lambda,\cdots,1)$. Hence, their ratio between each element is equal:
\begin{equation}
    \forall i,j \in \llbracket 1,K\rrbracket, \frac{\alpha^*_i}{\alpha^*_j} = \frac{\alpha^{\bullet}_i}{\alpha^{\bullet}_j}.
\end{equation}

\section{Capturing in-distribution and out-of-distribution uncertainties}
\label{chap5:sec:method}

In this section, we present the limits of current uncertainty measures used in evidential models (\cref{chap5:subsec:limits}) and we introduce our measure to effectively capture class confusion and lack of evidence with evidential models (\cref{chap5:subsec:klos}). We further propose a confidence learning approach to enhance in-distribution uncertainty estimation in \cref{chap5:subsec:klosnet}.

\subsection{Limits of current uncertainty measures with evidential models}
\label{chap5:subsec:limits}

For open-world recognition, a model should be equipped with an uncertainty measure which accounts both for first-order and second-order uncertainties to detect misclassifications and out-of-distribution samples. However, current uncertainty measures do not leverage the distribution over output probabilities on the simplex to derive such a joint measure of the two sources of uncertainty. The predictive entropy $\bbH[Y \vert \x, \cD]$ and the \textit{maximum class probability} (MCP) targeting total uncertainty actually reduce distributions of probabilities on the simplex to their expected value and compute first-order uncertainty measure from point-wise probabilities \cite{sensoy2018, beingbayesian2020}. This significantly limits the expressiveness of the resulting measures. In particular, they are invariant to the spread of the distribution over probabilities. This causes a significant loss of information. Given similar conflicting evidence, first-order uncertainty measures assign the same value to in-distribution and out-of-distribution samples as shown in \cref{chap5:fig:intro}, which is undesirable. With the goal of obtaining accurate estimates, measures should allow uncertainty caused by class confusion and lack of evidence to be cumulative, a property naturally fulfilled by the predictive variance in Bayesian regression \cite{murphy2012machine}.

Other uncertainty measures used with evidential models include second-order uncertainty measures which quantify the distribution dispersion on the simplex, \eg, precision $\alpha_0$ or mutual information. While adequate to detect OOD samples, these measures are not suited to estimate aleatoric uncertainty, which is characterized by the expectation of the Dirichlet distribution.

\begin{wrapfigure}{r}{0.35\textwidth}
    \centering
    \captionsetup{justification=centering}
    \vspace{-0.5cm}
    \includegraphics[width=0.95\linewidth]{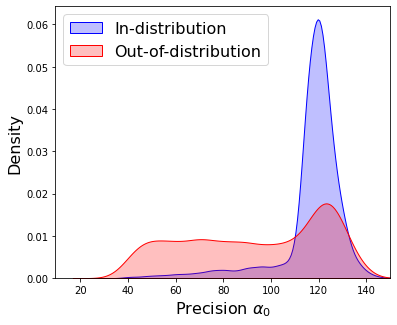}
    \caption{Precision densities for ID (CIFAR-10) and OOD (TinyImageNet) samples when no OOD training data is used.}
    \vspace{-0.5cm}
    \label{chap5:fig:intro_density_plot}
\end{wrapfigure}
In addition, the success of these measures rely on the assumption that the Dirichlet distribution spread is larger for OOD than for in-distribution (ID) samples. Consequently, some previous works \cite{malinin2018, malinin2019,maxgap2020} propose to use auxiliary OOD data during training to enforce higher distribution spread on OOD inputs, which may be unrealistic in many applications. This has been debated recently by \cite{charpentier2020}. Finally, this assumption is not always fulfilled in absence of OOD training data \cite{charpentier2020,sensoy2020}. As shown in \cref{chap5:fig:intro_density_plot} for a model trained on CIFAR-10, $\alpha_0$ values largely overlap between IDs and OODs when no OOD training data is used, limiting the effectiveness of existing second-order uncertainty measures. Consequently, neither current first-order nor second-order uncertainty measures appear to be suited for open-world settings.

\subsection{KLoS: a Kullback-Leibler divergence measure on the simplex}
\label{chap5:subsec:klos}

By explicitly learning a distribution of the categorical probabilities $\bpi$, evidential models provide a second-order uncertainty representation where the expectation of the Dirichlet distribution relates to class confusion and its spread to the amount of evidence. While originally used to measure the total uncertainty, the predictive entropy $\bbH[y \vert \x, \btheta]$ and the maximum class probability $\text{MCP}(\x, \btheta) = \max_k P(\mathbf{y}= k \vert \x, \btheta)$ only account for the position on the simplex. These measures are invariant to the dispersion of the Dirichlet distribution that generates the categorical probabilities. This can be problematic, as illustrated in \cref{chap5:fig:intro}. To capture uncertainties due to class confusion \emph{and} lack of evidence, an effective measure should account for the sharpness of the Dirichlet distribution and its location on the simplex.

We introduce a novel measure, named \emph{KLoS} for ``KL on Simplex'', that computes the KL divergence between the model's output and a class-wise prototype Dirichlet distribution with concentrations $\boldsymbol{\gamma}_{\hat{y}}$ focused on the \textit{predicted} class $\hat{y}$.

\begin{definition}[KLoS]
\sloppy
For any admissible input $\x\in\cX$, its KLoS measure is defined as:
\begin{equation}
    \textrm{KLoS}(\boldsymbol{x}) \triangleq \mathbb{KL} \Big ( \textrm{Dir} \big (\bpi \vert \balpha(\boldsymbol{x}, \btheta) \big ) ~\|~ \textrm{Dir} \big ( \bpi \vert \boldsymbol{\gamma}_{\hat{y}} \big ) \Big ) ,
    \label{chap5:eq:klos}
\end{equation}
where $\balpha(\boldsymbol{x}, \btheta) = \exp f(\boldsymbol{x}, \btheta)$ is the model's output and $\boldsymbol{\gamma}_{\hat{y}} = (1,\ldots,1,\tau,1,\ldots,1)$ are uniform concentration parameters except for the predicted class with concentration $\tau$.
\end{definition}

The lower KLoS is, the more certain the prediction is. Correct predictions will have Dirichlet distributions similar to the prototype Dirichlet distribution $\boldsymbol{\gamma}_{\hat{y}}$ and will thus be associated with a low uncertainty score (\cref{chap5:fig:simplex_behavior}a). Samples with high class confusion will present an expected probability distributions closer to simplex's center than the expected class-wise prototype
$p^*_{\hat{y}} = (\frac{1}{K-1+\tau},\cdots,\frac{\tau}{K-1+\tau},\ldots,\frac{1}{K-1+\tau})$, resulting in a higher KLoS score (\cref{chap5:fig:simplex_behavior}b). Similarly, KLoS also penalizes samples having a different precision $\alpha_0$ than the precision $\alpha_0^*=\tau +K-1$ of the prototype $\boldsymbol{\gamma}_{\hat{y}}$. Samples with smaller (\cref{chap5:fig:simplex_behavior}c) and higher (\cref{chap5:fig:simplex_behavior}d) amount of evidence than $\alpha_0^*$ receive a larger KLoS score.

\begin{figure}[t]
    \centering
    \includegraphics[width=\linewidth]{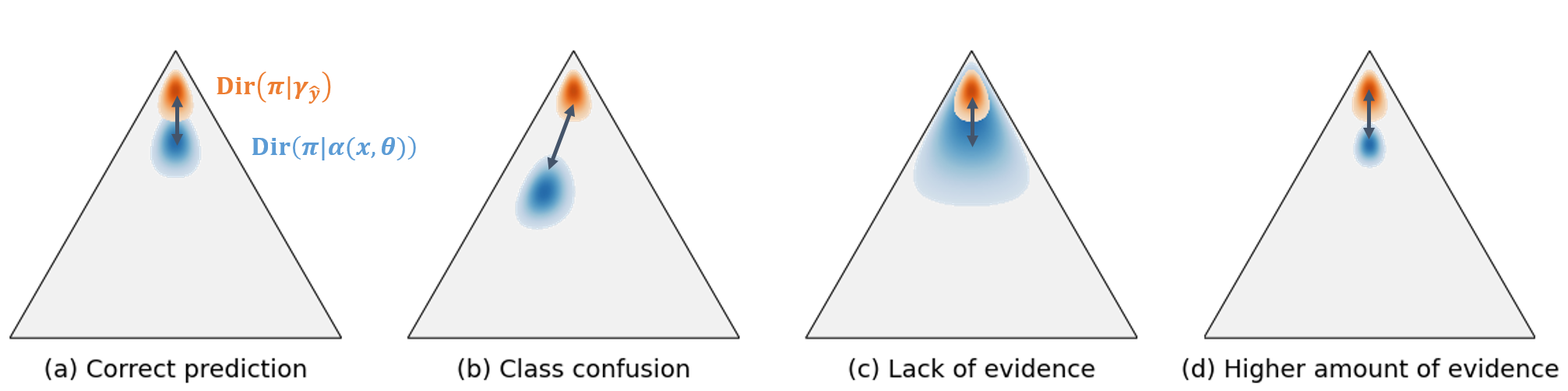}
    \caption[KLoS on the probability simplex]{\textbf{KLoS on the probability simplex}. Given the input sample, the blue region represents the distribution predicted by the evidential model and the orange region represents the prototype Dirichlet distribution with parameters $\boldsymbol{\gamma}_{\hat{y}} = (1,\cdots,1,\tau,1,\ldots,1)$ focused on the predicted class $\hat{y}$. Illustration of the behavior of KLoS in absence of uncertainty (a), in case of class confusion (b) and in case of a different amount of evidence, either lower (c) or higher (d).}
    \label{chap5:fig:simplex_behavior}
\end{figure}

\paragraph{Effective measure without OOD training data.} 
Since in-distribution samples are enforced to have precision close to $\alpha_0^*$ during training, the class-wise prototypes are fine estimates of the concentration parameters of training data for each class. Hence, KLoS is a divergence-based metric, which only needs in-distribution data during training to compute its prototypes. This behavior is illustrated in \cref{chap5:subsec:synth_exp}. The proposed measure will be effective to detect various types of OOD samples whose precision is far from $\alpha_0^*$. In contrast, second-order uncertainty measures, \eg, mutual information, assume that OOD samples have smaller $\alpha_0$, a property difficult to fulfill for models trained only with in-distribution samples (see \cref{chap5:fig:intro_density_plot}). In \cref{chap5:sec:ood_training}, we explore more in-depth the impact of the choice of OOD training data on the actual $\alpha_0$ values for OOD samples.

\paragraph{Decomposition of KLoS} 
Even though our method targets the simultaneous detection of misclassifications and OOD samples, one can detect the source of uncertainty in KLoS scores by using the following decomposition:

\begin{proposition}
By approximating the digamma function $\psi$, KLoS can be decomposed as:
\begin{equation}
    \textrm{KLoS}(\x) \approx - (\tau-1) \log \big ( \frac{\alpha_{\hat{y}}}{\alpha_0} \big ) + \Big (- (\tau-1)(\frac{1}{2\alpha_0} -\frac{1}{2\alpha_{\hat{y}}}) + \mathbb{KL} \big (\text{Dir}( \bpi \vert \balpha)~\|~\text{Dir}( \bpi \vert \mathbf{1} ) \big ) \Big ) + r,
    \label{chap5:eq:decomposition}
\end{equation}
where $r = - \big (\log \Gamma(\tau) - \log \Gamma(K-1+\tau) - \log \Gamma(K) \big )$ does not depend on the model parameters $\btheta$ nor on the input $\x$.
\end{proposition}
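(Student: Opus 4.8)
The plan is to reduce everything to the closed-form expression for the Kullback--Leibler divergence between two Dirichlet distributions and then to massage it so that the uniform-prior divergence term of the evidential objective \cref{chap5:eq:loss_vi_lbda} appears explicitly. Recall that for concentration vectors $\balpha$ and $\boldsymbol{\beta}$ (with $\alpha_0 = \sum_k \alpha_k$ and $\beta_0 = \sum_k \beta_k$),
\begin{equation*}
\mathbb{KL}\big(\text{Dir}(\bpi \vert \balpha) \,\|\, \text{Dir}(\bpi \vert \boldsymbol{\beta})\big) = \log\frac{\Gamma(\alpha_0)\prod_k \Gamma(\beta_k)}{\Gamma(\beta_0)\prod_k \Gamma(\alpha_k)} + \sum_k (\alpha_k - \beta_k)\big(\psi(\alpha_k) - \psi(\alpha_0)\big),
\end{equation*}
which follows from the identity $\mathbb{E}_{\text{Dir}(\balpha)}[\log \pi_k] = \psi(\alpha_k) - \psi(\alpha_0)$. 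First I would instantiate this with $\boldsymbol{\beta} = \boldsymbol{\gamma}_{\hat{y}}$, using that $\gamma_k = 1$ for $k \neq \hat{y}$, $\gamma_{\hat{y}} = \tau$, and hence $\gamma_0 = K - 1 + \tau$.

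The key algebraic step is to compare $\textrm{KLoS}(\x)$ with $\mathbb{KL}\big(\text{Dir}(\bpi \vert \balpha) \,\|\, \text{Dir}(\bpi \vert \mathbf{1})\big)$, the second term of the evidential objective. Subtracting the two divergences, the two $\psi$-weighted sums combine into $\sum_k (1 - \gamma_k)\big(\psi(\alpha_k) - \psi(\alpha_0)\big)$; since $1 - \gamma_k$ vanishes for every $k \neq \hat{y}$ and equals $-(\tau - 1)$ at $k = \hat{y}$, this collapses to the single term $-(\tau - 1)\big(\psi(\alpha_{\hat{y}}) - \psi(\alpha_0)\big)$. The log-Gamma normalizers that survive the subtraction assemble into a quantity that depends only on $\tau$ and $K$, which I would collect as the residual $r$. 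This produces the exact identity
\begin{equation*}
\textrm{KLoS}(\x) = -(\tau-1)\big(\psi(\alpha_{\hat{y}}) - \psi(\alpha_0)\big) + \mathbb{KL}\big(\text{Dir}(\bpi \vert \balpha)\,\|\,\text{Dir}(\bpi \vert \mathbf{1})\big) + r,
\end{equation*}
with no approximation yet involved.

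Finally, I would introduce the only approximation, namely the asymptotic expansion of the digamma function $\psi(x) \approx \log x - \tfrac{1}{2x}$, applied to both $\psi(\alpha_{\hat{y}})$ and $\psi(\alpha_0)$. This turns $-(\tau-1)\big(\psi(\alpha_{\hat{y}}) - \psi(\alpha_0)\big)$ into $-(\tau-1)\log(\alpha_{\hat{y}}/\alpha_0)$ plus the correction $-(\tau-1)\big(\tfrac{1}{2\alpha_0} - \tfrac{1}{2\alpha_{\hat{y}}}\big)$, recovering exactly the grouping in \cref{chap5:eq:decomposition}: the logarithmic term reflects class confusion (the location on the simplex), while the bracketed term together with the uniform-prior divergence reflects the lack of evidence (the spread of the distribution).

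The main obstacle I anticipate is the careful bookkeeping of the Gamma-function constants and their signs when isolating $r$: one must track $\Gamma(\tau)$, $\Gamma(K-1+\tau)$ and $\Gamma(K)$ through both divergences and verify that the leftover assembles into a term depending on neither $\btheta$ nor $\x$. A secondary point worth justifying is that the digamma approximation is legitimate in the model's operating regime: the training loss \cref{chap5:eq:loss_vi_lbda} keeps precisions $\alpha_0$ close to $K + 1/\lambda$, so $\alpha_{\hat{y}}$ and $\alpha_0$ are large enough that the $O(1/x^2)$ remainder is negligible.
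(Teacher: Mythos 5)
Your proposal is correct and follows essentially the same route as the paper's proof: write both $\mathbb{KL}\big(\text{Dir}(\bpi\vert\balpha)\,\|\,\text{Dir}(\bpi\vert\boldsymbol{\gamma}_{\hat{y}})\big)$ and $\mathbb{KL}\big(\text{Dir}(\bpi\vert\balpha)\,\|\,\text{Dir}(\bpi\vert\mathbf{1})\big)$ in closed form, subtract so that the $\psi$-weighted sum collapses to $-(\tau-1)\big(\psi(\alpha_{\hat{y}})-\psi(\alpha_0)\big)$ with the log-Gamma constants collected into $r$, and only then invoke $\psi(x)\approx\log x - \tfrac{1}{2x}$. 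Your closing remark justifying the approximation via the training-time constraint $\alpha_0 \approx K + 1/\lambda$ is a sound addition the paper leaves implicit.
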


\begin{proof}
The KL divergence between two Dirichlet distributions can be obtained in closed form and KLoS can be calculated as:
\begin{align}
    \textrm{KLoS}(\boldsymbol{x}) &= \mathbb{KL} \Big ( \textrm{Dir} \big (\bpi \vert \boldsymbol{\alpha} \big ) ~\|~ \textrm{Dir} \big ( \bpi \vert \boldsymbol{\gamma}_{\hat{y}} \big ) \Big ) \\
    &= \log \Gamma(\alpha_0) - \log \Gamma(K-1+\tau) + \log \Gamma(\tau) - \sum_{k=1}^K \log \Gamma(\alpha_k) \nonumber \\
    &~~~~~~+ \sum_{k\neq y} \big ( \alpha_k -1 \big )  \big ( \psi(\alpha_k) - \psi(\alpha_0) \big ) + \big (\alpha_{\hat{y}} - \tau \big)\big ( \psi(\alpha_{\hat{y}}) - \psi(\alpha_0) \big ).
\end{align}
On the other hand, the KL divergence between the model's output and an uniform Dirichlet distribution $\text{Dir} \big ( \bpi \vert \boldsymbol{1} \big )$ reads:
\begin{equation}
   \textrm{KL} \Big ( \textrm{Dir} \big (\bpi \vert \boldsymbol{\alpha} \big ) ~\|~ \textrm{Dir} \big ( \bpi \vert \boldsymbol{1} \big ) \Big ) = \log \Gamma(\alpha_0) - \log \Gamma(K) - \sum_{k=1}^K \log \Gamma(\alpha_k) + \sum_{k = 1}^K \big ( \alpha_k -1 \big )  \big ( \psi(\alpha_k) - \psi(\alpha_0) \big ).
\end{equation}
Hence, KLoS can be written as:
\begin{align}
    \textrm{KLoS}(\boldsymbol{x}) &= - (\tau - 1) \big ( \psi(\alpha_{\hat{y}}) - \psi(\alpha_0) \big ) + \mathbb{KL} \big (\text{Dir}( \bpi \vert \boldsymbol{\alpha})~\|~\text{Dir}( \bpi \vert \mathbf{1} ) \big ) \nonumber \\
    &~~~~~~~~+ \big ( \log \Gamma(\tau) - \log \Gamma(K-1+\tau) - \log \Gamma(K) \big ).
    \label{chap5:eq:proof_decompo}
\end{align}

By considering the asymptotic series approximation to the digamma function, $\psi(x) = \log x - \frac{1}{2x} + \mathcal{O}(\frac{1}{x^2})$, the previous expression can be approximated by:
\begin{equation}
    \textrm{KLoS}(\boldsymbol{x}) \approx - (\tau - 1) \log (\frac{\alpha_{\hat{y}}}{\alpha_0}) + \Big (- (\tau - 1) (\frac{1}{2\alpha_0} - \frac{1}{2\alpha_{\hat{y}}})) + \mathbb{KL} \big (\text{Dir}( \bpi \vert \boldsymbol{\alpha})~\|~\text{Dir}( \bpi \vert \mathbf{1} ) \big ) \Big ) + r,
\end{equation}
where $r = - \big (\log \Gamma(\tau) - \log \Gamma(K-1+\tau) - \log \Gamma(K) \big )$.
\end{proof}

The first term is the standard log-likelihood and relates only to expected probabilities, hence to the class confusion. The ratio $\alpha_{\hat{y}} / \alpha_0$ makes it invariant to any scaling of the concentration parameters vector $\balpha$. The second term takes into account the spread of the distribution by measuring how close $\alpha_0$ is to $(\tau+K-1)$, and measures the amount of evidence.

\subsection{Improving uncertainty estimation with confidence learning}
\label{chap5:subsec:klosnet}

\begin{wrapfigure}{r}{0.45\textwidth}
\centering
    \captionsetup{justification=centering}

    \includegraphics[width=\linewidth]{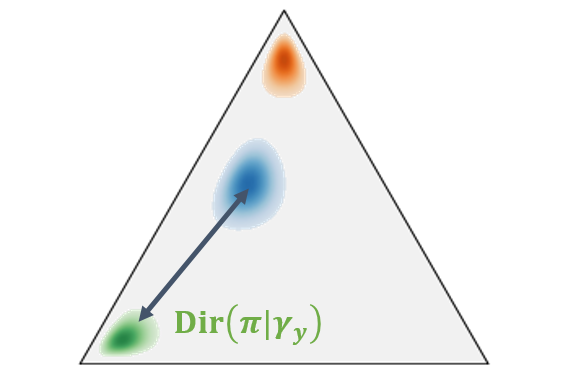}
    \vspace{-0.5cm}
    \caption[Behavior of KLoS{$^*$}]{KLoS{$^*$} measures the distance to the prototype Dirichlet distribution centered on the true class $y$ (green).}
    \label{chap5:fig:klos*}
\end{wrapfigure}
When the model misclassifies an example, \ie, the predicted class $\hat{y}$ differs from the ground truth $y$, KLoS measures the distance between the ENN's output and the wrongly estimated posterior $p(\bpi \vert \x, \hat{y})$. This may result in an arbitrarily high confidence / low KL divergence value. Measuring instead the distance to the true posterior distribution $p(\pi \vert \x, y)$ (\cref{chap5:fig:klos*}) more likely yield a greater value, reflecting the fact that the classifier made an error. 

Thus, a better measure for misclassification detection would be: 
\begin{equation}
    \textrm{KLoS}^*(\x, y) \triangleq \mathbb{KL} \Big ( \textrm{Dir} \big (\bpi \vert \balpha(\x, \btheta) \big ) ~\|~ \textrm{Dir} \big ( \bpi \vert \boldsymbol{\gamma}_{y} \big ) \Big ),
    \label{chap5:eq:klos*}
\end{equation}
where $\boldsymbol{\gamma}_{y}$ corresponds to the uniform concentrations except for the \emph{true} class $y$ with $\tau ~{=}~1+\lambda^{-1}$.

\paragraph{Connecting $\textrm{KLoS}^*$ with Evidential Training Objective} Interestingly, the following proposition that choosing such value for $\tau$ results in $\textrm{KLoS}^*$ matching the objective function in \cref{chap5:eq:loss_vi_lbda}.

\begin{proposition}
    If $\tau ~{=}~1+\lambda^{-1}$, then minimizing the evidential training objective $\mathcal{L}^{\lambda}_{\text{var}}(\btheta;\cD)$ is equivalent to minimizing the $\textrm{KLoS}^*$
value of each training point $\x$.
\end{proposition}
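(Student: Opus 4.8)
The plan is to establish an exact identity between the per-sample value of $\textrm{KLoS}^*$ and the per-sample summand of $\mathcal{L}^{\lambda}_{\text{var}}$, and then to argue that the two objectives, viewed as functions of $\btheta$, differ only by a positive multiplicative factor and an additive constant. First I would reuse the closed-form computation of the KL divergence between two Dirichlet distributions carried out in the proof of the preceding proposition (equation \cref{chap5:eq:proof_decompo}), which is \emph{exact} prior to any digamma approximation. The only difference is that $\textrm{KLoS}^*$ focuses the prototype concentrations $\boldsymbol{\gamma}_{y}$ on the \emph{true} class $y$ rather than the predicted class $\hat{y}$; since that derivation never exploited any special property of $\hat{y}$, replacing $\hat{y}$ by $y$ yields directly
\begin{align}
    \textrm{KLoS}^*(\x, y) &= -(\tau-1)\big(\psi(\alpha_y)-\psi(\alpha_0)\big) + \mathbb{KL}\big(\text{Dir}(\bpi \vert \balpha)~\|~\text{Dir}(\bpi \vert \mathbf{1})\big) \nonumber \\
    &\quad + \big(\log\Gamma(\tau) - \log\Gamma(K-1+\tau) - \log\Gamma(K)\big).
\end{align}

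Next I would substitute the prescribed value $\tau = 1 + \lambda^{-1}$, so that $\tau - 1 = \lambda^{-1}$, and collect the terms independent of $\btheta$ and $\x$ into a single constant $C \triangleq \log\Gamma(\tau) - \log\Gamma(K-1+\tau) - \log\Gamma(K)$. Multiplying through by $\lambda > 0$ gives
\begin{equation}
    \lambda\,\textrm{KLoS}^*(\x, y) = -\big(\psi(\alpha_y) - \psi(\alpha_0)\big) + \lambda\,\mathbb{KL}\big(\text{Dir}(\bpi \vert \balpha)~\|~\text{Dir}(\bpi \vert \mathbf{1})\big) + \lambda C,
\end{equation}
and the first two terms on the right are \emph{exactly} the per-sample summand of $\mathcal{L}^{\lambda}_{\text{var}}(\btheta;\cD)$. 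Averaging over $\cD$ then yields $\frac{\lambda}{N}\sum_{(\x,y)\in\cD}\textrm{KLoS}^*(\x,y) = \mathcal{L}^{\lambda}_{\text{var}}(\btheta;\cD) + \lambda C$.

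Finally, since $\lambda > 0$ and $C$ depends only on $\tau$ and $K$ (hence is constant in $\btheta$), the maps $\btheta \mapsto \mathcal{L}^{\lambda}_{\text{var}}(\btheta;\cD)$ and $\btheta \mapsto \frac{1}{N}\sum_{(\x,y)}\textrm{KLoS}^*(\x,y)$ are affine reparametrizations of one another with positive slope, so they share the same minimizers, which is the claimed equivalence. The only real care needed — and the step I expect to be the main obstacle — is the bookkeeping: verifying that the additive term $C$ is genuinely independent of both the parameters and the input (it collects only $\Gamma$-functions of $\tau$ and $K$), and insisting on the \emph{exact} closed form rather than the digamma-approximated decomposition, so that the equivalence is an exact identity and not merely approximate. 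I would also make explicit that $\lambda$ precisely cancels the coefficient $\tau - 1 = \lambda^{-1}$ in front of the reverse-cross-entropy term, which is exactly why the specific choice $\tau = 1 + \lambda^{-1}$ (and no other) makes the two objectives coincide.
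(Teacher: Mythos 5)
Your proposal is correct and follows essentially the same route as the paper's own proof: derive the exact closed form of $\textrm{KLoS}^*$ from the Dirichlet--Dirichlet KL formula (reusing the derivation of the decomposition proposition with $\hat{y}$ replaced by $y$), plug in $\tau = 1+\lambda^{-1}$, and observe that it coincides with the per-sample evidential loss $l^{\lambda}_{\text{var}}(\x,y,\btheta)$ up to quantities that do not affect minimization. In fact you are slightly more careful than the paper, whose proof states the identity as $\textrm{KLoS}^*(\x) = l^{\lambda}_{\text{var}}(\x,y,\btheta) + r$ when the exact relation is $\textrm{KLoS}^*(\x,y) = \lambda^{-1}\, l^{\lambda}_{\text{var}}(\x,y,\btheta) + r$; your explicit handling of the positive factor $\lambda$ is exactly what makes the equivalence of minimizers rigorous rather than merely asserted.
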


\begin{proof}
    Let us decompose $\mathcal{L}^{\lambda}_{\text{var}}(\btheta;\cD) = \frac{1}{N} \sum_{(\x, y) \in \cD} l^{\lambda}_{\text{var}}(\x, y, \btheta)$.
    
    By deriving $\textrm{KLoS}^*$ in a similar way than \cref{chap5:eq:proof_decompo}, we can observe that:
\begin{equation}
     \textrm{KLoS}^*(\boldsymbol{x}) = l^{\lambda}_{\text{var}}(\x, y, \btheta) + r,
\end{equation}
where $r = - \big (\log \Gamma(1+1/\lambda) - \log \Gamma(K-1+1/\lambda) - \log \Gamma(K) \big )$ does not depend on the model parameters $\btheta$.

Hence, minimizing the evidential training objective $\mathcal{L}^{\lambda}_{\text{var}}(\btheta;\cD)$ is equivalent to minimizing the $\textrm{KLoS}^*$ value of each training point $\x$.
\end{proof}

This means that $\textrm{KLoS}^*$ is explicitly minimized by the evidential model during training for in-distribution samples. By mimicking the evidential training objective, we reflect the fact that the model is confident about its prediction if $\textrm{KLoS}^*$ is close to zero. In addition, minimizing the KL divergence between the variational distribution $q_{\btheta}(\bpi \vert \x) $ and the posterior $p(\bpi \vert \x, y)$ is equivalent to maximizing the evidence lower bound (ELBO) \cite{murphy2012machine}. Hence, a small $\textrm{KLoS}^*$ value corresponds to a high ELBO, which
is coherent with the common assumption in variational inference that higher ELBO corresponds to ``better'' models \cite{Gal2016PhD}.

Obviously, the true class of an output is not available when estimating confidence on test samples. We propose to learn $\textrm{KLoS}^*$ by introducing an auxiliary confidence neural network, \emph{KLoSNet}, with parameters $\bomega$, which outputs a confidence prediction $C(\x, \bomega)$. KLoSNet consists in a small decoder, composed of several dense layers attached to the penultimate layer of the original classification network. During training, we seek $\bomega$ such that $C(\x, \bomega)$ is close to $\text{KLoS}^*(\x,y)$, by minimizing
\begin{equation} 
\cL_{\text{KLoSNet}}(\bomega; \cD) = \frac{1}{N} \sum_{(\x,y)\in\mathcal{D}} \big\|  C(\x,\bomega) - \textrm{KLoS}^*(\x, y) \big\|^2.
    \label{chap5:eq:loss-conf}
\end{equation}
KLoSNet can be further improved by endowing it with its own feature extractor. Initialized with the encoder of the classification network, which must remain untouched for not affecting its performance, the encoder of KLoSNet can be fine-tuned along with its regression head. This amounts to minimizing \cref{chap5:eq:loss-conf} with respect to both sets of parameters.

The training set for confidence learning is the one used for classification training. In the experiments, we observe a slight performance drop when using a validation set instead. Indeed, when dealing with models with high predictive performance and small validation sets, we end up with fewer misclassification examples than in the train set. At test time, we now directly use KLoSNet's scalar output $C(\x, \bomega')$ as our uncertainty estimate. As previously, the lower the output value, the more confident the prediction.

\section{Related work}
\label{chap5:sec:related_work}

We detail here related work on OOD detection used in the following experiments. For misclassification detection, we refer to the related work in \cref{chap3:sec:related_work} and the presentation of the task in the background chapter (\cref{chap2:subsec:misclassif}).

In the literature, a range of methods aim to detect anomalies in the form of out-of-distribution (OOD) samples. Applied on a pre-trained model, ODIN \cite{odin2018} mitigates over-confidence by post-processing logits with temperature scaling and by adding inverse adversarial perturbations. \cite{mahalanobis2018} proposes a confidence score based on the class-conditional Mahalanobis distance, with the assumption of tied covariance. Although effective, both approaches need OOD data to tune hyperparameters, which might not generalize to other OOD datasets \cite{Shafaei2019ALB}. Finally, Liu \textit{et al.} \cite{NEURIPS2020_f5496252} interpret a pre-trained NN as an energy-based model and compute the energy score to detect OOD samples. Interestingly, this score corresponds to the log precision $\log \alpha_0$, which is similar to the EPKL measure \cite{malinin2019} used in ENNs.

\section{Experiments}
\label{chap5:sec:exp}

We evaluate our approach against: first-order uncertainty metrics (Maximum Class probability (\textit{MCP}) and predictive entropy (\textit{Entropy})), second-order metrics (mutual information (\textit{Mut. Inf.}), differential entropy (\textit{Diff. Ent.}), expected pairwise KL divergence (\textit{EPKL}) and \textit{dissonance}), post-training methods for OOD detection (\textit{ODIN} and \textit{Mahalanobis}) and for misclassification detection (\textit{ConfidNet}). Except in \cref{chap5:sec:ood_training}, we consider setups where no OOD data is available for training. Consequently, the results reported for ODIN and Mahalanobis are obtained without adversarial perturbations, which is also the best configuration for the considered tasks. We indeed show in \cref{appxB:subsec:adv_perturb} that these perturbations degrade misclassification detection.

\subsection{Synthetic experiment}
\label{chap5:subsec:synth_exp}

\begin{figure}[t]
\centering
\begin{minipage}[c]{0.32\linewidth}
\centering
    \includegraphics[width=\linewidth]{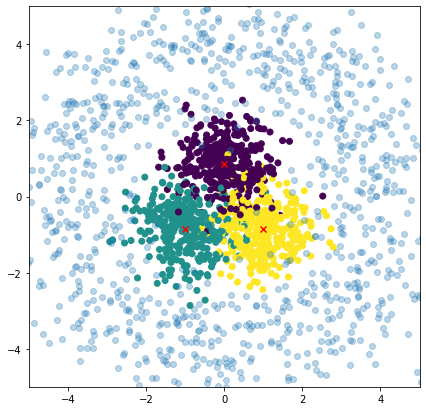}
    \subcaption{Toy dataset}
    \label{chap5:fig:toy_dataset}
\end{minipage}%
\hspace{0.1cm}
\begin{minipage}[c]{0.32\linewidth}
\centering
    \includegraphics[width=\linewidth]{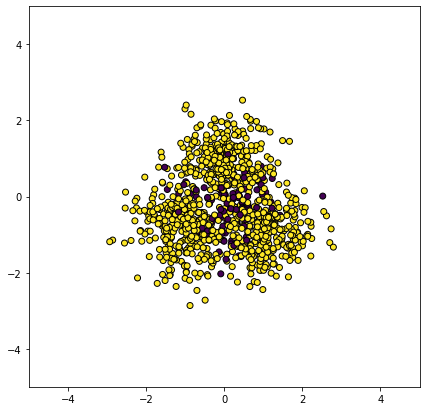}
    \subcaption{Corr./Err.}
    \label{chap5:fig:toy_errors}
\end{minipage}%
\hspace{0.1cm}
\begin{minipage}[c]{0.33\linewidth}
\centering
    \includegraphics[width=\linewidth]{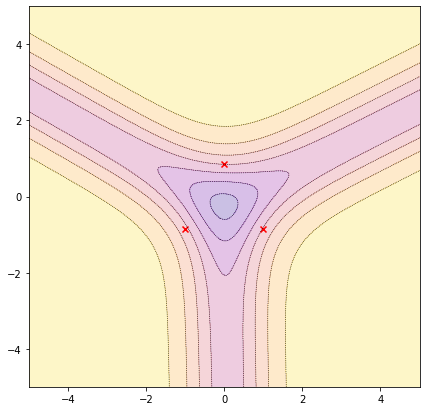}
    \subcaption{Entropy}
    \label{chap5:fig:toy_entropy}
\end{minipage}%
\hspace{0.1cm}
\begin{minipage}{0.32\linewidth}
\centering
    \includegraphics[width=\linewidth]{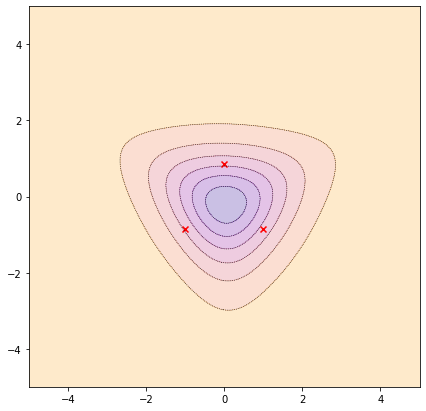}
    \subcaption{Mut. Inf.}
    \label{chap5:fig:toy_mutinf}
\end{minipage}
\hspace{0.1cm}
\begin{minipage}{0.32\linewidth}
\centering
    \includegraphics[width=\linewidth]{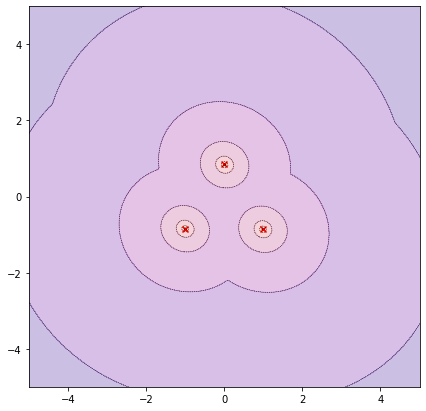}
    \subcaption{Mahalanobis}
    \label{chap5:fig:toy_mahalanobis}
\end{minipage}
\begin{minipage}{0.32\linewidth}
\centering
    \includegraphics[width=\linewidth]{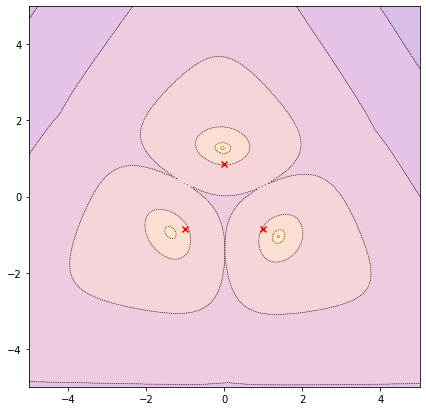}
    \subcaption{KLoS}
    \label{chap5:fig:toy_klos}
\end{minipage}
\caption[Comparison of various uncertainty measures for a given evidential classifier on a toy dataset]{\textbf{Comparison of various uncertainty measures for a given evidential classifier on a toy dataset.} (a) Training samples from 3 input Gaussian distributions with large overlap (hence class confusion) and OOD test samples (blue); (b) Correct (yellow) and erroneous (red) class predictions on in-domain test samples; (c-f) Visualisation of different uncertainty measures derived from the evidential model trained on the toy dataset. Yellow (resp. purple) indicates high (resp. low) certainty.}
\label{chap5:fig:vizu_toy}
\end{figure}

We analyse the behavior of the KLoS measure and the limitations of existing first- and second-order uncertainty metrics on a 2D synthetic dataset composed of three Gaussian-distributed classes with equidistant means and identical isotropic variance (\cref{chap5:fig:vizu_toy}):
\begin{equation}
    p(X = \boldsymbol{x},Y=y) = \frac{1}{3} \cdot \mathcal{N}(X = \boldsymbol{x} ~\vert~ \boldsymbol{\mu}_y, \sigma^2 \mathrm{I}_{2\!\times\!2}),
\end{equation}
where $\boldsymbol{\mu}_1 = (0, \sqrt{3}/2)$, $\boldsymbol{\mu}_2 = (-1, -\sqrt{3}/2)$ , $\boldsymbol{\mu}_3 = (1, -\sqrt{3}/2)$ and $\sigma=4$.
The marginal distribution of $\mathbf{x}$ is a Gaussian mixture with three equally weighted components having equidistant centers and equal spherical covariance matrices. The test dataset consists of 1,000 other samples from this distribution. Finally, we construct an out-of-distribution (OOD) dataset following \cite{malinin2019}, by sampling 100 points in $\mathbb{R}^2$ such that they form a `ring' with large noise around the training points. Some OOD samples will be close to the in-distribution while others will be very far (see \cref{chap5:fig:vizu_toy}). The number of OOD samples has been chosen so that it amounts approximately to the number of test points misclassified by the classifier. The classification is performed by a simple logistic regression. A set of five models is trained for 200 epochs using the evidential training objective with regularization parameter $\lambda=5\text{e-}2$ and Adam optimizer with learning rate $0.02$. Uncertainty metrics -- MCP, Entropy, Mut.\,Inf., Malahanobis and KLoS -- are computed from these models. This constitutes a scenario with high first-order uncertainty due to class overlap. OOD samples are drawn from a ring around the in-distribution dataset and are only used for evaluation.

\cref{chap5:fig:toy_entropy} shows that Entropy correctly assigns large uncertainty along decision boundaries, which is convenient to detect misclassifications, but yields low uncertainty for points far from the distribution. Mut.\,Inf. (\cref{chap5:fig:toy_mutinf}) has the opposite behavior than desired by decreasing when moving away from the training data. This is due to the linear nature of the toy dataset where models assign higher concentration parameters far from decision boundaries, hence smaller spread on the simplex, as also noted in \cite{charpentier2020}. Additionally, Mut.\,Inf. does not reflect the uncertainty caused by class confusion along decision boundaries. Neither Entropy nor Mut.\,Inf. is suitable to detect OOD samples in this synthetic experiment. In contrast, KLoS allows discriminating both misclassifications and OOD samples from correct predictions as uncertainty increases far from in-distribution samples for each class (\cref{chap5:fig:toy_klos}). KLoS measures a distance between the model's output and a class-wise prototype distribution. Here, we can observe that it acts as a divergence-based measure for each class.

We extend the comparison to include Mahalanobis (\cref{chap5:fig:toy_mahalanobis}), which is a distance-based measure by assuming Gaussian class conditionals on latent representations, here in the input space. However, Mahalanobis does not discriminate points close to the decision boundaries from points with a similar distance to the origin. Hence, it may be less suited to detect misclassifications than KLoS. Additionally, KLoS does not assume Gaussian distributions in the latent space nor tied covariance, which may be a strong assumption when dealing with a high-dimension latent space. In \cref{appxB:subsec:detail_results_synth}, a complementary quantitative evaluation on this toy problem confirms our findings regarding the inadequacy of first-order uncertainty measures such as MCP and Entropy, and the improvement provided by KLoS over Mahalanobis on misclassification detection.

\paragraph{Decomposition of KLoS.} To gain further intuition about the decomposition, we provide illustrations of the first term (negative log-likelihood, NLL) and the second term in \cref{chap5:fig:toy_illu}. We observe that the NLL term, which is equivalent to MCP measure, helps to detect misclassifications while the second term denotes increasing uncertainty as we move far away from training data. Hence, by using either the NLL term or the second term, one could distinguish the source of uncertainty if needed. 

\begin{figure}[h]
\centering
\begin{minipage}{0.32\linewidth}
\centering
    \includegraphics[width=\linewidth]{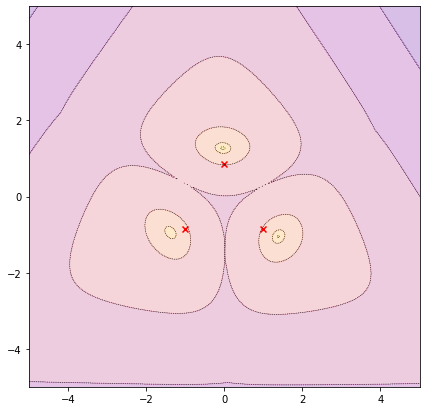}
    \subcaption{KLoS}
\end{minipage}
\hspace{0.1cm}
\begin{minipage}{0.32\linewidth}
\centering
    \includegraphics[width=\linewidth]{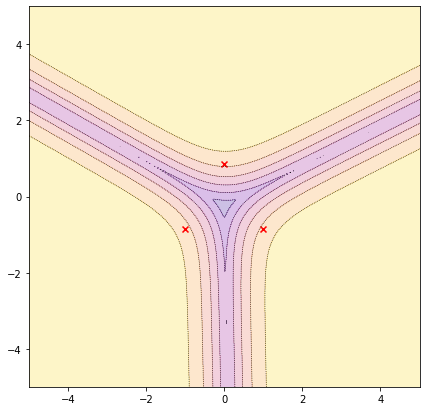}
    \subcaption{NLL term}
\end{minipage}
\hspace{0.1cm}
\begin{minipage}{0.32\linewidth}
\centering
    \includegraphics[width=\linewidth]{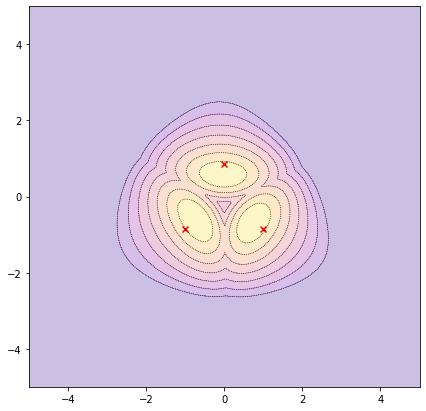}
    \subcaption{Second term}
\end{minipage}
\caption[Visualisation of the decomposition of KLoS on the toy dataset]{\textbf{Visualisation of the decomposition of KLoS on the toy dataset.}}
\label{chap5:fig:toy_illu}
\end{figure}

\subsection[Simultaneous detection of errors and OOD samples]{Simultaneous detection of errors and out-of-distribution samples}
\label{chap5:subsec:simultaneous_exp}

\begin{figure}[t]
\centering
\begin{minipage}[c]{0.24\linewidth}
\centering
    \includegraphics[width=\linewidth]{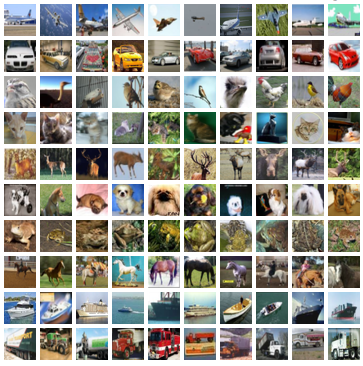}
    \subcaption{CIFAR-10}
    \label{chap5:fig:cifar10}
\end{minipage}%
\hspace{0.1cm}
\begin{minipage}[c]{0.24\linewidth}
\centering
    \includegraphics[width=\linewidth]{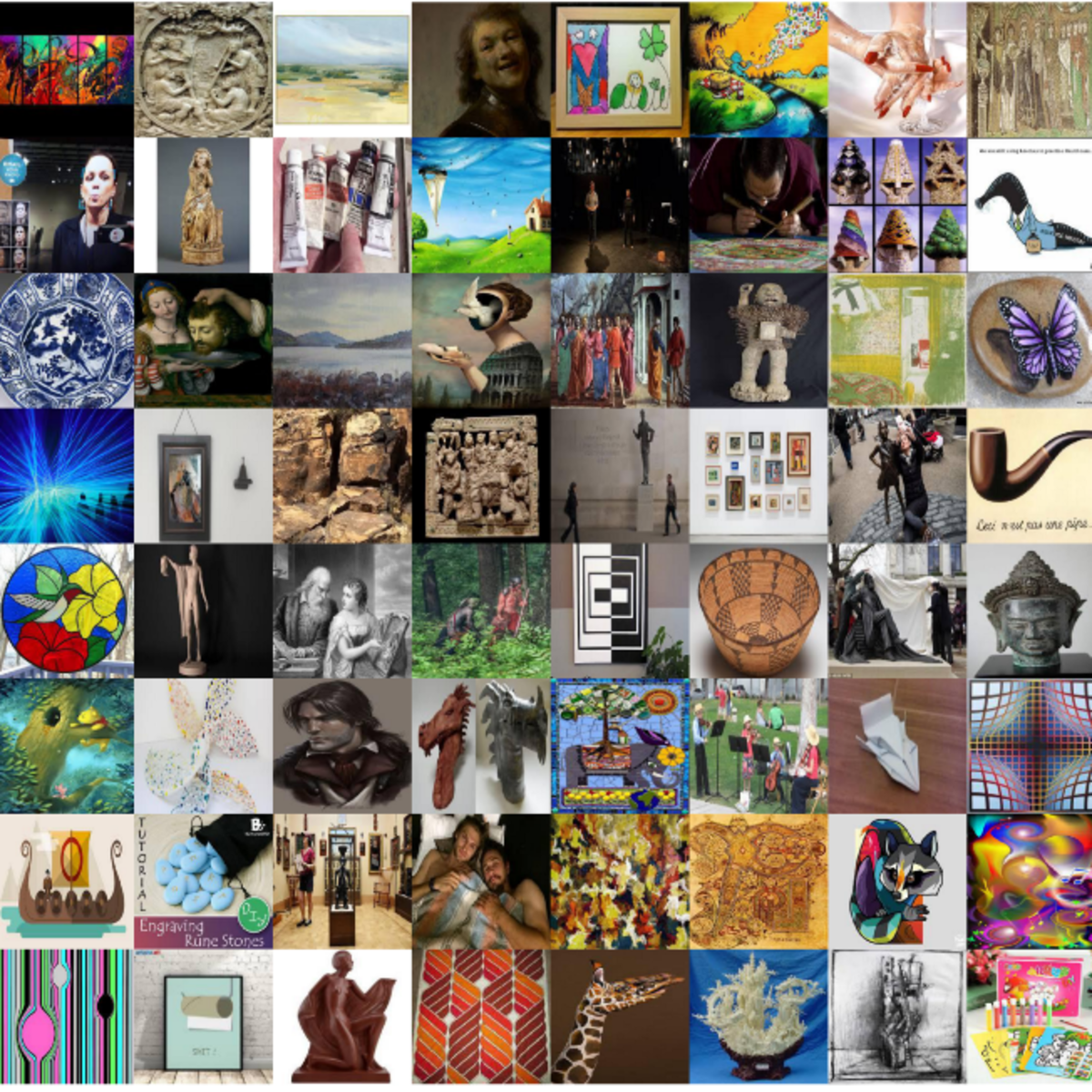}
    \subcaption{TinyImageNet}
    \label{chap5:fig:tinyimagenet}
\end{minipage}%
\hspace{0.1cm}
\begin{minipage}[c]{0.24\linewidth}
\centering
    \includegraphics[width=\linewidth]{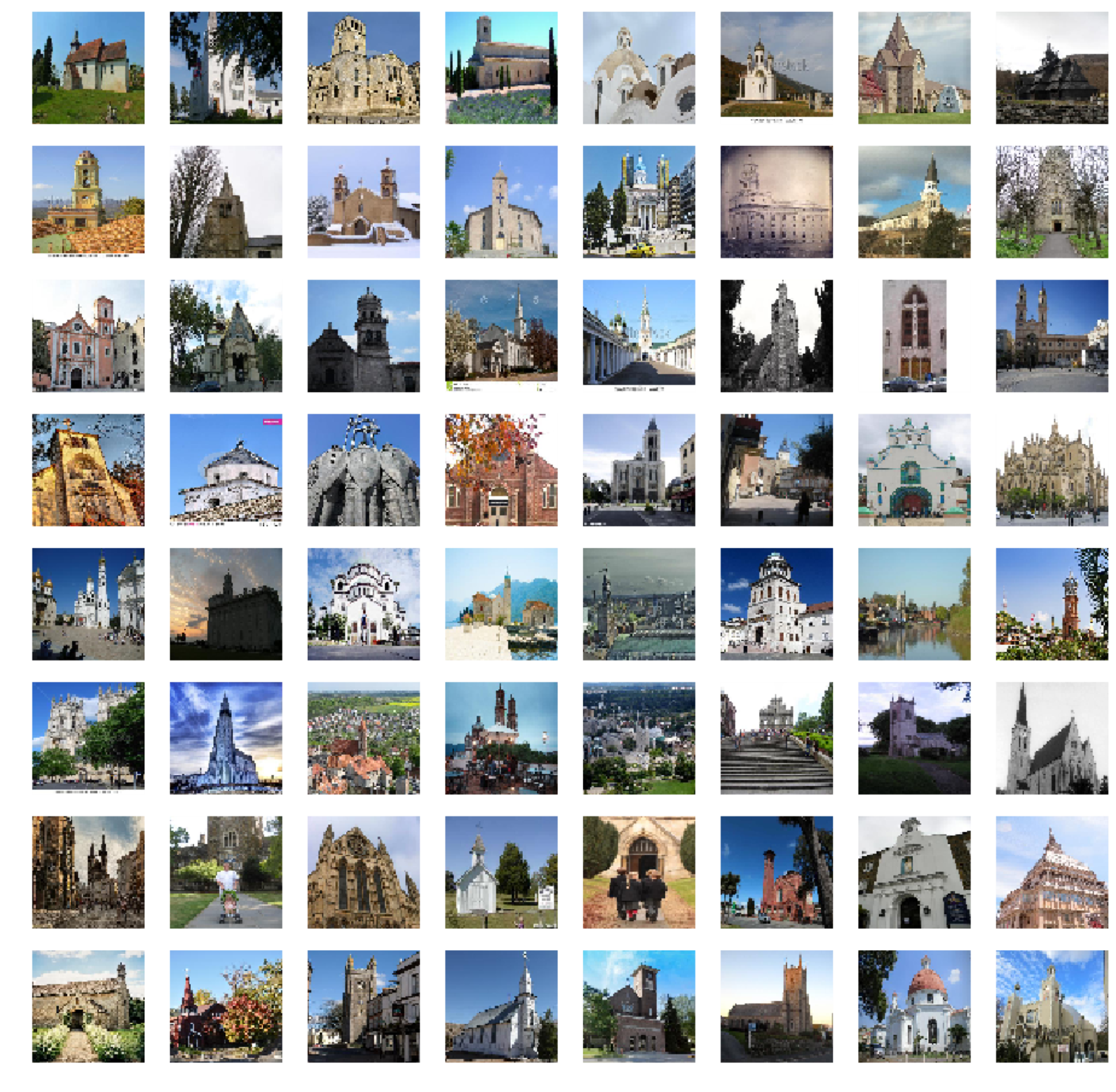}
    \subcaption{LSUN}
    \label{chap5:fig:lsun}
\end{minipage}%
\hspace{0.1cm}
\begin{minipage}{0.24\linewidth}
\centering
    \includegraphics[width=\linewidth]{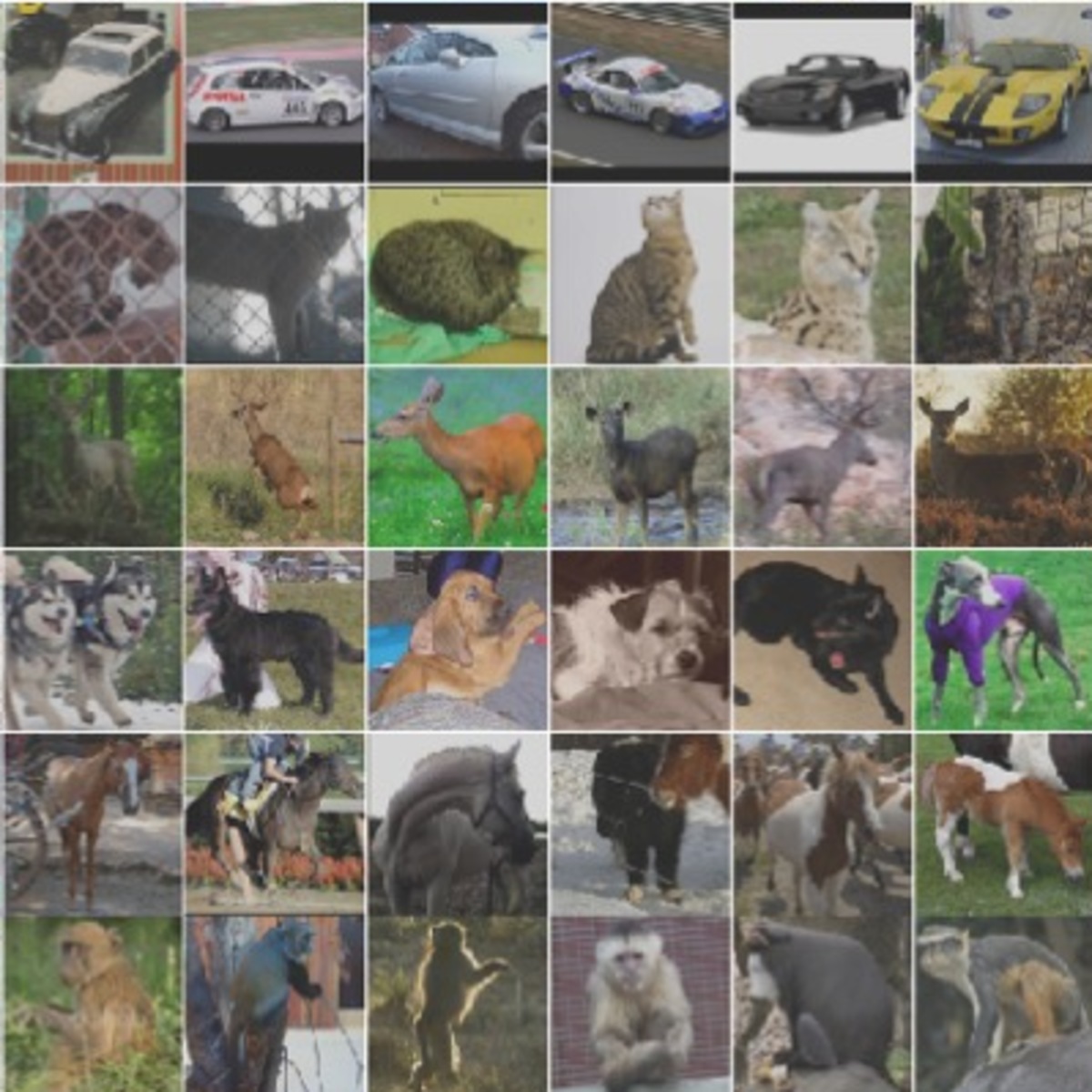}
    \subcaption{STL10}
    \label{chap5:fig:stl10}
\end{minipage}
\caption[Image samples from OOD datasets used in experiments]{\textbf{Images samples from OOD datasets} (b,c,d) used in experiments and compared to in-distribution CIFAR-10 (a).}
\label{chap5:fig:OOD_datasets}
\end{figure}

The task of detecting both in-distribution misclassifications and OOD samples gives the opportunity to jointly evaluate in-distribution and out-of-distribution uncertainty representations of a method. In this binary classification problem, correct predictions are considered as positive samples while misclassified inputs and OOD examples constitute negative samples. Following standard practices \cite{hendrycks17baseline}, we use the area under the ROC curve (AUROC) to evaluate threshold-independent performance.

The models used in the experiments present high predictive performances. Most often, there are much fewer misclassifications in the test set than considered OOD samples. Hence, joint detection performances might be dominated by the evaluation of the quality of OOD detection. To mitigate this unbalance, we propose to consider the following scheme based on oversampling. Let $\mathcal{A}_{\mathrm{M}}$ be the subset of in-distribution test examples that are misclassified by the observed model and $\mathcal{A}_{\mathrm{O}}$ the set of OOD test samples. We randomly sample $\kappa|\mathcal{A}_{\mathrm{O}}|$ points in $\mathcal{A}_{\mathrm{M}}$, with $\kappa$ = 1. Supposing $|\mathcal{A}_{\mathrm{O}}| \geq |\mathcal{A}_{\mathrm{M}}|$, this corresponds to oversampling the set of misclassifications. This over-sampled set is then added to the OOD set to form the negative examples for detection training. The set of correct predictions remains the same. We observed that the variance in AUROC due to this sampling is negligible and we report only the mean hereafter.

In the following, uncertainty measures are derived from an evidential model (\cref{chap5:eq:loss_vi_lbda}) with $\lambda =10^{-2}$, except for second-order metrics where we found that setting $\lambda=10^{-3}$ improves performance. We rely on the learned classifier to train our auxiliary confidence model KLoSNet, using the same training set and following loss \cref{chap5:eq:loss-conf}. Experiments are conducted with VGG-16 \cite{Simonyan15} and ResNet-18 \cite{resnet2015} architectures on CIFAR-10 (\cref{chap5:fig:cifar10}) and CIFAR-100 datasets \cite{Krizhevsky09}. The OOD datasets used for evaluation are presented in \cref{chap5:fig:OOD_datasets}: TinyImageNet\footnote{https://tiny-imagenet.herokuapp.com/} -- a subset of ImageNet (10,000 test images with 200 classes) --, LSUN \cite{yu15lsun} -- a scene classification dataset (10,000 test images of 10 scenes) --, STL-10 -- a dataset similar to CIFAR-10 but with different classes, and SVHN \cite{svhn-dataset} -- an RGB dataset of $28\!\times\!28$ house-number images (73,257 training and 26,032 test images with 10 digits) --. We downsample each image of TinyImageNet, LSUN and STL-10 to size $32\!\times\!32$. All details about architectures, training algorithms and datasets are available in \cref{appxB}.

Along with simultaneous detection results, we provide separate results for misclassifications detection and OOD detection respectively in \cref{chap5:tab:comparative_experiment}. On OOD detection, Mahalanobis and KLoSNet outperform other methods, including second-order measures. ODIN also fails to deliver here as logits are small due to regularization in the evidential training objective. Mut. Inf. and other spread-based second-order uncertainty measures fall short to detect correctly OOD. Indeed, for settings where OOD training data is not available, there is no guarantee that every OOD sample will result in lower predicted concentration parameters as previously shown by the density plot of precision $\alpha_0$ in \cref{chap5:fig:intro_density_plot}. This stresses the importance of class-wise divergence-based measure.

While Mahalanobis may sometimes be slightly better than KLoSNet for OOD detection, it performs significantly less well in misclassification detection, in line with the behavior shown in synthetic experiments. As a result, KLoSNet appears to be the best measure in every simultaneous detection benchmark. For instance, for CIFAR-10/STL-10 with VGG-16, KLoSNet achieves $81.8\%$ AUROC while the second best, Mahalanobis, scores $78.8\%$.

\begin{table}[ht]
    \centering
    \caption[Results of simultaneous detection on CIFAR-10 and CIFAR-100]{\textbf{Comparative experiments on CIFAR-10 and CIFAR-100.} Misclassification (Mis.), out-of-distribution (OOD) and simultaneous (Mis+OOD) detection results (mean \% AUROC and std. over 5 runs). Bold type indicates significantly best performance ($p{<}0.05$) according to paired t-test.}
	\resizebox{0.95\linewidth}{!}{%
    	\begin{tabular}{c l|c|cc|cc|cc}
    		\toprule
   		    & & & \multicolumn{2}{c|}{\textbf{LSUN}} & \multicolumn{2}{c|}{\textbf{TinyImageNet}} & \multicolumn{2}{c}{\textbf{STL-10}} \\
    		& \textbf{Method} & Mis. & OOD  & Mis+OOD & OOD  & Mis+OOD & OOD  & Mis+OOD \\
    		\midrule
    		\multirow{12}{*}{\rotatebox{90}{\shortstack[c]{\ubold{CIFAR-10} \\ VGG-16}}} & MCP & 87.6 {\small $\pm 1.6$} & 79.7 {\small $\pm 1.1$} & 84.9 {\small $\pm 1.1$} & 80.3 {\small $\pm 1.5$} & 85.2 {\small $\pm 1.5$} & 60.3 {\small $\pm 1.2$} & 75.2 {\small $\pm 1.4$} \\
    		& Entropy & 83.5 {\small $\pm 2.4$} & 83.8 {\small $\pm 0.3$} & 87.9 {\small $\pm 0.2$} & 82.3 {\small $\pm 0.4$} & 87.2 {\small $\pm 0.4$} & 60.1 {\small $\pm 1.2$} & 75.0 {\small $\pm 1.4$} \\
    		& ConfidNet & 90.2 {\small $\pm 0.8$} & 82.1 {\small $\pm 1.5$} & 87.6 {\small $\pm 1.1$} & 83.5 {\small $\pm 0.6$} & 88.3 {\small $\pm 0.7$} & 61.5 {\small $\pm 1.6$} & 77.2 {\small $\pm 1.1$} \\
    		& Dissonance & 91.9 {\small $\pm 0.2$} & 84.8 {\small $\pm 0.3$} & 90.1 {\small $\pm 0.1$} & 84.2 {\small $\pm 0.2$} & 89.7 {\small $\pm 0.1$} & 64.1 {\small $\pm 0.1$} & 79.6 {\small $\pm 0.1$}\\
    		& Mut. Inf. & 84.1 {\small $\pm 1.5$} & 84.6 {\small $\pm 0.6$} & 85.1 {\small $\pm 1.0$} & 80.6 {\small $\pm 0.8$} & 83.4 {\small $\pm 1.1$} & 61.3 {\small $\pm 0.8$} & 65.0 {\small $\pm 2.5$} \\
    		& Diff. Ent. & 86.8 {\small $\pm 1.0$} & 85.6 {\small $\pm 0.5$} & 87.2 {\small $\pm 0.7$} & 82.7 {\small $\pm 0.7$} & 85.8 {\small $\pm 0.8$} & 62.0 {\small $\pm 1.0$} & 75.4 {\small $\pm 1.3$}\\
    		& EPKL & 83.9 {\small $\pm 1.5$} & 84.5 {\small $\pm 0.7$} & 85.1 {\small $\pm 1.0$} & 80.4 {\small $\pm 0.8$} & 83.2 {\small $\pm 1.2$} & 61.3 {\small $\pm 0.8$} & 73.8 {\small $\pm 1.1$}\\
    		& Diss.+Mut. Inf. & 92.0 {\small $\pm 0.2$} & 86.5 {\small $\pm 0.3$} & 89.8 {\small $\pm 0.2$} & 83.6 {\small $\pm 0.3$} & 89.5 {\small $\pm 0.3$} & 63.6 {\small $\pm 0.5$} & 79.4 {\small $\pm 0.4$} \\
    		& ODIN & 86.0 {\small $\pm 2.0$} & 79.5 {\small $\pm 1.2$} & 83.8 {\small $\pm 1.5$} & 79.6 {\small $\pm 1.9$} & 84.0 {\small $\pm 2.0$} & 54.7 {\small $\pm 1.5$} & 65.0 {\small $\pm 2.6$} \\
    		& Mahalanobis & 91.2 {\small $\pm 0.3$} & \ubold{88.9} {\small $\pm 0.2$} & \ubold{91.3} {\small $\pm 0.1$} & \ubold{86.4} {\small $\pm 0.2$} & 90.2 {\small $\pm 0.1$} & 63.4 {\small $\pm 0.2$} & 78.8 {\small $\pm 0.3$} \\
    		&\cellcolor[gray]{.92}KLoSNet (Ours) &\cellcolor[gray]{.92}\ubold{92.5} {\small $\pm 0.6$} &\cellcolor[gray]{.92}87.6 {\small $\pm 0.9$} &\cellcolor[gray]{.92}\ubold{91.7} {\small $\pm 0.9$} &\cellcolor[gray]{.92}\ubold{86.6} {\small $\pm 0.9$} &\cellcolor[gray]{.92}\ubold{91.2} {\small $\pm 0.8$} &\cellcolor[gray]{.92}\ubold{67.7} {\small $\pm 1.4$} &\cellcolor[gray]{.92}\ubold{81.8} {\small $\pm 0.9$} \\

    		\midrule

    		\multirow{12}{*}{\rotatebox{90}{\shortstack[c]{\ubold{CIFAR-10} \\ ResNet-18}}} & MCP & 84.9 {\small $\pm 0.8$} & 79.6 {\small $\pm 1.0$} & 83.0 {\small $\pm 0.9$} & 77.2 {\small $\pm 0.7$} & 81.8 {\small $\pm 0.7$} & 58.5 {\small $\pm 1.2$} & 72.5 {\small $\pm 0.4$} \\
    		& Entropy & 84.6 {\small $\pm 0.8$} & 79.6 {\small $\pm 1.1$} & 82.8 {\small $\pm 0.9$} & 77.2 {\small $\pm 0.7$} & 81.6 {\small $\pm 0.7$} & 58.4 {\small $\pm 1.2$} & 72.2 {\small $\pm 0.4$} \\
    		& ConfidNet & 90.7 {\small $\pm 0.4$} & 84.6 {\small $\pm 1.1$} & 88.6 {\small $\pm 0.6$} & 83.5 {\small $\pm 1.1$} & 88.0 {\small $\pm 0.6$} & 63.2 {\small $\pm 1.2$} & 77.9 {\small $\pm 0.5$} \\
    		& Dissonance & 92.9 {\small $\pm 0.4$} & 90.3 {\small $\pm 0.4$} & 92.7 {\small $\pm 0.4$} & 87.7 {\small $\pm 0.3$} & 91.4 {\small $\pm 0.3$} & 67.3 {\small $\pm 0.5$} & 81.2 {\small $\pm 0.4$} \\
    		& Mut. Inf & 80.6 {\small $\pm 0.6$} & 77.0 {\small $\pm 1.2$} & 79.4 {\small $\pm 0.9$} & 74.3 {\small $\pm 0.8$} & 78.0 {\small $\pm 0.7$} & 56.4 {\small $\pm 1.0$} & 69.1 {\small $\pm 0.2$} \\
    		& Diff. Ent & 82.7 {\small $\pm 0.6$} & 78.3 {\small $\pm 1.2$} & 81.1 {\small $\pm 0.9$} & 75.9 {\small $\pm 0.8$} & 79.9 {\small $\pm 0.7$} & 57.5 {\small $\pm 1.1$} & 70.8 {\small $\pm 0.3$} \\
    		& EPKL & 80.2 {\small $\pm 0.6$} & 76.8 {\small $\pm 1.3$} & 79.0 {\small $\pm 0.9$} & 74.1 {\small $\pm 0.8$} & 77.7 {\small $\pm 0.7$} & 56.2 {\small $\pm 1.0$} & 68.9 {\small $\pm 0.3$} \\
    		& Diss.+Mut. Inf. & 92.4 {\small $\pm 0.5$} & 86.7 {\small $\pm 1.0$} & 90.1 {\small $\pm 0.8$} & 84.3 {\small $\pm 0.5$} & 88.8 {\small $\pm 0.6$} & 65.2 {\small $\pm 0.7$} & 80.3 {\small $\pm 0.4$} \\
    		& ODIN & 83.7 {\small $\pm 0.7$} & 78.9 {\small $\pm 1.0$} & 81.9 {\small $\pm 0.9$} & 76.5 {\small $\pm 0.7$} & 80.7 {\small $\pm 0.7$} & 57.9 {\small $\pm 1.2$} & 71.5 {\small $\pm 0.4$} \\
    		& Mahalanobis & 91.2 {\small $\pm 0.4$} & 90.7 {\small $\pm 0.4$} & 91.8 {\small $\pm 0.3$} & 87.6 {\small $\pm 0.4$} & 90.3 {\small $\pm 0.4$} & 66.8 {\small $\pm 0.5$} & 80.0 {\small $\pm 0.3$} \\
    		&\cellcolor[gray]{.92}KLoSNet (Ours) &\cellcolor[gray]{.92}\ubold{93.9} {\small $\pm 0.4$} &\cellcolor[gray]{.92}\ubold{93.1} {\small $\pm 1.1$} &\cellcolor[gray]{.92}\ubold{94.4} {\small $\pm 0.3$} &\cellcolor[gray]{.92}\ubold{90.6} {\small $\pm 0.6$} &\cellcolor[gray]{.92}\ubold{93.2} {\small $\pm 0.2$} &\cellcolor[gray]{.92}\ubold{68.5} {\small $\pm 0.3$} &\cellcolor[gray]{.92}\ubold{82.3} {\small $\pm 0.2$} \\
    		
    		\midrule
    		
    		\multirow{12}{*}{\rotatebox{90}{\shortstack[c]{\ubold{CIFAR-100} \\ VGG-16}}} & MCP & 82.9 {\small $\pm 0.8$} & 62.8 {\small $\pm 1.3$} & 77.6 {\small $\pm 0.9$} & 72.0 {\small $\pm 0.5$} & 81.8 {\small $\pm 0.7$} & 69.7 {\small $\pm 0.7$} & 80.9 {\small $\pm 0.7$} \\
    		& Entropy & 82.2 {\small $\pm 0.8$} & 63.2 {\small $\pm 1.4$} & 77.2 {\small $\pm 1.0$} & 72.5 {\small $\pm 0.6$} & 81.5 {\small $\pm 0.8$} & 70.1 {\small $\pm 0.8$} & 80.6 {\small $\pm 0.7$} \\
    		& ConfidNet & 84.4 {\small $\pm 0.6$} & 65.3 {\small $\pm 2.0$} & 80.0 {\small $\pm 1.3$} & 73.8 {\small $\pm 0.6$} & 83.7 {\small $\pm 0.7$} & 71.5 {\small $\pm 0.6$} & 82.7 {\small $\pm 0.3$} \\
    		& Dissonance & 84.1 {\small $\pm 0.4$} & 62.5 {\small $\pm 1.4$} & 78.7 {\small $\pm 0.8$} & 70.3 {\small $\pm 0.4$} & 82.5 {\small $\pm 0.4$} & 69.3 {\small $\pm 0.4$} & 82.2 {\small $\pm 0.4$} \\
    		& Mut. Inf. & 78.9 {\small $\pm 0.8$} & 65.6 {\small $\pm 0.7$} & 76.2 {\small $\pm 0.9$} & 71.8 {\small $\pm 0.2$} & 79.1 {\small $\pm 0.4$} & 70.1 {\small $\pm 0.6$} & 78.5 {\small $\pm 0.6$} \\
    		& Diff. Ent. & 80.2 {\small $\pm 0.8$} & 65.6 {\small $\pm 0.9$} & 77.2 {\small $\pm 0.8$} & 72.7 {\small $\pm 0.3$} & 80.4 {\small $\pm 0.4$} & 71.0 {\small $\pm 0.5$} & 79.7 {\small $\pm 0.5$}\\
    		& EPKL & 78.8 {\small $\pm 0.8$} & 65.2 {\small $\pm 1.0$} & 76.1 {\small $\pm 0.9$} & 71.6 {\small $\pm 0.2$} & 78.9 {\small $\pm 0.4$} & 70.0 {\small $\pm 0.6$} & 78.3 {\small $\pm 0.6$}\\
    		& Diss.+Mut. Inf. & 84.2 {\small $\pm 0.6$} & 65.1 {\small $\pm 0.3$} & 80.1 {\small $\pm 0.4$} & 70.1 {\small $\pm 0.3$} & 82.5 {\small $\pm 0.5$} & 69.5 {\small $\pm 0.3$} & 82.3 {\small $\pm 0.5$} \\
    		& ODIN & 82.1 {\small $\pm 0.8$} & 62.9 {\small $\pm 1.4$} &77.1 {\small $\pm 1.0$} & 71.9 {\small $\pm 0.6$} & 81.3 {\small $\pm 0.8$} & 69.6 {\small $\pm 0.8$} & 80.3 {\small $\pm 0.7$} \\
    		& Mahalanobis & 84.0 {\small $\pm 0.2$} & \ubold{71.1} {\small $\pm 1.0$} & 82.4 {\small $\pm 0.5$} & \ubold{77.0} {\small $\pm 0.5$} & 84.9 {\small $\pm 0.3$} & \ubold{75.4} {\small $\pm 0.3$} & 84.3 {\small $\pm 0.5$} \\ 
    		&\cellcolor[gray]{.92}KLoSNet (Ours) & \cellcolor[gray]{.92}\ubold{86.7} {\small $\pm 0.4$} &\cellcolor[gray]{.92}68.4 {\small $\pm 1.1$} &\cellcolor[gray]{.92}\ubold{83.0} {\small $\pm 0.6$} &\cellcolor[gray]{.92}76.4 {\small $\pm 0.4$} &\cellcolor[gray]{.92}\ubold{86.4} {\small $\pm 0.4$} &\cellcolor[gray]{.92}\ubold{75.0} {\small $\pm 0.5$} &\cellcolor[gray]{.92}\ubold{86.0} {\small $\pm 0.4$} \\
    		
    		\midrule

            \multirow{12}{*}{\rotatebox{90}{\shortstack[c]{\ubold{CIFAR-100} \\ ResNet-18}}} & MCP & 84.0 {\small $\pm 0.4$} & 70.4 {\small $\pm 0.9$} & 81.0 {\small $\pm 0.3$} & 76.6 {\small $\pm 0.5$} & 83.6 {\small $\pm 0.4$} & 75.4 {\small $\pm 0.5$} & 83.1 {\small $\pm 0.2$} \\
    		& Entropy & 83.7 {\small $\pm 0.4$} & 70.4 {\small $\pm 0.9$} & 80.8 {\small $\pm 0.3$} & 76.9 {\small $\pm 0.5$} & 83.5 {\small $\pm 0.3$} & 75.7 {\small $\pm 0.5$} & 83.0 {\small $\pm 0.3$} \\
    		& ConfidNet & \ubold{87.1} {\small $\pm 0.2$} & 73.0 {\small $\pm 1.4$} & \ubold{84.5} {\small $\pm 0.6$} & 79.1 {\small $\pm 0.3$} & 86.8 {\small $\pm 0.3$} & \ubold{78.5} {\small $\pm 0.8$} & \ubold{86.6} {\small $\pm 0.5$} \\
    		& Dissonance & 86.7 {\small $\pm 0.4$} & 72.3 {\small $\pm 0.4$} & 84.0 {\small $\pm 0.2$} & 75.0 {\small $\pm 0.4$} & 85.3 {\small $\pm 0.4$} & 74.7 {\small $\pm 0.3$} & 85.2 {\small $\pm 0.2$} \\
    		& Mut. Inf & 82.6 {\small $\pm 0.4$} & 70.2 {\small $\pm 1.1$} & 80.0 {\small $\pm 0.4$} & 76.4 {\small $\pm 0.6$} & 82.6 {\small $\pm 0.3$} & 75.1 {\small $\pm 0.5$} & 82.1 {\small $\pm 0.3$} \\
    		& Diff. Ent & 83.0 {\small $\pm 0.4$} & 70.1 {\small $\pm 1.1$} & 80.2 {\small $\pm 0.4$} & 76.8 {\small $\pm 0.5$} & 83.0 {\small $\pm 0.3$} & 75.6 {\small $\pm 0.5$} & 82.5 {\small $\pm 0.3$} \\
    		& EPKL & 82.5 {\small $\pm 0.4$} & 70.2 {\small $\pm 1.1$} & 80.0 {\small $\pm 0.4$} & 76.3 {\small $\pm 0.6$} & 82.5 {\small $\pm 0.3$} & 75.0 {\small $\pm 0.5$} & 82.0 {\small $\pm 0.2$} \\
    		& Diss.+Mut. Inf. & 86.5 {\small $\pm 0.4$} & 71.8 {\small $\pm 0.8$} & 83.6 {\small $\pm 0.5$} & 76.1 {\small $\pm 0.3$} & 84.7 {\small $\pm 0.4$} & 75.2 {\small $\pm 0.5$} & 84.6 {\small $\pm 0.3$} \\
    		& ODIN & 83.7 {\small $\pm 0.4$} & 70.3 {\small $\pm 0.9$} & 80.8 {\small $\pm 0.3$} & 76.6 {\small $\pm 0.5$} & 83.5 {\small $\pm 0.3$} & 75.4 {\small $\pm 0.5$} & 83.0 {\small $\pm 0.3$} \\
    		& Mahalanobis & 85.9 {\small $\pm 0.4$} & \ubold{75.2} {\small $\pm 0.6$} & \ubold{84.5} {\small $\pm 0.1$} & 78.4 {\small $\pm 0.5$} & 85.9 {\small $\pm 0.3$} & 77.5 {\small $\pm 0.4$} & 85.6 {\small $\pm 0.3$} \\
    		&\cellcolor[gray]{.92}KLoSNet (Ours) &\cellcolor[gray]{.92}86.9 {\small $\pm 0.3$} &\cellcolor[gray]{.92}73.1 {\small $\pm 0.4$} &\cellcolor[gray]{.92}\ubold{84.4} {\small $\pm 0.1$} &\cellcolor[gray]{.92}\ubold{80.8} {\small $\pm 0.2$} &\cellcolor[gray]{.92}\ubold{87.3} {\small $\pm 0.2$} &\cellcolor[gray]{.92}\ubold{79.0} {\small $\pm 0.2$} &\cellcolor[gray]{.92}\ubold{86.7} {\small $\pm 0.3$} \\
    		
    		\bottomrule
    	\end{tabular}
    }
    \label{chap5:tab:comparative_experiment}
\end{table}

\clearpage

We also observe that KLoSNet significantly improves misclassification detection, even compared to dedicated methods such as ConfidNet or second-order measures related to class confusion, \eg, dissonance. Another baseline could be to combine two measures specialized respectively for class confusion and lack of evidence, such as Dissonance+Mut.Inf. But it still performs less well than KLoSNet.

\paragraph{Impact of Confidence Learning.} To evaluate the effect of the uncertainty measure KLoS and of the auxiliary confidence network KLoSNet, we report a detailed ablation study in \cref{chap5:tab:ablation_study}. We can notice that KLoSNet improves misclassification over KLoS but also OOD detection in every benchmark. We intuit that learning to improve misclassification detection also helps to spot some OOD inputs that share similar characteristics.

\begin{table}[t]
    \centering
    \caption[Impact of confidence learning]{\textbf{Impact of confidence learning.} Comparison of detection performances (\% AUROC) between KLoS and KLoSNet for CIFAR-10 and CIFAR-100 experiments with VGG-16 architecture.}
	\resizebox{\linewidth}{!}{%
    	\begin{tabular}{cl|c|cc|cc|cc}
    		\toprule
    		& & & \multicolumn{2}{c|}{\textbf{LSUN}} & \multicolumn{2}{c|}{\textbf{TinyImageNet}} & \multicolumn{2}{c}{\textbf{STL-10}}\\
    		& \textbf{Method} & Mis. & OOD & Mis+OOD & OOD & Mis+OOD & OOD & Mis+OOD \\
    		\midrule
    		\multirow{2}{*}{\shortstack[c]{\ubold{CIFAR-10} \\ VGG-16}}
    		& KLoS & 92.1 {\small $\pm 0.3$} & 86.5 {\small $\pm 0.3$} & 91.2 {\small $\pm 0.2$} & 85.4 {\small $\pm 0.3$} & 90.4 {\small $\pm 0.2$} & 64.1 {\small $\pm 0.3$} & 79.6 {\small $\pm 0.3$} \\
    		& KLoSNet & \ubold{92.5} {\small $\pm 0.6$} & \ubold{87.6} {\small $\pm 0.9$} & \ubold{91.7} {\small $\pm 0.9$} & \ubold{86.6} {\small $\pm 0.9$} & \ubold{91.2} {\small $\pm 0.8$} & \ubold{67.7} {\small $\pm 1.4$} & \ubold{81.8} {\small $\pm 0.9$} \\
    		\midrule
    		\multirow{2}{*}{\shortstack[c]{\ubold{CIFAR-100} \\ VGG-16}}
    		& KLoS & 85.4 {\small $\pm 0.2$} & 65.1 {\small $\pm 1.1$} & 81.3 {\small $\pm 0.6$} & 74.5 {\small $\pm 0.4$} & 85.4 {\small $\pm 0.4$} & 72.7 {\small $\pm 0.3$} & 84.8 {\small $\pm 0.4$} \\
    		& KLoSNet & \ubold{86.7} {\small $\pm 0.4$} & \ubold{68.4} {\small $\pm 1.1$} & \ubold{83.0} {\small $\pm 0.6$} & \ubold{76.4} {\small $\pm 0.4$} & \ubold{86.4} {\small $\pm 0.4$} & \ubold{75.0} {\small $\pm 0.5$} & \ubold{86.0} {\small $\pm 0.4$} \\
    		\bottomrule
    	\end{tabular}
    }
    \label{chap5:tab:ablation_study}
\end{table}

\begin{wrapfigure}{r}{0.55\textwidth}
\centering
    \captionsetup{justification=centering}

    \vspace{-0.6cm}
    \includegraphics[width=\linewidth]{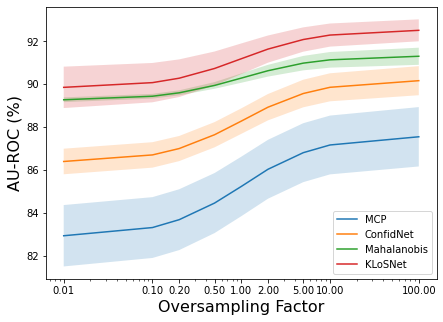}
    \caption[Study of the impact of the oversampling factor]{\textbf{Impact of the oversampling factor $\kappa$} (CIFAR-10/TinyImageNet).}
    \vspace{-0.6cm}
    \label{chap5:fig:varying_oversampling}
\end{wrapfigure}
\paragraph{Oversampling Factor.} When deploying a model in the wild, it is difficult to know beforehand the proportions of misclassifications and OOD samples the model will have to handle. Until now, we assumed an equal proportion in order to evaluate equally the capacity to detect both kinds of inputs. In \cref{chap5:fig:varying_oversampling}, we vary the oversampling factor $\kappa$ in $[0.01, 100]$ for CIFAR10/TinyImageNet to assess the robustness of tested methods and our approach. The higher the oversampling factor is, the more misclassifications will be sampled in the test set, hence giving importance to misclassification detection, and vice versa. Results show that regardless of the value chosen for oversampling, KLoSNet consistently outperforms all other measures, with a larger gain when $\kappa$ increases.

\paragraph{Combining KLoS with Ensembling.} Aggregating predictions from an ensemble of neural networks not only improves generalization \cite{deepensembles2017,rame2021dice} but also uncertainty estimation \cite{ovadia2019}. We train an ensemble of ten evidential models on CIFAR-10 and evaluate the performance of various uncertainty measures -- MCP, predictive entropy, precision, mutual information, and KLoS -- obtained from averaged concentration parameters. Results for each detection task with CIFAR-10/TinyImageNet benchmark are available in \cref{chap5:fig:plot_ensembling}. As expected, every method obtains improved performance when computed from an ensemble of models. The gain is particularly pronounced for OOD detection: for instance performance with precision $\alpha_0$ is improved by $+8.0$ and with mutual information by $+7.8$ points. These gains are due to the diversity in predictions provided by ensembling which helps to better capture epistemic uncertainty, as explained in \cref{chap2}. While improvements with KLoS are less significant, KLoS remains the best measure in each detection task with respectively $93.8\%$ AU-ROC in misclassification detection, $88.7\%$ in OOD detection and $92.3\%$ in the joint detection. One possible explanation is that KLoS was already capturing effectively epistemic uncertainty and the improvement with ensembling may consequently be less significant.

\begin{figure}[t]
\centering
\begin{minipage}[c]{0.33\linewidth}
\centering
    \includegraphics[width=\linewidth]{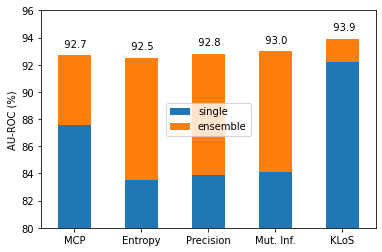}
    \subcaption{Mis. detection}
    \label{chap5:fig:ensembling_mis}
\end{minipage}%
\begin{minipage}[c]{0.33\linewidth}
\centering
    \includegraphics[width=\linewidth]{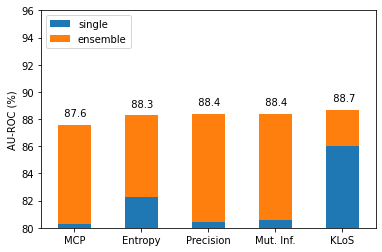}
    \subcaption{OOD detection}
    \label{chap5:fig:ensembling_ood}
\end{minipage}%
\begin{minipage}[c]{0.33\linewidth}
\centering
    \includegraphics[width=\linewidth]{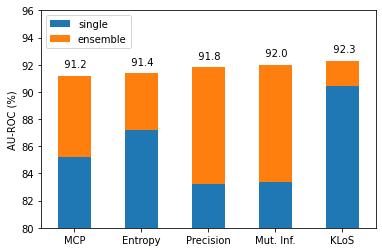}
    \subcaption{Mis.+OOD detection}
    \label{chap5:fig:ensembling_misood}
\end{minipage}%
    \caption[Comparative gain with ensembling for each detection task on CIFAR10 vs. TinyImageNet]{\textbf{Comparative gain with ensembling for each detection task on CIFAR10 vs. TinyImageNet.} Ensembling improves performances with every tested method, in particular in OOD detection (\cref{chap5:fig:ensembling_ood}). KLoS remains the best method when combined with ensembling.}
\label{chap5:fig:plot_ensembling}
\end{figure}

\subsection{Selective classification in presence of distribution shifts}
\label{chap5:subsec:detail_results_selective}

\begin{figure}[t]
\begin{minipage}{\linewidth}
\centering
    \centering
    \includegraphics[width=0.9\linewidth]{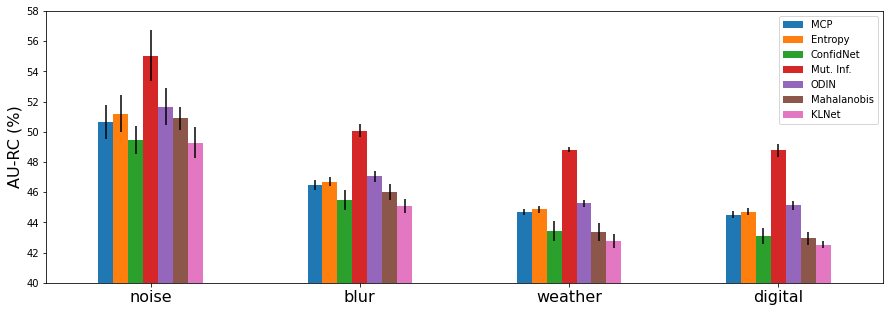}
    \subcaption{CIFAR-10-C}
    \label{chap5:fig:results_cifar10c}
\end{minipage}
\begin{minipage}{\linewidth}
\centering
    \centering
    \includegraphics[width=0.9\linewidth]{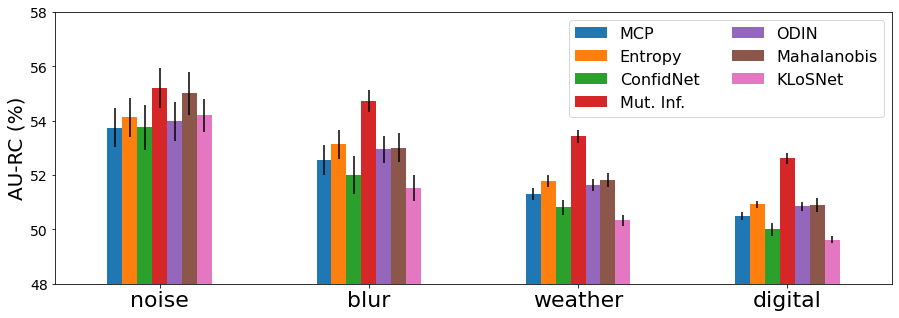}
    \subcaption{CIFAR-100-C}
    \label{chap5:fig:results_cifar100c}
\end{minipage}
\label{chap5:fig:exp_select}
    \caption[Aggregated results for selective classification on CIFAR-10-C and CIFAR-100-C]{\textbf{Aggregated results for selective classification on CIFAR-10-C and CIFAR-100-C}. Comparative performance in AURC (\%) of classification with the option to reject misclassified test samples and samples from shifted distributions. Results are averaged on 5 runs (mean $\pm$ std.).}
\end{figure}

Classification with a reject option, also known as \emph{selective classification} \cite{elyaniv10a}, consists in a scenario where a classifier can abstain on samples where its confidence is below a certain threshold. This is appropriate for applications where the system can hand over to human experts or users. Performance can be measured on \textit{risk-coverage} curves. We recall evaluation metrics in the following and refer the reader to \cref{chap2:sec:evaluation} for a more detail description. The \textit{coverage} is the probability mass of the non-rejected region in $\cX$ and can be empirically estimated by the percentage of the non-rejected samples. The \textit{risk} of a selective classifier is the average loss on the accepted samples.  Given a chosen coverage, good selective classifiers correlate with low risk. Averaged performances are evaluated on risk-coverage curves with a threshold-independent area-under-curve metric, denoted here AURC. The lower the AURC, the better the selective classifier.

Previous works evaluate the performance on in-distribution data. However, a classifier may encounter data drawn from a different distribution when deployed in the wild. Following \cite{wilds2020}, we extend selective classification by penalizing non-rejected OOD samples. If a sample is drawn from the in-distribution, we compute the 0/1 cost function as usual. For OOD samples, we apply the maximum cost of 1, whatever the prediction. As for simultaneous detection, we rely on oversampling to mitigate the unbalance between misclassifications and OOD samples.

Experiments are conducted with previously trained VGG-16 networks on CIFAR-100.  We measure their selective classification when subject to distribution shifts by considering CIFAR-100C \cite{hendrycks2018benchmarking} as OOD dataset. This dataset is constructed by corrupting the original CIFAR-100 test set. There is a total of 15 types of corruptions, which can be grouped into four families, namely \textit{noise}, \textit{blur}, \textit{weather} and \textit{digital}. Each corruption comes with five different levels of severity. While this dataset is commonly used to measure robustness to distribution shift, we focus here on models' ability to reject these samples along with misclassifications made on the original CIFAR-100 test set.

The results are reported by corruption families (noise, blur, weather and digital) in \cref{chap5:fig:exp_select} and further detailed in \cref{appxB:tab:CIFAR-10_exp_select}. One common observation regardless of the criterion is that selective classification is harder when subject to noise perturbations than other types of perturbation. In each case, KLoSNet and ConfidNet obtain the best performances. For instance, for weather perturbations on CIFAR-10-C, KloSNet achieves 42.7\% AURC and ConfidNet 43.4\% AURC. In particular, KloSNet outperforms every other method for blur, weather and digital perturbations of CIFAR-100-C. Hence, when subject to an unforeseen distribution shift, a model equipped with KLoSNet provides more accurate uncertainty estimates without sacrificing predictive performances. Note that for noise corruptions, the results depend widely on the run, which makes interpretation more difficult.

\subsection{Impact of Adversarial Perturbations}
\label{appxB:subsec:adv_perturb}

In the original papers, ODIN and Mahalanobis preprocess inputs by adding small inverse adversarial perturbations to reinforce networks in their prediction; this has also the observed benefit to make in-distribution and out-of-distribution samples more separable. The tuning of the adversarial noise's magnitude depends on the evaluated OOD data.

In \cref{appxB:fig:adv_perturb_svhn}, we plot the AUC of each detection task with different values of perturbation magnitude $\varepsilon$ with ODIN, Mahalanobis and KLoS, using SVHN as OOD dataset. Even though there exists a particular noise value for improved OOD detection (\cref{appxB:fig:adv_perturb_svhn}, middle), increasing noise magnitude deteriorates performances in misclassification detection (\cref{appxB:fig:adv_perturb_svhn}, left) for each method. The best results on the simultaneous detection task (\cref{appxB:fig:adv_perturb_svhn}, right) correspond to $\varepsilon=0$, as done in experiments presented in previous sections.

Except with SVHN, adversarial perturbations are detrimental even to OOD detection. We report the AUC results of varying adversarial perturbations on CIFAR-10 dataset when using LSUN (\cref{appxB:fig:adv_perturb_lsun}), TinyImageNet (\cref{appxB:fig:adv_perturb_tinyin}) and STL-10 (\cref{appxB:fig:adv_perturb_st10}) as OOD datasets. The best results on each considered task correspond to $\varepsilon=0$ and KLoS outperforms both Mahalanobis and ODIN. As opposed to results with SVHN as OOD dataset, we did not observe improvements on any method (ODIN, Mahalanobis and KLoS) when using inverse adversarial perturbations for OOD detection with LSUN, TinyImageNet and STL-10 datasets. Similar results are observed in \cite{odin2018} (Appendix B, Fig.\,8) when using WideResNet architectures.

\subsection{Effect of training with out-of-distribution samples}
\label{chap5:sec:ood_training}

Previous results demonstrate that simultaneous detection of misclassifications and OOD samples can be significantly improved by KLoSNet. We now investigate settings where OOD samples are available. We train an evidential model to minimize the reverse KL divergence \cite{malinin2019} between the model output and a sharp Dirichlet distribution focused on the predicted class for in-distribution samples, and between the model output and a uniform Dirichlet distribution for OOD samples. This loss induces low concentration parameters for OOD data and improves second-order uncertainty measures such as Mut.\,Inf

The literature on evidential models only deals with an OOD training set somewhat related to the in-distribution dataset, \eg~{CIFAR-100 for models trained on CIFAR-10}. Despite semantic differences, CIFAR-10 and CIFAR-100 images were collected the same way, which might explain the generalisation to other OOD samples in evaluation.

\begin{figure}[ht]
\centering
\begin{minipage}{0.85\linewidth}
    \centering
    \includegraphics[width=0.90\linewidth]{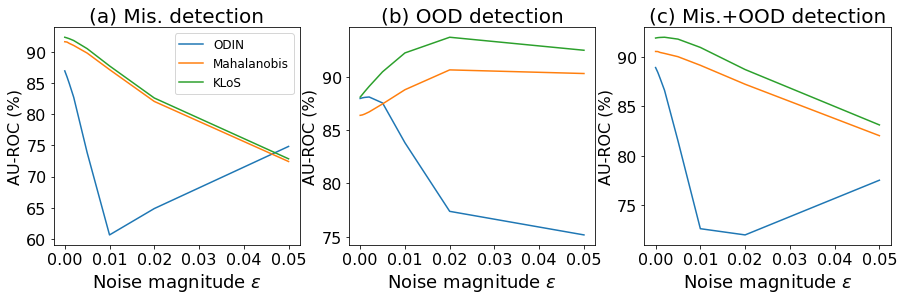}
    \subcaption{CIFAR-10 / SVHN}
    \label{appxB:fig:adv_perturb_svhn}
\end{minipage}\hfill
\begin{minipage}{0.85\linewidth}
    \centering
    \includegraphics[width=0.90\linewidth]{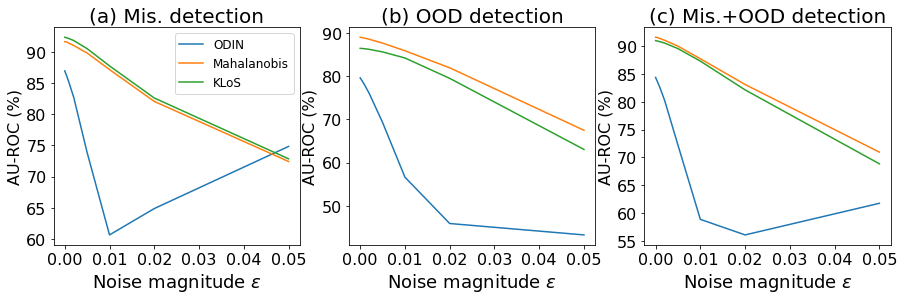}
    \subcaption{CIFAR-10 / LSUN}
    \label{appxB:fig:adv_perturb_lsun}
\end{minipage}\hfill
\begin{minipage}{0.85\linewidth}
    \centering
    \includegraphics[width=0.90\linewidth]{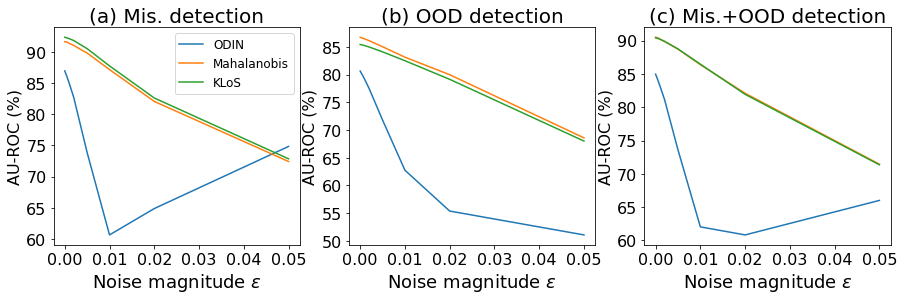}
    \subcaption{CIFAR-10 / TinyImageNet}
    \label{appxB:fig:adv_perturb_tinyin}
\end{minipage}\hfill
\begin{minipage}{0.85\linewidth}
    \centering
    \includegraphics[width=0.90\linewidth]{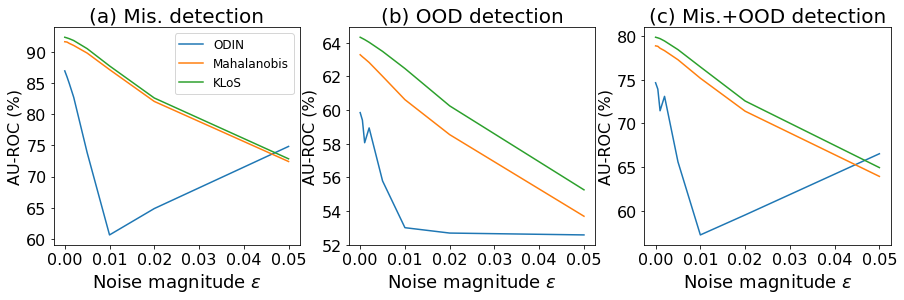}
    \subcaption{CIFAR-10 / STL-10}
    \label{appxB:fig:adv_perturb_st10}
\end{minipage}
    \caption[Effect of inverse adversarial perturbations on OOD-designed measures and KLoS]{Effect of inverse adversarial perturbations on OOD-designed measures and KLoS for misclassification detection, OOD detection and simultaneous detection with VGG-16 architecture.}
\label{appxB:fig:adv_perturb_others}
\end{figure}

\clearpage

Contrarily, CIFAR-10 objects and SVHN street-view numbers differ more for instance. In \cref{chap5:fig:compa_OOD_training}, we vary the OOD training set and compare the uncertainty metrics taken from the resulting models.

\begin{figure}[t]
\centering
\begin{minipage}[c]{0.25\linewidth}
\centering
    \includegraphics[width=\linewidth]{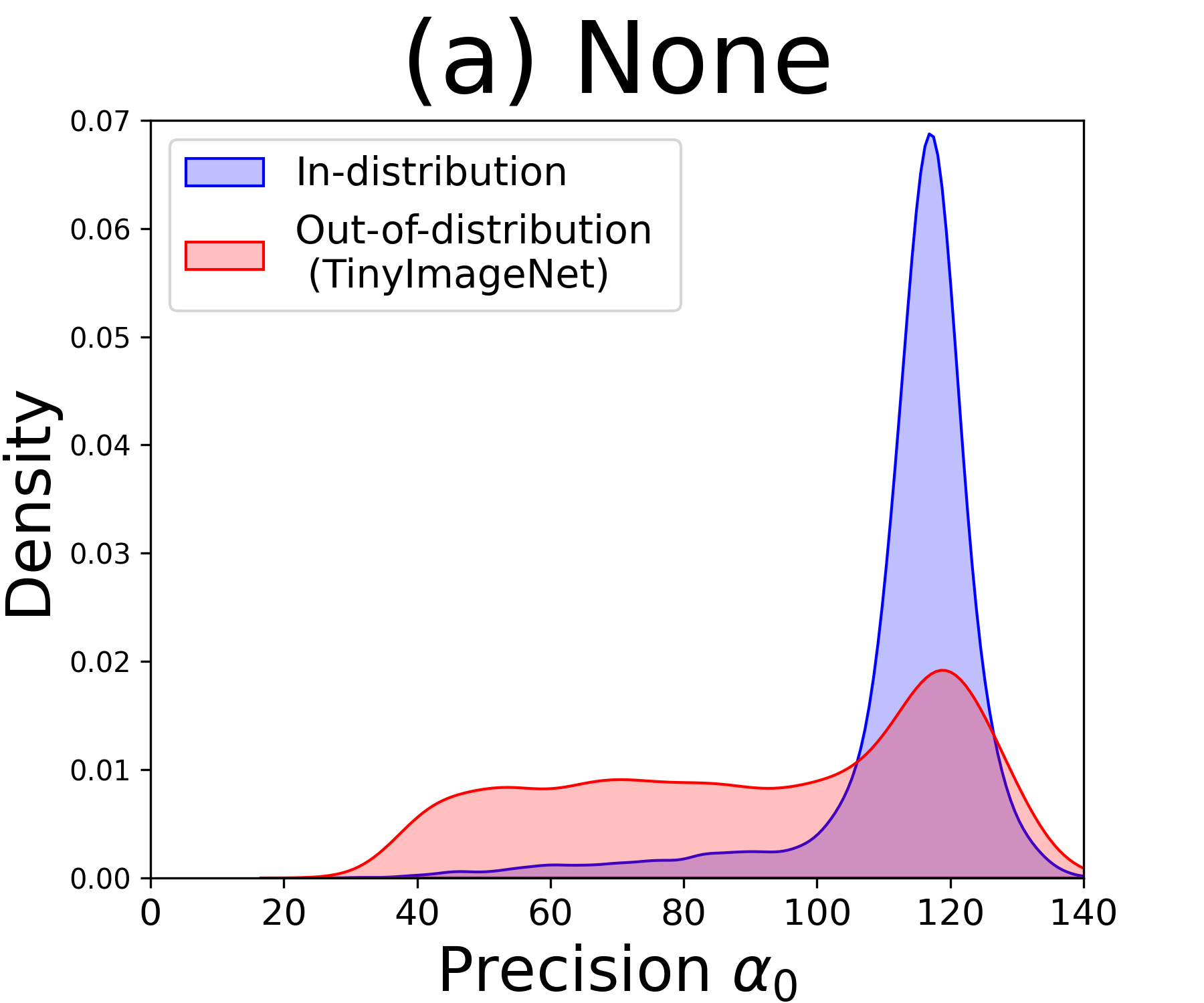}
\end{minipage}%
\begin{minipage}[c]{0.25\linewidth}
\centering
    \includegraphics[width=\linewidth]{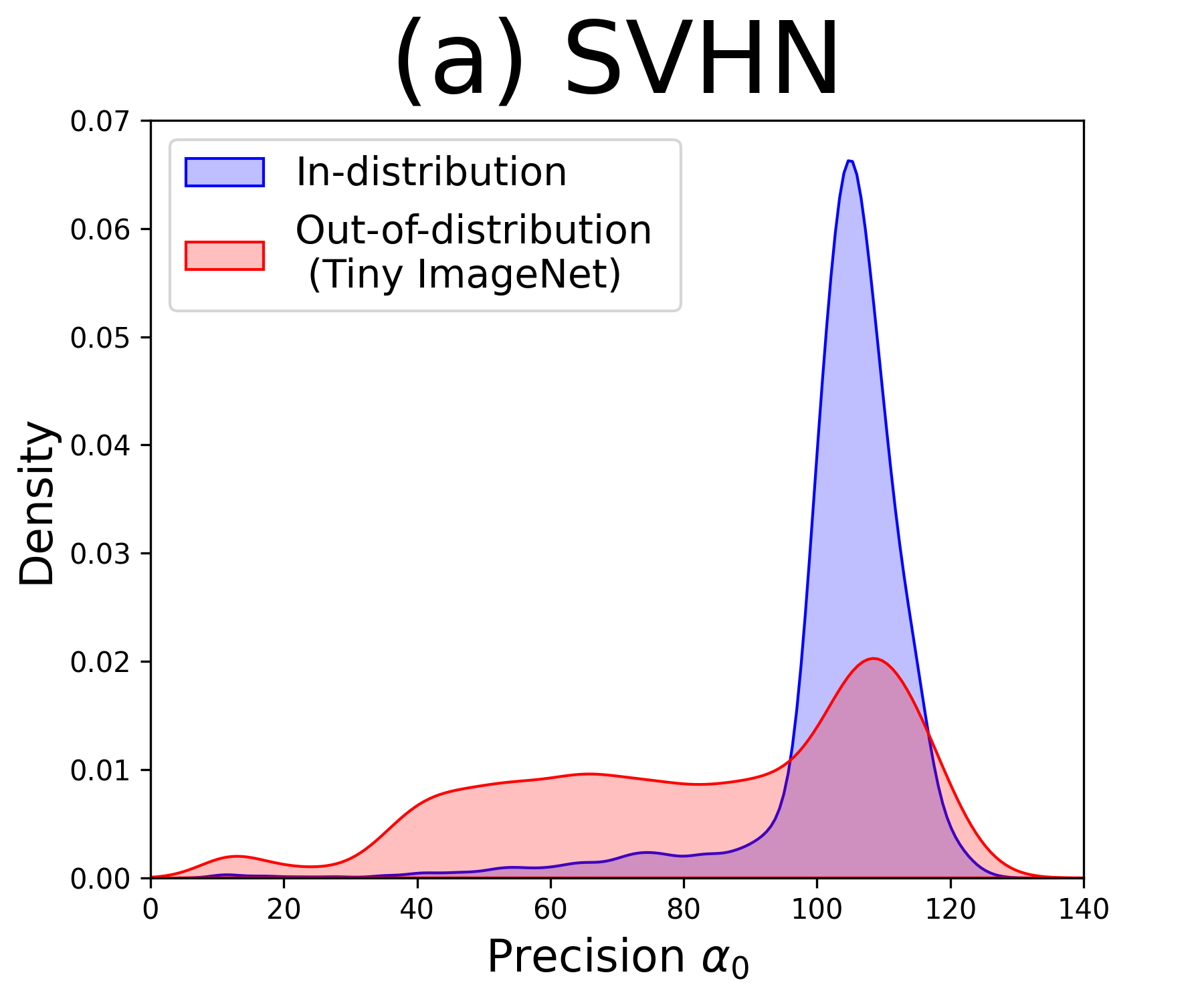}
\end{minipage}%
\begin{minipage}[c]{0.25\linewidth}
\centering
    \includegraphics[width=\linewidth]{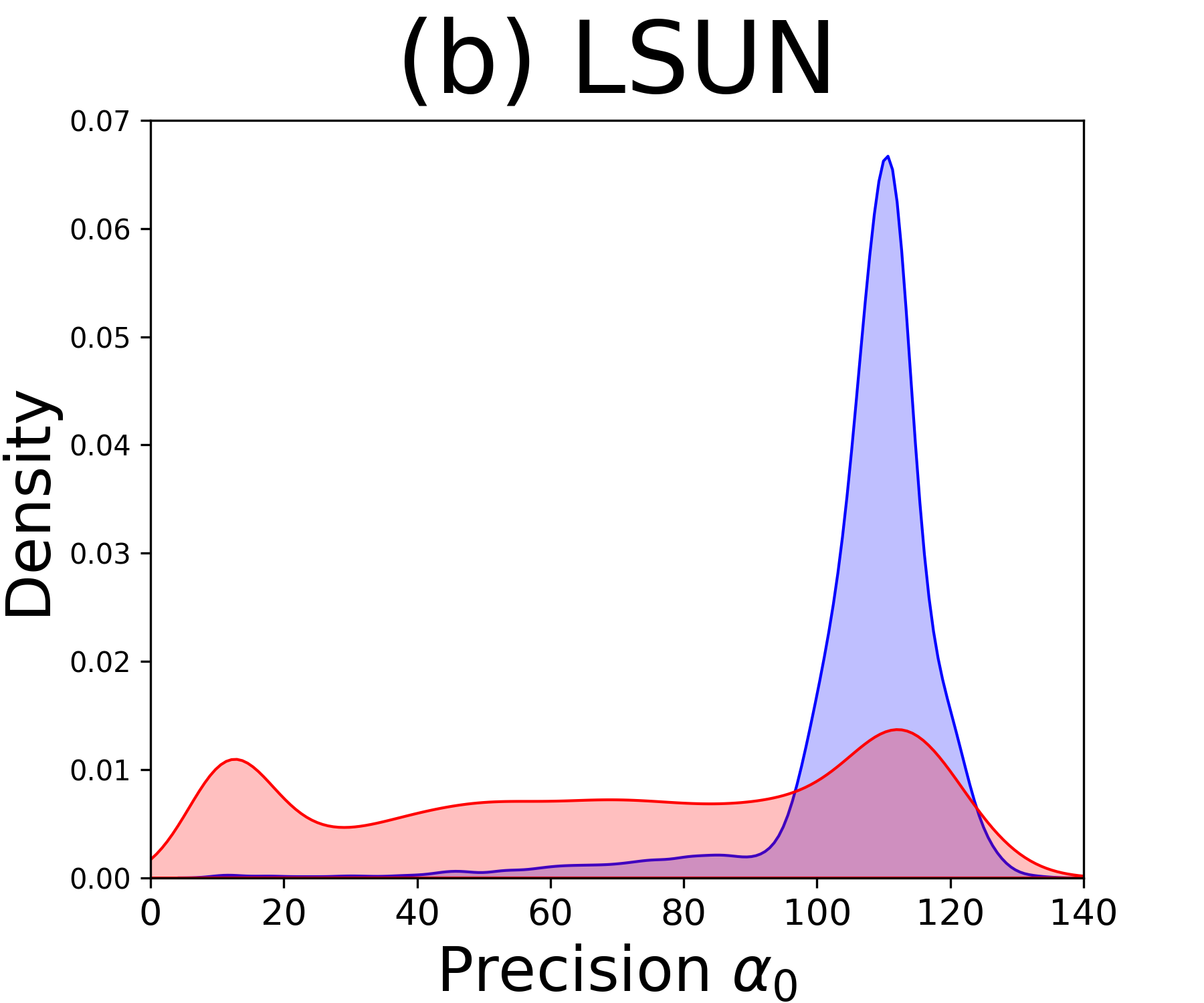}
\end{minipage}%
\begin{minipage}[c]{0.25\linewidth}
\centering
    \includegraphics[width=\linewidth]{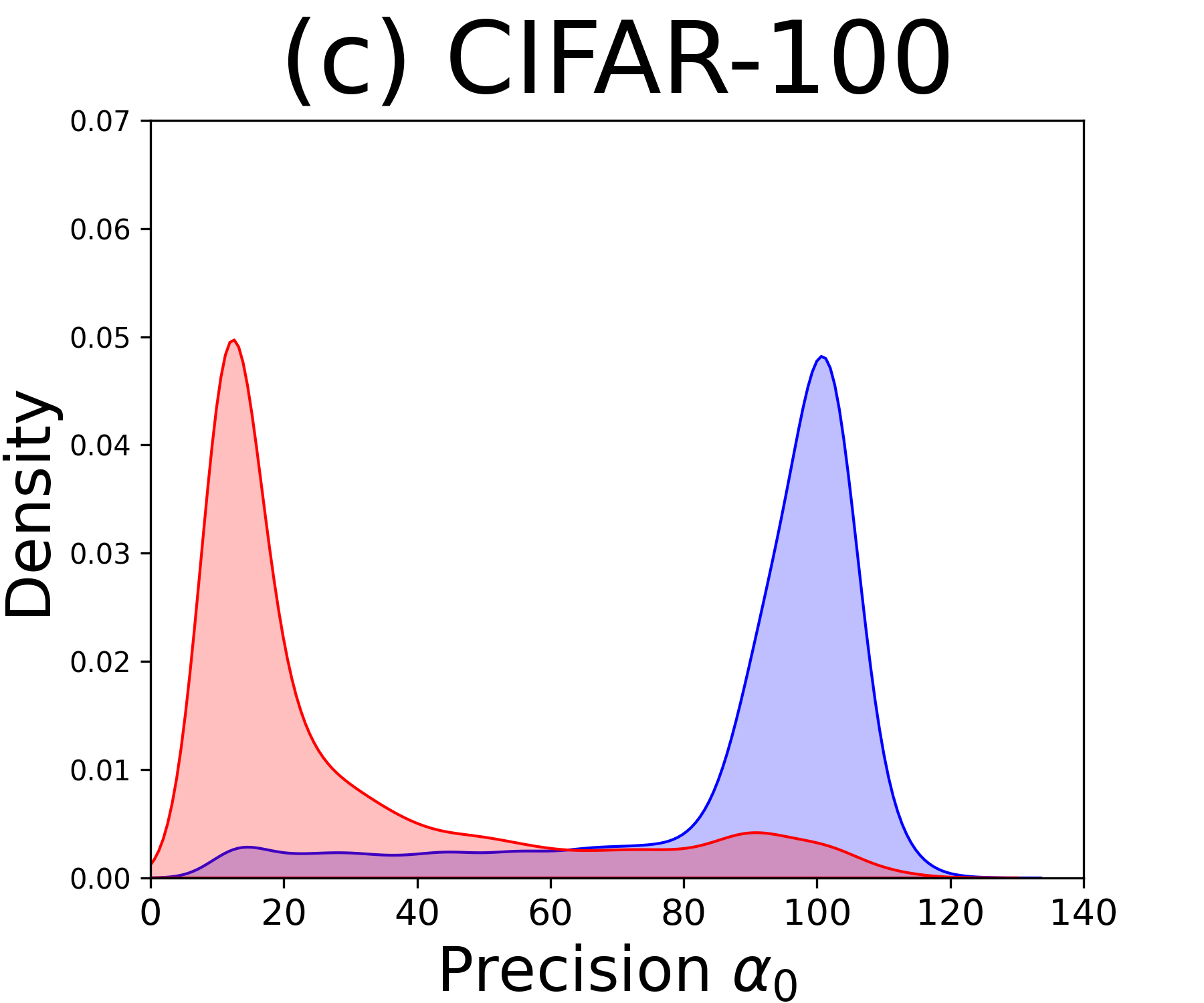}
\end{minipage}%
\caption[Visualisation of the effect of OOD training data on precision $\alpha_0$ for CIFAR-10 and CIFAR-100 datasets]{\textbf{Effect of OOD training data on precision $\alpha_0$.} Density plots for CIFAR-10/TinyImageNet benchmark: (a) without OOD training data, (b,c) with inappropriate OOD samples (SVHN, LSUN); (d) with close OOD samples (CIFAR-100).} 
\label{chap5:fig:density_plot_alpha0}
\end{figure}

\begin{figure}[ht]
\centering
\hspace{-1.0cm}
\begin{minipage}[c]{0.33\linewidth}
\centering
    \includegraphics[width=1.15\linewidth]{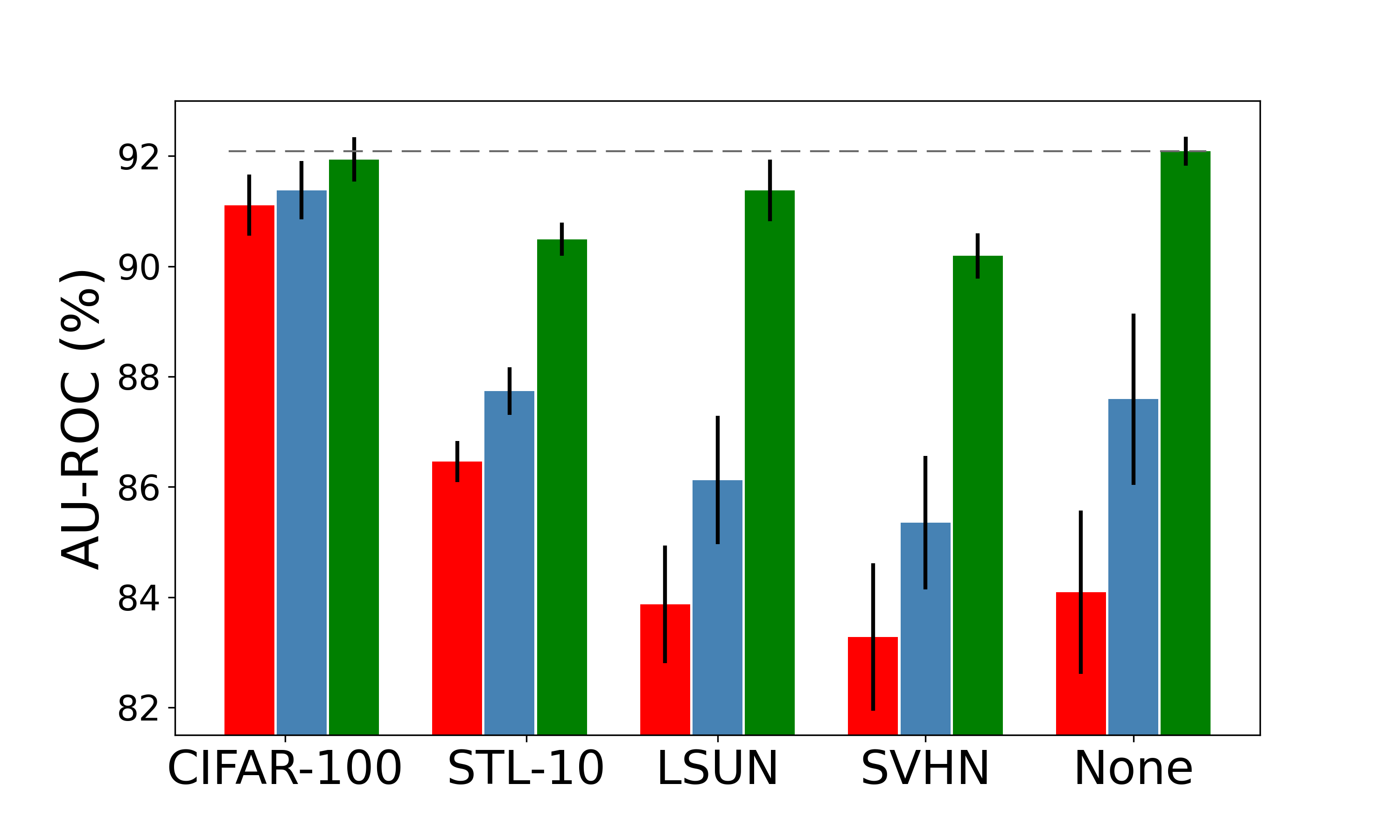}
    \subcaption{Mis. detection}
    \label{chap5:fig:compa_OOD_training_mis}
\end{minipage}%
\begin{minipage}[c]{0.33\linewidth}
\centering
    \includegraphics[width=1.15\linewidth]{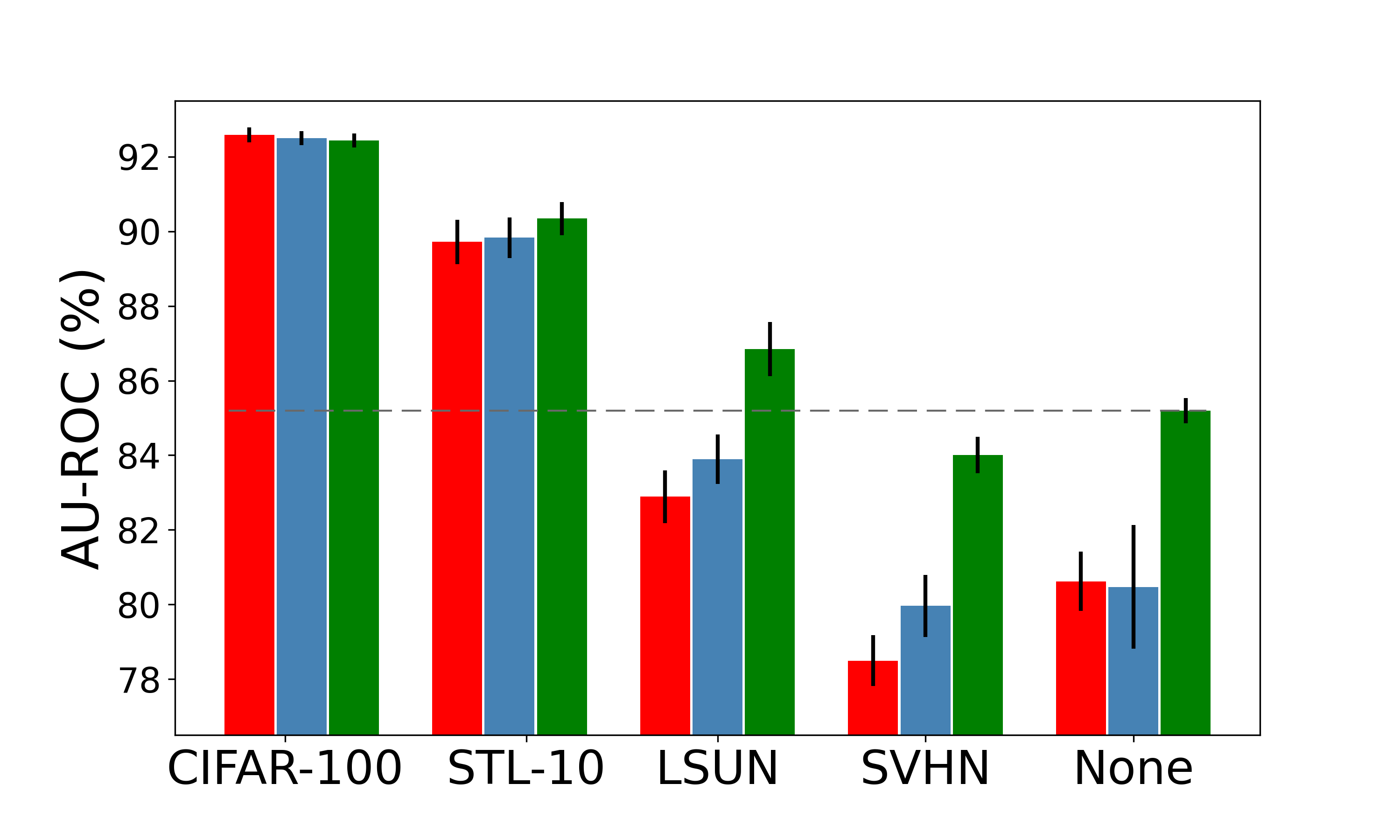}
    \subcaption{OOD detection}
    \label{chap5:fig:compa_OOD_training_ood}
\end{minipage}%
\begin{minipage}[c]{0.33\linewidth}
\centering
    \includegraphics[width=1.15\linewidth]{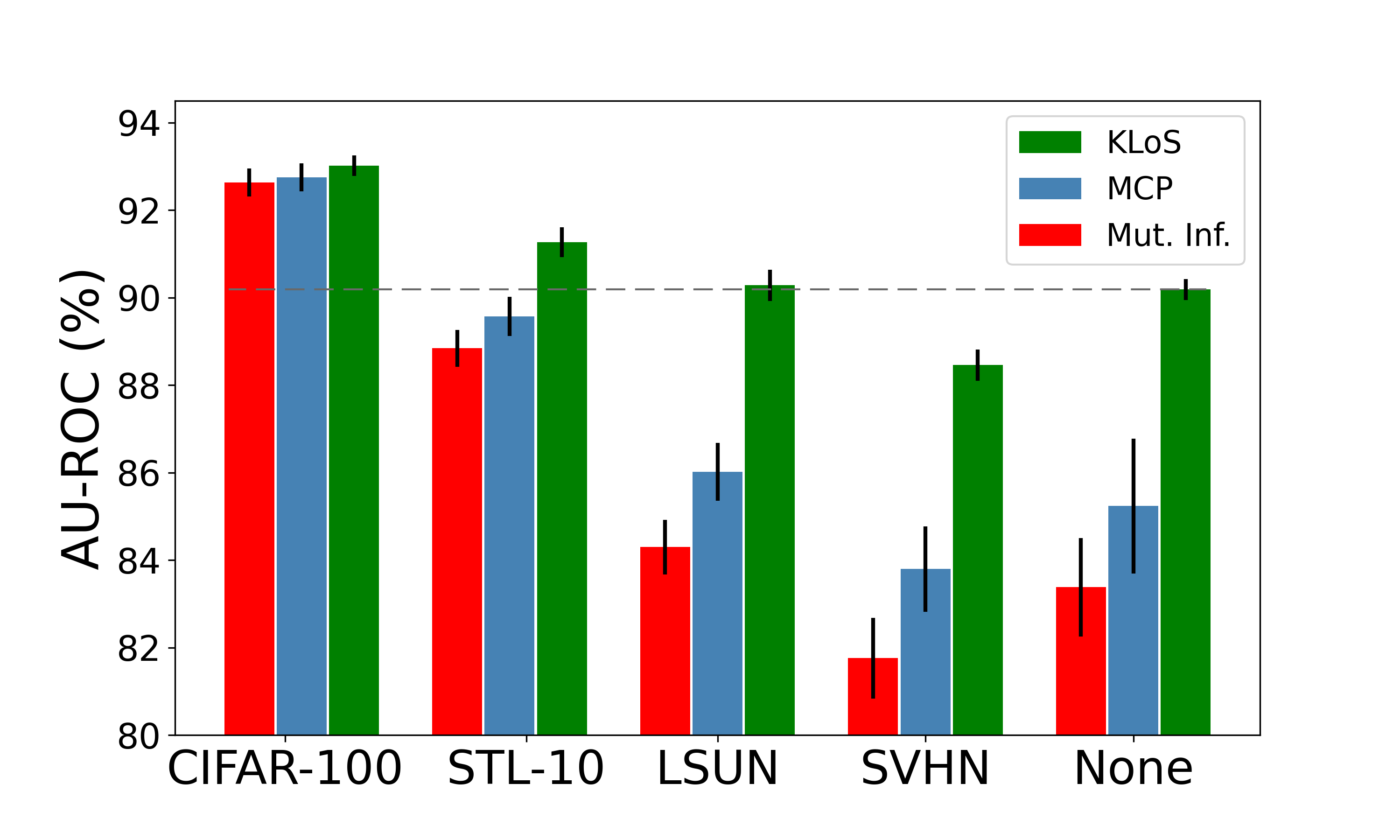}
    \subcaption{Mis.+OOD detection}
    \label{chap5:fig:compa_OOD_training_misood}
\end{minipage}%
    \caption[Comparative detection results with different OOD training datasets]{\textbf{Comparative detection results with different OOD training datasets.} While using OOD samples in training improves performance in general, the gain varies widely, sometimes even being negative for inappropriate OOD samples \eg,~SVHN. KLoS remains the best measure in every setting. Experiment with VGG-16 architecture on CIFAR-10 dataset.}
\label{chap5:fig:compa_OOD_training}
\end{figure}

As expected, using CIFAR-100 as training OOD data improves performance for every measure (MCP, Mut.\,Inf. and KLoS). However, the boost provided by training with OOD samples depends highly on the chosen dataset: The performance of Mut.\,Inf. decreases from 92.6\% AUC with CIFAR-100 to 82.9\% when switching to LSUN, and even becomes worse with SVHN (78.5\%) compared to using no OOD data (80.6\%). Indeed, \cref{chap5:fig:density_plot_alpha0} shows that only the CIFAR-100 dataset seems to be effective to enforce low $\alpha_0$ on unseen OOD samples.

We also note that KLoS outperforms or is on par with MCP and Mut.\,Inf. in every setting. These results confirm the adequateness of KLoS for simultaneous detection and extend our findings to settings where OOD data is available at train time. Most importantly, using KLoS on models without OOD training data yields better detection performance than other measures taken from models trained with inappropriate OOD samples, here being every OOD dataset other than CIFAR-100.

\section{Conclusion}
\label{chap5:sec:conclusion}

 Based on evidential models, we define \emph{KLoS}, a Kullback–Leibler divergence criterion defined on the class-probability simplex. By design, KLoS encompasses both class confusion and evidence information, which is necessary for open-world recognition. We adapt our learning confidence approach to evidential models and proposed KLoSNet, an auxiliary model to estimate the uncertainty of a classifier for both in-domain and out-of-domain inputs. KLoSNet is trained to predict the KLoS$^*$ value of a prediction. Our experiments extensively demonstrate its effectiveness across various architectures, datasets and configurations, and reveal its class-wise divergence-based behavior. We also show that, far from being the panacea, using training OOD samples depends critically on the choice of these samples for existing uncertainty measures. KLoS, on the other hand, is more robust to this choice and can alleviate their use altogether.

\chapter{Conclusion and Perspectives}
\label{chap6}
We first summarize the contributions that we proposed in this thesis before
discussing interesting directions for future work.

\section{Summary of contributions}
\label{chap6:sec:summary}

The main contribution of this thesis is to use an auxiliary confidence model to learn prediction confidence from deep neural networks in classification. Given a trained classification model, the confidence model learns from data to estimate an adequate criterion derived from the classifier, such as the true class probability for standard neural networks and KLoS for evidential models. At test time, we directly use the confidence model's output as our uncertainty estimate. One major benefit of this method is to be architecture-agnostic: in our experiments, we successfully improve uncertainty estimation for classification models with different deep learning architectures (MLP, LeNet, VGGs, ResNets). We applied our approach on three tasks: failure prediction (\cref{chap3}), unsupervised domain adaptation for semantic segmentation (\cref{chap4}), and simultaneous detection of in-distribution errors and out-of-distribution samples (\cref{chap5}). For each task, there are two main challenges to address: (1) which criterion should we use, and (2) how to efficiently train the confidence model. 

\paragraph{Failure prediction with learned confidence.} \cref{chap3} starts by detailing the fundamental limit of maximum class probability (MCP), which yields over-confident uncertainty estimates for misclassifications. We define the true class probability (TCP) as an alternative measure which provides a better ranking between correct predictions and misclassifications than MCP. As the true class is unknown at test time, we introduce \emph{ConfidNet}, an auxiliary confidence neural network trained to learn TCP from data. ConfidNet consists in a small decoder neural network composed of several dense layers and initially sharing the same ConvNet encoder as the classification model. ConfidNet's learning scheme consists in training the auxiliary network to regress TCP values and then enabling the fine-tuning of its encoder by decoupling it from the classification' encoder. We were able to improve the capacity of the model to distinguish correct from erroneous samples and to achieve better selective classification with many different architectures and for each image classification experiment. In the long history of classifiers with a reject option, our contribution on learning a model's confidence can be seen as a specific case of selective classification, where the selection function is based on an independent neural network to define the underpinning confidence-rate.

\paragraph{Selection of confident pseudo-labels for domain adaptation.} \cref{chap4} shows that reliable confidence estimates are key to improve self-training approaches in domain adaptation. We transpose the idea of learning confidence via an auxiliary model and we select relevant pixels for pseudo-labeling based on confidence estimates output by this auxiliary model. In a manner analogous to ConfidNet, we learn to regress to TCP from training data. The proposed adaptation of our original approach to this new context, termed \emph{ConDA}, involves an ‘atrous’ pyramidal pooling architecture with structured output to perform multi-scale confidence estimation and we adopt an adversarial learning scheme which enforces alignment between confidence maps in source and target domains. Results showed significant improvements from strong baselines in each benchmark.

\paragraph{Detecting errors and out-of-distribution samples with evidential models.} Finally, we extend our learning confidence via auxiliary models to the context of simultaneous detection of in-distribution errors and out-of-distribution samples. It first requires defining a criterion that captures aleatoric and epistemic uncertainty in a single score. As a Bayesian approach, evidential models enrich uncertainty representation with evidence information and allows one to fulfill the previous requirement by deriving second-order measures on the class-probability simplex. Consequently, we defined \emph{KLoS}, a KL divergence criterion between a model's output and a class-wise prototype Dirichlet distribution focused on the predicted class. By  design, KLoS encompasses both class confusion and evidence information, thanks to its class-wise divergence-based behavior. An auxiliary model, \emph{KLoSNet}, is then trained to predict a refined criterion, KLoS*, measuring KL divergence with a prototype based on the true class of an input. Across various architectures, datasets and configurations, KLoSNet improves performance on the joint detection and reveals itself to be more robust to the type of OOD samples in scenarios allowing this type of auxiliary training data.

\section{Perspectives for future work}
\label{chap6:sec:perspectives}

Let us now discuss interesting directions that could be addressed in future
work in relation to our contributions.

\subsection{Error data generation to ease confidence learning}
\label{chap6:subsec:generate_data}

Confidence learning showed significant improvements over strong baselines in uncertainty estimation for each considered work. Nevertheless, the training of the auxiliary model depends on the quality of the dataset, \ie the number of errors available. Modern neural networks are over-parameterized and tend to over-fit training data, hence achieving high accuracy on training sets and leaving only a small fraction of misclassified samples. We believe this data imbalance issue mitigates performance in confidence learning. In \cref{chap3}, we experimented training ConfidNet on a hold-out dataset. We observed a general performance drop when using a validation set for training TCP confidence. The drop is especially pronounced for small datasets (MNIST). Consequently, with a high accuracy and a small validation set, we do not get a larger absolute number of errors using the validation set compared to the train set. In preliminary experiments, we also tried a weighted MSE loss (and a weighted binary cross-entropy) where the cost of wrong TCP estimates were higher for misclassifications than for correct predictions. But it did not result in improved performance either.

To mitigate the imbalance issue, another solution would be to artificially generate errors. Adversarial perturbations \cite{goodfellow2015,madry2018towards} are small perturbations to the input that are almost imperceptible to humans but which fool a neural network, hence switching a correct prediction to an erroneous one. A combined set of genuine and adversarial inputs would help to re-balanced training. Mix-up \cite{zhang2018mixup}, and more generally aggressive data augmentation techniques such as AugMix \cite{hendrycks2020augmix} and CutMix \cite{yun2019cutmix} have been shown to improve robustness and could be also applied here to generate samples with mixed probabilities, hence providing a larger range of TCP values for confidence training.

\subsection{Confidence learning of an ensemble}
\label{chap6:subsec:ensemble}

As a simple alternative to fully Bayesian methods, ensembles have been a popular research topic within probabilistic methods \cite{Dietterich00ensemblemethods,Rokach2010,deepensembles2017}. With deep neural networks, not only they improved generalization but they also outperformed other Bayesian methods and a single model in uncertainty estimation \cite{ovadia2019}. In particular, diversity between individual NN's predictions allows an ensemble to better capture epistemic uncertainty: the more diverse predictions are, the more the model is uncertain about this input. Measures such as mutual information can be derived to evaluate this diversity. Yet, when it comes to failure prediction, previous methods rely on averaging the predictions into a single probability vector and deriving usual measures such as MCP and entropy.

\cref{chap3} proposed to improve failure prediction only on a single model as the auxiliary confidence model should be initially attached to an intermediate representation of the classifier. An interesting direction should be to combine the idea of confidence learning to the context of ensembles. The straightforward solution would be to train an auxiliary model to regress the TCP value of the average probability vector. But preliminary experiments showed difficulties in converging to regress such averaged values as each individual model behaves differently and we cannot rely on initialized weights from one arbitrary model. One could try to attach an auxiliary model to each member of the ensemble and train them separately before averaging outputs of all confidence models. A clever approach would imply leveraging the diversity in predictions to refine the criterion to estimate during confidence learning.

\subsection{Generative models for out-of-distribution detection}
\label{chap6:subsec:gan}

In \cref{chap5}, we highlighted the class-wise density estimator behaviour of KLoS, which is a crucial property in the absence of OOD training data to improve the simultaneous detection of misclassifications and OOD samples. Along with this contribution, our experiments also revealed that while performance of existing uncertainty measures are considerably improved by using training OOD samples, it also critically depends on the choice of these samples (\cref{chap5:sec:ood_training}).

Among methods using OOD samples when training deep classifier, Hendrycks \textit{et al.} \cite{hendrycks2019oe} propose to learn to classify in-distribution samples while producing high predictive entropy for OOD samples:
\begin{equation}
    \cL_{\text{OE}}(\boldsymbol{\theta}, \cD) =  \mathbb{E}_{(\x, y) \sim p_{\text{in}}} \big [ \log p(y \vert \x, \boldsymbol{\theta}) \big ] + \lambda \mathbb{E}_{\tilde{\x} \sim p_{\text{out}}} \big [ \bbH[p(y \vert \tilde{\x}, \boldsymbol{\theta})] \big ].
\end{equation}
Accordingly, they use predictive entropy to discriminate between in-distribution samples and OOD samples. It relies on the availability of a large OOD dataset, for instance 80 Million Tiny Images\footnote{\url{https://groups.csail.mit.edu/vision/TinyImages/}} with CIFAR-10 or CIFAR-100 as in-distribution dataset. With evidential models, multiple works \cite{malinin2018,malinin2019,maxgap2020} proposed a similar approach by enforcing OOD samples to have low precision $\alpha_0$:
\begin{equation}
    \cL_{\text{RKL}}(\btheta;\cD) = \mathbb{E}_{(\x,y) \sim p_{\text{in}}} \big [ \text{KL} \big ( \textrm{Dir}(\bpi \vert \alpha(\x, \btheta)~\|~ \textrm{Dir}(\bpi \vert \beta_{\text{in}}) \big ) \big ] + \lambda \mathbb{E}_{\tilde{\x} \sim p_{\text{out}}} \big [ \text{KL} \big ( \textrm{Dir}(\bpi \vert \alpha(\x, \btheta)~\|~ \textrm{Dir}(\bpi \vert \boldsymbol{1}) \big ) \big ].
\end{equation}

\begin{wrapfigure}{r}{0.30\textwidth}
    \centering
    \captionsetup{justification=centering}
    \vspace{-0.3cm}
    \includegraphics[width=0.95\linewidth]{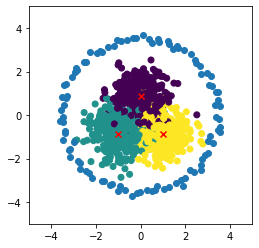}
    \caption{Illustration of ideal OOD training data on a toy dataset.}
    \vspace{-0.3cm}
    \label{chap6:fig:ideal_training_ood}
\end{wrapfigure}
The uncertainty measure used for OOD detection in that case is a dispersion measure, such as mutual information or precision.

Previous methods have been shown to be really effective to improve OOD detection. But while finding suitable OOD samples may be easy for some academic datasets, it may turn more problematic in real-world applications \cite{charpentier2020,sensoy2020}, with the risk of degrading performance with an inappropriate choice. 

Building a suitable OOD training set for real tasks is an open research perspective which could alleviate the need for real but hard-to-find OOD samples. To ensure good generalization to other types of anomalies, the main challenge is to produce samples which are close to the in-distribution, even at the boundary such as for the toy dataset shown in \cref{chap6:fig:ideal_training_ood}. Generative models such as proposed in \cite{lee2018training,sensoy2020,vernekar2019} could be an interesting solution to fulfill this problem.

\subsection{Further applications of confidence learning}
\label{chap6:subsec:applications}

In an analogous way to \cref{chap4}, the confidence learning approach could be applied to new contexts where the quality of uncertainty estimates is crucial.

\paragraph{Application to semi-supervised learning.} Unfortunately, the efficacy of deep neural networks depends on large quantities of accurately labeled training data. But the labeling process usually requires arduous and expensive efforts, which is one of the major limitations to train a fully-supervised deep neural network. If only a few labeled samples are available, it is challenging to build a successful ML system. In contrast, unlabeled data is usually abundant and can be easily or inexpensively obtained. Semi-supervised learning (SSL) \cite{SSLbook} is a learning paradigm that aims to improve learning performance from labeled data by using additional unlabeled instances. A family of approaches for SSL \cite{lee-icml2013,grandvalet-nips2005} proposed to infer pseudo-labels on unlabeled data and to re-train a network using these pseudo-labels. The key is to select the most confident labels. Obviously, due to the close connection between domain adaptation and semi-supervised learning methods, a natural extension to ConDA is to apply it in the latter context. 

\paragraph{Application to active learning.} An alternative to semi-supervised learning is active learning in which data are actively sampled to be labeled by human oracles with the goal of maximizing model performance while minimizing labeling costs. Various sampling strategies have been proposed for active learning over the years coming from different perspectives, e.g. uncertainty \cite{gal-active2017} and representativeness \cite{sener2018active}. Uncertainty-based methods are based on measures derived from the probability vector, such as entropy, MCP or margin sampling. Confidence learning could improve the selection of useful samples, such as done in a related work where the authors aim to learn the loss value \cite{Yoo_2019_CVPR}.

\paragraph{Application to multi-modal fusion.} The joint operation of several types of sensors is a key part of autonomous driving systems. Currently, the majority of systems are based on a late fusion due to safety reasons such as redundancy but also due to technical (vehicle network architecture) or commercial (use of several suppliers) constraints. As mentioned in \cref{chap1}, early multi-modal fusion could benefit from reliable uncertainty estimates. A system could rely more on a certain sensor or discard predictions from other sensors due to a low confidence estimate. On a related topic, Kendall \textit{et al.} \cite{kendall2017multi} showed that multi-task learning can be improved by weighting each task by a task-dependent uncertainty estimate. While this approach was developed for multi-output, it could be adapted in the scenario of multi-input and with confidence learning.

\subsection{From uncertainty estimation to robustness}
\label{chap6:subsec:robustness}

In this thesis, we focused on deriving reliable uncertainty estimates to ensure proper monitoring of deployed ML systems in real-world tasks. Alternatively, a whole part of AI safety literature aims to improve the robustness of deep neural networks to distribution shift. Mismatches in data distributions in test time compared to training time can cause a surprisingly large drop in predictive performance \cite{wilds2020}. Robustness to distribution shifts has recently been an increasingly popular topic among machine learning researchers \cite{Rusak2020a,hendrycks2020augmix,burns2021,robusttransformer2021}. For instance, ImageNet-trained models have been shown to lack robustness against common corruptions \cite{hendrycks2018benchmarking}, adversarial inputs \cite{goodfellow2015}, change in renditions \cite{Hendrycks_2021_ICCV} (\eg painting, embroidery, etc.). In contrast to the out-of-distribution samples studied in \cref{chap5}, all of these input manipulations do not change the semantic content of the input, and thus, machine learning models should not change their decision-making behavior in their presence. Consequently, the field is also sometimes referred to \emph{out-of-distribution generalization} \cite{Arjovsky2021PhD}.
A long-term perspective for this thesis is to leverage the tools developed for uncertainty estimation to improve out-of-distribution generalization.

\cleardoublepage\phantomsection
\addcontentsline{toc}{chapter}{Bibliography}
\thispagestyle{empty}
\clearpage
\bibliographystyle{IEEEtran}
\bibliography{references}
\thispagestyle{empty}

\chapter*{Résumé de la Thèse}
\label{synthese}
\markboth{\MakeUppercase{Résumé de la Thèse}}{\MakeUppercase{Résumé de la Thèse}}
\addcontentsline{toc}{chapter}{Résumé de la Thèse}
\section*{Introduction}
\label{synthese:sec:intro}

Depuis la victoire éclatante d'AlexNet \cite{krizhevsky2012imagenet}, une architecture de réseau de neurone convolutif, au Large Scale Visual Recognition Challenge (LSVRC) en 2012, l'apprentissage profond est omniprésent dans les domaines de la vision par ordinateur \cite{He2015, SSD2016,ChenPK0Y16}, du traitement du langage naturel \cite{MikolovKBCK10,DevlinCLT19}, de la reconnaissance vocale \cite{hinton2012deep,hannun2014speech} et de l'apprentissage par renforcement \cite{Silver_2016}. Les récentes percées en vision par ordinateur grâce à l'apprentissage profond expliquent aussi en grande partie le spectaculaire renouveau de la conduite autonome avec des acteurs technologiques majeurs comme Waymo, Tesla, Baidu et Yandex investissant dans des programmes de voitures autonomes. En tant qu'un des leaders mondiaux des capteurs automobiles, Valeo, qui finance cette thèse, se positionne au cœur de cette révolution actuelle, en développant des LiDAR de haute qualité avec leur technologie SCALA\textregistered. 

Si ces progrès sont indéniables, les robotaxis ne sont toujours pas déployés au moment de la rédaction de ce manuscrit et de nombreux défis doivent encore être résolus pour commercialiser la voiture autonome à grande échelle, notamment ceux liés à la sécurité. Les accidents survenant avec des voitures autonomes sont des exemples typiques où les répercussions peuvent être catastrophiques. Un exemple frappant est l'incident tragique qui s'est produit le 7 mai 2016 près de Williston (Floride, États-Unis) et qui a entraîné le premier décès causé par une voiture à assistance de conduite hautement automatisée \cite{NTSB2017}. Le constructeur automobile Tesla a déclaré qu'une des origines de l'accident était liée au système de vision qui avait incorrectement classé le côté blanc d'un camion-remorque comme un ciel éblouissant\footnote{\url{https://www.tesla.com/blog/tragic-loss}}. Le contrôle et la correct évaluation de la confiance du système dans ses prédictions semblent être plus que nécessaires pour déployer en toute sécurité des modèles d'apprentissage dans des environnements à fort enjeux \cite{mcallister2017}.

L'estimation de la confiance a une longue histoire en apprentissage automatique \cite{Chow1957AnOC,zaragoza1998confidence,Blatz2004,elyaniv10a}. Pourtant, une série de travaux récents ont montré que les réseaux de neurones (NNs) modernes souffrent de plusieurs inconvénients conceptuels qui les rendent peu fiables \cite{hendrycks17baseline,nguyen2015,Gal2016,kendall2015bayesian,Hein_2019_CVPR}. En classification, ils peinent à détecter les prédictions erronées \cite{hendrycks17baseline} et produisent des probabilités non calibrées \cite{guo2017}. Les NNs sont également connus pour être fragiles aux changements de distribution \cite{wilds2020}, leurs performances de prédiction diminuant sévèrement car ils ont tendance à s'appuyer sur des corrélations parasites \cite{Geirhos2020a}. Enfin, de nombreux travaux ont montré que les NNs fournissent des prédictions trop confiantes pour des échantillons loins des données d'entraînement \cite{Hein_2019_CVPR}, y compris des images trompeuses \cite{nguyen2015} ou adverses \cite{Szegedy2014}. 

Dans cette thèse, nous relevons le défi de fournir des estimations d'incertitude fiables pour les prédictions des réseaux de neurones profonds avec application en conduite autonome. En particulier, nous cherchons à améliorer la détection des prédictions erronées au moment du test en les distinguant des prédictions correctes. Les erreurs peuvent être de différentes natures et les contributions suivantes aborderont tout d'abord la tâche de détection d'erreur de classification. En plus de la détection de ces exemples au moment du test, nous élaborons également sur l'utilisation de l'approche proposée dans le cas de l'adaptation au domaine, où les approches d'auto-formation s'appuient sur les estimations d'incertitude pour sélectionner les échantillons dans la phase de ré-étiquetage. Enfin, nous considérons la présence d'anomalies et proposons de détecter simultanément les erreurs et les échantillons hors distribution à l'aide d'une seule mesure.

\section*{Apprentissage de confiance via un modèle auxiliaire}
\label{synthese:sec:confidnet}

La prédiction d'échec consiste à prédire à l'exécution si un modèle entraîné a pris une décision correcte ou non pour une entrée donnée. En détectant une prédiction erronée, un système peut décider de s'en tenir à la prédiction ou, au contraire, de la transmettre à un humain ou à un système de secours avec d'autres capteurs, ou simplement de déclencher une alarme. Étroitement liée à la prédiction d'échec, la classification avec option de rejet \cite{Chow1957AnOC}, également connue sous le nom de \emph{classification sélective}. \cite{elyaniv10a}, consiste en un scénario où le classifieur a la possibilité de rejeter une instance au lieu de prédire son étiquette. Ces deux tâches renvoient au même problème de \emph{classement ordinal}, qui vise à estimer les valeurs de confiance dont le classement des échantillons est efficace pour distinguer les prédictions correctes des prédictions incorrectes (voir \cref{chap3:fig:ordinal_ranking}).

Une méthode de référence largement utilisée avec les classifieurs de type réseaux de neurones consiste à prendre la valeur de la probabilité de la classe prédite, à savoir la \textit{probabilité de classe maximale} (MCP), donnée par la sortie de la couche softmax :
\begin{equation}
\mathrm{MCP}_F(\x) = 
\max_{k \in \cY} P(Y=k \vert \x, \hat{\btheta}) =  
\max_{k \in \cY} F(\x;\hat{\btheta})[k].
\end{equation} 
Cependant, en prenant la plus grande probabilité softmax comme estimation de confiance, MCP conduit à des valeurs de confiance élevées à la fois pour les prédictions correctes et erronées, ce qui rend difficile de leur distinction, comme le montre \cref{chap3:fig:density-plot-mcp}. Prendre l'entropie prédictive comme mesure d'incertitude n'est également pas adéquat. Dans \cref{chap3:fig:visu-entropy}, nous montrons côte à côte deux échantillons présentant une entropie similaire, issus d'un petit réseau convolutif entraîné sur SVHN, un jeu de données de numérotation urbaines\cite{svhn-dataset}. 

\paragraph{Probabilité de la Vrai Classe.}
Lorsque le modèle classifie mal un exemple, la probabilité associée à la vraie classe $y$ est inférieure à la probabilité maximale et risque d'être petite. Sur la base de cette simple observation, nous proposons de considérer plutôt cette \emph{probabilité de la vrai classe} comme une mesure de confiance appropriée.
\begin{equation}
    \mathrm{TCP}_F(\x,\,y) = P(Y=y \vert\,\x, \hat{\btheta}) = F(\x;\hat{\btheta}) [y].
\end{equation}

Dans \cref{chap3:fig:density-plot}, nous pouvons observer que TCP permet une bien meilleure séparation que MCP. En particulier, TCP offre les intéressantes garanties suivantes concernant sa capacité à caractériser les prédictions correctes ou erronées d'un modèle.
\begin{itemize}
    \item $\mathrm{TCP}_F(\bm{x},y) > 1/2$ $~\Rightarrow$ $f(\x) = y$, \ie l'exemple est correctement classé par le modèle ;
    \item $\mathrm{TCP}_F(\bm{x},y) < 1/K$ $\Rightarrow$ $f(\x) \neq y$, \ie l'exemple est mal classé par le modèle,
\end{itemize} 
Dans l'intervalle $[1/K, 1/2]$, rien ne garantit que les prédictions correctes et incorrectes ne se chevauchent pas en termes de TCP. Cependant, nous observons qu'avec des réseaux neuronaux profonds, la zone de chevauchement réelle est extrêmement petite en pratique, comme l'illustre \cref{chap3:fig:density-plot-tcp} sur le jeu de données CIFAR-10. Une explication possible vient du fait que les réseaux neuronaux profonds modernes produisent des prédictions trop confiantes et donc des probabilités non calibrées.

\paragraph{ConfidNet.}
L'utilisation du TCP comme mesure de confiance sur la sortie d'un modèle serait d'une grande aide lorsqu'il s'agit d'estimer de manière fiable sa confiance. Cependant, la vraie classe $y$ n'est évidemment pas disponible lors de l'estimation de la confiance sur les entrées de test. Nous proposons donc d'\emph{apprendre la confiance TCP à partir des données}. À cette fin, nous introduisons un \textit{modèle auxiliaire} $C$, avec des paramètres $\bomega$, qui est destiné à prédire $\text{TCP}_F$ et à agir comme une mesure de confiance pour la fonction de sélection $g$. 
Un aperçu de l'approche proposée est disponible dans \cref{chap3:fig:confidnet_network}. Ce modèle est entraîné de telle sorte qu'au moment de l'exécution, pour une entrée $\x \in \cX$ avec l'étiquette vraie (inconnue) $y$, nous avons : 
\begin{equation}
    C(\x;\bomega) \approx \text{TCP}_F(\x,y).
\end{equation}
En pratique, ce modèle auxiliaire $C$ est un réseau de neurone entraîné sous supervision complète sur $\cD$ pour produire cette estimation de confiance. Pour concevoir ce réseau, nous pouvons transférer les connaissances du réseau de classification déjà entraîné. Nous construisons ConfidNet sur une représentation intermédiaire tardive de $F$. ConfidNet est conçu comme un petit perceptron multicouche composé d'une succession de couches denses avec une activation sigmoïde finale qui produit $C(\bm{x};\bomega)\in[0,1]$. Comme nous voulons régresser un score entre $0$ et $1$, nous utilisons une fonction de perte d'erreur quadratique moyenne pour entraîner le modèle de confiance :
\begin{equation} 
\cL_{\text{conf}}(\bomega;\cD) = \frac{1}{N} \sum_{n=1}^N \big(C(\x_n;\bomega) - \text{TCP}_F(\x_n,y_n)\big)^2.
\label{synth:eq:loss-conf}
\end{equation}
Nous décomposons les paramètres du réseau de classification $F$ en $\btheta = (\btheta_{E}, \btheta_{\text{cls}})$, où $\btheta_{E}$ désigne les poids de son encodeur et $\btheta_{\text{cls}}$ les poids de ses dernières couches de classification. Comme dans l'apprentissage par transfert, l'apprentissage du réseau de confiance $C$ commence par fixer l'encodeur partagé et n'entraîner que les poids $\bphi$ de ConfidNet. Dans cette phase, la fonction de perte \cref{chap3:eq:loss-conf} est donc minimisée uniquement par rapport à $\bomega = \bphi$. Dans une deuxième phase, nous affinons le réseau complet $C$, y compris son encodeur qui est maintenant délié de l'encodeur de classification $E$ (le modèle de classification principal doit rester inchangé, par définition du problème traité). En désignant par $E'$ cet encodeur désormais indépendant, et par $\btheta_{E'}$ ses poids, cette seconde phase d'apprentissage optimise \cref{chap3:eq:loss-conf} en fonction de $\bomega = (\btheta_{E'}, \bphi)$ avec $\btheta_{E'}$ initialement fixé à $\btheta_{E}$. Nous désactivons également les couches de dropout dans cette dernière phase d'apprentissage et réduisons la vitesse d'entrainement afin d'atténuer les effets stochastiques qui pourraient amener le nouvel encodeur à trop s'écarter de l'encodeur original utilisé pour la classification. L'augmentation des données peut donc encore être utilisée. ConfidNet peut être entraîné en utilisant soit l'ensemble d'entraînement original, soit un ensemble de validation.

\paragraph{Expériences.}
Pour démontrer l'efficacité de notre méthode, nous avons implémenté et comparé des approches concurrentes d'estimation de la confiance et de l'incertitude, notamment la probabilité de classe maximale (MCP) comme méthode de référence \cite{hendrycks17baseline}, TrustScore \cite{NIPS2018_7798} et Monte-Carlo Dropout (MC-Dropout) \cite{Gal2016}. Les comparaisons sont effectués sur des jeux de données d'images d'échelle et de complexité variables : MNIST \cite{mnist}, SVHN \cite{svhn-dataset}, CIFAR-10 et CIFAR-100. Nous présentons également des expériences de segmentation sémantique sur CamVid \cite{camvid}, un jeu de données standard de scènes de conduite. Les architectures profondes de classification suivent les architectures standarement utilisées en classification d'images, telles que les architectures de perceptron multicouche (MLP), LeNet et VGG16. Pour CamVid, nous avons implémenté un modèle de segmentation sémantique SegNet, suivant \cite{kendall2015bayesian}. Enfin, nous mesurons la qualité de la prédiction des défaillances en suivant les métriques utilisées dans la littérature~\cite{hendrycks17baseline} : AUPR-Error, AUPR-Success, FPR at 95\% TPR et AUROC. Nous nous concentrerons principalement sur l'AUPR-Error, qui calcule l'aire sous la courbe Précision-Rappel en utilisant les erreurs comme classe positive. \\

Les résultats comparatifs montrent que notre approche surpasse les méthodes usuelles dans toutes les configurations, avec un écart significatif sur les petits modèles et jeux de données. Cela confirme à la fois que le TCP est un critère de confiance adéquat pour la prédiction des défaillances et que notre approche ConfidNet est capable de l'apprendre. Nous fournissons une illustration sur CamVid (\cref{chap3:fig:visu-camvid}) pour mieux comprendre notre approche pour la prédiction de défaillance. Par rapport à la méthode de base MCP, notre approche produit des scores de confiance plus élevés pour les prédictions de pixels corrects et plus faibles pour les pixels mal étiquetés, ce qui permet à l'utilisateur de mieux détecter les zones d'erreurs.

\section*{Auto-apprentissage avec confiance apprise pour l'adaptation de domaine}
\label{synthese:sec:conda}

Les systèmes de perception des voitures autonomes nécessitent une compréhension approfondie des scènes dans lesquelles ils évoluent. Pour cette raison, des modules de segmentation sémantique sont souvent incorporés afin d'obtenir des prédictions d'étiquettes de classe pour chaque pixel de la scène. Bien que les récents progrès des réseaux convolutifs profonds aient considérablement amélioré les performances de segmentation, leur efficacité dépend de grandes quantités de données d'entraînement étiquetées avec précision. Mais le processus d'étiquetage nécessite généralement l'intervention d'experts et le coût de l'annotation limite les domaines opérationnels de ces systèmes. D'un autre côté, de nombreuses données de scènes de conduite sont synthétisées par des moteurs de jeux tels que GTA5 \cite{richter-eecv2016}. Par conséquent, des travaux récents tentent d'exploiter cette supervision alternative bon marché en entraînant des modèles sur ces sources d'images et en prédisant sur des images réelles. Mais le transfert n'est pas directement efficace car on observe une baisse de performance lors de l'évaluation sur des images réelles, due à un gap entre les domaines.

L'adaptation de domaine non supervisée (UDA) est le domaine de recherche qui vise à réduire cet écart de domaine entre les domaines source et cible. Dans le contexte de l'UDA, des échantillons sources annotés et des images cibles non étiquetées sont disponibles au moment de l'entrainement. La plupart des travaux de cette ligne de recherche visent à minimiser l'écart de distribution entre le domaine source et le domaine cible, au niveau des features extraites ou de la prédiction~\cite{hoffman-arxiv2016}, potentiellement combiné à des méthodes de translation transformant les images sources pour qu'elles correspondent au `style'~\cite{Hoffman_cycada2017} du domaine cible. Récemment, l'auto-formation~\cite{Li_2019_CVPR,Zou_2019_ICCV,zou2018unsupervised} a prouvé sa capacité à augmenter les performances d'adaptation de manière significative. Le principe de ces approches est d'étiqueter automatiquement les pixels cibles les plus confiants selon la prédiction actuelle du réseau et de ré-entrainer le réseau en conséquence. Bien que cette idée soit séduisante, la présence de pseudo-étiquettes avec du bruit ou incorrectes pourrait nuire à l'entrainement du réseau de neurones. À titre d'exemple, l'utilisation d'un ratio de $70\%$ de pseudo-étiquettes dans~\cite{Li_2019_CVPR} conduit à une performance d'environ $48\%$ mIoU, ce qui est mieux que $34\%$ avec le transfert direct (uniquement entrainé sur le domaine source), mais toujours largement inférieur aux $63\%$ obtenus avec la même quantité d'étiquettes de terrain. Par conséquent, la définition de bonnes mesures de confiance pour sélectionner des prédictions fiables est d'une importance cruciale pour le développement d'un auto-apprentissage sans erreur.

Pour améliorer l'efficacité de l'auto-formation, nous proposons d'adapter notre approche d'apprentissage de confiance développée dans le chapitre précédent au contexte particulier de l'adaptation non supervisée de domaine pour la segmentation sémantique. Un réseau de confiance $C$ est appris pour prédire la confiance du réseau de segmentation sémantique $F$ entraîné par UDA et utilisé pour sélectionner uniquement les pseudo-étiquettes jugée confiantes sur les images du domaine cible, comme illustré dans \cref{chap4:fig:selecting}. À cette fin, le cadre proposé dans \cref{chap3:sec:confidnet} dans une configuration de classification d'images, et appliqué à la prédiction de classification d'images erronées, doit ici être adapté à la sortie structurée de la segmentation sémantique, qui peut être vue comme un problème de classification par pixels. Étant donné une image du domaine cible $\xtg$, nous voulons prédire à la fois sa carte sémantique $F(\xtg;\btheta)$ et, en utilisant un modèle auxiliaire avec des paramètres entraînables $\bomega$, sa carte de confiance :
\begin{equation}
    C(\xtg;\bomega) = \Cmapt \in [0,1]^{H\times W}.
\end{equation}
Étant donné un pixel $(h,w)$, si sa confiance $\Cmapt[h,w]$ est supérieure à un seuil choisi $\delta$, nous l'étiquetons avec sa classe prédite $f(\xtg)[h,w] = \arg\!\max_{k\in\cY} \Pmapt[h,w,k]$, sinon elle est masquée. Calculées sur toutes les images de $\cD_{\tg}$, ces cartes de segmentation incomplètes constituent des pseudo-étiquettes cibles qui sont utilisées pour entraîner un nouveau réseau de segmentation sémantique.

\paragraph{Entraînement.}
Pour entraîner le réseau de confiance $C$, nous proposons d'optimiser conjointement deux objectifs. Le premier objectif est une version pixel-à-pixel de la perte de confiance \cref{synth:eq:loss-conf}. Sur des images annotées du domaine source, il exige que le réseau de confiance $C$ prédise à chaque pixel le score attribué par le classifieur $F$ à la vraie classe (connue) :
\begin{equation} 
\cL_{\text{conf}}(\bomega;\cD_{\so}) = 
\frac{1}{N_{\so}} \sum_{n=1}^{N_{\so}} 
\big\| 
\Cmapsn - \text{TCP}_F(\xson,\y_{\so,n}) \big\|^2_{\text{F}}, 
\label{synth:eq:perte-conf-conda}
\end{equation}
où $\|\cdot\|_{\text{F}}$ désigne la norme de Frobenius et, pour une image $\x$ avec une carte de segmentation vraie $\y$ et une carte de segmentation prédite $F(\x;\hat{\btheta})$, on note
\begin{equation}
    \text{TCP}_F(\x,\y)[h,w] = F(\x;\hat{\btheta})\Big[h,w,\y[h,w]\Big]
\end{equation}
à l'emplacement $(h,w)$. Sur une nouvelle image d'entrée, $C$ doit prédire à chaque pixel le score que $F$ attribuera à la vraie classe inconnue, qui servira de mesure de confiance.

Cependant, par rapport à l'application du chapitre précédent, nous avons ici le problème supplémentaire du gap entre les domaines source et cible, un problème qui pourrait affecter l'entraînement du modèle de confiance comme dans l'entraînement du modèle de segmentation. Le deuxième objectif concerne donc le gap entre les domaines. Alors que le réseau de confiance $C$ apprend à estimer le TCP sur les images du domaine source, son estimation de la confiance sur les images du domaine cible peut souffrir considérablement de ce gap de domaine. Comme cela se fait classiquement dans l'UDA, nous proposons un apprentissage adversarial de notre modèle auxiliaire afin de résoudre ce problème. Plus précisément, nous voulons que les cartes de confiance produites par $C$ dans le domaine source ressemblent à celles obtenues dans le domaine cible.

Un discriminateur $D :[0,1]^{H \times W} \rightarrow \{0,1\}$, avec les paramètres $\bpsi$, est entraîné simultanément avec $C$ dans le but de reconnaître le domaine (1 pour la source, 0 pour la cible) d'une image étant donné sa carte de confiance. La fonction de perte suivante est minimisée par rapport à $\bpsi$ :
\begin{equation}
    \cL_D(\bpsi;\cD_{\so}\cup\cD_{\tg}) = 
    \frac{1}{N_{\so}}\sum\limits_{n=1}^{N_{\so}} \cL_\text{adv}(\xson,1) + \frac{1}{N_{\tg}}\sum\limits_{n=1}^{N_{\tg}} \cL_\text{adv}(\xtgn,0),
    \label{synth:eq:l_Dconf}
\end{equation}
où $\cL_\text{adv}$ désigne la perte d'entropie croisée du discriminateur basé sur les cartes de confiance :
\begin{equation}
    \cL_\text{adv}(\x,\lambda) = -\lambda\log\big(D(\Cmap;\bpsi)\big) - (1-\lambda)\log(1-D\big(\Cmap;\bpsi)\big),
\end{equation}
pour $\lambda \in \{0,1\}$, qui est fonction à la fois de $\bpsi$ et de $\bomega$. En alternance avec l'apprentissage du discriminateur à l'aide de \cref{synth:eq:l_Dconf}, l'apprentissage adversarial du réseau de confiance est effectué en minimisant, par rapport à $\bomega$, la fonction de perte suivante :
\begin{equation}
    \cL_C(\bomega;\cD_{\so}\cup\cD_{\tg}) = \cL_\text{conf}(\bomega; \cD_{\so}) + \frac{\lambda_\text{adv}}{N_{\tg}}\sum\limits_{n=1}^{N_{\tg}}\cL_\text{adv}(\xtg,1),
    \label{synth:eq:l_C}
\end{equation}
où le deuxième terme, pondéré par $\lambda_\text{adv} > 0$, encourage $C$ à produire des cartes dans le domaine cible qui confondront le discriminateur.

Le schéma d'apprentissage adversarial de confiance proposé agit également comme un régulateur pendant la formation, améliorant la robustesse de la confiance de la cible TCP inconnue. Comme l'apprentissage du modèle de confiance peut en fait être instable, l'apprentissage adversarial fournit un signal d'information supplémentaire, imposant en particulier que l'estimation de la confiance soit invariante aux changements de domaine. Nous observons empiriquement que cet apprentissage adversarial de la confiance fournit de meilleures estimations de la confiance et améliore la convergence et la stabilité du schéma d'apprentissage.

\paragraph{Architecture multi-échelle.}
Dans de nombreux jeux de données de segmentation, l'existence d'objets à des échelles différentes peut compliquer l'estimation de la confiance. Comme dans les travaux récents traitant d'échelles variables des objets~\cite{ChenPK0Y16}, nous améliorons encore notre réseau de confiance $C$ en ajoutant une architecture multi-échelle basée sur le regroupement spatial pyramidal. Cette architecture consiste en un schéma efficace en termes de calcul pour ré-échantillonner une carte de caractéristiques à différentes échelles, puis pour agréger les cartes de confiance. Nous illustrons l'architecture multi-échelle pour un réseau de confiance dans \cref{chap4:fig:multi_scale_confidence}. À partir d'une carte de features, nous appliquons en parallèle des couches convolutives à trous avec des noyaux de taille 3x3 et des taux d'échantillonnage différents, chacune d'entre elles étant suivie d'une série de quatre couches convolutives standardes avec des noyaux de taille 3x3. Contrairement aux couches convolutionnelles avec de grands noyaux, les couches de convolution à trous élargissent le champ de vision des filtres et permettent d'incorporer un contexte plus large sans augmenter le nombre de paramètres et le temps de calcul. Les caractéristiques résultantes sont ensuite additionnées avant d'être sur-échantillonnées à la taille de l'image originale de $H\times W$. Nous appliquons une activation sigmoïde finale pour obtenir une carte de confiance avec des valeurs comprises entre 0 et 1.

\paragraph{Expériences.}
Nous considérons la tâche spécifique d'adaptation de données synthétiques à des données réelles dans des scènes urbaines. Nous expérimentons avec deux jeux de données sources synthétiques -- SYNTHIA~\cite{ros-cvpr2016} et GTA5~\cite{richter-eecv2016}. -- et deux ensembles de données cibles réelles : Cityscapes~\cite{cordts-cvpr2016} et Mapillary Vistas~\cite{neuhold-iccv2017}.
Nous évaluons la méthode d'auto-apprentissage proposée sur trois architectures d'adaptation de domaine état de l'art (au moment du projet): \textit{AdaptSegNet}~\cite{Tsai_adaptseg_2018}, \textit{AdvEnt}~\cite{vu2018advent}, \textit{DADA}~\cite{vu-iccv19}. Elles sont toutes basées sur DeepLabv2~\cite{ChenPK0Y16}, un réseau de segmentation sémantique standard. ConDA apporte une amélioration systématique des performances par rapport à l'auto-formation basée sur la probabilité de classe maximale (MCP) standard. \cref{chap4:fig:qualitative_results} présente les résultats qualitatifs de ces méthodes de pseudo-étiquetage. En particulier, ConDA obtient des résultats état de l'art (au moment du projet) sur trois benchmarks de segmentation UDA (GTA5 $\rightarrow$ Cityscapes, SYNTHIA\uda Cityscapes et SYNTHIA\uda Mapillary Vistas) en utilisant le réseau de segmentation standard DeepLabv2~\cite{ChenPK0Y16} comme backbone.

\section*{Détection conjointe d'erreurs et d'anomalies avec les modèles évidentiels}
\label{synthese:sec:klos}

Les modèles d'apprentissage automatique reposent généralement sur l'hypothèse que les données source et cible sont indépendantes et identiquement distribuées (\textit{i.i.d.}). Pourtant, dans la pratique, les changements de distribution apparaissent naturellement dans de nombreux scénarios du monde réel. Par exemple, les voitures autonomes ont du mal à être performantes dans des conditions différentes de celles de l'entrainement, comme les variations de météo \cite{volk2019}, de lumière \cite{dai2018} et de pose d'objets \cite{Alcorn_2019_CVPR}. Pire encore, les modèles peuvent être exposés à des entrées provenant de classes non vues qu'ils tenteront de prédire malgré tout. Ces échecs peuvent passer inaperçus car ils n'entraînent pas d'erreurs explicites dans le modèle. 

Alors que les travaux précédents de la littérature traitent séparément de la détection des erreurs de classification et de la détection des entrées hors distributions (OOD), nous soutenons qu'il est nécessaire pour un système de reconnaissance d'être capable d'identifier à la fois les erreurs de classification et les entrées inconnues/invisibles au moment du test pour un déploiement sûr dans des environnements ouverts \cite{Bendale_2015_CVPR}. Nous illustrons cette tâche dans \cref{chap5:fig:simultaneous}. En particulier, nous constatons dans \cref{chap5:sec:exp} que toutes les approches précédentes ne sont pas aussi performantes sur les deux tâches de détection, ce qui atténue leur capacité sur la tâche de détection conjointe.

Pour répondre à la tâche de détection simultanée des mauvaises classifications et des échantillons OOD, une bonne mesure d'incertitude devrait discriminer les prédictions correctes et les prédictions erronées pour les échantillons issus de la même distribution que celle d'entrainement tout en augmentant les valeurs d'incertitudes pour les entrées loin de la distribution. Par conséquent, elle devrait capturer à la fois l'incertitude aléatoire et épistémique. Les approches bayésiennes \cite{Gal2016,NIPS2019_9472} et les ensembles \cite{deepensembles2017,ovadia2019} sont des méthodes qui induisent une estimation plus précise de l'incertitude épistémique. Ces techniques produisent une densité de probabilité sur la distribution catégorielle prédictive $p(y \vert \x, \cD)$ obtenue par échantillonnage comme le montre la ligne supérieure de \cref{chap2:fig:simplex}. Mais cela se fait au prix d'un coût de calcul accru.

Une classe récente de modèles, appelée réseaux de neurones évidentiel (ENN) \cite{sensoy2018,malinin2019,beingbayesian2020}, propose plutôt d'apprendre explicitement les paramètres de concentration d'une distribution de Dirichlet $q_{\btheta}(\bpi \vert \x)= \text{Dir}. \big ( \bpi \vert \balpha \big ) $ sur les probabilités de sortie. Il a été démontré qu'elles améliorent la généralisation \cite{beingbayesian2020} et la détection des OOD \cite{maxgap2020}. L'apprentissage des ENN est formulé comme une approximation variationnelle visant à minimiser la divergence de Kullback-Leibler (KL) entre la distribution $q_{\btheta}(\bpi \vert \x) $ et la vraie distribution postérieure $p(\bpi \vert \x, y)$. En suivant \cite{beingbayesian2020}, nous utilisons un prior uniforme $p(\bpi \vert \x) ~{=}~ \text{Dir} \big ( \bpi \vert \boldsymbol{1} \big )$. La fonction de perte d'entraînement d'un ENN est :
\begin{equation}
    \cL_{\text{var}}(\btheta;\cD) =
    \frac{1}{N} \sum_{(\x, y) \in \cD} \Big (\psi(\alpha_y) - \psi(\alpha_0) \\\
    + \lambda \text{KL} \big(\text{Dir}( \bpi \vert \balpha)~\|\text{Dir}( \bpi \vert \mathbf{1} ) \big) \Big ),
    \label{eq:dev_loss_vi}
\end{equation}
avec l'hyperparamètre $\lambda >0$. En particulier, la minimisation de cette fonction de perte impose que la précision de l'échantillon d'apprentissage $\alpha_0$ reste proche de $C~{+}~1/\lambda$. \\

Basés sur le cadre de la logique subjective \cite{josan2016sublogic}, les modèles évidentiels capturent différentes sources d'incertitude. L'incertitude de premier ordre concerne l'espérance de la distribution de Dirichlet et est causée par des preuves contradictoires, par exemple la confusion entre classes. L'incertitude de second ordre exprime le manque d'évidence dans une prédiction \cite{Shi2020MultifacetedUE}, qui est caractérisée par la dispersion de la distribution de Dirichlet. Par exemple, les huskies partagent de nombreuses caractéristiques avec les loups bien qu'ils soient une race de chien, ce qui entraîne une grande incertitude du premier ordre due à la confusion de classe. En présence d'un dessin d'un husky, on s'attend à une confusion de classe similaire, mais à une quantité moindre de preuves en raison du changement de distribution.  

De manière surprenante, les précédents travaux de la littérature n'exploitent pas la distribution sur les probabilités sur le simplex pour dériver une telle mesure jointe des deux sources d'incertitude. Certaines méthodes se concentrent sur la détection des OOD en caractérisant uniquement la dispersion de la distribution, \eg, en utilisant l'information mutuelle \cite{malinin2019}. Les approches ciblant l'incertitude totale réduisent en fait les distributions de probabilité sur le simplexe à leur valeur en espérance et calculent des mesures d'incertitude du premier ordre, \eg, l'entropie prédictive \cite{sensoy2018}. Cependant, ces mesures sont invariantes à la dispersion de la distribution, alors que l'incertitude causée par la confusion de classe et le manque de preuves devrait être cumulative, une propriété naturellement remplie par la variance prédictive dans la régression bayésienne \cite{murphy2012machine}. En outre, certaines méthodes pour les modèles évidentiels utilisent des données auxiliaires pendant la formation afin d'imposer un étalement de distribution plus élevé sur les entrées OOD. Mais lorsque l'accès aux données d'entraînement OOD n'est pas envisageable, le comportement de grande dispersion n'est pas garanti pour tous les exemples OOD ~ \cite{charpentier2020,sensoy2020} et les mesures d'incertitude d'ordre deux peinent à les discriminer des exemples issus de la distributio d'entrainement.

\paragraph{KLoS.}
Nous introduisons une nouvelle mesure, appelée \emph{KLoS}, qui calcule la divergence KL entre la sortie du modèle et une distribution de Dirichlet "peaké" avec des concentrations $\boldsymbol{\gamma}_{\hat{y}}$ concentrées sur la classe \textit{prédite} $\hat{y}$ :
\begin{equation}
    \textrm{KLoS}(\boldsymbol{x}) \triangleq \textrm{KL} \Big ( \textrm{Dir} \big (\bpi \vert \balpha \big ) ~\|~ \textrm{Dir} \big ( \bpi \vert \boldsymbol{\gamma}_{\hat{y}} \big ) \Big ) ,
    \label{synth:eq:kl_pred}
\end{equation} 
où $\balpha= \exp f(\boldsymbol{x}, \btheta)$ sont la sortie du modèle et $\boldsymbol{\gamma}_{\hat{y}} = (1,\ldots,1,\tau,1,\ldots,1)$ sont les paramètres de concentration uniforme sauf pour la classe prédite avec $\tau~{=}~1/\lambda+1$. \\

Plus KLoS est petit, plus la prédiction est certaine. Les prédictions correctes auront des distributions de Dirichlet similaires à la distribution de Dirichlet du prototype $\boldsymbol{\gamma}_{\hat{y}}$ et seront donc associées à un score d'incertitude faible (\cref{chap5:fig:simplex_behavior}a). Les échantillons présentant une confusion de classe élevée présenteront une distribution de probabilité en espérance plus proche du centre du simplex que le prototype de classe en espérance
$p^*_{\hat{y}} = (\frac{1}{K-1+\tau},\cdots,\frac{\tau}{K-1+\tau},\ldots,\frac{1}{K-1+\tau})$, ce qui entraîne un score KLoS plus élevé (\cref{chap5:fig:simplex_behavior}b). De même, KLoS pénalise également les échantillons dont la précision $\alpha_0$ est différente de la précision $\alpha_0^*=\tau +K-1$ du prototype $\boldsymbol{\gamma}_{\hat{y}}$. Les échantillons dont la quantité d'évidence est inférieure (\cref{chap5:fig:simplex_behavior}c) et supérieure (\cref{chap5:fig:simplex_behavior}d) à $\alpha_0^*$ reçoivent un score KLoS plus élevé.

Puisque les échantillons de la distribution doivent avoir une précision proche de $\alpha_0^*$ pendant l'entrainement, les prototypes par classe sont des estimations fines des paramètres de concentration des données d'entrainement pour chaque classe. Par conséquent, KLoS est une métrique basée sur la divergence, qui n'a besoin que des données de la distribution pendant l'entrainement pour calculer ses prototypes. Ce comportement est illustré dans \cref{chap5:subsec:synth_exp}. La mesure proposée sera efficace pour détecter différents types d'échantillons OOD dont la précision est loin de $\alpha_0^*$. En revanche, les mesures d'incertitude de second ordre, telles que l'information mutuelle, supposent que les échantillons OOD ont des $\alpha_0$ plus petits, une propriété difficile à respecter pour les modèles entrainés uniquement avec des échantillons issus de la distribution d'entrainement (voir \cref{chap5:fig:intro_density_plot}). Dans \cref{chap5:sec:ood_training}, nous explorons plus en profondeur l'impact du choix des données d'entraînement OOD sur les valeurs réelles de $\alpha_0$ pour les échantillons OOD.

\paragraph{KLoSNet.}
Lorsque le modèle classe mal un exemple, c'est-à-dire que la classe prédite $\hat{y}$ diffère de la vérité terrain $y$, KLoS mesure la distance entre la sortie du ENN et le postérieur $p(\bpi \vert \x, \hat{y})$ estimé sur la mauvaise classe. Mesurer plutôt la distance à la distribution postérieure à la vraie classe $p(\pi \vert \x, y)$ donnerait plus probablement une plus grande valeur, reflétant le fait que le classifieur a fait une erreur. Ainsi, une meilleure mesure pour la détection des erreurs de classification serait : 
\begin{equation}
\label{eq:klos*}
    \textrm{KLoS}^*(\x, y) \triangleq \textrm{KL} \Big ( \textrm{Dir} \big (\bpi \vert \balpha \big ) ~\|~ \textrm{Dir} \big ( \bpi \vert \boldsymbol{\gamma}_{y} \big ) \Big ),
\end{equation}
où $\boldsymbol{\gamma}_{y}$ correspond aux concentrations uniformes sauf pour la \emph{vrai} classe $y$ avec $\tau ~{=}~ 1 / \lambda + 1$.

Évidemment, la vraie classe d'une prédiction n'est pas disponible lors de l'estimation de la confiance sur des échantillons de test. Nous proposons d'apprendre KLoS$^*$ en introduisant un réseau de neurone auxiliaire de confiance, appelé KLoSNet, avec des paramètres $\bomega$, qui produit une prédiction de confiance $C(\x, \bomega)$. KLoSNet consiste en un petit décodeur, composé de plusieurs couches denses attachées à l'avant-dernière couche du réseau de classification original. Pendant l'apprentissage, nous cherchons $\bomega$ tel que $C(\x, \bomega)$ soit proche de $\text{KLoS}^*(\x,y)$, en minimisant 
\begin{equation} 
\label{eq:loss-klosnet}
\cL_{\text{KLoSNet}}(\bomega ; \cD) = \frac{1}{N} \sum_{(\x,y)\in\mathcal{D}} \big\| C(\x,\bomega) - \textrm{KLoS}^*(\x, y) \big\|^2. 
\end{equation}

\paragraph{Expériences.}
Nous avons évalué notre approche sur la tâche de détection simultanée des mauvaises classifications et des échantillons OOD par rapport à diverses méthodes de référence, y compris les métriques d'incertitude de premier et de second ordre, les méthodes de post-entrainement pour la détection OOD \cite{odin2018,mahalanobis2018}) et notre précédent travail ConfidNet. Les prédictions correctes sont considérées comme des échantillons positifs tandis que les entrées mal classées et les exemples OOD constituent des échantillons négatifs. Des expériences sont menées avec les architectures VGG-16 \cite{Simonyan15} et ResNet-18 \cite{resnet2015} sur les jeux de données CIFAR-10 et CIFAR-100 \cite{Krizhevsky09}. Les résultats montrent que KLoSNet agit comme un estimateur de densité par classe et surpasse les mesures d'incertitude actuelles.

La littérature sur les modèles évidentiels ne traite que d'un ensemble d'entraînement OOD en lien avec le jeux de données d'entrainement, \eg, {CIFAR-100 pour les modèles entraînés sur CIFAR-10}. Dans \cref{chap5:fig:compa_OOD_training}, nous faisons varier l'ensemble d'entraînement OOD utilisé pour entraîner un modèle évidentiel avec la fonction de perte de divergence KL inverse \cite{malinin2019} et évaluons les performances en utilisant TinyImageNet comme ensemble de test OOD. Comme prévu, l'utilisation de CIFAR-100 comme données d'entraînement OOD améliore les performances pour chaque mesure (MCP, Mut.\,Inf. et KLoS). Cependant, l'amélioration apportée par l'entraînement avec des échantillons OOD dépend fortement de l'ensemble de données choisi. La performance de Mut.\,Inf. diminue de 92,6\% AUC avec CIFAR-100 à 82,9\% en passant à LSUN, et devient même pire avec SVHN (78,5 \%) par rapport à l'utilisation de données OOD (80,6\%). Nous constatons également que KLoS surpasse ou est à égalité avec MCP et Mut.\,Inf. dans tous les cas. Plus important encore, l'utilisation de KLoS sur des modèles sans données d'entraînement OOD donne de meilleures performances de détection que d'autres mesures prises à partir de modèles entraînés avec des échantillons OOD inappropriés, c'est-à-dire tous les jeux de données OOD autres que CIFAR-100.

\section*{Conclusion et perspectives}
\label{synthese:sec:conclusion}

La principale contribution de cette thèse est d'utiliser un modèle de confiance auxiliaire pour apprendre la confiance d'une prédiction d'un réseaux de neurone profond en classification. Étant donné un modèle de classification entraîné, le modèle de confiance apprend à estimer à partir des données un critère adéquat dérivé du classifieur, tel que la probabilité de la vrai classe pour les réseaux de neurones standard et KLoS pour les modèles évidentiels. Au moment du test, nous utilisons directement la sortie du modèle de confiance comme estimation de l'incertitude. L'un des principaux avantages de cette méthode est d'être agnostique en termes d'architecture : dans nos expériences, nous avons réussi à améliorer l'estimation de l'incertitude pour les modèles de classification avec différentes architectures d'apprentissage profond (MLP, LeNet, VGGs, ResNets). Nous avons appliqué notre approche à trois tâches : prédiction d'échec (\cref{chap3}), adaptation non supervisée de domaine pour la segmentation sémantique (\cref{chap4}), et détection simultanée des erreurs de distribution et des échantillons hors distribution (\cref{chap5}). Pour chaque tâche, il y a deux défis principaux à relever : (1) quel critère utiliser, et (2) comment entraîner efficacement le modèle de confiance. 

Discutons maintenant des directions intéressantes qui pourraient être abordées dans des travaux futurs en relation avec nos contributions.

\paragraph{Génération de données d'erreurs pour faciliter l'apprentissage de confiance}.
L'apprentissage de confiance a montré des améliorations significatives par rapport aux méthodes de références pour l'estimation de l'incertitude dans chacun des travaux considérés. Néanmoins, l'apprentissage du modèle auxiliaire dépend de la qualité du jeu de données, c'est-à-dire du nombre d'erreurs disponibles. Les réseaux de neurones modernes sont sur-paramétrés et ont tendance à sur-apprendre les données d'entraînement, atteignant ainsi une grande précision sur les jeux d'entraînement et ne laissant qu'une petite fraction d'échantillons mal classés. Nous pensons que ce problème de déséquilibre des données atténue les performances de l'apprentissage de confiance. Pour résoudre le problème du déséquilibre, une solution serait de générer artificiellement des erreurs. Les perturbations adversariales sont de petites perturbations de l'entrée qui sont presque imperceptibles pour les humains, mais qui trompent un réseau de neurone, transformant ainsi une prédiction correcte en une prédiction erronée. Un ensemble combiné d'entrées authentiques et adverses permettrait de rééquilibrer la formation. Il a été démontré que Mix-up \cite{zhang2018mixup}, et plus généralement les techniques agressives d'augmentation des données telles qu'AugMix \cite{hendrycks2020augmix} et CutMix \cite{yun2019cutmix} améliorent la robustesse et pourraient également être appliquées ici pour générer des échantillons avec des probabilités mixtes, fournissant ainsi une plus grande gamme de valeurs TCP pour l'apprentissage de la confiance.

\paragraph{Apprentissage de la confiance d'un ensemble}
En tant qu'alternative simple aux méthodes entièrement bayésiennes, les ensembles a été un sujet de recherche populaire au sein des méthodes probabilistes \cite{Dietterich00ensemblemethods,Rokach2010,deepensembles2017}. Pourtant, lorsqu'il s'agit de prédire les défaillances, les méthodes précédentes reposent sur la réduction des prédictions en un seul vecteur de probabilité moyen et sur la dérivation des mesures habituelles telles que le MCP et l'entropie. Une direction intéressante serait de combiner l'idée de l'apprentissage de confiance au contexte des ensembles. La solution la plus simple serait d'entraîner un modèle auxiliaire pour régresser la valeur TCP du vecteur de probabilité moyen. Les expériences préliminaires ont montré des difficultés à converger vers la régression de telles valeurs moyennes, car chaque modèle individuel se comporte différemment et nous ne pouvons pas nous fier aux poids initialisés d'un modèle arbitraire. Nous pourrions essayer d'attacher un modèle auxiliaire à chaque membre de l'ensemble et de les entraîner séparément avant de faire la moyenne des sorties de tous les modèles de confiance. Une approche intelligente impliquerait de tirer parti de la diversité des prédictions pour affiner le critère à estimer pendant l'apprentissage de la confiance.

\paragraph{Modèles génératifs pour la détection d'échantillons hors distribution}
Dans le \cref{chap5}, nous avons mis en évidence le comportement d'estimateur de densité par classe de KLoS, qui est une propriété cruciale en l'absence de données d'entraînement OOD pour améliorer la détection simultanée des erreurs de classification et des échantillons OOD. Parallèlement à cette contribution, nos expériences ont également révélé que si les performances des mesures d'incertitude existantes sont considérablement améliorées par l'utilisation d'échantillons OOD, elles dépendent aussi de manière critique du type de ces échantillons (\cref{chap5:sec:ood_training}). Parmi les méthodes utilisant des échantillons OOD lors de l'apprentissage de classifieur profonds, Hendrycks \textit{et al.} \cite{hendrycks2019oe} proposent d'apprendre à classer les échantillons de la distribution d'entrainement tout en produisant une entropie prédictive élevée pour les échantillons OOD. En conséquence, ils utilisent l'entropie prédictive pour distinguer les échantillons de la distribution d'entraiement des échantillons hors distribution. Cette méthode repose sur la disponibilité d'un grand ensemble de données OOD, par exemple 80 millions TinyImages avec CIFAR-10 ou CIFAR-100 comme jeu de données de distribution. Mais si trouver des échantillons OOD appropriés peut être facile pour certains jeux de données académiques, cela peut s'avérer plus problématique dans les applications du monde réel \cite{charpentier2020,sensoy2020}, avec le risque de dégrader les performances avec un choix inapproprié. La construction d'un ensemble d'entraînement OOD adapté aux tâches réelles est une perspective de recherche qui pourrait alléger le besoin d'échantillons OOD réels mais difficiles à trouver. Pour garantir une bonne généralisation à d'autres types d'anomalies, le principal défi consiste à produire des échantillons proches de la distribution, même à la limite, comme pour l'ensemble de données jouet présenté dans \cref{chap6:fig:ideal_training_ood}. Les modèles génératifs tels que proposés dans \cite{lee2018training,sensoy2020,vernekar2019} pourraient constitués une solution intéressante pour répondre à ce problème.

\paragraph{Applications supplémentaires de l'apprentissage de confiance}
De manière analogue au \cref{chap4}, l'approche de l'apprentissage de confiance pourrait être appliquée à de nouveaux contextes où la qualité des estimations d'incertitude est cruciale. L'apprentissage semi-supervisé (SSL) \cite{SSLbook} est un paradigme d'apprentissage qui vise à améliorer les performances d'apprentissage à partir de données étiquetées et des instances non étiquetées supplémentaires. Une famille d'approches pour SSL \cite{lee-icml2013,grandvalet-nips2005} propose de prédire des pseudo-étiquettes sur des données non étiquetées et de réentraîner un réseau en utilisant ces pseudo-étiquettes. La clé est de sélectionner les étiquettes les plus fiables. De toute évidence, et en raison du lien étroit entre l'adaptation de domaine et les méthodes d'apprentissage semi-supervisé, une extension naturelle de ConDA consiste à l'appliquer dans ce dernier contexte. Une alternative à l'apprentissage semi-supervisé est l'apprentissage actif dans lequel les données sont échantillonnées pour être étiquetées par des oracles humains dans le but de maximiser la performance du modèle tout en minimisant les coûts d'étiquetage. Diverses stratégies d'échantillonnage ont été proposées pour l'apprentissage actif au fil des ans, selon des perspectives différentes, par exemple l'incertitude \cite{gal-active2017} et la représentativité \cite{sener2018active}. L'apprentissage par la confiance pourrait améliorer la sélection d'échantillons utiles, comme cela est fait dans un travail connexe où les auteurs visent à apprendre la valeur de perte \cite{Yoo_2019_CVPR}.

\newpage

\begin{appendices}
    \addtocontents{toc}{\protect\setcounter{tocdepth}{0}}
    
    \chapter{Additional Analysis for Failure Prediction Experiments}
    \label{appxA}
    \markright{\MakeUppercase{Appendix A}}
\section{Effect on confidence loss}
\label{appxA:sec:ablation_loss}

The influence of the loss (MSE, BCE, Focal Loss or Ranking based on TCP) is analysed for SVHN, CIFAR10 and CamVid in \cref{appxA:tab:loss_variants}. We also tested the normalized variant of the TCP confidence criterion, \textit{n}TCP. We can observe that its performance is lower than the one of TCP on small datasets such as CIFAR-10 where few errors are present, but higher on larger datasets such as CamVid where each pixel is a sample. This emphasizes once again the complexity of incorrect/correct classification training.

\begin{table}[ht]
  \caption[Full detail of the effect of the loss on failure prediction with ConfidNet]{\textbf{Effect of the loss and of the confidence criterion on the error-detection performance
of ConfidNet}. Comparison in between proposed MSE and three other alternatives, all based on TCP as confidence criterion, except last one which is MSE with normalized $\mathrm{TCP}$ (\textit{n}TCP).   
This table extends \cref{chap3:tab:loss-analysis}.}
\label{appxA:tab:loss_variants}
  \centering
  \begin{adjustbox}{max width=\textwidth}
  \begin{tabular}{clrrr}
    \toprule
    Dataset & Loss & FPR\,@\,95\%\,TPR\,$\downarrow$ & AUPR\,$\uparrow$ & AUROC \,$\uparrow$\\
    \midrule
    \multirow{5}{*}{\shortstack[c]{\ubold{SVHN} \\ SmallConvNet}} & BCE & 29.34\% & 50.00\% & 92.76\% \\
    & Focal & 28.67\% & 49.96\% & 93.01\% \\
    & Ranking & 31.04\% & 48.11\% & 92.90\% \\
    & \textit{n}$\mathrm{TCP}$  & 30.19\%& 47.04\% & 93.12\% \\
    & $\mathrm{TCP}$ & \ubold{28.58\%} & \ubold{50.72\%} & \ubold{93.44\%} \\
    \midrule
    \multirow{5}{*}{\shortstack[c]{\ubold{CIFAR-10} \\ VGG-16}} & BCE & 45.20\% & 47.95\% & 91.94\% \\
    & Focal & 45.20\% & 47.76\% & 91.93\% \\
    & Ranking & 46.99\% & 44.04\% & 91.49\% \\
    & \textit{n}$\mathrm{TCP}$ & 45.02\% & 48.78\% & 92.06\% \\
    & $\mathrm{TCP}$ & \ubold{44.94\%} & \ubold{49.94\%} & 92.12\% \\
    \midrule
    \multirow{5}{*}{\shortstack[c]{\ubold{CamVid} \\ SegNet}} 
    & BCE & 61.68\% & 48.96\% & 83.41\% \\
    & Focal & 61.64\% & 49.05\% & 84.09\% \\
    & \textit{n}$\mathrm{TCP}$  & \ubold{60.41\%} & \ubold{51.35\%} & \ubold{85.18\%} \\
    & $\mathrm{TCP}$ & 61.52\% & 50.51\% & 85.02\% \\
    \bottomrule
  \end{tabular}
  \end{adjustbox}
  \label{tab:loss_variants}
\end{table}

\section{Empirical error and success distributions}
\label{appxA:sec:plots}

In this section, we provide the plots, analogous to Figure 1 in the main paper, that show the distribution of the confidence measures over correct and incorrect predictions respectively, for each dataset and each model in our failure prediction experiments. We also include absolute numbers of incorrect and correct predictions grouped into 3 bins (`$>1/K$', `$[\frac{1}{K},\frac{1}{2}]$' and `$>1/2$') to validate our assumptions about TCP's properties. The plots are available for MNIST with MLP in Fig.~\ref{appxA:fig:density_plot_mnist_mlp}, for MNIST with a small convnet in Fig.~\ref{appxA:fig:density_plot_mnist_conv}, for SVHN with a small convnet in Fig.~\ref{appxA:fig:density_plot_svhn_conv}, for CIFAR-100 with VGG-16 in Fig.~\ref{appxA:fig:density_plot_cifar100_vgg16} and for CamVid with SegNet in Fig.~\ref{appxA:fig:density_plot_camvid_segnet}.

\begin{figure}[ht]
\centering
\caption[Distributions of MCP and TCP confidence estimates computed over correct and erroneous predictions by a trained MLP on MNIST]{Distributions of MCP and TCP confidence estimates computed over correct and erroneous predictions by a trained \textbf{MLP on MNIST.}}
\begin{minipage}[c]{0.45\linewidth}
\centering
    \includegraphics[width=\linewidth]{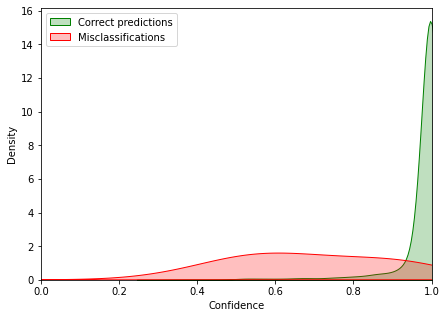}
    \subcaption{MCP}
\end{minipage}%
\begin{minipage}{0.45\linewidth}
\centering
    \includegraphics[width=\linewidth]{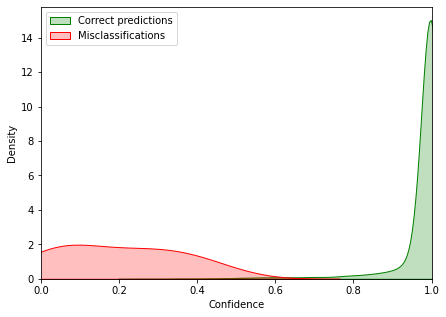}
    \subcaption{TCP}
\end{minipage}
\begin{adjustbox}{max width=\textwidth}
\begin{tabular}{l|ccc|ccc|rr}
\toprule
 Model & \multicolumn{3}{c}{Nb. of Errors} & \multicolumn{3}{c|}{Nb. of Successes} & AUPR\,$\uparrow$ & AUROC \,$\uparrow$\\
& $>1/K$ & $[\frac{1}{K},\frac{1}{2}]$ & $>1/2$ & $<1/K$ & $[\frac{1}{K},\frac{1}{2}]$ & $>1/2$ & & \\
\midrule
MCP & 0 & 25 & 170 & 0 & 28 & 9777 & 37.70\% & 97.13\% \\
TCP & 81 & 114 & 0 & 0 & 28 & 9777 & 98.77\% & 99.98\% \\
\bottomrule
\end{tabular}
\end{adjustbox}
\label{appxA:fig:density_plot_mnist_mlp}
\end{figure} 

\begin{figure}[ht!]
\centering
\caption[Distributions of MCP and TCP confidence estimates computed over correct and erroneous predictions by a trained small ConvNet model on MNIST]{Distributions of MCP and TCP confidence estimates computed over correct and erroneous predictions by a trained \textbf{small ConvNet model on MNIST.}}
\begin{minipage}[c]{0.45\linewidth}
\centering
    \includegraphics[width=\linewidth]{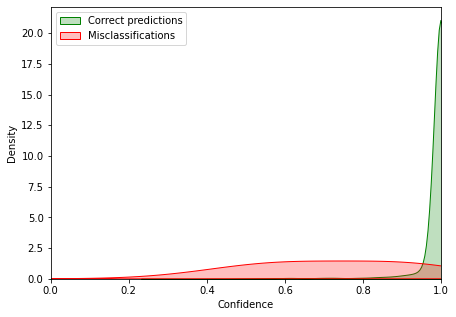}
    \subcaption{MCP}
\end{minipage}%
\begin{minipage}{0.45\linewidth}
\centering
    \includegraphics[width=\linewidth]{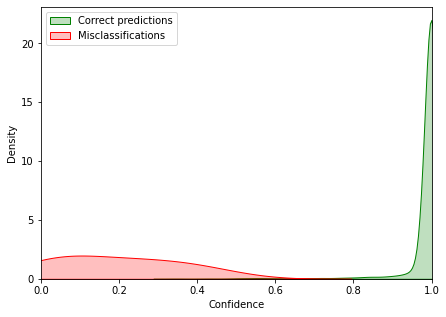}
    \subcaption{TCP}
\end{minipage}
\begin{adjustbox}{max width=\textwidth}
\begin{tabular}{l|ccc|ccc|rr}
\toprule
 Model & \multicolumn{3}{c}{Nb. of Errors} & \multicolumn{3}{c|}{Nb. of Successes} & AUPR \,$\uparrow$ & AUROC \,$\uparrow$\\
& $>1/K$ & $[\frac{1}{K},\frac{1}{2}]$ & $>1/2$ & $<1/K$ & $[\frac{1}{K},\frac{1}{2}]$ & $>1/2$ & & \\
\midrule
MCP & 0 & 8 & 82 & 0 & 11 & 9899 & 35.05\% & 98.63\% \\
TCP & 32 & 58 & 0 & 0 & 11 & 9899 & 99.41\% & 99.41\% \\
\bottomrule
\end{tabular}
\end{adjustbox}
\label{appxA:fig:density_plot_mnist_conv}
\end{figure} 

\begin{figure}[ht!]
\centering
\caption[Distributions of MCP and TCP confidence estimates computed over correct and erroneous predictions by a trained small ConvNet architecture on SVHN]{Distributions of MCP and TCP confidence estimates computed over correct and erroneous predictions by a trained \textbf{small ConvNet architecture on SVHN.}}
\begin{minipage}[c]{0.45\linewidth}
\centering
    \includegraphics[width=\linewidth]{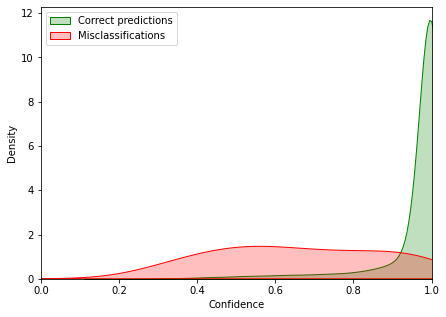}
    \subcaption{MCP}
\end{minipage}%
\begin{minipage}{0.45\linewidth}
\centering
    \includegraphics[width=\linewidth]{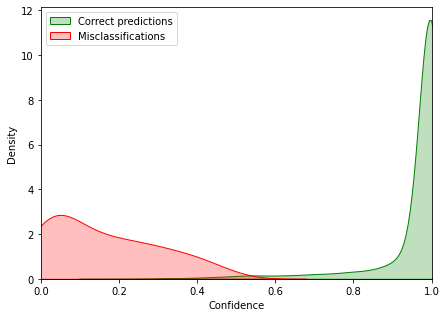}
    \subcaption{TCP}
\end{minipage}
\begin{adjustbox}{max width=\textwidth}
\begin{tabular}{l|ccc|ccc|rr}
\toprule
 Model & \multicolumn{3}{c}{Nb. of Errors} & \multicolumn{3}{c|}{Nb. of Successes} & AUPR\,$\uparrow$ & AUROC \,$\uparrow$\\
& $>1/K$ & $[\frac{1}{K},\frac{1}{2}]$ & $>1/2$ & $<1/K$ & $[\frac{1}{K},\frac{1}{2}]$ & $>1/2$ & & \\
\midrule
MCP & 0 & 329 & 857 & 0 & 206 & 24640 & 48.18\% & 93.20\% \\
TCP & 500 & 686 & 0 & 0 & 206 & 24640 & 98.93\% & 99.95\% \\
\bottomrule
\end{tabular}
\end{adjustbox}
\label{appxA:fig:density_plot_svhn_conv}
\end{figure} 

\begin{figure}[ht!]
\centering
\caption[Distributions of MCP and TCP confidence estimates computed over correct and erroneous predictions by a trained VGG-16 model on CIFAR-100]{Distributions of MCP and TCP confidence estimates computed over correct and erroneous predictions by a trained \textbf{VGG-16 model on CIFAR-100.}}
\begin{minipage}[c]{0.45\linewidth}
\centering
    \includegraphics[width=\linewidth]{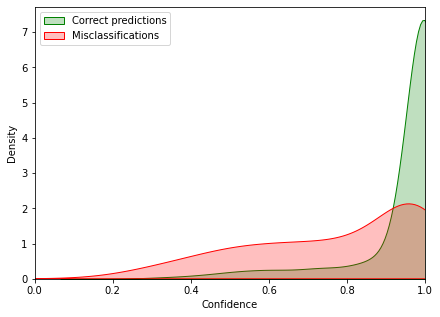}
    \subcaption{MCP}
\end{minipage}%
\begin{minipage}{0.45\linewidth}
\centering
    \includegraphics[width=\linewidth]{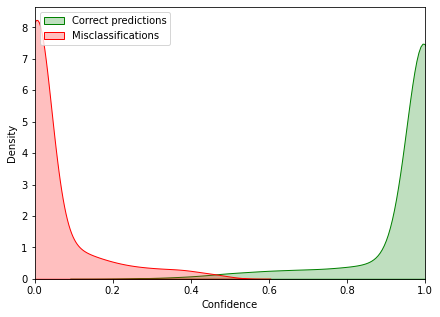}
    \subcaption{TCP}
\end{minipage}
\begin{adjustbox}{max width=\textwidth}
\begin{tabular}{l|ccc|ccc|rr}
\toprule
 Model & \multicolumn{3}{c}{Nb. of Errors} & \multicolumn{3}{c|}{Nb. of Successes} & AUPR\,$\uparrow$ & AUROC \,$\uparrow$\\
& $>1/K$ & $[\frac{1}{K},\frac{1}{2}]$ & $>1/2$ & $<1/K$ & $[\frac{1}{K},\frac{1}{2}]$ & $>1/2$ & & \\
\midrule
MCP & 0 & 603 & 2801 & 0 & 118 & 6478 & 71.99\% & 85.67\% \\
TCP & 2724 & 680 & 0 & 0 & 118 & 6478 & 99.91\% & 99.91\% \\
\bottomrule
\end{tabular}
\end{adjustbox}
\label{appxA:fig:density_plot_cifar100_vgg16}
\end{figure} 

\begin{figure}[ht!]
\centering
\caption[Distributions of MCP and TCP confidence estimates computed over correct and erroneous predictions by a trained SegNet model on CamVid]{Distributions of MCP and TCP confidence estimates computed over correct and erroneous predictions by a trained \textbf{SegNet model on CamVid.}}
\begin{minipage}[c]{0.45\linewidth}
\centering
    \includegraphics[width=\linewidth]{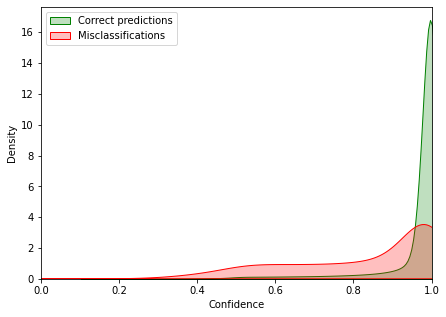}
    \subcaption{MCP}
\end{minipage}%
\begin{minipage}{0.45\linewidth}
\centering
    \includegraphics[width=\linewidth]{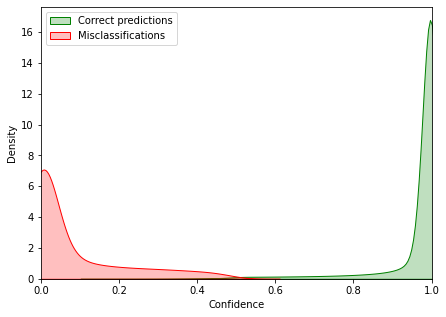}
    \subcaption{TCP}
\end{minipage}
\begin{adjustbox}{max width=\textwidth}
\begin{tabular}{l|ccc|ccc|rr}
\toprule
 Model & \multicolumn{3}{c}{Nb. of Errors} & \multicolumn{3}{c|}{Nb. of Successes} & AUPR\,$\uparrow$ & AUROC\,$\uparrow$ \\
& $>1/K$ & $[\frac{1}{K},\frac{1}{2}]$ & $>1/2$ & $<1/K$ & $[\frac{1}{K},\frac{1}{2}]$ & $>1/2$ & & \\
\midrule
MCP & 0 & 401,573 & 55,506,172 & 0 & 188,128 & 34,166,526 & 48.53\% & 84.42\% \\
TCP & 41,84,875 & 1,722,871 & 0 & 0 & 188,128 & 34,166,526 & 99.92\% & 99.99\% \\
\bottomrule
\end{tabular}
\end{adjustbox}
\label{appxA:fig:density_plot_camvid_segnet}
\end{figure}
    
    \chapter{Details and Further Experiments for KLoS}
    \label{appxB}
    \markright{\MakeUppercase{Appendix B}}
\section{Experimental Setup}
\label{appxB:sec:experimental_setup}

In this section, we provide comprehensive details about the datasets, the implementation and the hyperparameters of the experiments shown in \cref{chap5}.

\subsection{Image Classification Datasets.}
\label{appxB:subsection:datasets}

In \cref{chap5:sec:exp} to \cref{chap5:sec:ood_training}, the experiments are conducted using CIFAR-10 and CIFAR-100 datasets \cite{Krizhevsky09}. They consist in $32\!\times\!32$ natural images featuring 10 object classes for CIFAR-10 and 100 classes for CIFAR-100. Both datasets are composed with 50,000 training samples and 10,000 test samples. We further randomly split the training set to create a validation set of 10,000 images. 

OOD datasets are TinyImageNet\footnote{https://tiny-imagenet.herokuapp.com/} -- a subset of ImageNet (10,000 test images with 200 classes) --, LSUN \cite{yu15lsun} -- a scene classification dataset (10,000 test images of 10 scenes) --, STL-10 -- a dataset similar to CIFAR-10 but with different classes, and SVHN \cite{svhn-dataset} -- an RGB dataset of $28\!\times\!28$ house-number images (73,257 training and 26,032 test images with 10 digits) --. We downsample each image of TinyImageNet, LSUN and STL-10 to size $32\!\times\!32$.

\paragraph{Training Details.} 
We implemented in PyTorch \cite{pytorch_paper} a VGG-16 architecture \cite{Simonyan15} in line with the previous works of \cite{charpentier2020,malinin2019,maxgap2020}, with fully-connected layers reduced to 512 units. Models are trained for 200 epochs with a batch size of 128 images, using a stochastic gradient descent with Nesterov momentum of $0.9$ and weight decay $5\text{e-}4$. The learning rate is initialized at 0.1 and reduced by a factor of 10 at 50\% and 75\% of the training progress. Images are randomly horizontally flipped and shifted by $\pm4$ pixels as a form of data augmentation. 

\paragraph{Balancing Misclassification and OOD Detection.}
Most neural networks used in our experiments tend to overfit, which leaves very few training errors available.

\begin{wraptable}{r}{5cm}
	\resizebox{\linewidth}{!}{%
    	\begin{tabular}{l|cc}
    		\toprule
    		& CIFAR-10 & CIFAR-100 \\
    		\midrule
    		Train & 99.0 {\small $\pm 0.1$} & 91.2 {\small $\pm 0.2$} \\
    		Val & 93.6 {\small $\pm 0.1$} & 70.6 {\small $\pm 0.3$} \\
    		Test & 93.0 {\small $\pm 0.3$} & 70.1 {\small $\pm 0.4$} \\
    		\bottomrule
    	\end{tabular}
    }
    \caption[Accuracies of evidential neural networks trained on CIFAR-10 and CIFAR-100 datasets]{Mean accuracies (\%) and std. over five runs.}
    \label{appxB:tab:accuracies}
\vspace{-0.3cm}
\end{wraptable}
We provide accuracies on training, validation and test sets in \cref{appxB:tab:accuracies}. With such high predictive performances, the number of misclassifications is usually lower than the number of OOD samples ($\sim$10,000). Hence, the oversampling approach proposed in the paper helps to better balance misclassification detection performances and OOD detection performances in the reported metrics. 

\paragraph{KLoSNet.} We start from the pre-trained evidential model described above. As detailed in Section 3.2 of the main paper, KLoSNet consists of a small decoder attached to the penultimate layer of the main network. In CIFAR experiments, this corresponds to VGG-16's fc1 layer of size 512. This auxiliary neural network is composed of five fully-connected layers of size 400, except for the last layer obviously. KLoSNet decoder's weights $\omega$ are trained for 100 epochs with $\ell_2$ loss (Eq.\,8 in the main paper) and with Adam optimizer with learning rate $1\text{e-}4$. As KLoS$^*$ ranges from zero to large positive values ($>$1000), one may encounter some issues when training KLoSNet. Consequently, we apply a sigmoid function, $\sigma(x) = \frac{1}{1 + e^{-x}}$, after computing the KL-divergence between the NN's output and $\gamma_{y}$. To prevent over-fitting, training is stopped when validation AUC metric for misclassification detection starts decreasing. Then, a second training step is performed by initializing new encoder $E'$ such that $\theta_{E'} = \theta_E$ initially and by optimizing weights $(\theta_{E'}, \omega)$ for 30 epochs with Adam optimizer with learning rate $1\text{e-}6$. We stop training once again based on the validation AUC metric.

\section{Additional Results}
\label{appxB:sec:add_results}

\subsection{Detailed Results for Synthetic Experiments}
\label{appxB:subsec:detail_results_synth}
We detail in \cref{appxB:tab:exp_synthetic} the quantitative results for the task of simultaneous detection of misclassifications and of OOD samples for the synthetic experiment presented in Section 4.1 of the paper. First-order uncertainty measures such as MCP and Entropy perform obviously well on the first task with $80.2\%$ AUC for MCP. However, their OOD performance drops to $\sim$15\% AUC on this dataset. On the other hand, Mahalanobis is adapted to detect OOD samples but not as good for misclassifications. KLoS achieves comparable performances to best methods in misclassification detection and in OOD detection (79.4\% for Mis. and 98.8\% for OOD). As a result, when detecting both inputs simultaneously, KLoS improves all baselines, reaching 89.2\% AUC. 

\begin{table}[ht]
\centering
\resizebox{0.5\linewidth}{!}{%
\begin{tabular}{l|ccc}
	\toprule
	\textbf{Method} &  Mis. ($\uparrow$) & OOD ($\uparrow$) & Mis+OOD ($\uparrow$) \\
	\midrule
	MCP & \ubold{80.2} {\small $\pm 1.1$} & 15.9 {\small $\pm 0.7$} & 48.6 {\small $\pm 1.9$} \\
	Entropy & \ubold{78.4} {\small $\pm 1.5$} & 11.0 {\small $\pm 0.3$} & 45.7 {\small $\pm 1.0$} \\
	Mut.\, Inf. & 75.0 {\small $\pm 2.3$} & 2.2 {\small $\pm 0.2$} & 38.8 {\small $\pm 1.2$} \\
	Diff. Ent. & 74.2 {\small $\pm 2.7$} & 1.9 {\small $\pm 1.0$} & 38.0 {\small $\pm 1.3$} \\
	Mahalanobis & 51.5 {\small $\pm 2.8$} & \ubold{98.5} {\small $\pm 0.3$} & 75.0 {\small $\pm 1.4$} \\
	\midrule
	KLoS & \ubold{79.4} {\small $\pm 1.2$} & \ubold{98.8} {\small $\pm 0.3$} & \ubold{89.2} {\small $\pm 0.5$} \\ 
	\bottomrule
\end{tabular}
}
\caption[Quantitative results for synthetic experiment]{Synthetic experiment: misclassification (Mis.), out-of-distribution detection (OOD) and simultaneous detection (Mis+OOD) (mean \% AUC and std. over 5 runs). Bold type indicates significant top performance ($p<0.05$) according to paired t-test.}
\label{appxB:tab:exp_synthetic}
\end{table}

\subsection{Results with SVHN as OOD test dataset}

We report in \cref{appxB:tab:results_svhn} all the results when evaluating with SVHN \cite{svhn-dataset} as OOD dataset. Along with simultaneous detection results, we also provide separate results for misclassifications detection and OOD detection respectively. Similarly to the comparative results in the main paper, KLoSNet outperforms all the baselines in every simultaneous detection benchmark, with Mahalanobis being second.

\begin{figure}[ht]
\centering
\begin{minipage}{0.45\linewidth}
    \centering
    \subcaption{CIFAR-10 with VGG-16}
    \vspace{-0.2cm}
	\resizebox{\linewidth}{!}{%
    	\begin{tabular}{l|c|cc}
    		\toprule
    		& \textbf{CIFAR-10} & \multicolumn{2}{c}{\textbf{SVHN}}  \\
    		\textbf{Method} & Mis. ($\uparrow$) & OOD ($\uparrow$) & Mis+OOD ($\uparrow$) \\
    		\midrule
    		MCP & 87.6 {\small $\pm 1.6$} & 87.3 {\small $\pm 2.2$} & 88.9 {\small $\pm 0.5$} \\
    		Entropy & 83.5 {\small $\pm 2.4$} & 85.5 {\small $\pm 2.3$} & 86.9 {\small $\pm 1.9$} \\
    		ConfidNet & 90.2 {\small $\pm 0.8$} & \ubold{89.0} {\small $\pm 3.1$} & 91.0 {\small $\pm 1.1$} \\
    		Mut.\, Inf. & 84.1 {\small $\pm 1.5$} & 80.0 {\small $\pm 3.9$} & 83.2 {\small $\pm 1.7$} \\
    		Diff. Ent. & 86.8 {\small $\pm 1.0$} & 86.0 {\small $\pm 2.0$} & 87.6 {\small $\pm 0.9$} \\
    		EPKL & 83.9 {\small $\pm 1.5$} & 79.4 {\small $\pm 4.2$} & 82.8 {\small $\pm 1.9$} \\
    		ODIN & 86.0 {\small $\pm 2.0$} & 86.8 {\small $\pm 2.2$} & 87.7 {\small $\pm 1.0$} \\
    		Mahalanobis & 91.2 {\small $\pm 0.3$} & \ubold{89.1} {\small $\pm 2.8$} & 91.5 {\small $\pm 1.1$} \\
    		\midrule
    		KLoSNet (Ours) & \ubold{92.5} {\small $\pm 0.6$} & \ubold{89.8} {\small $\pm 3.0$} & \ubold{92.7} {\small $\pm 1.2$} \\
    		\bottomrule
    	\end{tabular}
    }
    \label{appxB:tab:results_svhn_cifar10_vgg16}
\end{minipage}
\hspace{0.1cm}
\begin{minipage}{0.45\linewidth}
    \centering
    \subcaption{CIFAR-10 with ResNet18}
    \vspace{-0.2cm}
	\resizebox{\linewidth}{!}{%
    	\begin{tabular}{l|c|cc}
    		\toprule
    		& \textbf{CIFAR-100} & \multicolumn{2}{c}{\textbf{SVHN}}  \\
    		\textbf{Method} & Mis. ($\uparrow$) & OOD ($\uparrow$) & Mis+OOD ($\uparrow$) \\
    		\midrule
    		MCP & 84.9 {\small $\pm 0.8$} & 79.6 {\small $\pm 1.0$} & 83.0 {\small $\pm 0.9$} \\
    		Entropy & 84.6 {\small $\pm 0.8$} & 79.6 {\small $\pm 1.1$} & 82.8 {\small $\pm 0.9$} \\
    		ConfidNet & 90.7 {\small $\pm 0.4$} & 84.6 {\small $\pm 1.1$} & 88.6 {\small $\pm 0.6$} \\
    		Mut. Inf & 80.6 {\small $\pm 0.6$} & 77.0 {\small $\pm 1.2$} & 79.4 {\small $\pm 0.9$} \\
    		Diff. Ent & 82.7 {\small $\pm 0.6$} & 78.3 {\small $\pm 1.2$} & 81.1 {\small $\pm 0.9$} \\
    		EPKL & 80.2 {\small $\pm 0.6$} & 76.8 {\small $\pm 1.3$} & 79.0 {\small $\pm 0.9$} \\
    		ODIN & 83.7 {\small $\pm 0.7$} & 78.9 {\small $\pm 1.0$} & 81.9 {\small $\pm 0.9$} \\
    		Mahalanobis & 91.2 {\small $\pm 0.4$} & 90.7 {\small $\pm 0.4$} & 91.8 {\small $\pm 0.3$} \\
    		\midrule
    		KLoSNet (Ours) & \ubold{93.9} {\small $\pm 0.4$} & \ubold{93.1} {\small $\pm 1.1$} & \ubold{94.4} {\small $\pm 0.3$} \\
    		\bottomrule
    	\end{tabular}
    }
    \label{appxB:tab:results_svhn_cifar10_resnet18}
\end{minipage}

\vspace{0.3cm}

\begin{minipage}{0.45\linewidth}
    \centering
    \subcaption{CIFAR-100 with VGG-16}
    \vspace{-0.2cm}
	\resizebox{\linewidth}{!}{%
   	    \begin{tabular}{l|c|cc}
    		\toprule
    		& \textbf{CIFAR-10} & \multicolumn{2}{c}{\textbf{SVHN}}  \\
    		\textbf{Method} & Mis. ($\uparrow$) & OOD ($\uparrow$) & Mis+OOD ($\uparrow$) \\
    		\midrule
    		MCP & 82.9 {\small $\pm 0.8$} & 70.8 {\small $\pm 3.9$} & \ubold{81.3} {\small $\pm 2.0$} \\
    		Entropy & 82.2 {\small $\pm 0.8$} & \ubold{72.9} {\small $\pm 3.9$} & \ubold{81.5} {\small $\pm 2.0$} \\
    		ConfidNet & 84.4 {\small $\pm 0.6$} & 68.0 {\small $\pm 3.4$} & 80.8 {\small $\pm 2.0$} \\
    		Mut.\, Inf. & 78.9 {\small $\pm 0.8$} & \ubold{72.7} {\small $\pm 4.9$} & 79.5 {\small $\pm 2.5$} \\
    		Diff. Ent. & 80.2 {\small $\pm 0.8$} & \ubold{72.4} {\small $\pm 4.9$} & 80.2 {\small $\pm 2.5$} \\
    		EPKL & 78.8 {\small $\pm 0.8$} & \ubold{72.7} {\small $\pm 4.8$} & 79.4 {\small $\pm 2.4$} \\
    		ODIN & 82.1 {\small $\pm 0.8$} & \ubold{72.0} {\small $\pm 3.8$} & \ubold{81.3} {\small $\pm 1.9$} \\
    		Mahalanobis & 84.0 {\small $\pm 0.2$} & \ubold{73.4} {\small $\pm 5.6$} & \ubold{83.2} {\small $\pm 2.5$} \\
            \midrule
    		KLoSNet (Ours) & \ubold{86.7} {\small $\pm 0.4$} & 70.4 {\small $\pm 5.7$} & \ubold{83.5} {\small $\pm 2.8$} \\
    		\bottomrule
    	\end{tabular}
    }
    \label{appxB:tab:results_svhn_cifar100_vgg16}
\end{minipage}
\hspace{0.1cm}
\begin{minipage}{0.45\linewidth}
    \centering
    \subcaption{CIFAR-100 with ResNet18}
    \vspace{-0.2cm}
	\resizebox{\linewidth}{!}{%
    	\begin{tabular}{l|c|cc}
    		\toprule
    		& \textbf{CIFAR-100} & \multicolumn{2}{c}{\textbf{SVHN}}  \\
    		\textbf{Method} & Mis. ($\uparrow$) & OOD ($\uparrow$) & Mis+OOD ($\uparrow$) \\
    		\midrule
    		MCP & 84.9 {\small $\pm 0.8$} & 79.6 {\small $\pm 1.0$} & 83.0 {\small $\pm 0.9$} \\
    		Entropy & 84.6 {\small $\pm 0.8$} & 79.6 {\small $\pm 1.1$} & 82.8 {\small $\pm 0.9$} \\
    		ConfidNet & 90.7 {\small $\pm 0.4$} & 84.6 {\small $\pm 1.1$} & 88.6 {\small $\pm 0.6$} \\
    		Mut. Inf & 80.6 {\small $\pm 0.6$} & 77.0 {\small $\pm 1.2$} & 79.4 {\small $\pm 0.9$} \\
    		Diff. Ent & 82.7 {\small $\pm 0.6$} & 78.3 {\small $\pm 1.2$} & 81.1 {\small $\pm 0.9$} \\
    		EPKL & 80.2 {\small $\pm 0.6$} & 76.8 {\small $\pm 1.3$} & 79.0 {\small $\pm 0.9$} \\
    		ODIN & 83.7 {\small $\pm 0.7$} & 78.9 {\small $\pm 1.0$} & 81.9 {\small $\pm 0.9$} \\
    		Mahalanobis & 91.2 {\small $\pm 0.4$} & 90.7 {\small $\pm 0.4$} & 91.8 {\small $\pm 0.3$} \\
    		\midrule
    		KLoSNet (Ours) & \ubold{93.9} {\small $\pm 0.4$} & \ubold{93.1} {\small $\pm 1.1$} & \ubold{94.4} {\small $\pm 0.3$} \\
    		\bottomrule
    	\end{tabular}
    }
    \label{appxB:tab:results_svhn_cifar100_resnet18}
\end{minipage}
    \caption[Simultaneous detection results with SVHN as OOD dataset]{\textbf{Results with SVHN as OOD dataset} (\% mean AUROC and std. over 5 runs).}
    \label{appxB:tab:results_svhn}
\end{figure}

\subsection{Detail results of selective classification}
\label{appxB:subsec:detailed_selective}

In addition to aggregated results shown in \cref{chap5:subsec:detail_results_selective}, we provide detailed results of selective classification in presence of domain shifts by corruption for CIFAR-10-C (\cref{appxB:tab:CIFAR-10_exp_select}) and CIFAR-100-C (\cref{appxB:tab:CIFAR-100_exp_select}). For almost every corruption, KLoSNet outperforms other methods. When averaged on all corruptions, KLoSNet scores 48.6\% AURC while the second best, ConfidNet, reaches 49.0\% AURC. 

\begin{table}[ht]
\resizebox{\linewidth}{!}{%
    \begin{tabular}{l|c|ccc|cccc|cccc|cccc|c}
    \midrule
     &  & \multicolumn{3}{c|}{\textbf{Noise}} & \multicolumn{4}{c|}{\textbf{Blur}} & \multicolumn{4}{c|}{\textbf{Weather}} & \multicolumn{4}{c|}{\textbf{Digital}} &  \\
    \textbf{Method} & Clean & Gaussian & Shot & Impulse & Defocus & Glass & Motion & Zoom & Snow & Frost & Fog & Bright & Contrast & Elastic & Pixel & JPEG & Mean \\
    \midrule
    MCP & 48.3\% & 46.7\% & 48.0\% & 43.7\% & 48.6\% & 45.7\% & 45.4\% & 43.4\% & 44.4\% & 42.5\% & 40.4\% & 46.5\% & 43.3\% & 43.3\% & 43.0\% & 1.9\% & 42.2\% \\
Entropy & 48.8\% & 47.2\% & 48.5\% & 43.9\% & 49.1\% & 45.9\% & 45.6\% & 43.6\% & 44.7\% & 42.7\% & 40.6\% & 46.9\% & 43.5\% & 43.7\% & 43.2\% & 2.0\% & 42.5\% \\
ConfidNet & 47.4\% & 45.8\% & 47.1\% & 42.9\% & 48.1\% & 45.2\% & 44.9\% & 42.5\% & 43.5\% & 41.6\% & 39.1\% & 45.9\% & 42.6\% & 42.0\% & 42.1\% & 1.3\% & 41.4\% \\
Mut.\, Inf. & 53.6\% & 52.3\% & 53.2\% & 48.3\% & 53.0\% & 49.3\% & 49.1\% & 48.8\% & 49.8\% & 47.8\% & 46.9\% & 51.5\% & 47.9\% & 49.8\% & 48.2\% & 4.8\% & 47.1\% \\
ODIN & 49.3\% & 47.7\% & 48.9\% & 44.3\% & 49.5\% & 46.3\% & 46.0\% & 44.1\% & 45.2\% & 43.2\% & 41.2\% & 47.3\% & 43.9\% & 44.3\% & 43.7\% & 2.2\% & 42.9\% \\
Mahalanobis & 48.9\% & 46.9\% & 48.6\% & 42.5\% & 49.7\% & 45.3\% & 44.8\% & 43.1\% & 44.1\% & 41.4\% & 38.6\% & 45.8\% & 42.6\% & 42.9\% & 42.4\% & 1.0\% & 41.8\% \\
KLoSNet & \ubold{47.0\%} & \ubold{45.3\%} & \ubold{46.9\%} & \ubold{42.3\%} & \ubold{48.0\%} & \ubold{45.0\%} & \ubold{44.6\%} & \ubold{42.2\%} & \ubold{43.1\%} & \ubold{41.1\%} & \ubold{38.1\%} & \ubold{45.2\%} & \ubold{42.4\%} & \ubold{41.8\%} & \ubold{42.0\%} & \ubold{0.9\%} & \ubold{41.0\%} \\
    \bottomrule
    \end{tabular}
}
\label{appxB:tab:CIFAR-10_exp_select}
    \caption[Detailed results for selective classification on CIFAR-10-C]{\textbf{Detailed results for selective classification on CIFAR-10-C}. Comparative performance in AURC (\%) of classification with the option to reject misclassified test samples and samples from shifted distributions. Results are average on 5 runs (mean $\pm$ std.).}
\end{table}

\begin{table}[ht]
\resizebox{\linewidth}{!}{%
    \begin{tabular}{l|c|ccc|cccc|cccc|cccc|c}
    \midrule
     &  & \multicolumn{3}{c|}{\textbf{Noise}} & \multicolumn{4}{c|}{\textbf{Blur}} & \multicolumn{4}{c|}{\textbf{Weather}} & \multicolumn{4}{c|}{\textbf{Digital}} &  \\
    \textbf{Method} & Clean & Gaussian & Shot & Impulse & Defocus & Glass & Motion & Zoom & Snow & Frost & Fog & Bright & Contrast & Elastic & Pixel & JPEG & Mean \\
    \midrule
MCP & \ubold{53.3\%} & \ubold{52.7\%} & 55.2\% & 50.8\% & 54.0\% & 52.8\% & 52.5\% & 51.3\% & 52.0\% & 50.5\% & 47.9\% & 52.1\% & 50.7\% & 50.4\% & 51.3\% & 13.1\% & 49.4\% \\
Entropy & 53.5\% & 53.0\% & 55.8\% & 51.3\% & 54.8\% & 53.3\% & 53.1\% & 51.8\% & 52.6\% & 51.0\% & 48.2\% & 52.7\% & 51.1\% & 50.9\% & 51.7\% & 13.5\% & 49.9\% \\
ConfidNet & 53.5\% & 52.8\% & \ubold{55.0\%} & 50.1\% & 53.7\% & 52.3\% & 51.9\% & 50.8\% & 51.6\% & 50.0\% & 47.2\% & 51.7\% & 50.3\% & 49.8\% & 50.9\% & 11.7\% & 49.0\% \\
Mut.\, Inf. & 54.5\% & 54.0\% & 57.1\% & 53.1\% & 56.2\% & 54.9\% & 54.7\% & 53.5\% & 54.3\% & 52.6\% & 50.0\% & 54.4\% & 52.4\% & 53.0\% & 53.2\% & 16.5\% & 51.5\% \\
ODIN & 53.4\% & 52.9\% & 55.5\% & 51.2\% & 54.5\% & 53.2\% & 52.9\% & 51.7\% & 52.4\% & 50.9\% & 48.2\% & 52.5\% & 51.0\% & 50.8\% & 51.7\% & 13.7\% & 49.8\% \\
Mahalanobis & 54.5\% & 53.9\% & 56.7\% & 50.9\% & 55.5\% & 52.8\% & 52.9\% & 52.2\% & 52.5\% & 50.7\% & 47.8\% & 52.4\% & 51.0\% & 51.4\% & 51.9\% & 11.0\% & 49.9\% \\
\midrule
KLoSNet & 54.0\% & 53.1\% & 55.5\% & \ubold{49.5\%} & \ubold{53.6\%} & \ubold{51.6\%} & \ubold{51.4\%} & \ubold{50.7\%} & \ubold{51.1\%} & \ubold{49.2\%} & \ubold{46.5\%} & \ubold{51.3\%} & \ubold{49.9\%} & \ubold{49.7\%} & \ubold{50.7\%} & \ubold{9.7\%} & \ubold{48.6\%} \\
\bottomrule
    \end{tabular}
}
    \label{appxB:tab:CIFAR-100_exp_select}
    \caption[Detailed results for selective classification on CIFAR-100-C]{\textbf{Detailed results for selective classification on CIFAR-100-C}. Comparative performance in AURC (\%) of classification with the option to reject misclassified test samples and samples from shifted distributions. Results are average on 5 runs (mean $\pm$ std.).}
\end{table}



\end{appendices}

\addtocontents{toc}{\protect\setcounter{tocdepth}{2}}


\cleardoublepage\phantomsection
\thispagestyle{empty}
\newpage $\ $
\newpage
\thispagestyle{empty}
\setlength{\baselineskip}{11pt}
\setpapersize{A4}

\setmarginsrb{0mm}{0mm}{15mm}{0mm}{0mm}{0mm}{0mm}{0mm}
\begin{center}

\fbox{
\begin{tabular}{p{3cm} |p{10.2cm}|p{3cm}}
		\begin{minipage}{3cm}
			\includegraphics[width=1\linewidth]{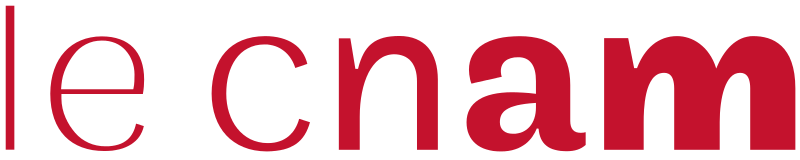}  
		\end{minipage}
	&
		\begin{minipage}{10.2cm}
			\begin{center}
				\textbf{\Large{Charles CORBIERE}}\\
				\vspace{0.2cm}
				\textbf{\Large{Robust Deep Learning \\ for Autonomous Driving}}\\
				\vspace{0.2cm}
			\end{center}
		\end{minipage}
		&
		\begin{minipage}{3cm}
			\includegraphics[width=1\linewidth]{images/logo_hesam.png} 
		\end{minipage}
\end{tabular}
}
\end{center}
\vspace{0.3cm}
\begin{center}

\fcolorbox{redHESAM}{white}{
\begin{minipage}{17.5cm}
\vspace{0.2cm}
\textbf{\large{Résumé :}} Le véhicule autonome est revenu récemment sur le devant de la scène grâce aux avancées fulgurantes de l’intelligence artificielle. Pourtant, la sécurité reste une préoccupation majeure lorsqu'il s'agit de déployer de tels systèmes dans des environnements à haut risque. L'objectif de cette thèse est de développer des outils méthodologiques permettant de fournir des estimations d'incertitudes fiables pour les réseaux de neurones profonds. Tout d'abord, nous introduisons un nouveau critère pour estimer la confiance d'un modèle dans sa prédiction: la probabilité de la vraie classe (TCP). Nous montrons que TCP offre de meilleures propriétés que les mesures d'incertitudes actuelles pour la prédiction d'erreurs. La vraie classe étant par essence inconnue pour un exemple de test, nous proposons d'apprendre TCP à partir de données avec un modèle auxiliaire en introduisant un schéma d'apprentissage spécifique adapté à ce contexte. La qualité de l'approche proposée est validée sur des jeux de données de classification d'images et de segmentation sémantique. Nous étendons ensuite notre approche d'apprentissage de confiance à la tâche d'adaptation de domaine où elle améliore la sélection de pseudo-labels dans les méthodes d'auto-apprentissage. Enfin, nous nous attaquons au défi de la détection conjointe des erreurs de classification et des échantillons hors distribution en présentant une nouvelle mesure d'incertitude définie sur le simplexe et basée sur des modèles évidentiels. \\
\\
\\
\textbf{\large{Mots clés:}} apprentissage profond, incertitude, confiance, robustesse, voiture autonome
\vspace{0.2cm}
\end{minipage}
}
\end{center}

\vspace{0.2cm}

\begin{center}
\fcolorbox{redHESAM}{white}{
\begin{minipage}{17.5cm}
\vspace{0.2cm}
\textbf{\large{Abstract :}} The last decade's research in artificial intelligence had a significant impact on the advance of autonomous driving. Yet, safety remains a major concern when it comes to deploying such systems in high-risk environments. The objective of this thesis is to develop methodological tools which provide reliable uncertainty estimates for deep neural networks. First, we introduce a new criterion to reliably estimate model confidence: the true class probability (TCP). We show that TCP offers better properties for failure prediction than current uncertainty measures. Since the true class is by essence unknown at test time, we propose to learn TCP criterion from data with an auxiliary model, introducing a specific learning scheme adapted to this context. The relevance of the proposed approach is validated on image classification and semantic segmentation datasets. Then, we extend our learned confidence approach to the task of domain adaptation where it improves the selection of pseudo-labels in self-training methods. Finally, we tackle the challenge of jointly detecting misclassification and out-of-distributions samples by introducing a new uncertainty measure based on evidential models and defined on the simplex. \\
\\
\\
\textbf{\large{Keywords:}} deep learning, uncertainty, confidence, robustness, autonomous driving
\vspace{0.2cm}

\end{minipage}
}
\end{center}
\setlength{\voffset}{0pt}


\end{document}